\newenvironment{myquote}[1]%
{\list{}{\leftmargin=#1\rightmargin=#1}\item[]}%
{\endlist}
\numberwithin{equation}{section}
\algnewcommand\algorithmicforeach{\textbf{for each}}
\let\oldReturn\Return
\renewcommand{\Return}{\State\oldReturn}
\newtheorem{theorem}{Theorem}[section]
\newtheorem{lemma}[theorem]{Lemma}
\newtheorem{definition}{Definition}[section]
\newtheorem{corollary}{Corollary}[section]
\newtheorem{proposition}[theorem]{Proposition}
\DeclareMathOperator*{\argmin}{arg\,min}
\newcommand{\E}{\mathbb{E}}
\newcommand{\R}{\mathbb{R}}
\newcommand{\bs}[1]{\boldsymbol{#1}}
\newcommand{\bv}[1]{\mathbf{#1}}
\title{Improved Active Learning via Dependent Leverage Score Sampling}
\author{Atsushi Shimizu,  Xiaoou Cheng, Christopher Musco, Jonathan Weare \\
	New York University \\
	\texttt{\{as15106,chengxo,cmusco,weare\}@nyu.edu} \\
}
  \newcommand{\cAAAI}[1]{AAAI\ Conference\ on\ Artificial (AAAI)}
\begin{document}
	
\maketitle

\begin{abstract}
    We show how to obtain improved active learning methods in the agnostic (adversarial noise) setting by combining marginal leverage score sampling with non-independent sampling strategies that promote spatial coverage. In particular, we propose an easily implemented method based on the \emph{pivotal sampling algorithm}, which we test on problems motivated by learning-based methods for parametric PDEs and uncertainty quantification. In comparison to independent sampling, our method reduces the number of samples needed to reach a given target accuracy by up to $50\%$.
    We support our findings with two theoretical results. First, we show that any non-independent leverage score sampling method that obeys a weak \emph{one-sided $\ell_{\infty}$ independence condition} (which includes pivotal sampling) can actively learn $d$ dimensional linear functions with $O(d\log d)$ samples, matching independent sampling. This result extends recent work on matrix Chernoff bounds under $\ell_{\infty}$ independence, and may be of interest for analyzing other sampling strategies beyond pivotal sampling. 
    Second, we show that, for the important case of polynomial regression, our pivotal method obtains an improved bound on $O(d)$ samples.
\end{abstract}

\section{Introduction}\label{sec:introduction}
In the active linear regression problem, we are given a data matrix $\mathbf{A} \in \R^{n \times d}$ with $n\gg d$ rows and query access to a target vector $\bv{b}\in \R^n$. The goal is to learn parameters $\bv{x}\in \R^d$ such that $\bv{A}\bv{x} \approx \bv{b}$ while observing as few entries in $\bv{b}$ as possible. 
We study this problem in the challenging agnostic learning or ``adversarial noise'' setting, where we do not assume any underlying relationship between $\mathbf{A}$ and $\mathbf{b}$. Instead, our goal is to find parameters competitive with the best possible fit, good or bad. Specifically, considering $\ell_2$ loss, let $\mathbf{x}^* = \argmin_{\mathbf{x}} \lVert \mathbf{A} \mathbf{x} - \mathbf{b} \rVert_2^2$ be optimal model parameters. We want to find $\tilde{\mathbf{x}}^*$ using a small number of queried target values in $\bv{b}$ such that 
\begin{align}
\label{main:gaur}
\begin{split}
     \lVert \mathbf{A} \tilde{\mathbf{x}}^* - \mathbf{b} \rVert_2^2 \leq (1 + \epsilon) \lVert \mathbf{A} \mathbf{x}^* - \mathbf{b} \rVert_2^2,
\end{split}
\end{align}
 for some error parameter $\epsilon > 0$.
Beyond being a fundamental learning problem, active regression has emerged as a fundamental tool in learning based methods for the solution and uncertainty analysis of parametric partial differential equations (PDEs) \citep{esiam_2015,siam_review_2022}. For such applications, the agnostic setting is crucial, as a potentially complex quantity of interest is approximated by a simple surrogate model (e.g. polynomials, sparse polynomials, single layer neural networks, etc.) \citep{sparse_review_2021,ridge_functions_2018}. Additionally, reducing the number of labels used for learning is crucial, as each label usually requires the computationally expensive numerical solution of a PDE for a new set of parameters \citep{cohen_devore_2015}.

\subsection{Leverage Score Sampling}\label{sec:lev_score}
Active linear regression has been studied for decades in the statistical model where $\bv{b}$ is assumed to equal $\bv{A}\bv{x}^*$ plus i.i.d. random noise. In this case, the problem can be addressed using tools from optimal experimental design \citep{Pukelsheim:2006}. In the agnostic case, near-optimal sample complexity results were only obtained relatively recently using tools from non-asymptotic matrix concentration \citep{Tropp11}. In particular, it was shown independently in several papers that collecting entries from $\bv{b}$ \emph{randomly} with probability proportional to the \emph{statistical leverage scores} of rows in $\bv{A}$ can achieve \eqref{main:gaur} with $O(d\log d + d/\epsilon)$ samples \citep{Sarlos:2006,RauhutWard:2012,HamptonDoostan:2015,CohenMigliorati:2017}. The leverage scores are defined as follows:
\begin{definition}[Leverage Score]\label{def:leveragescore}
    Let $\bv{U}\in \R^{n \times r}$ be any orthogonal basis for the column span of a matrix $\mathbf{A} \in \R^{n \times d}$. Let $\mathbf{a}_i$ and $\mathbf{u}_i$ be the $i$-th rows of $\mathbf{A}$ and $\bv{U}$, respectively. The \textit{leverage score} $\tau_i$ of the $i$-th row in $\mathbf{A}$ can be equivalently written as:
    \vspace{-.25em}
    \begin{align}\begin{split}
    \label{eq:lev_defs}
        \tau_i = \lVert \mathbf{u}_i \rVert_2^2 = \mathbf{a}_i^T (\mathbf{A}^T \mathbf{A})^{-1} \mathbf{a}_i = \max_{\mathbf{x} \in \R^d} {(\mathbf{a}_i^T \mathbf{x})^2}/{\lVert \mathbf{A} \mathbf{x} \rVert_2^2}.
    \end{split}\end{align}
    \vspace{-2em}
    
    \noindent Notice that $\tau_i = \lVert \mathbf{u}_i \rVert_2^2$, $\sum_{i=1}^n \tau_i = d$ when $\mathbf{A}$ is full-rank and thus $\bv{U}$ has $r =d$ columns. 
    \end{definition}
    \vspace{-.25em}

See \citep{pmlr-v70-avron17a} for a short proof of the final equality in \eqref{eq:lev_defs}. This last definition, based on a maximization problem, gives an intuitive understanding of the leverage scores. The score of row $\bv{a}_i$ is higher if it is more ``exceptional'', meaning that we can find a vector $\bv{x}$ that has large inner product with $\bv{a}_i$ relative to its average inner product (captured by $\|\bv{A}\bv{x}\|_2^2$) with all other rows in the matrix. Based on leverage score, rows that are more exceptional are sampled with higher probability.

Prior work considers independent leverage score sampling, either with or without replacement. The typical approach for sampling without replacement, which we call ``Bernoulli sampling'' is as follows: Each row $\bv{a}_i$ is assigned a probability $p_i = \min(1,c\cdot \tau_i)$ for an oversampling parameter $c \geq 1$. Then each row is sampled independently with probability $p_i$. We construct a subsampled data matrix $\tilde{\bv{A}}$ and subsampled target vector $\tilde{\bv{b}}$ by adding $\bv{a}_i/\sqrt{p_i}$ to $\tilde{\bv{A}}$ and $b_i/\sqrt{p_i}$ to $\tilde{\bv{b}}$ for any index $i$ that is
 sampled. To solve the active regression problem, we return $\tilde{\bv{x}}^* = \argmin_{\bv{x}} \|\tilde{\bv{A}}\bv{x} - \tilde{\bv{b}}\|_2$.
 
 \begin{figure}[t]
 	\centering
 	\begin{subfigure}[t]{0.32\textwidth}
 		\centering
 		\includegraphics[width=.9\linewidth]{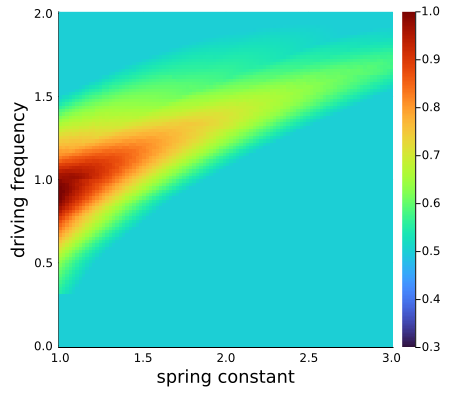}
 		\vspace{-.5em}
 		\caption{Target Function}    
 	\end{subfigure}
 	\hfill
 	\centering
 	\begin{subfigure}[t]{0.32\textwidth}
 		\centering
 		\includegraphics[width=.9\linewidth]{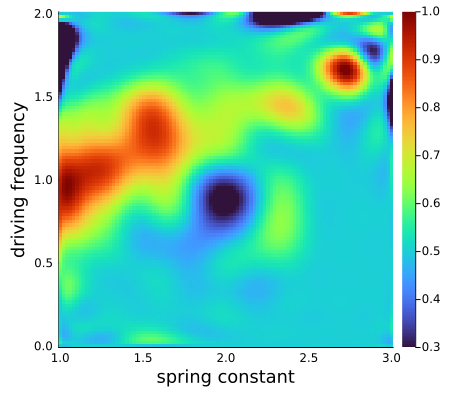}
 		\vspace{-.5em}
 		\caption{Bernoulli Sampling}    
 	\end{subfigure}
 	\hfill
 	\centering
 	\begin{subfigure}[t]{0.32\textwidth}
 		\centering
 		\includegraphics[width=.9\linewidth]{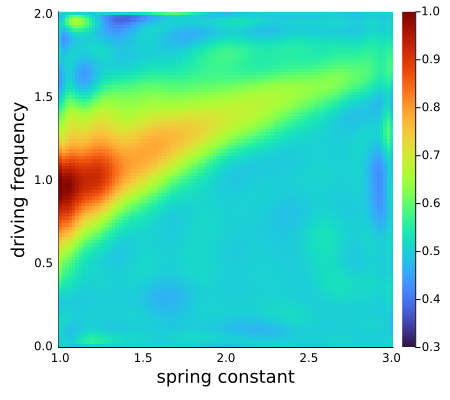}
 		\vspace{-.5em}
 		\caption{Pivotal Sampling (our method)}    
 	\end{subfigure}
 	\vspace{-.5em}
 	\caption{Polynomial approximations to the maximum displacement of a damped harmonic oscillator, as a function of driving frequency and spring constant. 
 		(a) is the target value, and samples can be obtained through the numerical solution of a differential equation governing the oscillator. Both (b) and (c) draw $250$ samples using leverage score sampling and perform polynomial regression of degree $20$. (b) uses Bernoulli sampling while (c) uses our pivotal sampling method.  
 		%		We run the sampling 100 times and show the result with a median error. 
 		Our method gives a better approximation, avoiding artifacts that result from gaps between the Bernoulli samples.}
 	\vspace{-1.5em}
 	\label{fig:sec1:comparison}
 \end{figure} 

\subsection{Our Contributions}\label{subsec:contributions}
In applications to PDEs, the goal is often to approximate a function over a low dimensional distribution $\mathcal{X}$. E.g. $\mathcal{X}$ might be uniform over an interval $[-1,1] \subset \R$ or over a box $[-1,1]\times\ldots \times [-1,1] \subset \R^q$. In this setting, the length $d$ rows of $\bv{A}$ correspond to feature transformations of samples from $\mathcal{X}$. For example, in the ubiquitous task of polynomial regression, we start with  $\bv{x}\sim \mathcal{X}$ and add to $\bv{A}$ a row containing all combinations of entries in $\bv{x}$ with total degree $p$, i.e., $x_1^{\ell_1}x_2^{\ell_2} \ldots x_q^{\ell_q}$ for all non-negative integers $\ell_1, \ldots, \ell_q$ such that $\sum_{i=1}^q \ell_i \leq p$.
For such problems, ``grid'' based interpolation is often used in place of randomized methods like leverage scores sampling, i.e., the target 
$\bv{b}$ is queried on a deterministic grid tailored to $\mathcal{X}$. For example, when $\mathcal{X}$ is uniform on a box, the standard approach is to use a grid based on the Chebyshev nodes \citep{Xiu2016}. Pictured in Figure \ref{fig:sampledist}, the Cheybshev grid concentrates samples near the boundaries of the box, avoiding the well known issue of Runge's phenomenon for uniform grids. Leverage score sampling does the same. In fact, the methods are closely related: in the high degree limit, the leverage scores for polynomial regression over the box match the asymptotic density of the Chebyshev nodes \citep{sparse_review_2021}.

 \begin{figure}[t]
	\vspace{-1em}
	\centering
	\begin{subfigure}[t]{0.32\textwidth}
		\centering
		\includegraphics[width=.6\linewidth]{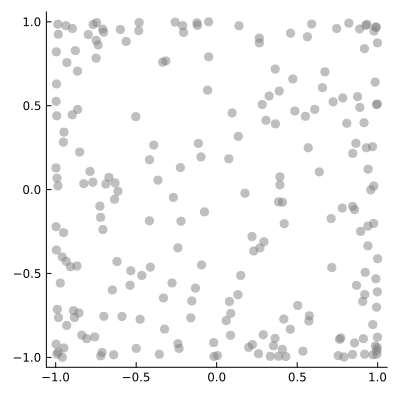}
		\vspace{-.5em}
		\caption{Bernoulli Sampling}    
	\end{subfigure}
	\hfill
	\centering
	\begin{subfigure}[t]{0.32\textwidth}
		\centering
		\includegraphics[width=.6\linewidth]{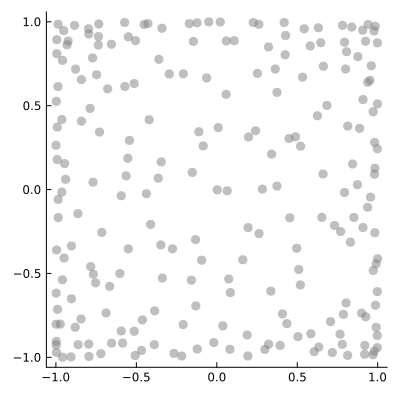}
		\vspace{-.5em}
		\caption{Pivotal Sampling (our method)}    
	\end{subfigure}
	\hfill
	\centering
	\begin{subfigure}[t]{0.32\textwidth}
		\centering
		\includegraphics[width=.6\linewidth]{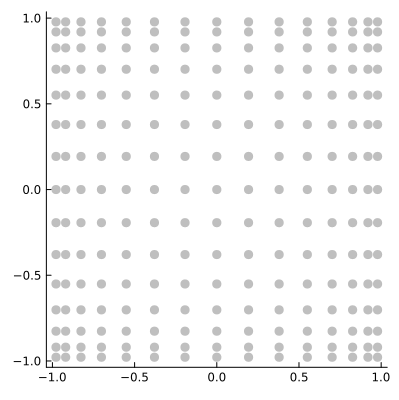}
		\vspace{-.5em}
		\caption{Chebyshev Grid}    
	\end{subfigure}
	\vspace{-.5em}
	\caption{The results of three different active learning methods used to collect samples to fit a polynomial over $[-1,1]\times [-1,1]$. The image on the left was obtained by collecting points independently at random with probability according to their statistical leverage scores. The image on the right was obtained by collecting samples at the 2-dimensional Chebyshev nodes. The image in the middle shows our method, which collects samples according to leverage scores, but using a non-independent pivotal sampling strategy that ensures samples are evenly spread in spatially.}   
	\label{fig:sampledist}
	\vspace{-1em}
\end{figure}

So how do the deterministic and randomized methods compare? The advantage of randomized methods based on leverage score sampling is that they yield strong provable approximation guarantees, and easily generalize to any distribution $\mathcal{X}$.\footnote{As discussed in \citep{CP19}, no deterministic method can provably solve the agnostic regression problem with few samples. Since we make no assumptions on $\bv{b}$, all error in $\bv{A}\bv{x}^* - \bv{b}$ could be concentrated only at the deterministic indices to be selected. Randomization is needed to avoid high error outliers in $\bv{b}$.} Deterministic methods are less flexible on the other hand, and do not yield provable guarantees.
However, the advantage of grid based methods is that they more ``evenly'' distribute samples over the original data domain, which can lead to better performance in practice.  Randomized methods are prone to ``missing'' larger regions of $\mathcal{X}$'s support, as shown in Figure \ref{fig:sampledist}. The driving question behind our work is:

\begin{myquote}{2em}
	\vspace{-.5em}
 \textit{
    Is it possible to obtain the ``best of both worlds'' for fitting functions over low-dimensional domains? I.e., can we match or improve on the strong theoretical guarantees of leverage score sampling with a method that produces spatially well-distributed samples?}
    	\vspace{-.5em}
\end{myquote}

We answer this question in the affirmative. Instead of sampling rows from $\bv{A}$ \emph{independently} with probability proportional to the leverage scores, we adopt a tool from survey sampling known as
pivotal sampling \citep{deville1998unequal}.
Our specific version of pivotal sampling is  \textit{spatially-aware}, meaning that it covers the domain in a well-balanced manner, while the marginal probabilities remain proportional to the leverage score. 
% Details appear in Section \ref{sec:methods}. 
At a high-level, the pivotal method is a ``competition'' based sampling approach, where candidate rows compete in a binary tree tournament. By structuring the tournament so that spatially close points compete at lower levels (we use a novel recursive PCA procedure to build the tree), we ensure better spatial spread than Bernoulli leverage score sampling. 

We show that our pivotal sampling method matches or beats the complexity of independent leverage score sampling in theory, while performing significantly better in practice.
On the practice side, we offer Figure \ref{fig:sec1:comparison} as an example from a PDE test problem. In comparison to independent sampling, our spatially-aware method obtains a much better approximation to the target for a fixed number of samples (more details in Section \ref{sec:experiments}).
On the theory side, we prove two results. The first is general: we show that, as long as it samples rows from $\bv{A}$ with \emph{marginal probabilities} proportional to the leverage scores,  \emph{any} sampling strategy that obeys a weak ``one-sided $\ell_{\infty}$ independence'' condition (which includes pivotal sampling) matches the complexity of independent leverage score sampling:
\begin{theorem}\label{thm:main}
    Let $\mathbf{A} \in \R^{n \times d}$ be a data matrix and $\mathbf{b} \in \R^n$ be a target vector. Consider any algorithm which samples exactly $k$ rows from $\bv{A}$ (and observes the corresponding entries in $\bv{b}$) from a distribution that 1) satisfies one-sided $\ell_{\infty}$ independence (Defn. \ref{def:l-infty}) with parameter $D_{\text{inf}}$ and 2) the marginal probability of sampling any row $\bv{a}_i$ is proportional to $\tau_i$.
    \footnote{Formally, we assume that the marginal probability of sampling $\bv{a}_i$  equals $\min(1,c\tau_i)$ for a fixed constant $c \geq 1$. Our proof easily generalizes to the case when some probabilities exceed this bound (since sampling more never hurts) although the total sample complexity will depend on the sum of the marginal probabilities.} 
    Let $\tilde{\mathbf{A}}$ and $\tilde{\mathbf{b}}$ be the scaled sampled data and target, as defined in Section \ref{sec:lev_score}, and let $\tilde{\mathbf{x}}^* = \argmin_{\mathbf{x} \in \R^d} \lVert \tilde{\mathbf{A}} \mathbf{x} - \tilde{\mathbf{b}} \rVert_2^2$.
  As long as $k \geq c\cdot \left(d\log d \cdot D_{\text{inf}}^2 + \frac{d}{\epsilon}\cdot D_{\text{inf}}\right)$ for a fixed positive constant $c$, then with probability $99/100$, 
    \begin{align}\begin{split}
    \label{eq:main_thm_gaur}
        \lVert \mathbf{A} \tilde{\mathbf{x}}^* - \mathbf{b} \rVert_2^2 \leq (1 + \epsilon) \lVert \mathbf{A} \mathbf{x}^* - \mathbf{b} \rVert_2^2
    \end{split}\end{align}
\vspace{-2em}
\end{theorem}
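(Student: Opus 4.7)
The plan is to follow the standard two-step template for analyzing leverage score sampling for regression, adapting each step to the one-sided $\ell_\infty$ independence setting. Let $\bv{U} \in \R^{n \times d}$ be an orthonormal basis for the column span of $\bv{A}$, let $\bv{r} = \bv{A}\bv{x}^* - \bv{b}$, which satisfies $\bv{U}^T \bv{r} = \mathbf{0}$ by the normal equations, and let $\bv{S}$ be the sampling-and-rescaling operator so that $\bv{S}^T \bv{S} = \sum_{i} \frac{Y_i}{p_i} \bv{e}_i \bv{e}_i^T$, where $Y_i = \bv{1}[i \in T]$ indicates membership in the random sampled set $T$. By the standard reduction for sketched regression (cf.\ Sarlos 2006, Woodruff 2014), it suffices to establish each of the following two events with probability at least $99/100$:
\begin{enumerate}
\item[(i)] \emph{Subspace embedding:} $\tfrac{1}{2}\bv{I} \preceq \bv{U}^T \bv{S}^T \bv{S} \bv{U} \preceq \tfrac{3}{2}\bv{I}$;
\item[(ii)] \emph{Approximate matrix multiplication:} $\lVert \bv{U}^T \bv{S}^T \bv{S} \bv{r}\rVert_2^2 \leq \epsilon \lVert \bv{r}\rVert_2^2$.
\end{enumerate}
A union bound then delivers \eqref{eq:main_thm_gaur} with constant probability.

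For (i), I would write $\bv{U}^T \bv{S}^T \bv{S} \bv{U} = \sum_i \tfrac{Y_i}{p_i} \bv{u}_i \bv{u}_i^T$ as a sum of random rank-one matrices. Each summand has operator norm bounded by $\tau_i/p_i \leq 1/c$, and the expected sum equals $\bv{I}$. This places us exactly in the setting of the matrix Chernoff bound for $\ell_\infty$-independent indicator variables advertised as the key technical tool in the introduction; it yields the desired concentration around $\bv{I}$ provided $k \gtrsim D_{\inf}^2 \cdot d\log d$.

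For (ii), I would bound $\E\lVert \bv{U}^T \bv{S}^T \bv{S} \bv{r}\rVert_2^2$ and apply Markov. Expanded as a double sum, the diagonal terms contribute $\sum_i \tau_i r_i^2 (1/p_i - 1) \leq \lVert \bv{r}\rVert_2^2 / c$, exactly as in the fully independent analysis. The off-diagonal cross terms $\sum_{i \neq j} \frac{\E[Y_i Y_j] - p_i p_j}{p_i p_j} \langle \bv{u}_i, \bv{u}_j \rangle r_i r_j$ vanish under full independence but must now be controlled. The upper bound $\E[Y_i Y_j] \leq D_{\inf} \cdot p_i p_j$ supplied by one-sided $\ell_\infty$ independence, combined with a Cauchy--Schwarz estimate and a sign-absorbing inequality of the form $|\langle \bv{u}_i, \bv{u}_j\rangle r_i r_j| \leq \tfrac{1}{2}(\tau_i r_j^2 + \tau_j r_i^2)$, lets us bound the cross terms by at most one extra factor of $D_{\inf}$ times the diagonal contribution. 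Thus $k \gtrsim D_{\inf} \cdot d/\epsilon$ samples are enough to make $\E\lVert \bv{U}^T \bv{S}^T \bv{S}\bv{r}\rVert_2^2 \lesssim \epsilon \lVert \bv{r}\rVert_2^2 / 100$.

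The hard part is step (ii). The subspace embedding step reduces to a clean invocation of a matrix Chernoff bound that has already been developed for $\ell_\infty$-independent sums. AMM is more delicate because off-diagonal contributions $\langle \bv{u}_i, \bv{u}_j\rangle r_i r_j$ can carry either sign, so packing them into a positive upper bound requires the \emph{upper} control on joint marginals $\E[\prod_{i \in T'} Y_i]$ that the one-sided definition of $D_{\inf}$ provides—this is precisely why the theorem's hypothesis is phrased one-sidedly rather than demanding two-sided $\ell_\infty$ independence. A minor bookkeeping point is the clipped probabilities $p_i = \min(1, c\tau_i)$: indices with $p_i = 1$ are always sampled and contribute deterministically, so they drop out of both (i) and (ii) without loss.
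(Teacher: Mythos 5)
Your overall architecture matches the paper's: reduce to a subspace embedding plus an approximate-matrix-multiplication (AMM) bound, prove the former via the KKS22 matrix Chernoff bound, and prove the latter via a second-moment/Markov argument. Step~(i) is fine (the paper also needs a small technical splitting of the deterministically-kept rows into copies so the PSD summands all have norm $\leq d/k$, and so that the homogeneity hypothesis of the Chernoff lemma is met, but that is bookkeeping).

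The gap is in step~(ii). The inequality you write as the engine of the AMM argument,
\begin{equation*}
\E[Y_i Y_j] \;\leq\; D_{\mathrm{inf}}\, p_i p_j \qquad (i\neq j),
\end{equation*}
is \emph{not} what one-sided $\ell_\infty$-independence supplies, and it is false in general. Taking $\mathcal{S}=\emptyset$ in Definition~\ref{def:l-infty} gives a bound on \emph{row sums} of the covariance-type matrix:
\begin{equation*}
\sum_{j}\bigl|\Pr[\xi_j=1 \mid \xi_i=1]-p_j\bigr| \;\leq\; D_{\mathrm{inf}}
\quad\Longleftrightarrow\quad
\sum_{j}\bigl|\E[\xi_i\xi_j]-p_ip_j\bigr| \;\leq\; D_{\mathrm{inf}}\,p_i,
\end{equation*}
which controls the $\ell_1$ norm of each row of $(c_{ij})$ but gives no termwise multiplicative bound. (Concretely, if $\xi_i,\xi_j$ are perfectly positively correlated with $p_i=p_j=\epsilon$, then $\E[\xi_i\xi_j]=\epsilon$ while $p_ip_j=\epsilon^2$, so your inequality would require $D_{\mathrm{inf}}\geq 1/\epsilon$, but the row-sum condition is satisfied with $D_{\mathrm{inf}}\approx 2$.) Also, the ``one-sided'' in the definition refers to conditioning only on $\xi_\ell=1$ events, not to one-sided control of positive correlations---the influence matrix entries appear under absolute values either way, so your stated rationale for why the hypothesis is one-sided is off.

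That said, the \emph{rest} of your plan for step~(ii)---Cauchy--Schwarz $|\langle\bv{u}_i,\bv{u}_j\rangle|\leq\sqrt{\tau_i\tau_j}$ followed by the sign-absorbing AM--GM $\sqrt{\tau_i\tau_j}|r_ir_j|\leq\tfrac12(\tau_ir_j^2+\tau_jr_i^2)$---can be completed once you use the correct row-sum bound: after symmetrizing, the cross terms become $\sum_i \frac{r_i^2}{p_i}\sum_j\frac{|c_{ij}|\tau_j}{p_j}$, and applying $\tau_j/p_j\leq d/k$ then $\sum_j|c_{ij}|\leq D_{\mathrm{inf}}p_i$ gives $\leq \frac{d}{k}D_{\mathrm{inf}}\|\bv{r}\|_2^2$, matching the paper's bound. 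The paper itself takes a slightly slicker linear-algebraic route: it forms the symmetric matrix $\bv{M}$ with $m_{il}=c_{il}/\sqrt{\tilde p_i\tilde p_l}$, shows that the row-sum condition is exactly $\|\bm\Lambda^{-1}\bv{M}\bm\Lambda\|_\infty\leq D_{\mathrm{inf}}$ for the diagonal scaling $\bm\Lambda=\mathrm{diag}(\sqrt{\tilde p_i})$, and concludes that the spectral radius of $\bv{M}$ is at most $D_{\mathrm{inf}}$ by similarity and the fact that the $\ell_\infty$-operator norm dominates the spectral radius, yielding $\bv{v}^T\bv{M}\bv{v}\leq D_{\mathrm{inf}}\|\bv{v}\|_2^2$ directly. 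Both routes land in the same place; the paper's avoids the explicit AM--GM symmetrization.
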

One-sided $\ell_{\infty}$ independence was introduced in a recent paper by \cite{KKS22} on matrix Chernoff bounds (we provide a formal definition in Sec. \ref{sec:analysis}). It is the \emph{weakest} condition under which  a tight matrix Chernoff bound is known to hold. For example, the condition is implied with constant $D_{\text{inf}} = O(1)$ by almost all existing notions of negative dependence between random variables, including conditional negative association (CNA) and the strongly Rayleigh property \citep{permantle_survey}.  As discussed in Sec. \ref{sec:analysis}, a tight matrix Chernoff bound is a prerequisite for proving relative error active learning results like \Cref{eq:main_thm_gaur}, however does not imply such a result alone. Our proof of  \Cref{thm:main} requires adapting an approximate matrix-multiplication method from \citep{DrineasKannanMahoney:2006} to non-independent sampling. It  can be viewed as extending the work of \cite{KKS22} to show that essentially all sampling distributions known to yield tight matrix Chernoff bounds also yield near optimal active regression bounds in the agnostic setting. Importantly, this includes  binary-tree-based pivotal sampling methods like those introduced in this work. Such methods are known to satisfy the strongly Rayleigh property \citep{BJ12}, and thus one-sided $\ell_{\infty}$ independence with $D_{\text{inf}} = O(1)$.
So, as a corollary of \Cref{thm:main}, we obtain:
\begin{corollary}
\label{corr:main}
    The spatially-aware pivotal sampling methods introduced in Section \ref{sec:methods} (which use a fixed binary tree) return with probability $99/100$ a vector $\tilde{\mathbf{x}}^*$ satisfying $\lVert \mathbf{A} \tilde{\mathbf{x}}^* - \mathbf{b} \rVert_2^2 \leq (1 + \epsilon) \lVert \mathbf{A} \mathbf{x}^* - \mathbf{b} \rVert_2^2$ while only observing $O\left(d\log d + \frac{d}{\epsilon}\right)$ entries in $\bv{b}$.  
\end{corollary}
We hope that \Cref{thm:main} will be valuable in obtaining similar results for other sampling methods beyond our own. However, the result falls short of justifying why pivotal sampling  performs \emph{better} than independent leverage score sampling in experiments. Towards that end, we prove a second result specific to pivotal sampling, which shows that the method actually improves on the complexity of independent sampling by a log factor in the important special case of polynomial regression:
\begin{theorem}\label{thm:poly}
	Consider any function $b: [\ell, u]\rightarrow \R$ defined on an interval $ [\ell, u]\subset \R$, and consider fitting $b$ with a degree $d$ polynomial based on evaluations of the function at $x_1, \ldots, x_k \in [\ell, u]$. 
	If $x_1, \ldots, x_k$ are collected via pivotal sampling with leverage score marginals (see \Cref{app:poly_proof} for details), then as long as $k\geq c \cdot (d + \frac{d}{\epsilon})$ for a fixed positive constant $c$, there is a procedure that uses these samples to construct a degree $d$ polynomial $\tilde{p}$ which, with probability $99/100$, satisfies:
	\begin{align*}
		\|\tilde{p}-b\|_2^2 \leq (1+\epsilon)\min_{\text{degree $d$ polynomial $p$}} \|{p}-b\|_2^2.
	\end{align*}
		\vspace{-.5em}
\noindent Here $\|f\|_2^2$ denotes the average squared magnitude $\int_{\ell}^u f(x)^2 dx$ of a function $f$.
\end{theorem}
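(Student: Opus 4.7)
The plan is to realize continuous pivotal sampling on $[\ell,u]$ as stratified sampling with honest independence across strata, reduce polynomial regression to the two concentration conditions used in the proof of Theorem~\ref{thm:main}, and then sharpen the spectral condition with polynomial-specific structure in order to save the $\log d$ factor.

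\textbf{Setup and reduction.} Let $\mu$ be the uniform measure on $[\ell,u]$, let $q_0,\ldots,q_d$ be its orthonormal polynomials, and let $\tau=\sum_{j=0}^d q_j^2$ be the Christoffel function, so that $\int \tau\,d\mu = d+1$ and the leverage-score density is $p = \tau/(d+1)$. I would build a binary tree by recursively halving $[\ell,u]$ into sub-intervals of equal $p$-mass, ending at $k$ leaves $S_1,\ldots,S_k$ each with $p$-mass $1/k$. Since every leaf has marginal probability exactly $1$, the pivotal algorithm never resolves a competition on this tree, and the procedure collapses to drawing one sample $x_i$ independently from each stratum $S_i$ with density $k\,p$ restricted to $S_i$. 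Writing $\bv{q}(x) = (q_0(x),\ldots,q_d(x))^T$ and setting
\begin{equation*}
G := \tfrac{1}{k}\sum_{i=1}^k \tfrac{\bv{q}(x_i)\bv{q}(x_i)^T}{p(x_i)}, \qquad \bv{y} := \tfrac{1}{k}\sum_{i=1}^k \tfrac{r(x_i)\bv{q}(x_i)}{p(x_i)},
\end{equation*}
with $r = b - p^*$, the standard weighted least-squares reduction \citep{Sarlos:2006,CohenMigliorati:2017} shows that the theorem follows from (a) $\tfrac12 I \preceq G \preceq \tfrac32 I$ and (b) $\|\bv{y}\|_2^2 \leq (\epsilon/C)\|r\|_2^2$, each with high constant probability.

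\textbf{Residual bound, $O(d/\epsilon)$ samples.} For (b), orthogonality $r \perp q_j$ for every $j$ gives $\E[\bv{y}]=0$, and independence across strata yields $\E\|\bv{y}\|_2^2 \leq \tfrac{1}{k}\int r^2 \sum_j q_j^2/p\,d\mu = \tfrac{d+1}{k}\|r\|_2^2$, so Markov gives (b) at $k = O(d/\epsilon)$ with no $\log d$ factor. This is precisely the stratified analogue of the approximate-matrix-multiplication step used in the proof of Theorem~\ref{thm:main}.

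\textbf{Spectral bound and main obstacle.} Each summand of $G$ has operator norm exactly $(d+1)/k$, so a direct matrix Bernstein bound gives only $k = O(d\log d)$. To remove the logarithm for (a), I would exploit that equal-$p$-mass stratification places the $x_i$ close to the Chebyshev--Gauss quadrature nodes: for any unit $\bv{u}\in\R^{d+1}$, $\bv{u}^T\bv{q}(x)\bv{q}(x)^T\bv{u}$ is a polynomial of degree $\leq 2d$, and a Bernstein--Markov inequality scaled to a stratum of $p$-mass $1/k$ bounds its variation across $S_i$ by a small constant fraction of its conditional $p$-average. A scalar Bernstein bound over the $k$ independent strata then controls $\bv{u}^T(G-I)\bv{u}$ dimension-freely. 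The main obstacle is upgrading this to an operator-norm bound uniformly over $\bv{u}$ without reintroducing $\log d$: the clean route is to use the reproducing-kernel identity $K_d(x,y)=\bv{q}(x)^T\bv{q}(y)$ and its known $L^\infty$ localization on short sub-intervals of $[\ell,u]$ to argue that the per-stratum conditional covariance has effective rank $O(1)$, so that Tropp's intrinsic-dimension refinement of matrix Chernoff applies with log factor $O(1)$. The subtlety is handling strata at the endpoints of $[\ell,u]$, where Bernstein--Markov weakens but the strata correspondingly shrink; as a fallback I would identify the pivotal leaves as $O(1/d)$-perturbations of the Chebyshev--Gauss nodes and import the classical $L^2$-error bound for Gauss quadrature of polynomials of degree $\leq 2d$, which already yields the $O(d)$ sample complexity.
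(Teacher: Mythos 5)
Your setup matches the paper: both you and the authors reduce pivotal sampling in the continuum limit to stratified sampling from $k$ adjacent intervals of equal leverage-score mass $1/k$, and both observe that one point is drawn from each interval with conditional density proportional to $\tau$. Your approximate matrix-vector multiplication step (computing $\E\|\bv{y}\|_2^2 = \frac{d+1}{k}\|r\|_2^2$ and applying Markov) is exactly the paper's Appendix C.1 argument.

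For the subspace embedding, you have the right raw material — the integrand $x\mapsto(\bv{u}^T\bv{q}(x))^2$ is a degree-$2d$ polynomial whose within-stratum oscillation is controlled by Markov--Bernstein inequalities — but you then misdiagnose where the difficulty lies. You treat the passage from a fixed-$\bv{u}$ scalar bound to an operator-norm bound as the ``main obstacle,'' and propose either Tropp's intrinsic-dimension matrix Chernoff (the per-stratum conditional covariances are rank-one but not aligned, so their sum does not have low effective rank, and this route will not cleanly shed the $\log d$) or a perturbation argument identifying pivotal strata with Chebyshev--Gauss nodes (plausible in spirit but not fleshed out, and near the endpoints the nodes and strata behave quite differently). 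Neither is what the paper does, and more importantly neither is needed. The crucial observation you are missing is that the per-stratum-partition bound is \emph{deterministic}: for any selection of one point $t_i \in I_i$, writing $f=p^2$, the telescoping decomposition in \eqref{eq: decomposeL1intointervals}--\eqref{eq: polyboundmain} gives
\[
\Bigl|\int_{-1}^1 f - \tfrac1k\sum_i \tfrac{f(t_i)}{w(t_i)}\Bigr| \le \tfrac1k\int_{-1}^1 \tfrac{|f'|}{w} + \tfrac1k\int_{-1}^1 \tfrac{f\,|w'|}{w^2},
\]
which is then closed using the $L^1$ Bernstein and Markov inequalities (Propositions \ref{prop: L1Bernstein}--\ref{prop: L1Markov}) together with region-dependent lower bounds on $w$ and upper bounds on $|w'|$ obtained from Legendre-polynomial identities (Propositions \ref{prop: middlew}--\ref{prop: boundofdtauboundary}). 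Since the resulting \Cref{thm:mainerrbound} holds with probability $1$ for \emph{every} degree-$d$ polynomial simultaneously, it is already an operator-norm statement; no union bound, no net, no matrix concentration, and hence no $\log d$. Two further cautions: (i) your claim that a Bernstein--Markov inequality bounds the variation of $f$ \emph{across each individual stratum} $S_i$ by a small constant fraction of its conditional average is not true uniformly — in the boundary strata of length $\sim 1/d^2$ the relative variation of $\tau$ itself is $\Theta(1)$, which is why the paper bounds an aggregate $L^1$ quantity rather than each stratum separately, splitting the analysis into a middle region and a boundary region; (ii) your ``leaf marginal probability is $1$ so pivotal never competes'' framing is at the stratum level, whereas the paper's pivotal process competes at the point level with infinitesimal marginals — the end result (one point per stratum, conditional density $\propto\tau$, independence across strata) is the same, so this is presentational, but it elides the argument for why pivotal with left-to-right order actually produces this stratification.
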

The problem of finding a polynomial approximation to a real-valued function that minimizes the average square error $\|{p}-b\|_2^2$ can be modeled as an active regression problem involving a matrix $\bv{A}$ with $d+1$ columns and an infinite number of rows. In fact, polynomial approximation is one of the primary applications of prior work on leverage score-based active learning methods \citep{CohenMigliorati:2017,universal_sampling}. Such methods require $O\left(d\log d + \frac{d}{\epsilon}\right)$ samples, so \Cref{thm:poly} is better by a $\log d$ factor. Since polynomial approximation is a representative problem where spatially-distributed samples are important, \Cref{thm:poly} provides theoretical justification for the strong performance of pivotal sampling in experiments.  Our proof is inspired by a result of \cite{KaneKarmalkarPrice:2017}, and relies on showing a tight relation between the leverage scores of the polynomial regression problem and the orthogonality measure of the Chebyshev polynomials on $[\ell, u]$.

\subsection{Related Work}\label{subsec:relatedwork}
The application of leverage score sampling to the agnostic active regression problem has received significant recent attention. Beyond the results discussed above, extensions of leverage score sampling have been studied for norms beyond $\ell_2$ \citep{pmlr-v134-chen21d,focs2022,MeyerMuscoMusco:2023,parulekar_et_al}, in the context where the sample space is infinite (i.e. $\bv{A}$ is an operator with infinite rows) \citep{tamas_2020,universal_sampling}, and for functions that involve non-linear transformations \citep{GHM22,MaiRaoMusco:2021,MunteanuSchwiegelshohnSohler:2018}.

Theoretical improvements on leverage score sampling have also been studied. Notable is a recent result that improves on the $O(d\log d + d/\epsilon)$ bound by a $\log d$ factor, showing that the active least squares regression problem can be solved with $O(d/\epsilon)$ samples \citep{CP19}. This is provably optimal. However, the algorithm in \citep{CP19} is complex, and appears to involve large constant factors: in our initial experiments, it did not empirically improve on independent leverage score sampling. In contrast, by \Cref{thm:poly}, our pivotal sampling method matches the theoretical sample complexity of \citep{CP19} for the special case of polynomial regression, but performs well in experiments (significantly better than independent leverage score sampling).
There have been a few other efforts to develop practical improvements on leverage score sampling.  Similar to our work, \cite{lev_volume_neurips2018} study a variant of volume sampling that matches the theoretical guarantees of leverage score sampling, but performs better experimentally.  However, this method does not explicitly take into account spatial-structure in the underlying regression problem. 
While the method from  \citep{lev_volume_neurips2018} does not quite fit our  \Cref{thm:main}  (e.g., it samples indices \emph{with} replacement) we expect similar methods could be analyzed as a special case of our result, as volume sampling induces a strongly Rayleigh distribution.

While pivotal sampling has not been studied in the context of agnostic active regression, it is widely used in other applications, and its negative dependence properties have been studied extensively. \citep{DJR05} proves that pivotal sampling satisfies the negative association (NA) property.  \citep{BBL08} introduced the notion of a strongly Rayleigh distribution and proved that it implies a stronger notion of conditional negative association (CNA), and \citep{BJ12} showed that pivotal sampling run with an arbitrary binary tree is strongly Rayleigh. It follows that the method satisfies CNA.
\citep{GWBW22} discusses an efficient algorithm for pivotal sampling by parallelization and careful manipulation of inclusion probabilities. Another variant of pivotal sampling that is \textit{spatially-aware} is proposed in \citep{GLS12}. Though their approach is out of the scope of our analysis as it involves randomness in the competition order used during sampling, our method is inspired by their work.

\subsection{Notation and Preliminaries}\label{sec:preliminaries}
\textbf{Notation.} We let $[n]$ denote $\{1, \cdots, n\}$. $\E[X]$ denotes the expectation of a random variable $X$. We use bold lower-case letters for vectors and bold upper-case letters for matrices. For a vector $\mathbf{z} \in \R^n$ with entries $z_1, \cdots, z_n$, $\lVert \mathbf{z} \rVert_2 = (\sum_{i=1}^n z_i^2)^{1/2}$ denotes the Euclidean norm of $\mathbf{z}$.
Given a matrix $\mathbf{A} \in \R^{n \times d}$, we let $\mathbf{a}_i$ denote the $i$-th row, and $a_{ij}$ denote the entry in the $i$-th row and $j$-th column. 

\noindent \textbf{Importance sampling.} All of the methods studied in this paper solve the active regression problem by collecting a single random sample of rows in $\bv{A}$ and corresponding entries in $\bv{b}$. We introduce a vector of binary random variables $\bm{\xi} = \{\xi_1, \cdots, \xi_n\}$, where $\xi_i$ is $1$ if $\bv{a}_i$ (and thus $b_i$) is selected, and $0$ otherwise. $\xi_1, \cdots, \xi_n$ will not necessarily be independent depending on our sampling method. 
Given a sampling method, let $p_i = \E[\xi_i]$ denote the marginal probability that row $i$ is selected. We return an approximate regression solution as follows: let $\tilde{\bv{A}}\in \R^{k\times d}$  contain $\bv{a}_i/\sqrt{p_i}$ for all $i$ such that $\xi_i = 1$, and similarly let $\tilde{\bv{b}}\in \R^k$ contain $b_i/\sqrt{p_i}$ for the same values of $i$. This scaling ensures that, for any fixed $\bv{x}$, $\E\|\tilde{\bv{A}}\bv{x} - \tilde{\bv{b}}\|_2^2 = \|{\bv{A}}\bv{x} - {\bv{b}}\|_2^2$. To solve the active regression problem, we return $\tilde{\mathbf{x}}^* = \argmin_{\mathbf{x} \in \R^d} \lVert \tilde{\mathbf{A}} \mathbf{x} - \tilde{\mathbf{b}} \rVert_2^2$. Computing $\tilde{\mathbf{x}}^*$ only requires querying $k$ target values in $\bv{b}$. 

\noindent \textbf{Leverage Score Sampling.} 
We consider methods that choose the marginal probabilities proportional to $\bv{A}$'s leverage scores. Specifically, our methods sample row $\bv{a}_i$ with marginal probability $\tilde{p}_i = \min(1, c_k\cdot \tau_i)$, where $c_k$ is chosen so that $\sum_{i=1}^n \tilde{p}_i = k$. Details of how to find $c_k$ are discussed in Appendix \ref{app:preprocess}. We note that we always have $c_k \geq k/d$ since $\sum_{i=1}^n p_i \leq \sum_{i=1}^n \frac{k}{d} \cdot\tau_i \leq \frac{k}{d}\cdot d = k$.

\section{Our Methods}\label{sec:methods}
In this section, we present our sampling scheme which consists of two steps; deterministically constructing a binary tree, and choosing samples by running the pivotal method on this tree. The pivotal method is described in Algorithm \ref{algo:pivotal}. It takes as input a binary tree with $n$ leaf nodes, each corresponding to a single index $i$ to be sampled. For each index, we also have an associated probability $\tilde{p}_i$. The algorithm collects a set of exactly $k$ samples $\mathcal{S}$ where $k = \sum_{i=1}^n \tilde{p}_i$. It does so by percolating up the tree and performing repeated head-to-head comparisons of the indices at sibling nodes in the tree. After each comparison, one node promotes to the parent node with updated inclusion probability, and the other node is determined to be sampled or not to be sampled.

\begin{figure}[t]
	\vspace{-2em}
\begin{algorithm}[H]\caption{Binary Tree Based Pivotal Sampling \citep{deville1998unequal}}\label{algo:pivotal}
    \begin{algorithmic}[1]
        \Require Depth $t$ full binary tree $T$ with $n$ leaves, inclusion probabilities $\{\tilde{p}_1, \cdots, \tilde{p}_n\}$ for each leaf.
        \Ensure Set of $k$ sampled indices $\mathcal{S}$.
        \State Initialize $\mathcal{S} = \emptyset$.
        \While{$T$ has at least two remaining children nodes}
            \State Select any pair of sibling nodes $S_1,S_2$ with parent $P$. Let $i,j$ be the indices stored at $S_1,S_2$.
                \If{$\tilde{p}_i + \tilde{p}_j \leq 1$}
                    \State With probability $\frac{\tilde{p}_i}{\tilde{p}_i + \tilde{p}_j}$, set $\tilde{p}_i \gets \tilde{p}_i + \tilde{p}_j$, $\tilde{p}_j \gets 0$. Store $i$ at $P$.
                    \State Otherwise, set $\tilde{p}_j \gets \tilde{p}_i + \tilde{p}_j$, $\tilde{p}_i \gets 0$. Store $j$ at $P$.
                \ElsIf{$\tilde{p}_i + \tilde{p}_j > 1$}
                    \State With probability $\frac{1-\tilde{p}_i}{2-\tilde{p}_i-\tilde{p}_j}$, set $\tilde{p}_i \gets \tilde{p}_i+\tilde{p}_j-1$, $\tilde{p}_j=1$. Store $i$ at $P$ and set $\mathcal{S} \gets \mathcal{S} \cup \{j\}$.
                    \State Otherwise, set $\tilde{p}_j \gets \tilde{p}_i+\tilde{p}_j-1$, $\tilde{p}_i \gets 1$. Store $j$ at $P$ and set $\mathcal{S} \gets \mathcal{S} \cup \{i\}$.
                \EndIf
                \State Remove $S_1,S_2$ from $T$.
        \EndWhile
        \Return $\mathcal{S}$
    \end{algorithmic}
\end{algorithm}
	\vspace{-2em}
\end{figure}

\begin{figure}[t]
	\vspace{-1.5em}
	\begin{algorithm}[H]\caption{Binary Tree Construction by Coordinate or PCA Splitting}\label{algo:btree}
		\begin{algorithmic}[1]
			\Require Matrix $\mathbf{X} \in \R^{n \times d'}$, split method $\in \{\text{PCA, coordinate}\}$, inclusion probabilities $\tilde{p}_1, \cdots, \tilde{p}_n$.
			\Ensure Binary tree  $\mathcal{T}$ where each leaf corresponds to a row in $\mathbf{X}$.
			\State Create a tree $T$ with a single root node. Assign set $\mathcal{R}$ to the root where $\mathcal{R} = \{i \in [n]; \tilde{p}_i<1\}$.
			\While{There exists a node in $T$ that holds set $\mathcal{K}$ such that $|\mathcal{K}|>1$}
			\State Select any such node $N$ and let $t$ be its level in the tree. Construct $\mathbf{X}_{(\mathcal{K})} \in \R^{|\mathcal{K}| \times d'}$.
			\If{split method $=$ PCA}
			\State Sort $\mathbf{X}_{(\mathcal{K})}$ according to the direction of the maximum variance.
			\ElsIf{split method $=$ coordinate}
			\State Sort $\mathbf{X}_{(\mathcal{K})}$ according to values in its $((t \,\, \mathrm{mod} \,\, d')+1)$-th column.
			\EndIf
			\State Create a left child of $N$. Assign to it all indices associated with the first $\lfloor\frac{|\mathcal{K}|}{2}\rfloor$ rows of $\mathbf{X}_{(\mathcal{K})}$.
			\State Create a right child of $N$. Assign to it the all remaining indices in $\mathbf{X}_{(\mathcal{K})}$. Delete $\mathcal{K}$ from $N$.
			\EndWhile
			\Return $T$
		\end{algorithmic}
	\end{algorithm}
	\vspace{-2.5em}
\end{figure}

It can be checked that, after running Algorithm \ref{algo:pivotal}, index $i$ is always sampled with probability $\tilde{p}_i$, regardless of the choice of $T$. However, the samples collected by the pivotal method are not independent, but rather negatively correlated: siblings in $T$ are unlikely to both be sampled, and in general, the events that close neighbors in the tree are both sampled are negatively correlated. In particular, if index $i$ could at some point compete with an index $j$ in the pivotal process, the chance of selecting $j$ decreases if we condition on $i$ being selected. We take advantage of this property to generate spatially distributed samples by constructing a binary tree that matches the underlying geometry of our data. In particular, assume we are given a set of points $\bv{X}\in \R^{n\times d'}$. $\bv{X}$  will eventually be used to construct a regression matrix $\bv{A}\in \R^{n\times d}$ via feature transformation (e.g. by adding polynomial features). However, we construct the sampling tree using  $\bv{X}$ alone.

\begin{figure}[t]
	\vspace{-0.9em}
    \centering
    \includegraphics[width=.7\linewidth]{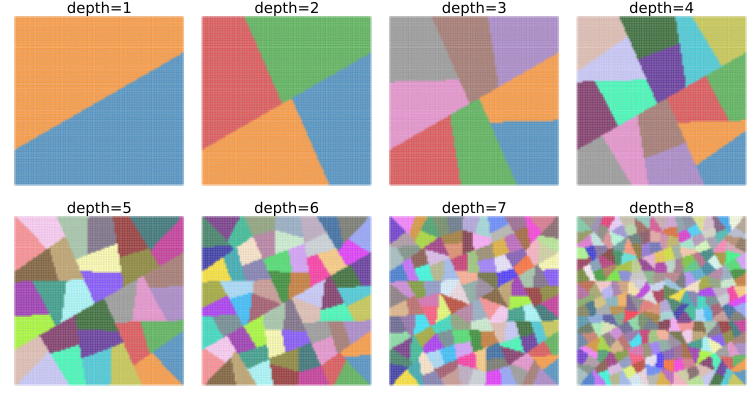}
        \vspace{-.9em}
    \caption{Visualization of a binary tree constructed via our Algorithm \ref{algo:btree} using the PCA method for a matrix $\bv{X}\in\R^{n\times 2}$ containing points on a uniform square grid. For each depth, data points are given the same color if they compose a subtree with root at that depth. As we can see, the method produces uniform recursive spatial partitions, which encourage spatially separated samples.}
    \label{fig:sec3:binarytree}
    \vspace{-1em}
\end{figure}

Our tree construction method is given as Algorithm \ref{algo:btree}. $\bv{X}_{\mathcal{K}}$ denotes the subset of rows of $\bv{X}$ with indices in the set $\mathcal{K}$. First, the algorithm eliminates all data points with inclusion probability $\tilde{p}_i = 1$. Next, it recursively partitions the remaining data points into two subgroups of the same size until all the subgroups have only one data point. Our two methods, PCA-based and coordinate-wise, only differ in how to partition. The PCA-based method performs principal component analysis to find the direction of the maximum variance and splits the space by a hyperplane orthogonal to the direction so that the numbers of data points on both sides are equal. The coordinate-wise version takes a coordinate (corresponding to a column in $\bv{X}$) in cyclic order and divides the space by a hyperplane orthogonal to the chosen coordinate.
An illustration of the PCA-based binary tree construction run on a fine uniform grid of data points in $\R^2$ is shown in Figure \ref{fig:sec3:binarytree}. Note that our tree construction method ensures the the number of indices assigned to each subgroup (color) at each level is equal to with $\pm 1$ point. As such, we end of with an even partition of data points into spatially correlated sets. 
Two indices will be more negatively correlated if they lie in the same set at a higher depth number. 

\section{Theoretical Analysis}
\label{sec:analysis}
As will be shown experimentally, when using probabilities $\tilde{p}_1, \ldots, \tilde{p}_n$ proportional to the statistical leverage scores of $\bv{A}$, our tree-based pivotal methods significantly outperform its Bernoulli counterpart for active regression. We provide two results theoretically justifying this operation. We first show that, no matter what our original data matrix $\bv{X}$ is, and what feature transformation is used to construct $\bv{A}$, our methods never perform 
 \emph{worse} than Bernoulli sampling. In particular, they match the $O(d\log d + d/\epsilon)$ sample complexity of independent leverage score sampling. 
  This result is stated as Theorem \ref{thm:main}. Its proof is given to Appendix \ref{sec:proofs}, but we outline our main approach here. 

Following existing proofs for independent random sampling (e.g. \citep{Woodruff:2014} Theorem \ref{thm:main} requires two main ingredients: a subspace embedding result, and an approximate matrix-vector multiplication result. In particular, let $\bv{U}\in \R^{n\times d}$ be any orthogonal span for the columns of $\bv{A}$. Let $\bv{S} \in \R^{k\times n}$ be a subsampling matrix that contains a row for every index $i$ selected by our sampling scheme, which has value ${1}/{\sqrt{\tilde{p}_i}}$ at entry $i$, and is $0$ everywhere else. So, in the notation of  Theorem \ref{thm:main}, $\tilde{\bv{A}} = \bv{S}\bv{A}$ and $\tilde{\bv{b}} = \bv{S}\bv{b}$. To prove the theorem, it suffices to show that with high probability, 
\begin{enumerate}
\item \textbf{Subspace Embedding:} For all $\bv{x}\in \R^d$, $\frac{1}{2}\|\bv{x}\|_2 \leq \|\bv{S}\bv{U}\bv{x}\|_2 \leq 1.5\|\bv{x}\|_2$.
\item \textbf{Approximate Matrix-Vector Multiplication}: $\| \mathbf{U}^T \mathbf{S}^T \mathbf{S} (\mathbf{b} - \mathbf{A} \mathbf{x}^*) \|_2^2 \leq \epsilon \| \mathbf{b} - \mathbf{A} \mathbf{x}^* \|_2^2$.
\end{enumerate}
The first property is equivalent to $\|\bv{U}^T\bv{S}^T\bv{S}\bv{U} - \bv{I}\|_2 \leq 1/2$. I.e., after subsampling, $\bv{U}$ should remain nearly orthogonal. The second property requires that after subsampling with $\bv{S}$, the optimal residual, $\bv{b} - \bv{A}\bv{x}^*$, should have small product with $\bv{U}$. Note that without subsampling,  $\|\bv{U}^T(\mathbf{b} - \mathbf{A} \mathbf{x}^*)\|_2^2 = 0$. 

We show that both of the above bounds can be established for any sampling method that 1) samples index $i$ with marginal probability proportional to its leverage score 2) is homogeneous, meaning that it takes a fixed number of samples $k$, and 3) produces a distribution over binary vectors satisfying the following property:
\begin{restatable}[One-sided $\ell_{\infty}$-independence]{definition}{linfinityind}\label{def:l-infty}
Let $\xi_1, \cdots, \xi_n \in \{0,1\}^n$ be random variables with joint distribution $\mu$. Let $\mathcal{S}\subseteq [n]$ and let $i,j \in [n] \backslash \mathcal{S}$. Define the one-sided influence matrix $\mathcal{I}_{\mu}^{\mathcal{S}}$ as:\vspace{-.1em}
    \begin{align*}\begin{split}
        \mathcal{I}_{\mu}^{\mathcal{S}}(i, j) = \Pr_{\mu} [\xi_j=1 | \xi_i = 1 \wedge \xi_{\ell} = 1 \forall \ell \in \mathcal{S}] - \Pr_{\mu} [\xi_j=1 | \xi_{\ell} = 1 \forall \ell \in \mathcal{S}]
    \end{split}\end{align*}
Let $\lVert \mathcal{I}_{\mu}^{\mathcal{S}} \rVert_{\infty} = \max_{i \in [n]} \sum_{j \in [n]} | \mathcal{I}_{\mu}^{\mathcal{S}}(i, j) |$. $\mu$ is one-sided $\ell_{\infty}$-independent with param. $D_{\text{inf}}$ if, for all $\mathcal{S} \subset [n]$, $\lVert \mathcal{I}_{\mu}^{\mathcal{S}} \rVert_{\infty} \leq D_{\text{inf}}$.
    Note that if $\xi_1, \ldots, \xi_n$ are truly independent, we have $D_{\text{inf}} = 1$.
\end{restatable}
To prove \Cref{thm:main}, our required subspace embedding property follows immediately from recently established matrix Chernoff-type bounds for sums of random matrices involving one-sided $\ell_{\infty}$-independent random variables in \citep{KKS22}. In fact, this work is what inspired us to consider this property, as it is the minimal condition under which such bounds are known to hold. Our main theoretical contribution is thus to prove the matrix-vector multiplication property. We do this by generalizing the approach of \citep{DrineasKannanMahoney:2006} (which holds for independent random samples) to any distribution that satisfies one-sided $\ell_{\infty}$-independence. 

\subsection{Improved Bounds for Polynomial Regression}
We do not believe that \Cref{thm:main} can be strengthened in general, as spatially well-spread samples may not be valuable in all settings. For example, in the case when $\bv{A} = \bv{X}$,  the points in $\bv{X}$ live in $d$ dimensional space, and we are only collecting $O(d\log d)$ samples, so intuitively any set of points is well-spread. However, we can show that better spatial distribution {does} offer \emph{provably} better bounds in some settings where $\bv{X}$ is low-dimensional and $\bv{A}$ is a high dimensional feature transformation.
 In particular, consider the case when $\bv{X}$ is a (quasi) matrix containing just one column, with an infinite number of rows corresponding to every point $t$ in an interval $[\ell, u]$. Every row of $\bv{A}$ is a polynomial feature transformation, meaning of the form $\bv{a}_t = [1, t, t^2, \ldots, t^d]$ for degree $d$. Now, consider a target vector $\bv{b}$ that can be indexed by real numbers in $[\ell, u]$. Solving the infinite dimensional regression problem $\min_{\bv{x}} \|\bv{A}\bv{x} - \bv{b}\|_2^2 = \min_{\bv{x}} \int_{\ell}^u (\bv{a}_t^T\bv{x} - \bv{b}_t)^2 dt$ is equivalent to finding the best polynomial approximation to $\bv{b}$ in the $\ell_2$ norm, a well studied application of leverage score sampling \cite{CohenMigliorati:2017}. For this problem, we  show that pivotal sampling obtains a sample complexity of $O(d/\epsilon)$, improving on independent leverage score sampling by a $\log d$ factor.
 
This result is stated as Theorem \ref{thm:poly} and proven in Appendix \ref{app:poly_proof}. Importantly, we note that the dependence on $d\log d$ in the general active regression analysis comes from the proof of the subspace embedding guarantee -- the required approximate matrix-multiplication guarantee already follows with $O(d/\epsilon)$ samples. Our proof eliminates the $\log d$ by avoiding the use of a matrix Chernoff bound entirely. Instead, by taking advantage of connections between leverage scores of the polynomial regression problem and the orthogonality measure of the Chebyshev polynomials, we directly use tools from polynomial approximation theory to prove a subspace embedding bound that holds \emph{deterministically} with just $O(d)$ samples. Our approach is similar to a recently result of \cite{KaneKarmalkarPrice:2017}, which also obtains  $O(d/\epsilon)$ sample complexity for active degree-$d$ polynomial regression, albeit with a sampling method not based on leverage scores.

\section{Experiments}\label{sec:experiments}
We experimentally evaluate our pivotal sampling methods on active regression problems with low-dimensional structure. The benefits of leverage score sampling over uniform sampling for such problems has already been established (see e.g. \citep{HamptonDoostan:2015} or \citep{GHM22}) and we provide additional evidence in Appendix \ref{sec:appendix:experiments}. So, we focus on comparing our pivotal methods to the widely used baseline of \emph{independent} Bernoulli leverage score sampling. 

\begin{figure}[b]
	\vspace{-1em}
	\centering
	\begin{subfigure}[b]{0.245\textwidth}
		\centering
		\includegraphics[width=\linewidth]{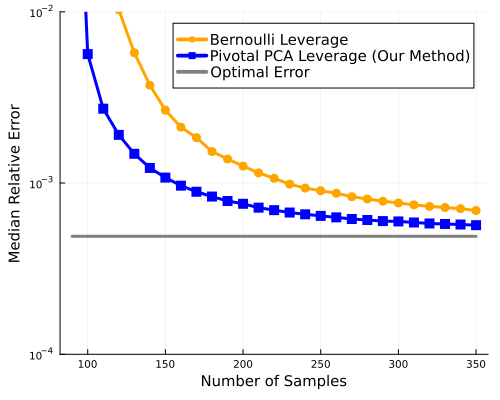}
		\vspace{-1em}
		\caption{Damped Harmonic Oscillator, degree $p=12$.}    
	\end{subfigure}
	\hfill
	\centering
	\begin{subfigure}[b]{0.245\textwidth}
		\centering
		\includegraphics[width=\linewidth]{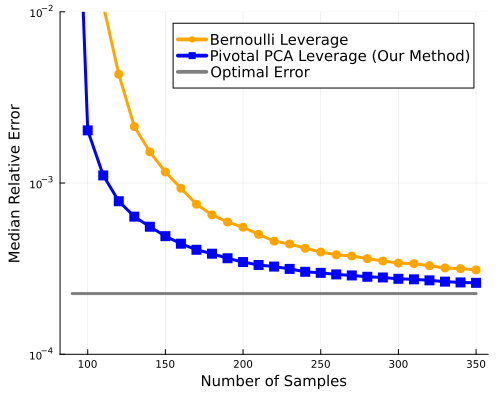}
		\vspace{-1em}
		\caption{Heat Equation, degree $p=12$.}    
	\end{subfigure}
 \begin{subfigure}[b]{0.245\textwidth}
		\centering
		\includegraphics[width=\linewidth]{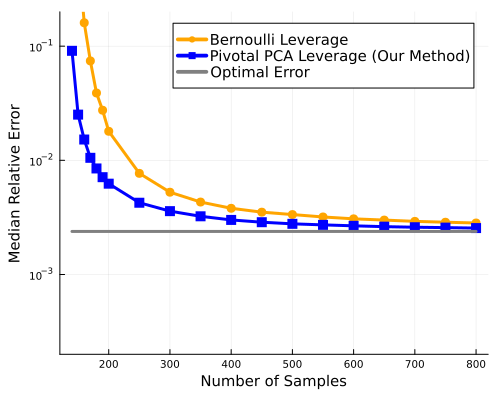}
		\vspace{-1em}
		\caption{Surface Reaction, degree $p=15$.}    
	\end{subfigure}
	\hfill
	\centering
	\begin{subfigure}[b]{0.245\textwidth}
		\centering
		\includegraphics[width=\linewidth]{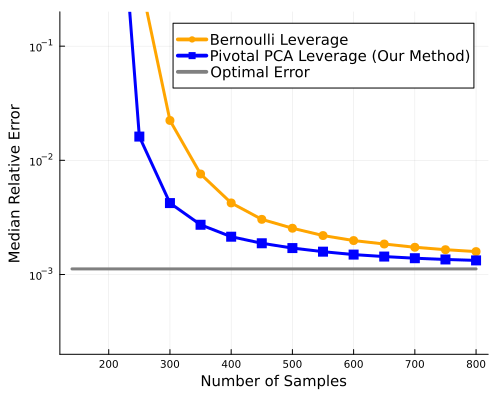}
		\vspace{-1em}
		\caption{Surface Reaction, degree $p=20$.}    
	\end{subfigure}
 	\vspace{-1em}
	\caption{
 Results for active polynomial regression for the damped harmonic oscillator QoI, the heat equation QoI, and the surface reaction model with polynomials varying degree. Our leverage-score based pivotal method outperforms standard Bernoulli leverage score sampling, suggesting the benefits of spatially-aware sampling.}
	\label{fig:sec5:result}
	\vspace{-.5em}
\end{figure}

\noindent \textbf{Test Problems.} We consider several test problems inspired by applications to parametric PDEs. In these problems, we are given a differential equation with parameters, and seek to compute some observable quantity of interest (QoI) for different choices of parameters. This can be done directly by solving the PDE, but doing so is computationally expensive. Instead, the goal is to use a small number of solutions to the PDE to fit a surrogate model (in our case, a low-degree polynomial) that approximates the QoI well, either over a given parameter range, or on average for parameters drawn from a distribution (e.g., Gaussian). The problem is naturally an active regression problem because we can choose exactly what parameters to solve the PDE for. 
The first equation we consider models the displacement $x$ of a damped harmonic oscillator with a sinusoidal force over time $t$. 
\begin{align*}\begin{split}
    \frac{d^2 x}{d t^2}(t) + c \frac{dx}{dt}(t) + k x(t) = f \cos(\omega t), \quad x(0)=x_0, \quad \frac{dx}{dt}(0)=x_1.
\end{split}\end{align*}
The equation has four parameters; damping coefficient $c$, spring constant $k$, forcing amplitude $f$, and frequency $\omega$. As a QoI, we consider the maximum oscillator displacement after $20$ seconds. We fix $c,f = 0.5$, and seek to approximate this displacement over the range domain $k \times \omega = [1,3] \times [0,2]$.

We also consider the heat equation for values of $x \in [0,1]$ with a time-dependent boundary equation and sinusoidal initial condition parameterized by a frequency $\omega$. The heat equation that we consider describes the temperature $f(x,t)$ by the partial differential equation.
\begin{align*}
    \pi \frac{\partial f}{\partial t} = \frac{\partial^2 f}{\partial x^2}, \quad f(0,t)=0, \quad f(x,0)=\sin(\omega \pi x), \quad \pi e^{-t} + \frac{\partial f(1, t)}{\partial t} = 0.
\end{align*}
As a QoI, we estimate the maximum temperature over all values of $x$ for $t \in [0,3]$ and  $\omega \in [0,5]$.

\begin{figure}[t]
    \vspace{-1em}
    \centering
    \begin{subfigure}[t]{0.32\textwidth}
        \centering
        \includegraphics[width=.8\linewidth]{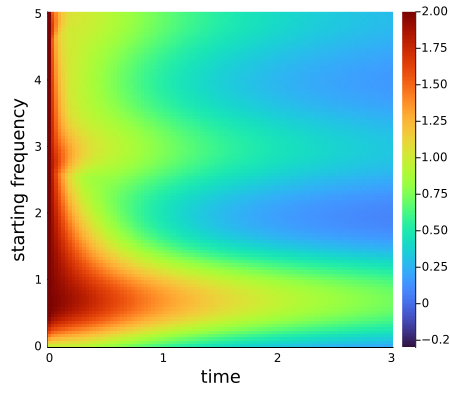}
                \vspace{-.5em}
        \caption{Target Function.}    
    \end{subfigure}
    \hfill
    \centering
    \begin{subfigure}[t]{0.32\textwidth}
        \centering
        \includegraphics[width=.8\linewidth]{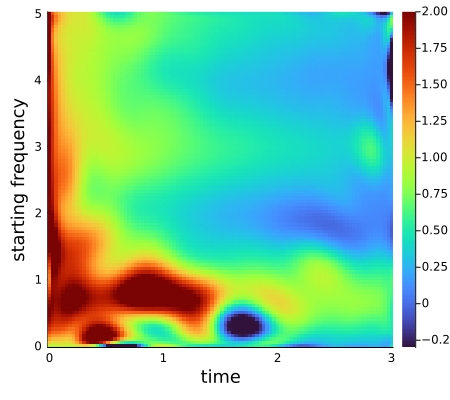}
                \vspace{-.5em}
        \caption{Bernoulli Sampling.}    
    \end{subfigure}
    \hfill
    \centering
    \begin{subfigure}[t]{0.32\textwidth}
        \centering
        \includegraphics[width=.8\linewidth]{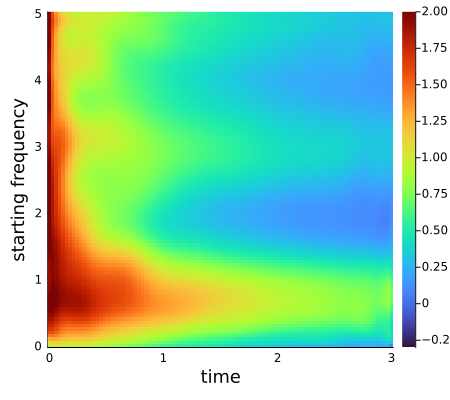}
                \vspace{-.5em}
        \caption{Pivotal Sampling (our method).}    
    \end{subfigure}
\vspace{-.5em}
    \caption{
    Polynomial approximation to the maximum temperature of a heat diffusion problem, as a function of time and starting condition. (a) is the target value and both (b) and (c) draw $240$ samples using the leverage score and perform polynomial regression of degree $20$.  However, (b) uses Bernoulli sampling while (c) employs our PCA-based pivotal sampling. 
}
    \label{fig:sec5:heat}
    \vspace{-1em}
\end{figure}

\noindent \textbf{Data Matrix.} For both problems, we construct $\bv{A}$ by uniformly selecting $n = 10^5$ data points in the 2-dimensional parameter range of interest. We then add all polynomial features of degree $p=12$ as discussed in Section \ref{subsec:contributions}. We compute sampled entries from the target vector $\bv{b}$ using standard MATLAB routines. Results comparing our PCA-based pivotal method and Bernoulli leverage score sampling are show in Figure \ref{fig:sec5:result}. We report median normalized error ${\lVert \mathbf{A} \tilde{\mathbf{x}}^* - \mathbf{b} \rVert_2^2}/{\lVert \mathbf{b} \rVert_2^2}$ after $1000$ trials. 
By drawing more samples from $\bv{b}$, the errors of all methods eventually converge to the optimal error  ${\lVert \mathbf{A} {\mathbf{x}}^* - \mathbf{b} \rVert_2^2}/{\lVert \mathbf{b} \rVert_2^2}$, but clearly the pivotal method requires less samples to achieve a given level of accuracy, confirming the benefits of spatially-aware sampling. We also visualize results for the damped harmonic oscillator in Figure \ref{fig:sec1:comparison}, showing approximations obtained with $250$ samples. Visualizations for the heat equation are given in Figure \ref{fig:sec5:heat}. For both targets, one can directly see that pivotal sampling improves the performance over Bernoulli sampling. 

We also consider a chemical surface coverage problem from \citep{HamptonDoostan:2015}. Details are relegated to \Cref{sec:appendix:experiments}. Again, we vary two parameters and seek to fit a surrogate model using a small number of example pairs of parameters. Instead of a uniform distribution, for this problem, the rows of $\bv{X}$ are drawn from a Gaussian distribution with $0$-mean and $7.5$ standard deviation. We construct $\bv{A}$ using polynomial features of varying degrees. Convergence results are shown in Figure \ref{fig:sec5:result} and the fit visualized for data near the origin in Figure \ref{fig:sec7.5:surface}. This is a challenging problem since the target function has sharp threshold behavior that is difficult to approximate with a polynomial. However, our pivotal based leverage score sampling performs well, providing a better fit than Bernoulli sampling. Additional experiments, including on 3D problems are reported in Appendix \ref{sec:appendix:experiments}.

\begin{figure}[t]
	\centering
	\begin{subfigure}[t]{0.24\textwidth}
		\centering
		\includegraphics[width=\linewidth]{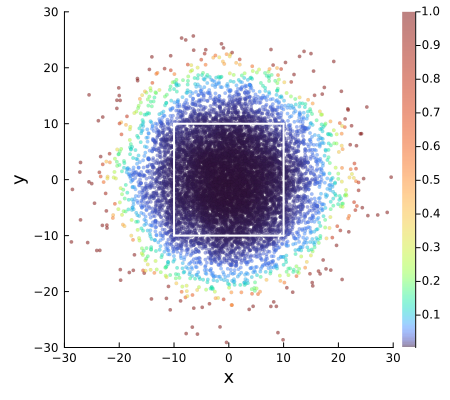}
		\vspace{-1.5em}
		\caption{Leverage Score.}    
	\end{subfigure}
	\centering
	\begin{subfigure}[t]{0.24\textwidth}
		\centering
		\includegraphics[width=\linewidth]{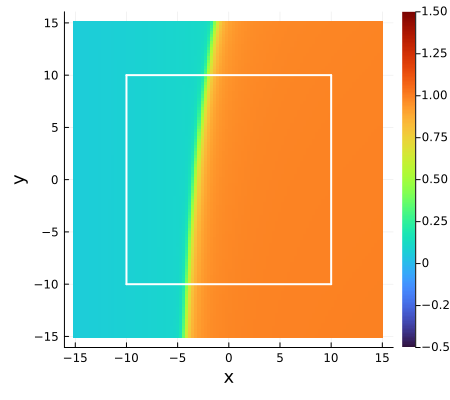}
		\vspace{-1.5em}
		\caption{Target Function.}    
	\end{subfigure}
	\hfill
	\centering
	\begin{subfigure}[t]{0.24\textwidth}
		\centering
		\includegraphics[width=\linewidth]{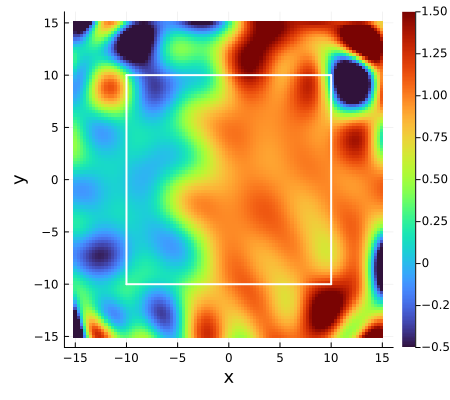}
		\vspace{-1.5em}
		\caption{Bernoulli Sampling.}    
	\end{subfigure}
	\centering
	\begin{subfigure}[t]{0.24\textwidth}
		\centering
		\includegraphics[width=\linewidth]{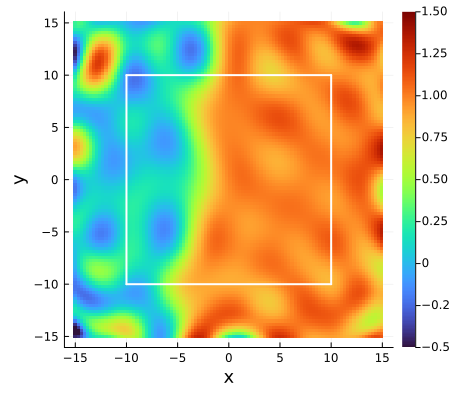}
		\vspace{-1.5em}
		\caption{Pivotal Sampling.}    
	\end{subfigure}
 \vspace{-.5em}
	\caption{Approximation of the surface reaction model when $425$ samples are used to fit a degree $25$ polynomial. To help readers focus on the area near the origin, we draw $[-10,10]^2$ box.}
	\label{fig:sec7.5:surface}
	\vspace{-1.75em}
\end{figure}

\section{Conclusion and Future Work}\label{sec:conclusion}
\vspace{-.25em}
In this paper, we introduce a \textit{spatially-aware} pivotal sampling method and empirically demonstrate its effectiveness for active linear regression for spatially-structured problems. We prove a general theorem that can be used to analyze the number of samples required in the agnostic setting for any similar method that samples with marginal probabilities proportional to the leverage scores using a distribution that satisfies one-sided $\ell_{\infty}$ independence. We provide a stronger bound for the important special case of polynomial regression, showing that our method can obtain a sample complexity of $O(d/\epsilon)$, removing a $\log(d)$ factor from independent leverage score sampling. This result provides initial theoretical evidence for the strong performance of pivotal sampling in practice. Extending it to other spatially structured function classes would be an interesting direction for future work. 

\section*{Acknowledgements} Xiaoou Cheng and Jonathan Weare were supported by NSF award 2054306. Christopher Musco was supported by DOE award DE-SC0022266 and NSF award 2045590. We would like to thank NYU IT for the use of the Greene computing cluster.

\bibliographystyle{plainnat}
\bibliography{references}

\appendix
\section{Probability Pre-processing}
\label{app:preprocess}
As discussed in Section \ref{sec:preliminaries}, our sampling methods require computing probabilities $\tilde{p}_1, \ldots, \tilde{p}_n$ where $\tilde{p}_i = \min(1, c_k\cdot \tau_i)$ for some fixed constant $c_k$ chosen so that $\sum_{i=1}^n \tilde{p}_i = k$. We can find such probabilities using a simple iterative method, which takes as input initial ``probabilities''
$
    p_i = \frac{k}{d}\tau_i
$
that are proportional to the leverage scores $\tau_1, \ldots, \tau_n$ of $\bv{A}$. Note that $p_i$ could be larger than $1$ if $k > d$. Pseudocode for how to adjust these probabilities is included in Algorithm \ref{algo:ceiling}.
\begin{algorithm}[h!]\caption{Probability Ceiling}\label{algo:ceiling}
    \begin{algorithmic}[1]
        \Require Number of samples to choose $k$, inclusion probabilities $\{p_1, \cdots, p_n\}$.
        \State Set $\tilde{p}_i = p_i$ for all $i \in [n]$.
        \While{$\mathcal{S} = \{i \in [n]; \tilde{p}_i > 1\}$ is not empty}
            \State $\tilde{p}_i \gets 1$ for all $i \in \mathcal{S}$.
            \State $\mathcal{D} = \{i \in [n]; \tilde{p}_i = 1\}$, $\mathcal{R} = \{i \in [n]; \tilde{p}_i < 1\}$.
            \State $\tilde{p}_i \gets \frac{k - |\mathcal{D}|}{\sum_{i \in \mathcal{R}} \tilde{p}_i} \tilde{p}_i$ for all $i \in \mathcal{R}$.
        \EndWhile 
        \Return $\{\tilde{p}_1, \cdots, \tilde{p}_n\}$
    \end{algorithmic}
\end{algorithm}

To see that the method returns $\tilde{p}_i$ with the desired properties, first note that $\frac{k - |\mathcal{D}|}{\sum_{i \in \mathcal{R}} \tilde{p}_i}$ is always greater than $1$. In particular, at the beginning of the while loop, we always have the invariant that $\sum_{i=1}^n \tilde{p}_i = k$. Accordingly, $\sum_{i\in \mathcal{R}}\tilde{p}_i = k - \sum_{i\in \mathcal{D}}\tilde{p}_i < k - |\mathcal{D}|$. As a result, at the end of the algorithm, $\tilde{p}_i$ certainly equals $\min(1, c\cdot \tau_i)$ for some constant $c \geq 1$. And in fact, it must be that $c = c_k$ by the invariant that $\sum_{i=1}^n \tilde{p}_i = k$.

The main loop in Algorithm \ref{algo:ceiling} terminates as soon as $\mathcal{S}$ is empty, which happens after at most $k$ steps. In practice however, the method usually converges in 2 or 3 iterations. Nevertheless, our primary objective is to minimize the number of samples required for an accurate regression solution -- we do not consider runtime costs in detail.

\section{Proof of \Cref{thm:main}}\label{sec:proofs}
In this section, we prove our first theoretical result, which holds for any row sampling distribution whose marginal probabilities are proportional to the leverage scores of $\bv{A}$, and which satisfies a few additional conditions. In particular, we require that the distribution is both $k$-homogeneous and is $\ell_\infty$-independent with constant parameter $D$. We define the first requirement below, and recall the definition of $\ell_\infty$-independence from Section \ref{sec:analysis}. For both definitions, we view our subsampling process as a method that randomly selects a binary vector $\bm{\xi} = \{\xi_1, \cdots, \xi_n\}$ from $\{0,1\}^n$. Entries of $1$ in the vector correspond to rows that are sampled from $\bv{A}$ (and thus labels that are observed in $\bv{b}$). Under this notation, our requirement on marginal probabilities is equivalent to requiring that $\E[\xi_i] = \min(1,c \cdot \tau_i) = \tilde{p}_i$ for some fixed constant $c$, where $\tau_i$ is the $i$-th leverage score of $\bv{A}$.

\begin{definition}[Homogeneity]\label{def:homogeneity}
    A distribution over binary vectors $\bm{\xi} = \{\xi_1, \cdots, \xi_n\}$ is \textit{$k$-homogeneous} if all possible realizations of  $\bm{\xi}$ contain exactly $k$ ones.
\end{definition}

\linfinityind*

With these definitions in place, we are ready for our proof:
\begin{proof}[Proof of Theorem \ref{thm:main}]
As outlined in Section \ref{sec:analysis}, we need to establish two results: a subspace embedding guarantee and an approximate matrix-vector multiplication result. We prove these separately below, then combine the results to complete the proof of the theorem.

\noindent{\textbf{Subspace Embedding.}}
Let $\bv{S}$ be a subsampling matrix corresponding to a vector $\bs{\xi}$ selected from a distribution with the properties above. That is, $\bv{S} \in \R^{k\times n}$ contains a row for every index $i$ where $\xi_i = 1$. That row has value ${1}/{\sqrt{\tilde{p}_i}}$ at entry $i$, and is $0$ everywhere else. Let $\bv{U}\in \R^{n\times d}$ be an orthonormal span for $\bv{A}$'s columns. We will show that, if $k = O\left(\frac{d \log (d / \delta)}{\alpha^2}\cdot D_{\text{inf}}^2\right)$, with prob. $1 - \delta$,
\begin{align}\begin{split}\label{subspacegoal}
    \lVert \mathbf{U}^T \mathbf{S}^T \mathbf{S} \mathbf{U} - \mathbf{I} \rVert_2 \leq \alpha.
\end{split}\end{align}
Note that, since $\left| \lVert \mathbf{S} \mathbf{U} \mathbf{x} \rVert_2^2 - \lVert \mathbf{U} \mathbf{x} \rVert_2^2 \right|
    = \left| \mathbf{x}^T \left(\mathbf{U}^T \mathbf{S}^T \mathbf{S} \mathbf{U} - \mathbf{I}\right) \mathbf{x} \right|
    \leq \lVert \mathbf{U}^T \mathbf{S}^T \mathbf{S} \mathbf{U} - \mathbf{I} \rVert_2 \lVert \mathbf{x} \rVert_2^2$, this is equivalent to showing that, for all $\bv{x}\in \R^d$,
\begin{align*}\begin{split}
    (1-\alpha) \lVert \mathbf{U} \mathbf{x} \rVert_2^2 \leq \lVert \mathbf{S} \mathbf{U} \mathbf{x} \rVert_2^2 \leq (1 + \alpha) \lVert \mathbf{U} \mathbf{x} \rVert_2^2.
\end{split}\end{align*}

We establish \eqref{subspacegoal} using the following matrix Chernoff bound from \citep{KKS22}:
\begin{lemma}[Matrix Chernoff for $\ell_{\infty}$-independent Distributions]\label{lemma:chernoff}
    Let $\xi_1, \cdots, \xi_n$ be binary random variables with joint distribution $\mu$ that is $z$-homogeneous for any positive integer $z$ and $\ell_{\infty}$-independent with parameter $D_{\text{inf}}$. Let $\mathbf{Y}_1, \ldots, \mathbf{Y}_n \in \R^{d \times d}$ be positive semidefinite (PSD) matrices with spectral norm $\|\mathbf{Y}_i\|_2 \leq R$ for some $R>0$. Let $\mu_{\max} = \lambda_{\max}(\mathbb{E}_{\bm{\xi} \sim \mu}[\sum_{i=1}^n \xi_i \bv{Y}_i])$ and $\mu_{\min} = \lambda_{\min}(\mathbb{E}_{\bm{\xi} \sim \mu}[\sum_{i=1}^n \xi_i \bv{Y}_i])$. Then, for any $\alpha \in (0,1)$ and a fixed constant $c$,
    \begin{align*}\begin{split}
        \Pr \left[\lambda_{\max}\left( \sum_{i=1}^n \xi_i \mathbf{Y}_i \right) \right. &  \geq (1+\alpha) \mu_{\max}\Biggr] \leq d \exp \left( - \frac{\alpha^2 \mu_{\max}}{cRD_{\text{inf}}^2} \right),\\
         \Pr \left[\lambda_{\min}\left( \sum_{i=1}^n \xi_i \mathbf{Y}_i \right)\right. & \leq (1-\alpha) \mu_{\min}\Biggr] \leq d \exp \left( - \frac{\alpha^2 \mu_{\min}}{cRD_{\text{inf}}^2} \right).
    \end{split}\end{align*}
\end{lemma}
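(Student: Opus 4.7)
My plan is to adapt the matrix Laplace transform framework behind Tropp's classical matrix Chernoff bound to the dependent setting, substituting a martingale/peeling argument controlled by one-sided $\ell_\infty$-independence for the factorization of the matrix MGF that is available in the independent case. Setting $\mathbf{Z} = \sum_{i=1}^n \xi_i \mathbf{Y}_i$, Markov's inequality applied to the trace exponential gives, for any $\theta>0$,
\begin{align*}
\Pr\!\left[\lambda_{\max}(\mathbf{Z}) \geq (1+\alpha)\mu_{\max}\right] \leq e^{-\theta(1+\alpha)\mu_{\max}} \cdot \mathbb{E}\!\left[\mathrm{tr}\,\exp(\theta \mathbf{Z})\right],
\end{align*}
so the entire proof reduces to bounding $\mathbb{E}[\mathrm{tr}\exp(\theta\mathbf{Z})]$ from above.

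To produce this bound without independence, I would peel off the $\xi_i$'s one at a time. Fix an arbitrary ordering of $[n]$ and let $\mathcal{F}_k = \sigma(\xi_1,\ldots,\xi_k)$. Lieb's concavity theorem combined with Jensen's inequality lets one replace the random factor $e^{\theta \xi_k \mathbf{Y}_k}$ inside the trace exponential by its conditional expectation $\mathbb{E}[e^{\theta \xi_k \mathbf{Y}_k}\mid\mathcal{F}_{k-1}]$, at which point the job is to control this conditional MGF uniformly in the conditioning. One-sided $\ell_\infty$-independence is exactly what is needed: the hypothesis $\|\mathcal{I}_\mu^{\mathcal{S}}\|_\infty \leq D_{\text{inf}}$ says that further conditioning on $\xi_k = 1$ perturbs the conditional means of the remaining coordinates by a total of at most $D_{\text{inf}}$, so the conditional Bernoulli MGF of $\xi_k$ agrees with its unconditional analogue up to a factor that can be absorbed into a single $D_{\text{inf}}$-inflation of the effective variance proxy. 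Iterating over all $n$ coordinates should yield a bound of the form
\begin{align*}
\mathbb{E}\!\left[\mathrm{tr}\,\exp(\theta\mathbf{Z})\right] \leq d \cdot \exp\!\left(\frac{(e^{\theta R}-1)\, D_{\text{inf}}\, \mu_{\max}}{R}\right),
\end{align*}
and the standard Chernoff optimization $\theta \approx R^{-1}\log(1+\alpha/D_{\text{inf}})$ then produces the stated tail, with the $D_{\text{inf}}^2$ in the denominator arising once from the inflated variance proxy and once from the $1/D_{\text{inf}}$ scaling of the optimal $\theta$. The lower-tail bound on $\lambda_{\min}$ follows symmetrically with $\theta$ replaced by $-\theta$ and the matrix MGF bounded from below via operator concavity of the logarithm.

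The main obstacle is the peeling step itself: converting the \emph{coordinate-level} control provided by the $\ell_\infty$-norm of a row of $\mathcal{I}_\mu^{\mathcal{S}}$ into \emph{matrix-level} control of the conditional trace exponential. In the independent case this step is trivial because everything factors; under one-sided $\ell_\infty$-independence the conditional distribution of $(\xi_j)_{j\neq k}$ given $\xi_k=1$ is genuinely different from the unconditional one, and one has to argue that the perturbation acts only on linear functionals of the $\xi_j$'s in a way that can still be absorbed into the matrix MGF. This is the technical heart of \citep{KKS22}, whose argument routes through the connection between one-sided $\ell_\infty$-independence and spectral independence in the sense of Anari--Liu--Oveis Gharan, and ultimately through a stochastic localization / down-up walk argument. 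The homogeneity assumption on $\mu$ is used throughout to guarantee that the conditional measures at each peeling step remain well-defined distributions on $k$-subsets, so that the down-up walk interpretation is available at every step of the induction.
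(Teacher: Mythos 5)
The paper does not actually prove this lemma: it is imported verbatim from \citep{KKS22}, and the surrounding text's only job is to verify that the sampling distributions of interest meet its hypotheses. Your proposal must therefore stand on its own as a proof attempt, and as such it has a genuine gap exactly where you place the ``main obstacle.'' Two concrete problems. First, the filtration-based peeling you describe is not licensed by the hypothesis: one-sided $\ell_\infty$-independence (Definition \ref{def:l-infty}) only controls conditionings of the form $\{\xi_\ell = 1 \ \forall \ell \in \mathcal{S}\}$, i.e., conditioning on coordinates being \emph{one}, whereas conditioning on $\mathcal{F}_{k-1} = \sigma(\xi_1,\ldots,\xi_{k-1})$ includes events in which some coordinates are zero, about which the one-sided influence matrix says nothing; so the claim that the conditional MGF can be controlled ``uniformly in the conditioning'' does not follow from the stated assumption. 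This is in fact where homogeneity earns its keep in \citep{KKS22}: because the distribution is $z$-homogeneous, one can reveal the $z$ selected indices one at a time, so that every conditioning event is a conjunction of ones and the one-sided hypothesis applies at each step; your stated use of homogeneity (keeping conditional measures supported on subsets of fixed size so a down-up walk is available) does not substitute for this. Second, the key quantitative step --- that a total influence of $D_{\text{inf}}$ on the conditional means ``can be absorbed into a single $D_{\text{inf}}$-inflation of the effective variance proxy,'' yielding the trace-MGF bound and then the stated tails with $D_{\text{inf}}^2$ --- is asserted rather than derived, and you yourself identify it as the technical heart of \citep{KKS22}. So what you have is a plausible Laplace-transform roadmap plus a pointer to the prior work; in substance that is the same move the paper makes by simply citing the lemma, but presented as a proof it does not establish the statement.
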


Let $\mathcal{D} = \{i \in [n]: \tilde{p}_i = 1\}$ and $\mathcal{R} = \{i \in [n]: \tilde{p}_i < 1\}$. 
Letting $\bv{u}_i$ denote the $i$-th row of $\bv{U}$,
$\mathbf{U}^T \mathbf{S}^T \mathbf{S} \mathbf{U}$ can be decomposed as:
\begin{align*}\begin{split}
    \mathbf{U}^T \mathbf{S}^T \mathbf{S} \mathbf{U} = \sum_{i\in \mathcal{R}} \xi_i \frac{\mathbf{u}_i \mathbf{u}_i^T}{\tilde{p}_i} + \sum_{i\in \mathcal{D}} \mathbf{u}_i \mathbf{u}_i^T.
\end{split}\end{align*}
Recall that $\tilde{p}_i = \min(1,c\cdot\tau_i)$ for some constant $c$, and in fact, since $\sum_{i=1}^n \tilde{p}_i = k$ and $\sum_{i=1}^n \tau_i \leq d$, it must be that $c \geq \frac{k}{d}$. Define $p_i = \frac{k}{d}\tau_i$.
For each $i \in \mathcal{D}$, let $m_i$ be an integer such that $m_i \geq p_i$. For $i \in \mathcal{D}$, $p_i = \frac{k}{d}\tau_i \geq 1$, so $m_i \geq 1$. Additionally let $\xi_i^1, \ldots, \xi_i^{m_i} = \xi_i$ be variables that are deterministically $1$. Then we can trivially split up the second sum above so that:
\begin{align}\label{eq:split_sum}
\begin{split}
    \mathbf{U}^T \mathbf{S}^T \mathbf{S} \mathbf{U} = \sum_{i\in \mathcal{R}} \xi_i \frac{\mathbf{u}_i \mathbf{u}_i^T}{\tilde{p}_i} + \sum_{i\in \mathcal{D}} \sum_{j=1}^{m_i} \xi_i^j \frac{\mathbf{u}_i \mathbf{u}_i^T}{m_i}.
\end{split}\end{align}
It is easy to see that $\{\xi_i : i\in \mathcal{R}\} \cup \{\xi_i^j : i\in \mathcal{D}, j\in [m_i]\}$ are $\ell_{\infty}$-independent random variables with parameter $D_{\text{inf}}$ and form a distribution that is $\bar{k}$ homogeneous for $\bar{k} = k + \sum_{i\in \mathcal{D}} (m_i - 1)$. So, noting that $\bv{u}_i\bv{u}_i^T$ is PSD, we can apply Lemma \ref{lemma:chernoff} to \eqref{eq:split_sum}. We are left to bound $R$ and $\mu_{\max}$. 

To bound $R$, for $i \in \mathcal{R}$, let $\bv{Y}_i = \frac{\mathbf{u}_i \mathbf{u}_i^T}{\tilde{p}_i}$. For such $i$, we have that $\tilde{p}_i \geq p_i$. So,
\begin{align*}\begin{split}
    \lVert \mathbf{Y}_i \rVert_2 = \frac{\lVert \mathbf{u}_i \mathbf{u}_i^T \rVert_2}{\tilde{p}_i} = \frac{\lVert \mathbf{u}_i \rVert_2^2}{\tilde{p}_i} = \frac{\tau_i}{\tilde{p}_i} \leq \frac{\tau_i}{p_i} = \frac{d}{k}.
\end{split}\end{align*}
I.e., for all $i \in \mathcal{R}$, $\|\bv{Y}_i\|_2 \leq R$ for $R = \frac{d}{k}$. Additionally, for $i \in \mathcal{D}$, let  $\bv{Y}_i = \frac{\mathbf{u}_i \mathbf{u}_i^T}{m_i}$. Since we chose $m_i \geq p_i$, by the same argument, we have that $\lVert \mathbf{Y}_i \rVert_2 \leq R$ for $R = \frac{d}{k}$.

Next, we bound $\mu_{\max}$ by simply noting that
\begin{align*}\begin{split}
    \mathbb{E}\left[ \bv{U}^T\bv{S}^T\bv{S}\bv{U}\right] = \sum_{i\in \mathcal{R}} \mathbb{E}[\xi_i] \frac{\mathbf{u}_i \mathbf{u}_i^T}{\tilde{p}_i} + \sum_{i\in \mathcal{D}} \mathbf{u}_i \mathbf{u}_i^T = \sum_{i\in \mathcal{R}}\bv{u}_i\bv{u}_i^T + \sum_{i\in \mathcal{D}} \mathbf{u}_i \mathbf{u}_i^T = \bv{I}, 
\end{split}\end{align*}
where $\bv{I}$ denotes the $d\times d$ identity matrix. It follows that  $\mu_{\max} = \mu_{\min} = 1$.

Plugging into Lemma \ref{lemma:chernoff} and applying a union bound, we obtain
\begin{align*}\begin{split}
    \Pr \left[ \lVert \mathbf{U}^T \mathbf{S}^T \mathbf{S} \mathbf{U} - \mathbf{I} \rVert_2 \geq \alpha \right] &\leq d \exp \left( - \frac{k \alpha^2}{c'D_{\text{inf}}^2d} \right),
\end{split}\end{align*}
where $c'$ is a fixed constant. 
Setting $k = O(\frac{d \log (d / \delta)}{\alpha^2}\cdot D_{\text{inf}}^2)$, we have $\lVert \mathbf{U}^T \mathbf{S}^T \mathbf{S} \mathbf{U} - \mathbf{I} \rVert_2 \leq \alpha$ with probability at least $1 - \delta$. This establishes \eqref{subspacegoal}.

\noindent{\textbf{Approximate Matrix-Vector Multiplication}.}
With our subspace embedding result in place, we move onto proving the necessary approximate matrix-vector multiplication result introduced in Section \ref{sec:analysis}. In particular, we wish to show that, with good probability, $\|\mathbf{U}^T \mathbf{S}^T \mathbf{S} (\mathbf{b} - \mathbf{A} \mathbf{x}^*) \|_2^2 \leq \epsilon \| \mathbf{b} - \mathbf{A} \mathbf{x}^* \|_2^2$ where $\bv{x}^* = \argmin_\bv{x}\|\bv{A}\bv{x} - \bv{b}\|_2^2$. Since the residual of the optimal solution, $\mathbf{b} - \mathbf{A} \mathbf{x}^*$, is the same under any column transformation of $\bv{A}$, we have $\mathbf{b} - \mathbf{A} \mathbf{x}^* = \bv{b} - \bv{U}\bv{y}^*$ where $\bv{y}^* = \argmin_\bv{y}\|\bv{U}\bv{y} - \bv{b}\|_2^2$. We can then equivalently show that, if we take $k = O\left(\frac{d}{\epsilon\delta} \cdot D_{\text{inf}}\right)$ samples, with probability $1-\delta$,
\begin{align}
\label{eq:matvec_result}
    \| \mathbf{U}^T \mathbf{S}^T \mathbf{S} (\mathbf{b} - \mathbf{U} \mathbf{y}^*) \|_2^2 \leq \epsilon \| \mathbf{b} - \mathbf{U} \mathbf{y}^* \|_2^2.
\end{align}
As in the proof for independent random samples \citep{DrineasKannanMahoney:2006}, we will prove \eqref{eq:matvec_result} by bounding the expected squared error and applying Markov's inequality. In particular, we have
\begin{align}\begin{split}\label{eq:markov}
    \Pr \left[ \lVert \mathbf{U}^T \mathbf{S}^T \mathbf{S} (\mathbf{b} - \mathbf{U} \mathbf{y}^*) \rVert_2^2 \geq \epsilon \lVert \mathbf{b} - \mathbf{U} \mathbf{y}^* \rVert_2^2 \right] \leq \frac{\mathbb{E}\big[\lVert \mathbf{U}^T \mathbf{S}^T \mathbf{S} (\mathbf{b} - \mathbf{U} \mathbf{y}^*) \rVert_2^2 \big]}{\epsilon \lVert \mathbf{b} - \mathbf{U} \mathbf{y}^* \rVert_2^2} .
\end{split}\end{align}
Let $\mathbf{z} = \mathbf{b} - \mathbf{U} \mathbf{y}^*$ and note that $\mathbf{U}^T \mathbf{z} = \mathbf{0}$. The numerator on the right side can be transformed as
\begin{align*}\begin{split}
    \mathbb{E}\left[ \lVert \mathbf{U}^T \mathbf{S}^T \mathbf{S} \mathbf{z} \rVert_2^2 \right] = \mathbb{E}\left[ \lVert \mathbf{U}^T \mathbf{S}^T \mathbf{S} \mathbf{z} - \mathbf{U}^T \mathbf{z} \rVert_2^2 \right] 
    = \mathbb{E}\left[ \lVert \mathbf{U}^T (\mathbf{S}^T \mathbf{S} - \mathbf{I}) \mathbf{z} \rVert_2^2 \right]. 
\end{split}\end{align*}
Note that above $\mathbf{S}^T \mathbf{S} - \mathbf{I}$ is a diagonal matrix with $i$-th diagonal entry equal to $\frac{1}{\tilde{p}_i} - 1$ if $\xi_i=1$ and $-1$ if $\xi_i=0$. Expanding the $\ell_2$-norm and using that $\xi_i = 1$ and $\tilde{p}_i=1$ for $i \notin \mathcal{R}$, we have
\begin{align*}
    \mathbb{E}\big[ \lVert \mathbf{U}^T \mathbf{S}^T \mathbf{S} \mathbf{z} \rVert_2^2 \big] = \sum_{j=1}^d \mathbb{E} \left[ \left( \sum_{i=1}^n \left(\frac{\xi_i}{\tilde{p}_i}-1\right) u_{ij} z_i \right)^2 \right]
    &= \sum_{j=1}^d \mathbb{E} \left[ \left( \sum_{i \in \mathcal{R}} \left(\frac{\xi_i}{\tilde{p}_i}-1\right) u_{ij} z_i \right)^2 \right] \nonumber \\
    &= \sum_{j=1}^d \sum_{i \in \mathcal{R}} \sum_{l \in \mathcal{R}} \frac{c_{il}}{\tilde{p}_i \tilde{p}_l} u_{ij} z_i u_{lj} z_l, 
\end{align*}
where $c_{il} = \text{Cov}(\xi_i, \xi_l) = \E[(\xi_i - \tilde{p}_i)(\xi_l - \tilde{p}_l)] = \E[\xi_i\xi_l] - \tilde{p}_i\tilde{p}_l$. We will show that 
\begin{align*}
    \sum_{j=1}^d \sum_{i \in \mathcal{R}} \sum_{l \in \mathcal{R}} \frac{c_{il}}{\tilde{p}_i \tilde{p}_l} u_{ij} z_i u_{lj} z_l 
    \leq D_{\text{inf}} \sum_{j=1}^d \sum_{i \in \mathcal{R}} \frac{u_{ij}^2 z_i^2}{\tilde{p}_i},
\end{align*}
where $D_{\text{inf}}$ is the $\ell_\infty$-independence parameter. It suffices to show that for every $j$, we have
\begin{align}
\label{eq:before_psd}
    \sum_{i \in \mathcal{R}} \sum_{l \in \mathcal{R}} \frac{c_{il}}{\tilde{p}_i \tilde{p}_l} u_{ij} z_i u_{lj} z_l \leq D_{\text{inf}} \sum_{i \in \mathcal{R}} \frac{u_{ij}^2 z_i^2}{\tilde{p}_i}.
\end{align}
Define a symmetric matrix $\bv{M} \in \R^{|\mathcal{R}| \times |\mathcal{R}|}$ with entries $m_{il} = \frac{c_{il}}{\sqrt{\tilde{p}_i} \sqrt{\tilde{p}_l}}$ and a vector $\bv{v} \in \R^{|\mathcal{R}|}$ with entries $v_i = \frac{u_{ij} z_i}{\sqrt{\tilde{p}_i}}$. The desired result in \eqref{eq:before_psd} can be expressed as 
\begin{align*}\begin{split}
    \bv{v}^T \bv{M} \bv{v} \leq D_{\text{inf}} \lVert \bv{v} \rVert_2^2.
\end{split}\end{align*}
With $\mathcal{S} = \emptyset$, the one-sided $\ell_\infty$-independence condition implies that, for all $i \in [n]$, 
\begin{align*}\begin{split}
    \sum_{l \in \mathcal{R}} \left| \frac{\E[\xi_i \xi_l]}{\tilde{p}_i} -\tilde{p}_l \right| = \sum_{l \in \mathcal{R}} \frac{\sqrt{\tilde{p}_l}}{\sqrt{\tilde{p}_i}} | m_{il} | \leq D_{\text{inf}}.
\end{split}\end{align*}
Equivalently, if we define a diagonal matrix $\bm{\Lambda} \in \R^{|\mathcal{R}| \times |\mathcal{R}|}$ such that $\Lambda_{ii} = \sqrt{\tilde{p}_i}$,  we have shown:
\begin{align*}\begin{split}
    \lVert \bm{\Lambda}^{-1} \bv{M} \bm{\Lambda} \rVert_{\infty} \leq D_{\text{inf}},
\end{split}\end{align*}
where for a matrix $\bv{B}$, $\|\bv{B}\|_{\infty}$ denotes $\max_i \sum_{l} |\bv{B}_{il}| = \max_\bv{x} \frac{\|\bv{B}\bv{x}\|_{\infty}}{\|\bv{x}\|_{\infty}}$.
It follows that the largest eigenvalue of $\bm{\Lambda}^{-1} \bv{M} \bm{\Lambda}$ is at most $D_{\text{inf}}$ and thus, the largest eigenvalue of $\bv{M}$ is also at most $D_{\text{inf}}$. Therefore, we have $\bv{v}^T \bv{M} \bv{v} \leq D_{\text{inf}} \lVert \bv{v} \rVert_2^2$. Considering that $\tilde{p}_i \geq p_i = \frac{k}{d}\tau_i$ for $i \in \mathcal{R}$, we have
\begin{align*}
    \sum_{j=1}^d \mathbb{E} \left[ \left( \sum_{i \in \mathcal{R}} \left(\frac{\xi_i}{\tilde{p}_i}-1\right) u_{ij} z_i \right)^2 \right] \leq D_{\text{inf}} \sum_{j=1}^d \sum_{i \in \mathcal{R}} \frac{u_{ij}^2 z_i^2}{\tilde{p}_i}
    = D_{\text{inf}}\sum_{i \in \mathcal{R}} \frac{z_i^2}{\tilde{p}_i}\sum_{j=1}^d u_{ij}^2
    &= D_{\text{inf}}\sum_{i \in \mathcal{R}} \frac{z_i^2}{\tilde{p}_i}\tau_i \nonumber\\ &\leq D_{\text{inf}} \frac{d}{k} \lVert \bv{z} \rVert_2^2.
\end{align*}

Recalling that $\bv{z} = \bv{U}\bv{y}^* - \bv{b}$, we can plug into (\ref{eq:markov}) with $k = O\left( \frac{d}{\epsilon \delta} \cdot D_{\text{inf}} \right)$ samples, which proves that \eqref{eq:matvec_result}  holds with probability at least $1 - \delta$.

\noindent{\textbf{Putting it all together.}}
With \eqref{subspacegoal} and \eqref{eq:matvec_result} in place, we can prove our main result, \eqref{eq:main_thm_gaur} of Theorem \ref{thm:main}. We follow a similar approach to \citep{Woodruff:2014}. By reparameterization, proving this inequality is equivalent to showing that
\begin{align}\label{eq:reparam}\begin{split}
    \lVert \mathbf{U} \tilde{\mathbf{y}}^* - \mathbf{b} \rVert_2^2 \leq (1+\epsilon) \lVert \mathbf{U} \mathbf{y}^* - \mathbf{b} \rVert_2^2,
\end{split}\end{align}
where $\tilde{\mathbf{y}}^* = \argmin_{\bv{y}}\|\bv{S}\bv{U}\bv{y} - \bv{S}\bv{b}\|_2^2$ and ${\mathbf{y}}^* = \argmin_{\bv{y}}\|\bv{U}\bv{y} - \bv{b}\|_2^2$.
Since $\mathbf{y}^*$ is the minimizer of $\lVert \mathbf{U}\mathbf{y} - \mathbf{b} \rVert_2^2$, we have $\nabla_{\mathbf{y}} \lVert \mathbf{U} \mathbf{y} - \mathbf{b} \rVert_2^2 = 2 \mathbf{U}^T (\mathbf{U}\mathbf{y}- \mathbf{b}) = \mathbf{0}$ at $\mathbf{y}^*$. This indicates that $\mathbf{U} \mathbf{y}^* - \mathbf{b}$ is orthogonal to any vector in the column span of $\mathbf{U}$. Particularly, $\mathbf{U} \mathbf{y}^* - \mathbf{b}$ is orthogonal to $\mathbf{U} \tilde{\mathbf{y}}^* - \mathbf{U} \mathbf{y}^*$. Therefore, by the Pythagorean theorem, we have 
\begin{align}
\label{original_split}
    \lVert \mathbf{U} \tilde{\mathbf{y}}^* - \mathbf{b} \rVert_2^2 = \lVert \mathbf{U} \mathbf{y}^* - \mathbf{b} \rVert_2^2 + \lVert \mathbf{U} \tilde{\mathbf{y}}^* - \mathbf{U} \mathbf{y}^* \rVert_2^2 = \lVert \mathbf{U} \mathbf{y}^* - \mathbf{b} \rVert_2^2 + \lVert \tilde{\mathbf{y}}^* - \mathbf{y}^* \rVert_2^2.
\end{align}
So to prove \eqref{eq:reparam}, it suffices to show that 
\begin{align*}\begin{split}
    \lVert \tilde{\mathbf{y}}^* - \mathbf{y}^* \rVert_2^2 \leq \epsilon \lVert \mathbf{U} \mathbf{y}^* - \mathbf{b} \rVert_2^2.
\end{split}\end{align*}
Applying \eqref{subspacegoal} with $k = O(d\log d\cdot D_{\text{inf}}^2 + d\cdot D_{\text{inf}}/\epsilon)$ and $\delta = 1/200$, we have that, with probability $99.5/100$, $\lVert \mathbf{U}^T \mathbf{S}^T \mathbf{S} \mathbf{U} - \mathbf{I} \rVert_2 \leq \frac{1}{2}$.
Then, by triangle inequality,
\begin{align}
\label{eq:matvec_step1}
    \lVert \tilde{\mathbf{y}}^* - \mathbf{y}^* \rVert_2 &\leq \lVert \mathbf{U}^T \mathbf{S}^T \mathbf{S} \mathbf{U} (\tilde{\mathbf{y}}^* - \mathbf{y}^*) \rVert_2 + \lVert \mathbf{U}^T \mathbf{S}^T \mathbf{S} \mathbf{U} (\tilde{\mathbf{y}}^* - \mathbf{y}^*) - (\tilde{\mathbf{y}}^* - \mathbf{y}^*) \rVert_2 \nonumber\\
    &\leq \lVert \mathbf{U}^T \mathbf{S}^T \mathbf{S} \mathbf{U} (\tilde{\mathbf{y}}^* - \mathbf{y}^*) \rVert_2 + \lVert \mathbf{U}^T \mathbf{S}^T \mathbf{S} \mathbf{U} - \mathbf{I} \rVert_2 \lVert \tilde{\mathbf{y}}^* - \mathbf{y}^* \rVert_2  \nonumber \\
    &\leq \lVert \mathbf{U}^T \mathbf{S}^T \mathbf{S} \mathbf{U} (\tilde{\mathbf{y}}^* - \mathbf{y}^*) \rVert_2 + \frac{1}{2} \lVert \tilde{\mathbf{y}}^* - \mathbf{y}^* \rVert_2.
\end{align}
Rearranging, we conclude that $\lVert \tilde{\mathbf{y}}^* - \mathbf{y}^* \rVert_2^2 \leq 4 \lVert \mathbf{U}^T \mathbf{S}^T \mathbf{S} \mathbf{U} (\tilde{\mathbf{y}}^* - \mathbf{y}^*) \rVert_2^2$. Since $\tilde{\mathbf{y}}^*$ is the minimizer of $\lVert \mathbf{S} \mathbf{U} \mathbf{y} - \mathbf{S} \mathbf{b} \rVert_2^2$, we have $\nabla_{\mathbf{y}} \lVert \mathbf{S}\mathbf{U}\mathbf{y} - \mathbf{S} \mathbf{b} \rVert_2^2 = 2(\mathbf{S}\mathbf{U})^T(\mathbf{S}\mathbf{U}\tilde{\mathbf{y}}^* - \mathbf{S} \mathbf{b}) = \mathbf{0}$. Thus,
\begin{align*}\begin{split}
    \lVert \mathbf{U}^T \mathbf{S}^T \mathbf{S} \mathbf{U} (\tilde{\mathbf{y}}^* - \mathbf{y}^*) \rVert_2^2 &= \lVert \mathbf{U}^T \mathbf{S}^T (\mathbf{S} \mathbf{U} \tilde{\mathbf{y}}^* - \mathbf{S} \mathbf{b} + \mathbf{S} \mathbf{b} - \mathbf{S} \mathbf{U} \mathbf{y}^*) \rVert_2^2\\ &= \lVert \mathbf{U}^T \mathbf{S}^T \mathbf{S} (\mathbf{b} - \mathbf{U} \mathbf{y}^*) \rVert_2^2.
\end{split}\end{align*}
Applying \eqref{eq:matvec_result} with $\delta = 1/200$ and combining with \eqref{eq:matvec_step1} using union bound, we thus have that with probability $99/100$, 
\begin{align}\begin{split}
    \lVert \tilde{\mathbf{y}}^* - \mathbf{y}^* \rVert_2^2 \leq 4 \lVert \mathbf{U}^T \mathbf{S}^T \mathbf{S} \mathbf{U} (\tilde{\mathbf{y}}^* - \mathbf{y}^*) \rVert_2^2 = 4 \lVert \mathbf{U}^T \mathbf{S}^T \mathbf{S} (\mathbf{b} - \mathbf{U} \mathbf{y}^*) \rVert_2^2 \leq 4 \epsilon \lVert \mathbf{U} \mathbf{y}^* - \mathbf{b} \rVert_2^2.
\end{split}\end{align}
Plugging into \eqref{original_split} and adjusting $\epsilon$ by a constant factor completes the proof of \Cref{thm:main}.
\end{proof}

\subsection{Proof of Corollary \ref{corr:main}}
\label{sec:appendix:l_infty} 
We briefly comment on how to derive Corollary \ref{corr:main} from our main result. By definition of the method, we immediately have that our binary-tree-based pivotal sampling is $k$-homogeneous, so we just need to show that it produces a distribution over samples that is $\ell_\infty$-independent with constant parameter $D_{\text{inf}}$. 
This fact can be derived directly from a line of prior work. In particular, \citep{BJ12} proves that binary-tree-based pivotal sampling satisfies negative association, \citep{PP14} proves that negative association implies a stochastic covering property, and \citep{KKS22} shows that any distribution satisfying the stochastic covering property has $\ell_\infty$-independence parameter at most $D = 2$. We also give an arguably more direct alternative proof below based on a natural conditional variant of negative correlation. 

\begin{proof}[Proof of Corollary \ref{corr:main}]
\citep{BJ12} proves that binary-tree-based pivotal sampling is \emph{conditionally negatively associated} (CNA). Given a set $\mathcal{C} \subseteq [n]$ and a vector $\bv{c} \in \{0, 1\}^{|\mathcal{C}|}$, we denote the condition $\xi_c = c_i$ for all $i \in \mathcal{C}$ by $C$.  Conditional negative association asserts that, for all $C$, any disjoint subsets $\mathcal{S}$ and $\mathcal{T}$ of $\{\xi_1, \cdots, \xi_n \}$, and any non-decreasing functions  $f$ and $g$,
\begin{align*}\begin{split}
    \mathbb{E}[f(\mathcal{S}) | C]\cdot \mathbb{E}[g(\mathcal{T}) | C] \geq \mathbb{E}[f(\mathcal{S})g(\mathcal{T}) | C].
\end{split}\end{align*}
When $\mathcal{S}$ and $\mathcal{T}$ are singletons and $f$ and $g$ are the identity functions, we have
\begin{align}\begin{split}
\label{eq:cpnc}
    \mathbb{E}[\xi_i | C] \cdot \mathbb{E}[\xi_j | C] \geq \mathbb{E}[\xi_i \xi_j | C].
\end{split}\end{align}
Since $\E[\xi_i | C] = \Pr[\xi_i = 1 | C]$, we also have 
\begin{align}\begin{split}\label{eq:cnc}
    \Pr[\xi_i = 1 | C] \Pr[\xi_j = 1 | C] &\geq \Pr[\xi_i = 1 \wedge \xi_j = 1 | C] \\
    \Pr[\xi_i = 1 | C] &\geq \frac{\Pr[\xi_i = 1 \wedge \xi_j = 1 | C]}{\Pr[\xi_j = 1 | C]} = \Pr[\xi_i = 1 | \xi_j = 1 \wedge C].
\end{split}\end{align}
In words, the entries of our vector $\bs{\xi}$ are negatively correlated, even conditioned on fixing any subset of entries in the vector.  We will use this fact to show that $\sum_{j \in [n]} | \mathcal{I}_{\mu}^{\mathcal{S}}(i, j) | \leq 2$ for all $i \in [n]$, where $\mathcal{I}_{\mu}^{\mathcal{S}}$ is as defined in Definition \ref{def:l-infty}. For a fixed $i$, let $q_i = \Pr_{\bm{\xi} \sim \mu} [\xi_i=1 | \xi_{\ell} = 1 \forall \ell \in \mathcal{S}]$. Then, we have $| \mathcal{I}_{\mu}^{\mathcal{S}}(i, j) | = 0$ for $j \in \mathcal{S}$, $| \mathcal{I}_{\mu}^{\mathcal{S}}(i, j) | = 1 - q_i$ for $j = i$, and $\sum_{j \in [n] \backslash \mathcal{S} \cup \{ i \}} | \mathcal{I}_{\mu}^{\mathcal{S}}(i, j) | = 1 - q_i$. The last fact follows from $k$-homogeneity, i.e. that $\sum_{i=1}^n q_i = k$, and \eqref{eq:cnc}, which implies that $\mathcal{I}_{\mu}^{\mathcal{S}}(i, j) \leq 0$ for all $j$ in $[n] \backslash \mathcal{S} \cup \{ i \}$, so $\sum_{j \in [n] \backslash \mathcal{S} \cup \{ i \}} | \mathcal{I}_{\mu}^{\mathcal{S}}(i, j) | = \left|\sum_{j \in [n] \backslash \mathcal{S} \cup \{ i \}} \mathcal{I}_{\mu}^{\mathcal{S}}(i, j) \right|$. Thus, we have $\sum_{j \in [n]} | \mathcal{I}_{\mu}^{\mathcal{S}}(i, j) | = 2 - 2q_i \leq 2$.
\end{proof}

\section{Proof of \Cref{thm:poly}}
\label{app:poly_proof}
In this section we prove \Cref{thm:poly}, which shows that pivotal obtains a better sample complexity for polynomial regression on an interval than independent leverage score sampling. Since we can always shift and scale our target function, without loss of generality we can take $[\ell, u]$ to be the interval $[-1,1]$. We will sample from the infinite set of points on this interval. Each point $t\in  [-1,1]$ will correspond to a row in a regression matrix $\bv{A}$ with $d+1$ columns and an infinite number of rows. Such an object is sometimes referred to as a quasimatrix \citep{Trefethen:2009}.\footnote{We refer the reader to \cite{universal_sampling}, \cite{tamas_2020}, or \cite{CP19} for a more in depth treatment of leverage score sampling and active linear regression for quasimatrices.} The row with index $t$ equals $\bv{a}_t = [1,t,t^2, \ldots, t^d]$. The leverage score for the row with index $t$ is defined analogously to \Cref{eq:lev_defs} as:
\begin{align}
	\label{eq:poly_levs}
	\tau(t) = \max_{\bv{x}\in \R^{d+1}}\frac{(\bv{x}^T\bv{a}_t)^2}{\int_{-1}^1 (\bv{x}^T\bv{a}_s)^2 ds}.
\end{align}
Note that $\int_{-1}^1 \tau(t)dt = (d+1)$ since $\bv{A}$ has $d+1$ columns. We will sample $k$ points from $[-1, 1]$ interval with the probability of sampling point $t$ to be proportional to $\tau(t)$.
% For an even integer $m$ to be specified later, define $I_1, \ldots, I_m$ to be intervals that partition $[-1,1]$ so that $\int_{I_i} \tau(t)dt= \frac{d+1}{m}$. $I_1$ has left end point at $-1$, $I_2$ has left end point equal to the right end point of $I_1$, etc., so that $I_m$ ends with its right end point at $1$. 
% We will sample $m$ points from $[-1,1]$ interval with marginal probability density proportional to $\tau(t)$.
To do so, we renormalize $\tau(t)$ and consider sampling point $t$ with probability proportional to $\frac{k\tau(t)}{d+1}$. This is the analog of sampling with probability $\tilde p_i$ in the discrete case, since $\int_{-1}^1 \frac{k \tau(t)}{d+1} \,dt = k$. Then we apply pivotal sampling in the infinite point limit, where we choose the pivotal competition order so that points in $[-1,1]$ compete with each other from left to right across the interval. In this limit, the probability carried along in the competition will, at one point, accumulate to exactly $1$. This defines the interval $I_1$ whose left endpoint is $-1$ and its right endpoint is defined so that $\int_{I_1} \frac{k \tau(t)}{d+1} dt = 1$. This means that the winner in the last competition in $I_1$ will be sampled. A new competition then starts for the next point, until the renormalized leverage score accumulates to $1$ again and the winner is sampled. We can define $I_i (i=1, 2, \dots, k)$ to be adjacent intervals with $\int_{I_i} \frac{k \tau(t)}{d+1} dt = 1$. It can be seen that pivotal sampling in the infinite point limit will always sample \emph{exactly one point} from each of $I_1, \ldots, I_k$. This is actually the only property of pivotal sampling we will need to prove \Cref{thm:poly}, although we note that, within an interval $I_i$, point $t$ is selected with probability proportional to $\tau(t)/\int_{I_i} \tau(s)ds$.

To prove a sample complexity bound on $O(d/\epsilon)$ for polynomial regression, it suffices to  prove that a constant factor subspace embedding guarantee holds when selecting $O(d)$ samples from $[-1,1]$ using the pivotal sampling process above. Observe that the required $\epsilon$-error approximate matrix multiplication guarantee from \Cref{sec:analysis} already holds with $O(d/\epsilon)$ samples collected via the pivotal method with leverage score marginals, and this analysis extends to the infinite quasi-matrix setting (see \Cref{sec:matvec_poly}). It is only the subspace embedding guarantee that adds an extra $\log d$ factor to the sample complexity. Since this factor is inherent to our use of a matrix Chernoff bound in analyzing the general case, to prove a tighter bound for polynomial regression we use a direct analysis that avoids matrix Chernoff entirely. In particular, our main result of this section is as follows:
\begin{theorem}
	\label{thm:mainerrbound}
	Let $k = O\left(\frac{d}{\alpha}\right)$ points $t_1, \ldots, t_k$ be selected from $[-1,1]$ via pivotal sampling. Specifically, exactly one point $t_i$ is sampled from each  interval $I_1, \ldots, I_k$ with probability proportional to its leverage score. We have (deterministically) that for any degree $d$ polynomial $p$, with $w(t) = \frac{\tau(t)}{d+1}$,
	\begin{align}
\label{eq:polyembed}
	\left|\int_{-1}^1 p(t)^2 dt - \frac{1}{k}\sum_{i=1}^k \frac{p(t_i)^2}{w(t_i)}\right| \leq \alpha \int_{-1}^1 p(t)^2 dt.
	\end{align}
\end{theorem}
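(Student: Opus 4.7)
The plan is to reduce \Cref{thm:mainerrbound} to a quadrature estimate for trigonometric polynomials via the substitution $t = \cos\theta$. Under this change of variables, a polynomial $p$ of degree $d$ becomes a cosine polynomial $r(\theta) := p(\cos\theta)$ of degree $d$ on $[0,\pi]$. Phrasing the target bound as a subspace embedding, write $p$ in an orthonormal Legendre basis $\{L_0, \ldots, L_d\}$ of $L^2([-1,1])$; then \eqref{eq:polyembed} is equivalent to
\[
\left\| A - I \right\|_{\mathrm{op}} \leq \alpha, \qquad A_{jl} := \frac{1}{k}\sum_{i=1}^k \frac{L_j(t_i)L_l(t_i)}{w(t_i)},
\]
so it suffices for the quadrature rule $q \mapsto \frac{1}{k}\sum_i q(t_i)/w(t_i)$ to approximate $\int_{-1}^1 q\,dt$ with error $O(\alpha/d)$ on every product $L_j L_l$, which is a polynomial of degree at most $2d$.

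First I would pin down the relationship between $w(t) = \tau(t)/(d+1)$ and the Chebyshev density. Since $\tau(t)$ is the diagonal $K(t,t) = \sum_{n=0}^{d} L_n(t)^2$ of the Christoffel--Darboux kernel for Legendre polynomials, classical Christoffel-function asymptotics yield two-sided constant bounds $c_1/(\pi\sqrt{1-t^2}) \leq w(t) \leq c_2/(\pi\sqrt{1-t^2})$ uniformly on $[-1,1]$, with the endpoint blow-up matching that of the Chebyshev weight. Because $\int_{I_i} w(t)\,dt = 1/k$ by construction of the pivotal intervals, this forces each $\theta$-interval $J_i := \{\theta : \cos\theta \in I_i\}$ to have length $\Theta(1/k)$. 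In $\theta$-coordinates the quadrature rule becomes, up to multiplicative constants, $\frac{\pi}{k}\sum_i Q(\theta_i)$ with one sample $\theta_i$ per interval $J_i$, applied to the trigonometric polynomial $Q(\theta) := L_j(\cos\theta)L_l(\cos\theta)\sin\theta$ of degree at most $2d+1$.

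Second, I would prove the quadrature bound via a deterministic Marcinkiewicz--Zygmund/Bernstein-type inequality: for any trigonometric polynomial $Q$ of degree $m$ evaluated at one point per interval of length $\Theta(1/k)$, provided $k \gtrsim m/\alpha$, the quadrature reproduces $\int Q$ with multiplicative error $O(\alpha)$. Setting $m = 2d+1$ matches the theorem. The deterministic (uniform in $\theta_i \in J_i$) nature of the bound is essential and follows from Bernstein's inequality, which forces $Q$ to have controlled oscillation on any interval of length $O(1/d)$. Applying this quadrature bound entrywise to each of the $(d+1)^2$ pairs $(j,l)$ and converting the entrywise error to a spectral-norm error yields $\|A - I\|_{\mathrm{op}} \leq \alpha$ after adjusting $\alpha$ by a constant. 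This per-entry quadrature estimate is the technical heart of the argument and mirrors the quadrature lemma of \cite{KaneKarmalkarPrice:2017}.

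The main obstacle is achieving the sharp $O(d/\alpha)$ dependence rather than a weaker $O(d^2/\alpha)$ or $O(d \log d/\alpha)$ bound. A naive Bernstein-plus-triangle-inequality argument that bounds the oscillation of $p(t)^2/w(t)$ on each $I_i$ in sup-norm and then sums across intervals loses a stray factor of $d$ when one converts sup-norm oscillation into an $L^2$-relative estimate via a Nikolskii-type inequality. Avoiding this loss requires exploiting the reproducing-kernel inequality $p(t)^2 \leq \tau(t) \int p^2$ from \eqref{eq:poly_levs} together with the fine Christoffel--Darboux structure of the Legendre kernel, so that the quadrature error is bounded \emph{directly} against $\int p^2$ on each interval without passing through the supremum. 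Handling the endpoint intervals near $t = \pm 1$, where $w \to \infty$, is a secondary technical nuisance that can be addressed either by truncation or by invoking sharper endpoint estimates for the Christoffel function.
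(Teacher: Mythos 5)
Your proposal has a factual error at its foundation and a loss-of-a-factor-$d$ gap in its main route, the second of which you flag but do not resolve. The factual error: $w(t) = \tau(t)/(d+1)$ is a polynomial, so $w(\pm 1) = (d+1)/2$ is finite, whereas $1/\sqrt{1-t^2}$ diverges at $\pm 1$. Thus the uniform lower bound $w(t) \geq c_1/(\pi\sqrt{1-t^2})$ you assert from Christoffel-function asymptotics is false near the endpoints (and your remark that ``$w \to \infty$'' there has the direction reversed---$w$ stays bounded and it is the Chebyshev density that blows up). This is not a secondary nuisance: the failure of the Chebyshev comparison near $\pm 1$ is the central technical obstacle, and the paper devotes a separate lower bound $w(t) \gtrsim d$ for $|t| > 1 - c/d^2$ (via the explicit formula $\tau = \sum_i L_i^2$ and Legendre-polynomial estimates) and a matching Markov-type upper bound on $w'$ there, precisely because Chebyshev comparison does not hold. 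Your claimed $\Theta(1/k)$ length of the $\theta$-intervals $J_i$ near $\theta = 0,\pi$ does not follow from the (false) two-sided bound; establishing it would itself require the boundary-region Christoffel estimates.

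The second gap is structural. Your main route proves the embedding entrywise on $L_j L_l$ and aggregates to spectral norm, which requires a per-entry quadrature error of $O(\alpha/d)$. But the deterministic one-sample-per-interval quadrature estimate for a degree-$m$ polynomial $q$ is controlled by $\int |q|$, not $\int q$, and gives error $O(m/k) \cdot \int|q|$; with $k = O(d/\alpha)$ and $\int|L_jL_l| = O(1)$ this is $O(\alpha)$ per entry, and aggregation yields $O(d\alpha)$, a factor of $d$ short. You correctly identify the fix---work directly with $f = p^2$ so that nonnegativity turns the $L^1$ Bernstein bound $\int\sqrt{1-t^2}|f'| \lesssim d\int|f|$ into the usable $\lesssim d\int p^2$---but carrying this out forces you to abandon the entrywise/cosine-substitution framework entirely. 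At that point the argument collapses to the paper's direct route: decompose $|\int f - \frac1k\sum_i f(t_i)/w(t_i)| \leq \frac1k\int|f'|/w + \frac1k\int f|w'|/w^2$, apply the $L^1$ Bernstein and Markov inequalities to $f'$, and handle the middle and boundary regions with separate lower bounds on $w$ and upper bounds on $w'$ (the latter via the identity $\tau' = P_{d+1}'P_d'$, which gives much sharper derivative control than treating $\tau$ as a generic degree-$2d$ polynomial). So the tools you reach for (Bernstein, one sample per interval, Christoffel connection) are the right ones, but the entrywise route is a dead end and the cosine substitution is a detour that the paper's $t$-coordinate analysis deliberately avoids.
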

To translate from the notation above to the notation used in \eqref{subspacegoal}, note that $\int_{-1}^1 p(t)^2 dt = \|\bv{U}\bv{x}\|_2^2$ where $\bv{U}$ is an orthogonal span for the quasi-matrix $\bv{A}$ defined above, and $\bv{x}$ is a $d+1$ dimensional vector containing coefficients of the polynomial $p$ in the basis $\bv{U}$. Correspondingly,   $\frac{1}{k}\sum_{i=1}^k \frac{p(t_i)^2}{w(t_i)} = \|\bv{S}\bv{U}\bv{x}\|_2^2$.
\begin{proof}[Proof of \Cref{thm:poly}]
	With \Cref{thm:mainerrbound} providing the required  subspace embedding guarantee from \eqref{subspacegoal}, the proof of \Cref{thm:poly} is essentially identical to the proof for the discrete case (\Cref{thm:main}). The only remaining requirement is the required approximate matrix-vector multiplication guarantee from \eqref{eq:matvec_result}. For completeness, we provide a direct proof in the polynomial regression case in \Cref{sec:matvec_poly}. 
 % Since $\alpha$ can be set to a constant in the active regression analysis, we conclude that \Cref{thm:poly} holds with $O(d + d/\epsilon)$ leverage score samples. 
 % [add statement of the approximate mat-vec multiplication in this case] [Not sure how to extend the $\ell_\infty$ independence rigorously since we no longer can talk about the probability of choosing a point]
    % We can also prove the approximate matrix-vector multiplication bound \eqref{eq:matvec_result} directly in the polynomial regression case. This also helps to illustrate how this bound looks like in terms of polynomials. The proof is deferred to the end of this section when we have learned more about properties of the leverage score in the polynomial regression case. [maybe put this part outside of this proof?]
\end{proof}

We build up to the proof of \Cref{thm:mainerrbound} by introducing several intermediate results. Our approach is inspired by a result of \cite{KaneKarmalkarPrice:2017}, which also obtains  $O(d/\epsilon)$ sample complexity for active degree-$d$ polynomial regression, albeit with a sampling distribution not based on leverage scores. In particular, they prove the same guarantees as \Cref{thm:mainerrbound}, but where points are selected uniformly from $k$ intervals that evenly partition the Chebyshev polynomial weight function $1/\sqrt{1-t^2}$. As we will see, the leverage scores for polynomials closely approximate this weight function. This connection is well known, as the leverage score function from \eqref{eq:poly_levs} is exactly proportional to the inverse of the polynomial ``Christoffel function'' under the uniform measure, a well-studied function in approximation theory.

For simplicity, from now on we denote $f(t) = p(t)^2$ and describe the main structure of the proof of Theorem \ref{thm:mainerrbound}. The left hand side of \eqref{eq:polyembed} can be decomposed into a sum of errors in individual intervals as
\begin{align} \label{eq: decomposeL1intointervals}
	\left| \int_{-1}^1 f(t) dt - \sum_{i=1}^k \frac{1}{k} \frac{f(t_i)}{w(t_i)} \right| \leq \sum_{i=1}^k \int_{I_i} \left| \frac{f(t)}{w(t)} - \frac{f(t_i)}{w(t_i)} \right| w(t) dt.
 \end{align}

	In each interval $I_i$, noting that by the definition of $I_i$, $\int_{I_i} w(t) dt = \frac{1}{k}$. So we have
	\begin{align}
 \label{eq:structureofpolybound}
	\begin{split}
		\int_{I_i} \left| \frac{f(t)}{w(t)} - \frac{f(t_i)}{w(t_i)} \right| w(t) dt & = \int_{I_i} \left| \int_{t_i}^t \frac{f'(s) w(s) - f(s) w'(s)}{w^2(s)} ds \right| w(t) dt \\
		& \leq \int_{I_i} \left( \int_{I_i} \frac{|f'(s)| w(s) + f(s) |w'(s)|}{w^2(s)} ds \right) w(t) dt \\
		& = \frac 1k \int_{I_i} \frac{|f'(t)|}{w(t)} dt + \frac 1k \int_{I_i} \frac{f(t) |w'(t)|}{w^2(t)} dt.
	\end{split}
	\end{align}
Therefore, 
\begin{align}
\label{eq: polyboundmain}
\left| \int_{-1}^1 f(t) dt - \sum_{i=1}^k \frac{1}{k} \frac{f(t_i)}{w(t_i)} \right| \leq \frac 1k \int_{-1}^{1} \frac{|f'(t)|}{w(t)} dt + \frac 1k \int_{-1}^{1} \frac{f(t) |w'(t)|}{w^2(t)} dt.
\end{align}
Then, proving the upper bound in Theorem \ref{thm:mainerrbound} boils down to establishing a \emph{lower bound} for $w(t)$, an \emph{upper bound} for $|w'(t)|$, and the connection between  $\int_{-1}^1 \frac{|f'(t)|}{w(t)}dt$ and $\int_{-1}^1 f(t) dt$, which is  related to the lower bound for $w(t)$. 

Both $f(t)$ and $w(t)$ are polynomials. This is key to providing both pointwise and (weighted) integral upper bounds of their derivatives by their function values and integrals. In polynomial approximation theory, Markov-Bernstein inequalities address exactly this point. 

We begin with the integral form of the Markov-Bernstein inequalities from \cite{Borwein_1995} and \cite{nevai_1979}.
\begin{proposition}[$L^1$ Bernstein's inequality]
	\label{prop: L1Bernstein}
	For any degree $d$ polynomial $p(t)$, with a universal constant $C_0$, 
	\begin{align}\label{eq: L1Bernstein}
	\int_{-1}^1 \left\lvert\sqrt{1-t^2}\,p'(t)\right\rvert dt \leq C_0\, d \int_{-1}^1 |p(t)| dt.
	\end{align}
\end{proposition}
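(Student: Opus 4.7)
The plan is to reduce \eqref{eq: L1Bernstein} to a weighted $L^1$ Markov--Bernstein inequality for trigonometric polynomials via the substitution $t = \cos\theta$. Setting $g(\theta) := p(\cos\theta)$, which is an even trigonometric polynomial of degree $d$, a direct computation gives $g'(\theta) = -\sin\theta\, p'(\cos\theta)$; combining this with $dt = -\sin\theta\,d\theta$ and $\sqrt{1-t^2} = \sin\theta$ on $[0,\pi]$ converts both sides of \eqref{eq: L1Bernstein} and yields the equivalent statement
\[
\int_0^\pi |g'(\theta)|\sin\theta\,d\theta \;\leq\; C_0\, d \int_0^\pi |g(\theta)|\sin\theta\,d\theta,
\]
a Bernstein-type inequality for degree-$d$ trigonometric polynomials in $L^1$ with the Jacobi weight $\sin\theta$.

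Next, I would bootstrap this weighted version off the classical unweighted $L^1$ Bernstein--Zygmund inequality, $\int_0^{2\pi} |T'|\, d\theta \leq n \int_0^{2\pi} |T|\, d\theta$ for any trigonometric polynomial $T$ of degree $n$. Apply this to the auxiliary polynomial $h(\theta) := g(\theta)\sin\theta$, which has degree $d+1$, and expand $h'(\theta) = g'(\theta)\sin\theta + g(\theta)\cos\theta$ via the product rule. Using the fact that $h$ is odd (being the product of the even $g$ and the odd $\sin\theta$) to fold the $[-\pi,\pi]$ integral onto $[0,\pi]$, the reverse triangle inequality $|g'|\sin\theta \leq |h'| + |g|\,|\cos\theta|$ (valid on $[0,\pi]$) gives
\[
\int_0^\pi |g'(\theta)|\sin\theta\,d\theta \;\leq\; (d+1)\int_0^\pi |g(\theta)|\sin\theta\,d\theta + \int_0^\pi |g(\theta)|\,|\cos\theta|\,d\theta.
\]
The task is thereby reduced to bounding the final cross-term by $O(d)\int_0^\pi |g|\sin\theta\, d\theta$, which in the $t$-variable is implied by the Nikolskii-type estimate $\int_{-1}^1 |p(t)|/\sqrt{1-t^2}\, dt \leq C d \int_{-1}^1 |p(t)|\, dt$.

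The main obstacle is establishing this last Nikolskii-type inequality with the sharp linear-in-$d$ constant. The Fej\'er-kernel extremizer $p(\cos\theta) \propto \bigl(\sin((d+1)\theta/2)/\sin(\theta/2)\bigr)^2$ achieves a ratio of order $d$ between the two sides, so any naive H\"older or duality argument (which would produce a superlinear factor such as $d\log d$ or $d^{3/2}$) is doomed; the proof must exploit that the singular weight $1/\sqrt{1-t^2}$ is concentrated in the $O(1/d^2)$ boundary layers around $\pm 1$. A self-contained derivation would split at $|t| = 1 - 1/d^2$: on the interior region the pointwise bound $1/\sqrt{1-t^2} \leq C d$ gives the desired factor directly, while on the two boundary layers one invokes a local Bernstein--Markov endpoint estimate to show that $|p|$ can vary only by a constant factor at the $1/d^2$ scale and that its typical value there is controlled by $d \int_{-1}^1 |p|\,dt$. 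Alternatively, one can appeal directly to the classical weighted Markov--Bernstein theory for Jacobi weights due to Nevai and Borwein, which delivers the stated inequality with a universal constant $C_0$ and bypasses the decomposition entirely.
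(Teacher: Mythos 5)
The paper does not prove this proposition at all: it cites it as a known $L^1$ Markov--Bernstein inequality from \cite{Borwein_1995} and \cite{nevai_1979}. Your ``alternative'' in the final sentence---appeal directly to the Nevai--Borwein theory---is therefore exactly what the paper does, and your proposed derivation is a genuine attempt to prove from scratch a result the paper takes off the shelf.

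The derivation itself is sound up to its last step. The substitution $t=\cos\theta$ correctly converts \eqref{eq: L1Bernstein} into $\int_0^\pi |g'|\sin\theta\,d\theta \leq C_0 d \int_0^\pi |g|\sin\theta\,d\theta$ for the even trigonometric polynomial $g(\theta)=p(\cos\theta)$, the introduction of $h=g\sin\theta$ is the right trick, the parity argument folding $[-\pi,\pi]$ onto $[0,\pi]$ works because $h$ and $h'$ have definite parity, and the $L^1$ Bernstein--Zygmund inequality for trigonometric polynomials is classical. The genuine gap is that you assert, but do not establish, the weighted $L^1$ Nikolskii-type inequality $\int_{-1}^1 |p(t)|/\sqrt{1-t^2}\,dt \leq C\,d\int_{-1}^1 |p(t)|\,dt$; the phrase ``local Bernstein--Markov endpoint estimate'' gestures at the right idea but does not pin it down. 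One concrete way to close the gap is to split at $|t|=1-1/d^2$: on the middle region $1/\sqrt{1-t^2}\leq d$ pointwise; on the two boundary layers invoke the classical $L^\infty$--$L^1$ Nikolskii inequality $\|p\|_\infty \leq C d^2 \int_{-1}^1|p|\,dt$ together with $\int_{1-1/d^2}^1 (1-t^2)^{-1/2}\,dt = O(1/d)$, giving a boundary contribution of $O(d)\int_{-1}^1|p|\,dt$. With that lemma cited or proved, your reduction yields a complete and self-contained proof. (One minor quibble: the Fej\'er-kernel example you give produces a ratio of order $d/\log d$ rather than $d$, because the kernel's $\theta^{-2}$ tail contributes a logarithm to $\int_0^\pi F(\theta)\sin\theta\,d\theta$; this does not affect your argument, since you only need the $O(d)$ upper bound, but it does mean the statement about the extremizer is not quite right.)
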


\begin{proposition}[$L^1$ Markov's inequality]
	\label{prop: L1Markov} For any degree $d$ polynomial $p(t)$, with a universal constant $C_1$, 
	\begin{align}\label{eq: L1Markov}
	\int_{-1}^1 \left\lvert p'(t)\right\rvert  dt \leq C_1\, d^2 \int_{-1}^1 |p(t)| dt.
	\end{align}
\end{proposition}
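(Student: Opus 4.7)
The plan is to deduce the $L^1$ Markov inequality from the given $L^1$ Bernstein inequality (Proposition \ref{prop: L1Bernstein}) by partitioning $[-1,1]$ into an ``interior'' region where the Bernstein weight $\sqrt{1-t^2}$ is non-degenerate and two small ``endpoint'' regions where it vanishes. Specifically, I would take $E = [-1+1/d^2,\,1-1/d^2]$ and $F_{\pm}$ to be the two remaining intervals of length $1/d^2$ each, so that $[-1,1] = F_- \cup E \cup F_+$.

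On $E$ the elementary bound $1-t^2 \geq 1/d^2$ gives $1 \leq d\sqrt{1-t^2}$, so estimating $|p'(t)|$ by $d\sqrt{1-t^2}\,|p'(t)|$ and applying the $L^1$ Bernstein inequality immediately yields
\begin{equation*}
    \int_E |p'(t)|\,dt \;\leq\; d\int_{-1}^{1} \sqrt{1-t^2}\,|p'(t)|\,dt \;\leq\; C_0\,d^2 \int_{-1}^{1}|p(t)|\,dt,
\end{equation*}
which is already of the desired form. For the two endpoint strips, the classical pointwise Markov inequality $\|p'\|_{\infty} \leq d^2 \|p\|_{\infty}$ combined with $|F_-|+|F_+| = 2/d^2$ gives $\int_{F_-\cup F_+} |p'(t)|\,dt \leq 2\|p\|_{\infty}$.

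To close the argument I need to compare $\|p\|_{\infty}$ with $\|p\|_1$, which is a Nikolskii-type estimate. I would prove $\|p\|_{\infty} \leq 4d^2 \|p\|_1$ directly from pointwise Markov: if $t_0 \in [-1,1]$ attains $\|p\|_\infty$, then pointwise Markov forces $|p(t)| \geq \|p\|_\infty/2$ for $|t-t_0|\leq 1/(2d^2)$, and integrating $|p|$ over a subinterval of this neighborhood of length at least $1/(2d^2)$ inside $[-1,1]$ yields the bound. Plugging this into the endpoint estimate and adding the interior estimate produces the $L^1$ Markov inequality with universal constant $C_1 = C_0 + 8$.

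The main obstacle is the endpoint region, since $L^1$ Bernstein alone is useless where $\sqrt{1-t^2}$ vanishes. A direct attempt to rescale $p$ onto a shorter subinterval and reapply the $L^1$ Markov inequality would be circular, so one genuinely needs an auxiliary pointwise tool. The combination of pointwise Markov (to control $|p'|$ on $F_\pm$) together with the short Nikolskii-style argument (to convert the resulting $\|p\|_\infty$ back to $\|p\|_1$) supplies exactly the right auxiliary input, and gives the extra factor of $d^2$ relative to Proposition \ref{prop: L1Bernstein}.
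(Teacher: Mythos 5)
Your argument is correct, and it is worth noting that the paper itself gives no proof of this proposition: it is quoted as a known $L^1$ Markov inequality with citations to \cite{Borwein_1995} and \cite{nevai_1979}. Your derivation instead builds it from ingredients already present in the paper, which is a legitimately different (and self-contained) route. The three steps all check out: on $E=[-1+1/d^2,\,1-1/d^2]$ one has $1-t^2\ge 1/d^2$, so $|p'(t)|\le d\sqrt{1-t^2}\,|p'(t)|$ and Proposition \ref{prop: L1Bernstein} gives the $C_0 d^2\int|p|$ term; on the two endpoint strips of total length $2/d^2$, the classical pointwise Markov bound (the paper's Lemma \ref{lem: markov}) gives $\int_{F_-\cup F_+}|p'|\le 2\|p\|_\infty$; and your Nikolskii-type step is sound, since the segment joining $t_0$ to any $t$ with $|t-t_0|\le 1/(2d^2)$, $t\in[-1,1]$, stays inside $[-1,1]$ where $|p'|\le d^2\|p\|_\infty$, and the intersection of that neighborhood with $[-1,1]$ has length at least $1/(2d^2)$, yielding $\|p\|_\infty\le 4d^2\|p\|_1$ and hence $C_1=C_0+8$. (The degenerate cases $d=0,1$ cause no trouble.) The trade-off is that your constant is inherited from, and your proof is conditional on, the $L^1$ Bernstein inequality, which the paper also only cites; the literature references give the inequality directly, with sharp constants, without passing through Bernstein plus a Nikolskii estimate. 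Your observation that a naive rescaling argument near the endpoints would be circular, and that some pointwise auxiliary input is genuinely needed there, is accurate.
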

The integral on the left hand side of \eqref{eq: L1Bernstein} is weighted by $\sqrt{1-t^2}$. This weight is negligible near the boundary of the interval $[-1, 1]$, which, to some extent, explains the milder $O(d)$ dependence  on the right hand side of \eqref{eq: L1Bernstein} compared to the $O(d^2)$ dependence in \eqref{eq: L1Markov}. 

\eqref{eq: L1Bernstein} and \eqref{eq: L1Markov} also indicate that a lower bound on $w(t)$ of the form $\frac{1}{\sqrt{1-t^2}}$ or a constant can cooperate with $|f'(t)|$ to yield an integral bound. This is indeed realizable, and we will prove that the lower bound on $w(t)$ is of the two different forms in different parts of the interval $[-1, 1]$. It is roughly the minimum of these two, as $\frac{1}{\sqrt{1-t^2}}$ blows up near the boundary. We define the \emph{middle region} to be a centered subinterval in $[-1, 1]$ that is $\sim \frac{1}{d^2}$ away from the boundary, and the \emph{boundary region} to be the two subintervals of length $\sim \frac{1}{d^2}$ near the boundary. The exact expression of these two regions will be clear when we state different lower bounds of $w(t)$. There we note that these two regions are overlapping so that we get a lower bound on $w(t)$ on the whole interval. Matching this pattern of lower bounds, we will also apply different upper bounds of $w'(t)$ in these two regions. 

 \noindent{\textbf{Lower bound on $w(t)$ in the middle region.} }
 \cite{erdelyi_nevai_1992} states the following bound on $\tau(t)$, which shows the relation between the leverage score and the Chebyshev polynomial weight function. See Section 4.3 in \cite{MeyerMuscoMusco:2023} for further discussion of this result.

\begin{proposition}
    With a constant $C_2>0$,
	\label{prop: middlew}
	\begin{align}
	\tau(t) \geq \frac{C_2 (d+1)}{\pi\sqrt{1-t^2}}, \quad w(t) \geq \frac{C_2 }{\pi\sqrt{1-t^2}}, \quad \text{for } |t| \leq \sqrt{1 - \frac{9}{(d-1)^2}}.
	\end{align}
\end{proposition}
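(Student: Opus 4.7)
The plan is to recognize $\tau(t)$ as the reciprocal of the Christoffel function for the Lebesgue measure on $[-1,1]$. By the variational characterization in \eqref{eq:poly_levs}, we have $\tau(t) = 1/\lambda_d(t)$ where $\lambda_d(t) = \min_{p:\ \deg p \leq d,\ p(t)=1} \int_{-1}^1 p(s)^2\, ds$. So the claimed lower bound on $\tau(t)$ is equivalent to exhibiting, for each $t$ in the middle region, a specific degree-$d$ polynomial $p$ with $p(t)=1$ whose $L^2$ norm on $[-1,1]$ is at most $\pi\sqrt{1-t^2}/(C_2(d+1))$. The bound on $w(t)$ then follows immediately from $w(t) = \tau(t)/(d+1)$.

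To build such a polynomial, I would use the Chebyshev polynomials $T_0, T_1, \ldots, T_d$, which satisfy $|T_k(s)| \leq 1$ uniformly on $[-1,1]$ and are orthogonal under the weight $1/(\pi\sqrt{1-s^2})$. A natural candidate is a normalized Dirichlet-type kernel of the form $p(s) = \frac{1}{N(t)} \sum_{k=0}^d c_k T_k(t) T_k(s)$, with Chebyshev normalization constants $c_k$ and $N(t) = \sum_k c_k T_k(t)^2$. By construction $p(t)=1$, and Chebyshev orthogonality immediately gives $\int_{-1}^1 p(s)^2/(\pi\sqrt{1-s^2})\, ds$ on the order of $1/(d+1)$. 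This is the classical Christoffel lower bound with respect to the \emph{Chebyshev} measure.

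The key step is to convert this Chebyshev-weighted bound to an unweighted Lebesgue bound. I would argue that the mass of $p^2$ is effectively concentrated in a window of width $O(\sqrt{1-t^2}/d)$ around $t$, on which $\sqrt{1-s^2}$ is comparable up to a constant to $\sqrt{1-t^2}$. Writing $\int_{-1}^1 p^2\, ds = \int_{-1}^1 p^2 \sqrt{1-s^2} \cdot (ds/\sqrt{1-s^2})$ and using this localization, we can pull out a factor of $\sqrt{1-t^2}$ to obtain $\int_{-1}^1 p(s)^2\, ds \lesssim \sqrt{1-t^2}/(d+1)$, which via the variational formula translates to $\tau(t) \gtrsim (d+1)/\sqrt{1-t^2}$, with the constant absorbed into $C_2$.

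The main obstacle is making the localization argument quantitative enough to produce an explicit constant $C_2$, and the restriction $|t| \leq \sqrt{1 - 9/(d-1)^2}$ is precisely the buffer that keeps the effective support of $p^2$ bounded away from $\pm 1$ (where $\sqrt{1-s^2}$ no longer behaves like $\sqrt{1-t^2}$). Since this lower bound is a classical estimate of Erd\'elyi--Nevai, in practice I would simply invoke their result; the sketch above indicates the conceptual Christoffel-function route one would follow to reconstruct it from first principles.
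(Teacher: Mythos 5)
The paper itself does not prove this proposition: it simply cites \cite{erdelyi_nevai_1992}, pointing to Section 4.3 of \cite{MeyerMuscoMusco:2023} for further discussion, which is exactly the fallback you invoke at the end of your sketch. Your Christoffel-function outline (identify $\tau(t)$ as the reciprocal Christoffel function, use a Chebyshev reproducing kernel as the test polynomial, then localize to trade the Chebyshev weight for $\sqrt{1-t^2}$) is a reasonable conceptual reconstruction, but since both you and the paper ultimately defer to the same literature result rather than carrying out the quantitative localization, your approach is essentially the same as the paper's.
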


\noindent{\textbf{Lower bound on $w(t)$ in the boundary region.}} This requires a bit more effort than the case of the middle region. Inspired by \eqref{eq: L1Markov}, we aim for the following bound on $\tau(t)$ and $w(t)$.

\begin{proposition}\label{prop: encapw}
With constants $C_3>0, c>\frac{9}{2}$,
	\begin{align}
	\tau(t) \geq C_3 d (d+1), \quad w(t) \geq C_3 d,  \quad \text{for } |t| > 1 - \frac{c}{d^2}.
	\end{align}
\end{proposition}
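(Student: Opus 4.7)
The plan is to exploit the reproducing kernel Hilbert space structure of degree-$d$ polynomials under the uniform measure on $[-1,1]$. By symmetry I need only handle $t \in [1 - c/d^2, 1]$; the case near $-1$ is identical. Writing $\tilde P_k(t) = \sqrt{(2k+1)/2}\, P_k(t)$ for the orthonormal Legendre polynomials and $K_d(s, t) = \sum_{k=0}^d \tilde P_k(s)\,\tilde P_k(t)$ for the reproducing kernel, one has $\tau(t) = K_d(t,t) = \sum_{k=0}^d \tfrac{2k+1}{2} P_k(t)^2$.

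The first step is a Cauchy--Schwarz lower bound in the RKHS: because $K_d(t,1) = \langle K_d(\cdot,t), K_d(\cdot,1)\rangle_{L^2}$, one gets $\tau(t) = K_d(t,t) \geq K_d(t,1)^2 / K_d(1,1)$. Since $P_k(1) = 1$, a direct sum yields $K_d(1,1) = (d+1)^2/2$, so the task reduces to showing $|K_d(t,1)| \geq C'(d+1)^2$ for some absolute $C' > 0$ and all $t$ in the boundary region. The Christoffel--Darboux identity supplies the compact representation
\[
  K_d(t, 1) = \frac{d+1}{2} \cdot \frac{P_{d+1}(t) - P_d(t)}{t - 1},
\]
and L'H\^opital at $t = 1$ recovers $(d+1)^2/2$. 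It remains to bound how fast this expression can drop as $t$ drifts by at most $c/d^2$ from the endpoint.

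To control that drop, I would invoke the classical Hilb-type asymptotic $P_n(\cos\theta) = J_0((n+\tfrac12)\theta) + O(1/n)$, uniform for $n\theta$ in a bounded interval, where $J_0$ is the Bessel function of the first kind. Setting $\cos\theta = 1 - c/d^2$ so that $\theta \sim \sqrt{2c}/d$ and expanding to first order in $1/d$, the numerator satisfies $P_{d+1}(t) - P_d(t) \sim -(\sqrt{2c}/d)\, J_1(\sqrt{2c})$ (using $J_0' = -J_1$); combined with $t - 1 = -c/d^2$ this yields $K_d(t,1) \sim d^2 \cdot J_1(\sqrt{2c})/\sqrt{2c}$. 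Since $c$ is only required to be slightly greater than $9/2$, so $\sqrt{2c}$ is only slightly greater than $3$ and sits comfortably inside $(0, 3.83)$ where $J_1$ is strictly positive, the factor $J_1(\sqrt{2c})/\sqrt{2c}$ is a positive constant. This delivers $|K_d(t,1)| \geq C'(d+1)^2$, hence $\tau(t) \geq 2(C')^2(d+1)^2 \geq C_3\, d(d+1)$, and $w(t) = \tau(t)/(d+1) \geq C_3 d$ follows.

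The main obstacle is turning the informal Bessel heuristic into a rigorous uniform estimate over the whole interval $[1 - c/d^2, 1]$: the naive pointwise Markov inequality applied to $\tau$ or to $K_d(\cdot,1)$ is far too weak here (it only yields a useful bound for $c < 1/4$, nowhere near the required $c > 9/2$), so the Legendre-to-Bessel comparison with an explicit error bound is doing essential work. A cleaner alternative, if one is willing to invoke a black box, is to cite the classical Christoffel function estimate $\lambda_n(t) \asymp n^{-1}\sqrt{1-t^2} + n^{-2}$ for Legendre polynomials, which immediately gives $\tau(t) = \Theta(d^2)$ uniformly throughout the boundary region and sidesteps the asymptotic calculation entirely.
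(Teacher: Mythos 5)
Your RKHS route is a genuinely different approach from the paper's, and the skeleton is sound: the Cauchy--Schwarz bound $\tau(t) = K_d(t,t) \geq K_d(t,1)^2/K_d(1,1)$, the Christoffel--Darboux identity, and the Bessel heuristic do collectively point to $\tau(t) = \Theta(d^2)$ on the boundary region. However, two things are worth flagging. First, as you acknowledge, the Hilb/Mehler--Heine asymptotic $P_n(\cos\theta) \approx J_0((n+\tfrac12)\theta)$ must be invoked with explicit, uniform error terms to make the argument rigorous, and the bookkeeping for $P_{d+1}-P_d$ divided by $t-1$ is delicate precisely because both numerator and denominator are $O(d^{-2})$. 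Second, and more importantly, your claim that ``the naive pointwise Markov inequality applied to $\tau$ or $K_d(\cdot,1)$ is far too weak'' overstates the difficulty and misses what the paper actually does. The paper does \emph{not} apply Markov to $\tau$ as a single degree-$2d$ polynomial (you are right that this only gives $c < 1/4$). Instead it works term by term: using $|(L_i^2)'| \leq i(i+\tfrac12)(i+1)$, each $L_i^2(t) \geq (i+\tfrac12)\bigl(1 - i(i+1)\,c/d^2\bigr)$ stays bounded below by $\tfrac12(i+\tfrac12)$ provided $i \lesssim d/\sqrt{2c}$, and summing only over those indices already yields $\tau(t) \gtrsim d^2/c$. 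The truncation to small $i$ is the key move the naive Markov calculation ignores, and it gives an elementary, self-contained proof valid for every $c > 0$. Your approach, by contrast, degenerates once $\sqrt{2c}$ exceeds the first positive zero of $J_1$ at $\approx 3.83$ (i.e.\ $c \gtrsim 7.34$), since then $K_d(t,1)$ can change sign and Cauchy--Schwarz delivers a trivial lower bound at the sign change -- fine for the specific $c$ needed here, but a limitation the paper's method does not have. The ``black box'' Christoffel-function estimate $\lambda_n(t) \asymp n^{-1}\sqrt{1-t^2} + n^{-2}$ you mention is indeed the cleanest citation-based shortcut, and it is essentially what the paper uses for the middle region (Proposition~5 cites Erd\'elyi--Nevai); extending that citation to cover the boundary region too would be perfectly acceptable, but as written your primary route leaves the uniform Bessel error control as a real gap.
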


We specify $c> \frac{9}{2}$ because this implies that, at least for $d$ large enough, the middle region, defined as $\left[-\sqrt{1 - \frac{9}{(d-1)^2}}, \sqrt{1 - \frac{9}{(d-1)^2}}\right]$, and the boundary region, defined as $\left[-1, -1 + \frac{c}{d^2}\right]  \cup \left[1 - \frac{c}{d^2}, 1\right]$, have overlap.

To prove Proposition \ref{prop: encapw}, we need to employ the explicit expression for $\tau(t)$ in terms of Legendre polynomials and apply properties of Legendre polynomials. We denote the unnormalized degree $d$ Legendre polynomial by $P_d(t)$ and we fix $P_d(1) = 1$. We use $L_d(t)$ to denote the normalized Legendre polynomials satisfying $\int_{-1}^1 L_d^2(t) dt = 1$. There are classical, explicit bounds for $L_d(t)$ and its derivative on the interval $[-1, 1]$.

\begin{lemma}[See e.g. \cite{seigel_1955}]
	\label{lem: Ln}
	For $|t| \leq 1$, the normalized degree $d$ Legendre polynomials $L_d$, satisfy
	\begin{align}
	|L_d(t)| \leq L_d(1) = \sqrt{d + \frac 12}, \quad |L_d'(t)| \leq \frac{d(d+1)}{2}\sqrt{d+\frac12}.
	\end{align}
\end{lemma}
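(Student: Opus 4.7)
The plan is to reduce both claims to the corresponding statements about the standard (unnormalized) Legendre polynomial $P_d$, which is characterized by $P_d(1) = 1$ and $\int_{-1}^1 P_d(t)^2\, dt = \tfrac{2}{2d+1}$. The normalized polynomial is then $L_d(t) = \sqrt{d + \tfrac12}\, P_d(t)$, so $L_d(1) = \sqrt{d+\tfrac12}$ matches the asserted normalization, and both inequalities reduce to the classical estimates $|P_d(t)| \leq 1$ and $|P_d'(t)| \leq d(d+1)/2$ on $[-1,1]$.

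For $|P_d(t)| \leq 1$, I would invoke the Laplace integral representation
\begin{align*}
P_d(\cos\theta) \;=\; \tfrac{1}{\pi}\int_0^\pi \bigl(\cos\theta + i\sin\theta\cos\phi\bigr)^d\, d\phi,
\end{align*}
whose integrand has modulus at most $1$ because $|\cos\theta + i\sin\theta\cos\phi|^2 = \cos^2\theta + \sin^2\theta\cos^2\phi \leq 1$. This immediately gives $|P_d(\cos\theta)| \leq 1$ for $\theta \in [0,\pi]$, i.e., $|P_d(t)| \leq 1$ on $[-1,1]$, and hence $|L_d(t)| \leq \sqrt{d+\tfrac12} = L_d(1)$.

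For the derivative bound, the plan is to use the classical Legendre expansion
\begin{align*}
P_d'(t) \;=\; (2d-1)P_{d-1}(t) + (2d-5)P_{d-3}(t) + (2d-9)P_{d-5}(t) + \cdots,
\end{align*}
in which the sum terminates at $P_1$ or $P_0$ according to the parity of $d$. (This identity can be derived either by differentiating the three-term recurrence or by projecting $P_d'$ onto the Legendre basis and using orthogonality.) Applying $|P_j(t)| \leq 1$ term by term,
\begin{align*}
|P_d'(t)| \;\leq\; \sum_{j}(2j+1) \;=\; P_d'(1) \;=\; \frac{d(d+1)}{2},
\end{align*}
where the closed form for $P_d'(1)$ drops out of evaluating the Legendre ODE $(1-t^2)P_d'' - 2tP_d' + d(d+1)P_d = 0$ at $t=1$. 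Multiplying by $\sqrt{d+\tfrac12}$ gives the stated bound on $|L_d'(t)|$.

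Neither step is deep; both are standard facts in the theory of orthogonal polynomials and can alternatively be cited directly from Szegő's \emph{Orthogonal Polynomials}. The only point to watch out for is bookkeeping in the expansion of $P_d'$—getting the parity of the included terms and the coefficients right—after which the key estimate collapses to a one-line triangle-inequality argument. No separate work is needed for $t$ near $-1$ because $P_d(-t) = (-1)^d P_d(t)$ and $P_d'(-t) = (-1)^{d+1} P_d'(t)$, so the same bounds transfer by symmetry.
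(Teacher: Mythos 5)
The paper states this lemma with only a citation to \cite{seigel_1955} and gives no proof of its own, so there is no internal argument to compare against; your derivation fills that gap and is correct. The reduction $L_d(t) = \sqrt{d+\tfrac12}\,P_d(t)$ via $P_d(1)=1$ and $\int_{-1}^1 P_d^2\,dt = \tfrac{2}{2d+1}$ is the right normalization. The Laplace integral representation gives $|P_d(t)| \le 1$ on $[-1,1]$ immediately, since $|\cos\theta + i\sin\theta\cos\phi|^2 = \cos^2\theta + \sin^2\theta\cos^2\phi \le 1$. For the derivative, the telescoping expansion $P_d' = (2d-1)P_{d-1} + (2d-5)P_{d-3}+\cdots$ follows from summing the recurrence $(2n+1)P_n = P_{n+1}'-P_{n-1}'$ with $P_0'=0$ (the same recurrence the paper itself invokes in the proof of its proposition on $\tau'(t)$), and applying $|P_j|\le 1$ termwise yields $|P_d'(t)| \le \sum_j(2j+1) = P_d'(1) = d(d+1)/2$, with the closed form dropping out of the Legendre ODE at $t=1$ as you note. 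These are exactly the standard facts from the orthogonal polynomials literature, and your proof is a clean, self-contained account of what the citation glosses over.
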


\begin{proof}[Proof of Proposition \ref{prop: encapw}:]
 Since $\tau(t)$ and $w(t)$ are related as $w(t) = \frac{\tau(t)}{d+1}$, we only need to prove the bound for $\tau(t)$. The proof follows from the fact that, $\tau(t)$ can be equivalently written as the squared norm of the $t$-th row in any orthonormal basis for $\bv{A}$, 
	\begin{align}\label{eq: tauinLn}
		\tau(t) = \sum_{i=0}^d L_i^2(t).
	\end{align}
See e.g. Equation (4) in \cite{MeyerMuscoMusco:2023} for this equivalency. We will then lower bound $\tau(t)$ by deriving lower bounds for the individual summands, ultimately arguing that for $|t| > 1-\frac{c}{d^2}$, $\tau(t) \gtrsim d^2$. 
%[the idea is that Legendre polynomials are almost tight for its derivative...] 
 Lower bounds on individual $L_i^2(t)$ can be derived with an upper bound on the derivative, since we know at the boundary $L_i(1) = \sqrt{i+\frac{1}{2}}$ exactly and $L_i^2(t)$ is an even function. In particular, by Lemma \ref{lem: Ln},
	\[
	|(L_i^2)'(t)| = |2L_i(t) L_i'(t)| \leq i(i+\frac12)(i+1).
	\]
	Therefore, for any $c > 0$ and $|t| > 1 - \frac{ c}{d^2}$,
	\[
	L_i^2(t) \geq (i+\frac12) - i(i+\frac12)(i+1)\frac{ c}{d^2}.
	\]
	For a given $c$, $L_i^2(t)$ is thus positive and of order $\gtrsim i$ as long as $i \leq d/c'$ for a sufficiently large constant $c' > 1$. A careful algebraic manipulation will show that we can choose $c>\frac{9}{2}$. With the classical formula for summing a linear growth sequence, we have for some constant $C_3$, 
	\[
	\tau(t) = \sum_{i=0}^d L_i^2(t) \geq \sum_{i=0}^{d/c'} L_i^2(t) \geq C_3 d(d+1).\qedhere
	\]
\end{proof}

Now we move on to upper bounds on $w'(t)$ in the two regions matching the form of $w(t)$ so that some cancellation can occur to leave a clean integral $\int_{I_i} f(t) dt$ in \eqref{eq: polyboundmain}. The classical Markov-Bernstein inequalities in \cite{Borwein_1995} bound the pointwise value of the derivative of a polynomial by its maximum function value.

\begin{lemma}[Bernstein's inequality]\label{lem: bernstein}
   For any degree $d$ real polynomial $p(t)$, 
    \begin{align}
        |p'(t)| \leq \frac{d}{\sqrt{1-t^2}}\sup_{t \in [-1, 1]}|p(t)|,\quad \text{for } -1< t <1.
    \end{align}
\end{lemma}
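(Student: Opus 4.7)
My approach is to reduce the problem to the trigonometric case via the substitution $t = \cos\theta$. Setting $T(\theta) := p(\cos\theta)$, the function $T$ is a real trigonometric polynomial in $\theta$ of degree exactly $d$, since each monomial $t^k$ expands via the Chebyshev identities into a cosine polynomial of degree $k$. Differentiating gives $T'(\theta) = -\sin\theta \cdot p'(\cos\theta)$, so for $\theta \in (0,\pi)$,
\begin{align*}
    |p'(\cos\theta)| \;=\; \frac{|T'(\theta)|}{\sin\theta} \;=\; \frac{|T'(\theta)|}{\sqrt{1-\cos^2\theta}}.
\end{align*}
Setting $t = \cos\theta$, and noting that $\sup_{\phi \in \mathbb{R}}|T(\phi)| = \sup_{t \in [-1,1]}|p(t)|$ since $\cos$ maps $[0,\pi]$ onto $[-1,1]$, the stated inequality is exactly equivalent to the trigonometric form
\begin{align*}
    |T'(\theta)| \;\leq\; d\,\sup_{\phi}|T(\phi)|\qquad \text{for every trig.\ polynomial $T$ of degree $\leq d$.}
\end{align*}

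The core of the proof then becomes the trigonometric Bernstein inequality. My preferred route is Riesz's classical interpolation formula, which writes the derivative $T'(\theta_0)$ at an arbitrary point as a signed linear combination of the values of $T$ at $2d$ equally spaced points around the circle, with coefficients of the form $\pm \tfrac{1}{4d}\csc^2(\cdot)$. The essential feature is that the sum of the absolute values of these coefficients equals exactly $d$. Taking absolute values inside the sum, pulling out $\sup_{\phi}|T(\phi)|$, and invoking this identity gives the bound $|T'(\theta_0)| \leq d\sup_\phi|T(\phi)|$ with the optimal constant. Verifying the interpolation formula itself reduces to checking that both sides agree on the basis $\{e^{ik\theta} : -d \leq k \leq d\}$, which boils down to a single closed-form trigonometric identity involving sums of $\csc^2$.

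The main technical obstacle is the derivation and algebraic verification of Riesz's interpolation identity together with the accompanying $\csc^2$-sum identity; both are standard but somewhat intricate. A fallback that avoids this machinery is an extremal/alternation argument: if one assumes for contradiction that $|T'(\theta_0)| > d\sup_\phi|T|$ at some $\theta_0$, then comparing $T$ with an auxiliary function $c\sin(d(\theta-\theta_0)+\psi)$ for appropriately chosen $c$ and $\psi$ produces a difference of two degree-$d$ trigonometric polynomials with strictly more than $2d$ sign changes in $[0,2\pi)$, contradicting the fact that such a polynomial has at most $2d$ zeros. I would use the Riesz route as the main proof since it delivers the sharp constant directly and cleanly; the alternation route is preserved as a conceptual backup.
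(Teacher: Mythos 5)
The paper does not prove this lemma: it cites it as a classical Markov--Bernstein inequality from \cite{Borwein_1995} and uses it as a black box, so there is no in-paper argument to compare against. Your proof is a correct and essentially canonical derivation. The reduction $t = \cos\theta$, $T(\theta) = p(\cos\theta)$ converts the weighted algebraic statement into the unweighted trigonometric Bernstein inequality, and the Riesz interpolation formula delivers the sharp constant $d$ exactly as you describe, via the fact that the $\ell_1$ norm of the interpolation weights is $d$. One small imprecision: $T$ has degree \emph{at most} $d$ rather than exactly $d$ (equality holds only if the leading coefficient of $p$ is nonzero), but since the trigonometric Bernstein bound is monotone in the degree this does not affect the conclusion. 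Your alternation-based fallback (comparing against $c\sin(d(\theta-\theta_0)+\psi)$ and counting zeros) is the other standard route and would also work; the Riesz route is cleaner for extracting the optimal constant, so your choice to lead with it is sensible. The only part left unverified is the $\csc^2$-sum identity underlying Riesz's formula, which you correctly identify as standard bookkeeping; a fully self-contained writeup would need to include that computation or a citation for it.
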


\begin{lemma}[Markov's inequality]\label{lem: markov}
For any degree $d$ real polynomial $p(t)$, 
\begin{align}
        |p'(t)| \leq d^2\sup_{t \in [-1, 1]}|p(t)|,\quad \text{for }-1< t <1.
    \end{align}
\end{lemma}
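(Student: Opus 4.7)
The plan is to combine the Bernstein inequality (Lemma \ref{lem: bernstein}), already available, with a separate endpoint argument, since Bernstein's bound $d/\sqrt{1-t^2}$ diverges as $t\to\pm1$ and only recovers the $d^2$ bound away from the boundary. By normalizing $\sup_{[-1,1]}|p|=1$ and using the $t\mapsto -t$ symmetry, it suffices to prove $|p'(t_0)|\leq d^2$ for $t_0\in[0,1)$.

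First I would dispose of the interior case $t_0\leq\sqrt{1-1/d^2}$ by a direct application of Bernstein: here $\sqrt{1-t_0^2}\geq 1/d$, so $|p'(t_0)|\leq d/\sqrt{1-t_0^2}\leq d^2$. The remaining boundary case $t_0\in(\sqrt{1-1/d^2},1)$ is the hard part. My plan is to use Lagrange interpolation at the Chebyshev extremal nodes $x_k=\cos(k\pi/d)$, $k=0,\ldots,d$. Writing $p(t)=\sum_{k=0}^d p(x_k)\,\ell_k(t)$, differentiating, and using $|p(x_k)|\leq 1$ yields $|p'(t_0)|\leq\sum_k|\ell_k'(t_0)|=:\Lambda_d(t_0)$. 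The goal is then to show $\Lambda_d(t_0)\leq d^2$ on this near-endpoint range.

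The extremal identity comes from the Chebyshev polynomial itself: since $T_d(x_k)=(-1)^k$ and $T_d'(1)=d^2$, the interpolation formula evaluated at $t=1$ gives $d^2=\sum_k(-1)^k\ell_k'(1)$. A direct sign inspection of the factored Lagrange basis at $t=1$ shows $\mathrm{sgn}(\ell_k'(1))=(-1)^k$ (the sign contributions of $1/(x_k-1)$ and of the products $(1-x_j)/(x_k-x_j)$ over $j\neq k,0$ combine to $(-1)^k$ once one counts how many $x_j$ lie above versus below $x_k$ among $j=1,\ldots,d$), hence $\Lambda_d(1)=d^2$ exactly. Extending $\Lambda_d(1)\leq d^2$ to $\Lambda_d(t_0)\leq d^2$ for nearby $t_0$ is the principal obstacle; I would attack this by establishing monotonicity of $\Lambda_d$ on $[\sqrt{1-1/d^2},1]$ through a careful sign and derivative analysis of the individual $\ell_k'(t)$, using the same sign pattern as at $t=1$ together with the identity $\ell_k'(t)=\ell_k(t)\sum_{j\neq k}(t-x_j)^{-1}$.

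If monotonicity proves awkward, a fallback is the classical extremal-polynomial route: assume for contradiction that $|p'(t_0)|>d^2$ for some $t_0$, and consider the polynomial $T_d - \lambda p$ for a suitably chosen $\lambda$, using the equi-oscillation of $T_d$ at the Chebyshev extrema to force more than $d$ sign changes in a degree-$d$ polynomial, a contradiction. This ``Markov brothers'' style argument bypasses the explicit Lebesgue-derivative computation entirely and yields $|p'(t_0)|\leq d^2$ throughout $(-1,1)$, completing the proof.
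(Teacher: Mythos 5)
First, note that the paper does not actually prove this lemma: it is invoked as the classical Markov (brothers') inequality and cited to \cite{Borwein_1995}, so there is no in-paper argument to compare against. Your proposal is a sketch of the standard textbook strategy -- Bernstein's inequality (Lemma~\ref{lem: bernstein}) for the interior zone $\sqrt{1-t_0^2}\ge 1/d$, plus a separate near-endpoint argument via Chebyshev structure -- and the interior step and the computation $\Lambda_d(1)=\sum_k|\ell_k'(1)|=T_d'(1)=d^2$ (via the sign pattern $\mathrm{sgn}\,\ell_k'(1)=(-1)^k$) are correct. But as written the proposal has a genuine gap exactly where you flag ``the principal obstacle.'' What you need on the boundary zone is not monotonicity of $\Lambda_d$ but the statement that each $\ell_k'$ keeps the sign $(-1)^k$ throughout $[\sqrt{1-1/d^2},1]$, so that $\Lambda_d(t)=\bigl|\sum_k(-1)^k\ell_k'(t)\bigr|=|T_d'(t)|\le d^2$. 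That sign-constancy is a quantitative localization of the largest critical point of each $\ell_k$: for $k\ge 2$, $\ell_k$ has a critical point strictly inside $(\cos(\pi/d),1)$, and one must show it lies below your threshold $\sqrt{1-1/d^2}$. Nothing in the proposal establishes this, and it is precisely the technical heart of this route; without it the boundary case is unproven.

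The fallback is also not a proof as stated. With $|\lambda|\le 1$ the polynomial $T_d-\lambda p$ has (weak) sign pattern $(-1)^k$ at the $d+1$ extrema, which gives at most $d$ sign changes -- consistent with degree $d$, so no contradiction arises ``directly''; the classical argument instead counts zeros of the derivative $T_d'-\lambda p'$ and uses the position of $t_0$ relative to those zeros, which is exactly where the near-endpoint restriction re-enters. Indeed the conclusion cannot be position-independent: a pointwise comparison $|p'(t_0)|\le|T_d'(t_0)|$ is false at interior points (e.g.\ where $T_d'$ vanishes), so the claim that this argument ``yields $|p'(t_0)|\le d^2$ throughout $(-1,1)$'' glosses over the essential difficulty. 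If you want a complete and short proof rather than the paper's citation, the cleanest classical route is Bernstein plus Schur's inequality: if $q$ has degree $d-1$ and $|q(t)|\le 1/\sqrt{1-t^2}$ on $(-1,1)$, then $\sup_{[-1,1]}|q|\le d$; applying this to $q=p'/d$ (which satisfies the hypothesis by Lemma~\ref{lem: bernstein}) gives $|p'(t)|\le d^2$. Schur's inequality is proved by Lagrange interpolation at the zeros $\tau_k=\cos((2k-1)\pi/(2d))$ of $T_d$: for $|t|\ge\cos(\pi/(2d))$ the point $t$ lies beyond all nodes, so every basis function has a fixed sign and $\sum_k|\ell_k(t)|/\sqrt{1-\tau_k^2}=T_d'(t)/d\le d$ by the elementary bound $|T_d'|=d\,|U_{d-1}|\le d^2$, while for $|t|\le\cos(\pi/(2d))$ one uses $1/\sqrt{1-t^2}\le 1/\sin(\pi/(2d))\le d$. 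This avoids the delicate sign analysis of the $\ell_k'$ at the Chebyshev extrema that your primary route requires.
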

Both Lemma \ref{lem: bernstein} and Lemma \ref{lem: markov} are valid on the whole interval $[-1, 1]$, but one of them is tighter than the other depending on where $t$ sits. Although $w'(t)$ itself is a degree $2d-1$ polynomial, it turns out that the bounds directly given by Lemma \ref{lem: bernstein} and Lemma \ref{lem: markov} are too crude.  However, we can unwrap more structure in $\tau'(t)$ by applying the recurrence relation of Legendre polynomials.
\begin{proposition}[Neat expression of $\tau'(t)$]
	\label{prop: derivativeoftau}
	\begin{align}
	\tau'(t) = P_{d+1}'(t) P_d'(t).
	\end{align}
\end{proposition}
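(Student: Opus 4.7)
The plan is to expand $\tau(t) = \sum_{i=0}^{d} L_i^2(t)$ in terms of the unnormalized Legendre polynomials, differentiate, and then exploit a standard Legendre recurrence to obtain a telescoping sum. Since $L_i(t) = \sqrt{i+\tfrac{1}{2}}\, P_i(t)$ (under the normalization $P_i(1)=1$), we have
\begin{equation*}
\tau(t) \;=\; \sum_{i=0}^{d} \left(i+\tfrac{1}{2}\right) P_i^2(t),
\qquad
\tau'(t) \;=\; \sum_{i=0}^{d} (2i+1)\, P_i(t)\, P_i'(t).
\end{equation*}

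The next step is to invoke the classical Legendre identity
\begin{equation*}
(2i+1)\, P_i(t) \;=\; P_{i+1}'(t) - P_{i-1}'(t),
\end{equation*}
with the convention $P_{-1}'\equiv 0$. This identity is the standard derivative recurrence for Legendre polynomials (see e.g.\ Szeg\H{o}'s monograph), and it is precisely what converts the product $P_i P_i'$ into a difference of products of derivatives. Substituting gives
\begin{equation*}
\tau'(t) \;=\; \sum_{i=0}^{d} \bigl(P_{i+1}'(t) - P_{i-1}'(t)\bigr)\, P_i'(t)
\;=\; \sum_{i=0}^{d} P_{i+1}'(t) P_i'(t) \;-\; \sum_{i=1}^{d} P_{i-1}'(t) P_i'(t),
\end{equation*}
where we dropped the $i=0$ term of the second sum since $P_{-1}'=0$.

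The final step is a shift of index in the second sum: setting $j = i-1$ rewrites it as $\sum_{j=0}^{d-1} P_j'(t) P_{j+1}'(t)$, which is identical to the first sum except that it truncates one term earlier. All terms cancel except the $i=d$ term of the first sum, yielding
\begin{equation*}
\tau'(t) \;=\; P_{d+1}'(t)\, P_d'(t),
\end{equation*}
as claimed. There is no real obstacle here beyond correctly recalling and applying the Legendre identity $(2i+1)P_i = P_{i+1}' - P_{i-1}'$; once it is in hand, the computation is a clean telescoping collapse.
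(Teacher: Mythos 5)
Your proof is correct and follows essentially the same route as the paper: expand $\tau$ in unnormalized Legendre polynomials, differentiate, apply the derivative recurrence $(2i+1)P_i = P_{i+1}' - P_{i-1}'$, and telescope. The only cosmetic difference is that you carry the $i=0$ term along using the convention $P_{-1}'\equiv 0$, whereas the paper drops it immediately via $P_0'=0$; both are equivalent and the collapse to $P_{d+1}'P_d'$ is the same.
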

\begin{proof}
We write $\tau(t)$ in terms of the unnormalized Legendre polynomials $P_n$ as
	\begin{align*}
	\tau(t) = \sum_{i=0}^d L_i^2(t) 
	=  \sum_{i=0}^d (i + \frac 12) P_i^2.
	\end{align*}
	$P_n$ and $P_n'$ are related by the recurrence relation $
	(2n+1) P_n = P_{n+1}' - P_{n-1}'$ with $P_0' = 0$, so
	\[
	\tau'(t) = \sum_{i=0}^d (2i + 1) P_i P_i' = \sum_{i=1}^d P_i' (P_{i+1}' - P_{i-1}') = P_{d+1}' P_d'.   \qedhere
	\] 
\end{proof}
\noindent{\textbf{Upper bound on $w'(t)$ in the middle region.}} 

\begin{proposition}
	\label{prop: boundofdtau}
	\begin{align}
	|\tau'(t)| \leq \frac{d(d+1)}{1-t^2}, \quad |w'(t)| \leq \frac{d}{1-t^2}, \quad \text{for } -1 < t < 1.
	\end{align}
\end{proposition}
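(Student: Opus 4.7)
The plan is to combine the neat expression for $\tau'(t)$ from Proposition \ref{prop: derivativeoftau} with Bernstein's pointwise inequality (Lemma \ref{lem: bernstein}) applied to each factor. Concretely, I would start by writing
\[
\tau'(t) = P_{d+1}'(t)\, P_d'(t),
\]
and then seek a pointwise bound on each of $P_d'(t)$ and $P_{d+1}'(t)$ on the open interval $(-1,1)$. Bernstein's inequality gives such a bound for a degree $n$ polynomial in terms of its supremum norm on $[-1,1]$, so the only nontrivial input needed is a sharp bound on $\sup_{t \in [-1,1]}|P_n(t)|$.

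For unnormalized Legendre polynomials, the standard normalization convention $P_n(1)=1$ is accompanied by the classical fact that $|P_n(t)| \leq 1$ for all $t \in [-1,1]$, with the maximum attained at the endpoints. I would quote this directly (it is a standard textbook result and also follows immediately from Lemma \ref{lem: Ln} applied to $L_n = \sqrt{n+1/2}\, P_n$, since $|L_n(t)| \leq L_n(1) = \sqrt{n+1/2}$ gives $|P_n(t)| \leq 1$). Applying Lemma \ref{lem: bernstein} to $P_d$ and $P_{d+1}$ then yields
\[
|P_d'(t)| \leq \frac{d}{\sqrt{1-t^2}}, \qquad |P_{d+1}'(t)| \leq \frac{d+1}{\sqrt{1-t^2}}, \qquad \text{for } -1 < t < 1.
\]

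Multiplying the two estimates together gives
\[
|\tau'(t)| = |P_{d+1}'(t)|\cdot |P_d'(t)| \leq \frac{d(d+1)}{1-t^2},
\]
which is the first claim. Dividing by $d+1$ (since $w(t) = \tau(t)/(d+1)$, hence $w'(t) = \tau'(t)/(d+1)$) yields $|w'(t)| \leq d/(1-t^2)$, which is the second claim. There is no real obstacle here: the crux is already encapsulated in Proposition \ref{prop: derivativeoftau}, which trades the second power structure of $\tau = \sum (i+\tfrac12) P_i^2$ for the cleaner product $P_{d+1}'P_d'$. Applying Bernstein directly to $\tau'$ as a degree $2d-1$ polynomial would give only an $O(d^2/\sqrt{1-t^2})$ bound and is strictly weaker; the factorization is what unlocks the $d(d+1)/(1-t^2)$ scaling. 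I would emphasize this point briefly in the proof.
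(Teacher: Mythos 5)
Your proof is correct and follows exactly the paper's approach: factor $\tau'$ via Proposition \ref{prop: derivativeoftau}, apply Bernstein's inequality (Lemma \ref{lem: bernstein}) to $P_d$ and $P_{d+1}$ using $\sup_{t\in[-1,1]}|P_n(t)| = P_n(1) = 1$, and multiply. The paper's own proof is a one-sentence version of this same argument.
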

\begin{proof}
	This follows from Proposition \ref{prop: derivativeoftau} and Bernstein's inequality in Lemma \ref{lem: bernstein} for Legendre polynomials $P_{d}$ and $P_{d+1}$ by noting that the maximum of these polynomials are $P_i(1) = 1$, $\forall i$.
\end{proof}
\noindent{\textbf{Upper bound on $w'(t)$ in the boundary region.}} The bound in Proposition \ref{prop: boundofdtau} will blow up near the boundary. This is caused by the blowing up of Bernstein's inequality in Lemma \ref{lem: bernstein}. We should use the tighter upper bound offered by Markov's inequality in Lemma \ref{lem: markov} and a similar proof will give the following bound.

\begin{proposition}
    \label{prop: boundofdtauboundary}
    \begin{align}
        |\tau'(t)| \leq d^2(d+1)^2, \quad |w'(t)| \leq d^2(d+1), \quad \text{for } -1 \leq t \leq 1.
    \end{align}
\end{proposition}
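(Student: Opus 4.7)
The plan is to mirror the proof of Proposition \ref{prop: boundofdtau} almost exactly, but substituting Markov's inequality (Lemma \ref{lem: markov}) for Bernstein's inequality (Lemma \ref{lem: bernstein}) at the final step. The motivation is simple: near the boundary $|t| \to 1$, the factor $1/\sqrt{1-t^2}$ in Bernstein's inequality blows up and produces the bound $d/(1-t^2)$ that is useless for our boundary estimates. Markov's inequality gives a pointwise bound that is uniform on $[-1,1]$ and therefore stays finite all the way up to $t = \pm 1$, at the cost of an extra factor of $d$ per derivative.

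The first step is to invoke the identity from Proposition \ref{prop: derivativeoftau}, namely
\begin{align*}
\tau'(t) = P_{d+1}'(t)\, P_d'(t),
\end{align*}
which decouples the derivative of $\tau$ into derivatives of two individual (unnormalized) Legendre polynomials of degrees $d+1$ and $d$. Since the unnormalized Legendre polynomials $P_n$ are normalized so that $P_n(1) = 1$ and furthermore $|P_n(t)| \leq 1$ for all $t \in [-1,1]$ (a standard fact about Legendre polynomials), their suprema on $[-1,1]$ are all equal to $1$.

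The second step is to apply Markov's inequality (Lemma \ref{lem: markov}) to each factor separately. This gives
\begin{align*}
|P_{d+1}'(t)| \leq (d+1)^2 \sup_{s \in [-1,1]} |P_{d+1}(s)| = (d+1)^2, \qquad |P_d'(t)| \leq d^2 \sup_{s \in [-1,1]} |P_d(s)| = d^2,
\end{align*}
for every $t \in [-1,1]$. Multiplying these bounds yields $|\tau'(t)| \leq d^2(d+1)^2$ on all of $[-1,1]$.

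The final step is to translate from $\tau$ to $w$: since $w(t) = \tau(t)/(d+1)$ by definition, we have $w'(t) = \tau'(t)/(d+1)$ and hence $|w'(t)| \leq d^2(d+1)$, which is the second claimed bound. No step poses a genuine obstacle; the proof is essentially a substitution of Markov for Bernstein in the argument of Proposition \ref{prop: boundofdtau}. The only subtlety worth flagging is that we must use the \emph{unnormalized} Legendre polynomials (so that $\sup_{[-1,1]}|P_n| = 1$), exactly as was done in the proof of Proposition \ref{prop: derivativeoftau}, rather than the normalized $L_n$ appearing in Lemma \ref{lem: Ln}.
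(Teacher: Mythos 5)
Your proof is correct and matches the paper's intended argument exactly: the paper states that Proposition \ref{prop: boundofdtauboundary} follows from "a similar proof" to that of Proposition \ref{prop: boundofdtau}, meaning one applies Markov's inequality (Lemma \ref{lem: markov}) in place of Bernstein's inequality (Lemma \ref{lem: bernstein}) to the factorization $\tau'(t) = P_{d+1}'(t)P_d'(t)$, using $\sup_{[-1,1]}|P_n| = P_n(1) = 1$. Your remark about working with the unnormalized $P_n$ rather than the normalized $L_n$ is a correct and worthwhile clarification.
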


As mentioned, Proposition \ref{prop: boundofdtau} and Proposition \ref{prop: boundofdtauboundary} cannot be obtained by treating $\tau(t)$ as a general $2d-1$ polynomial. In fact, directly applying Lemma \ref{lem: bernstein} and Lemma \ref{lem: markov} to $\tau(t)$, we will get instead
\begin{align*}
|\tau'(t)| \leq \frac{(2d-1)(d+1)^2}{2\sqrt{1-t^2}} \quad \text{and} \quad |\tau'(t)| \leq \frac12(2d-1)^2 (d+1)^2,
\end{align*}
which is significantly worse in the middle region.

With the previous intermediate results in place, we are now ready to prove our main claim, following the proof of Lemma 2.1 in \cite{KaneKarmalkarPrice:2017} for intervals defined by the Chebyshev measure. 
\begin{proof}[Proof of \Cref{thm:mainerrbound}] Recall we start with \eqref{eq: polyboundmain} as
	\begin{align}
        \label{eq: polyboundmainrepeat}
	\left| \int_{-1}^1 f(t) dt - \sum_{i=1}^k \frac{1}{k} \frac{f(t_i)}{w(t_i)} \right| \leq \frac 1k \int_{-1}^{1} \frac{|f'(t)|}{w(t)} dt + \frac 1k \int_{-1}^{1} \frac{f(t) |w'(t)|}{w^2(t)} dt.
	\end{align}

	We will use different pointwise bounds of $w(t)$ and $w'(t)$ in different parts of the interval. For the \emph{middle region}, by \Cref{prop: middlew} and \Cref{prop: L1Bernstein}  we have
	\begin{align}
            \label{eq: dfwmiddle}
	\frac{1}{k} \int_{-\sqrt{1 - \frac{9}{(d-1)^2}}}^{\sqrt{1 - \frac{9}{(d-1)^2}}} \frac{|f'(t)|}{w(t)} dt \leq \frac{\pi}{C_2 k} \int_{-1}^1 \sqrt{1 - t^2} |f'(t)| dt \leq \frac{2\pi C_0 d}{C_2 k} \int_{-1}^1 f(t) dt.
	\end{align}
	
	Also, by Proposition \ref{prop: middlew} and Proposition \ref{prop: boundofdtau}, 
	$
	\frac{|w'(t)|}{w^2(t)} \leq \frac{\frac{d}{1-t^2}}{\frac{C_2^2}{\pi^2 (1-t^2)}} = \frac{\pi^2 d}{C_2^2}$.
	Therefore,
	\begin{align}
            \label{eq: fdwwmiddle}
	\frac{1}{k} \int_{-\sqrt{1 - \frac{9}{(d-1)^2}}}^{\sqrt{1 - \frac{9}{(d-1)^2}}} \frac{f(t) |w'(t)|}{w^2(t)} dt \leq \frac{\pi^2 d}{C_2^2 k} \int_{-1}^1 f(t) dt.
	\end{align}

	For the \emph{boundary region}, by \Cref{prop: encapw} and \Cref{prop: L1Markov} we have
	\begin{align}
        \label{eq: dfwboundary}
	\frac{1}{k} \int_{[-1, -1 + \frac{c}{d^2}] \cup[1 - \frac{c}{d^2}, 1] }  \frac{|f'(t)|}{w(t)} dt \leq \frac{1}{C_3kd} \int_{-1}^1 |f'(t)| dt \leq \frac{4 C_1d}{C_3 k}\int_{-1}^1 |f(t)| dt.
	\end{align}
	
	Also, by Proposition \ref{prop: encapw} and Proposition \ref{prop: boundofdtauboundary}, we have
	$
	\frac{|w'(t)|}{w^2(t)} \leq \frac{(d+1)d^2}{C_3^2 d^2}  \leq \frac{2d}{C_3^2}$.
	Therefore,
	\begin{align}
            \label{eq: fdwwboundary}
	\frac{1}{k} \int_{[-1, -1 + \frac{c}{d^2}] \cup[1 - \frac{c}{d^2}, 1]} \frac{f(t) |w'(t)|}{w^2(t)} dt \leq \frac{2d}{C_3^2 k}\int_{-1}^1 f(t) dt.
	\end{align}
	Finally, noticing that the middle region $\left[ -\sqrt{1 - \frac{9}{(d-1)^2}}, \sqrt{1 - \frac{9}{(d-1)^2}}\right]$ and the boundary region $\left[-1, -1 + \frac{c}{d^2}\right] \cup \left[1 - \frac{c}{d^2}, 1\right]$ are overlapping, we can further upper bound the right hand side of \eqref{eq: polyboundmainrepeat} by adding up \eqref{eq: dfwmiddle}, \eqref{eq: fdwwmiddle}, \eqref{eq: dfwboundary}, and \eqref{eq: fdwwboundary},  so that
	\begin{align}
		\left| \int_{-1}^1 f(t) dt - \sum_{i=1}^k \frac{1}{k} \frac{f(t_i)}{w(t_i)} \right| \leq \left(\frac{2\pi C_0}{C_2} + \frac{\pi^2}{C_2^2} + \frac{4 C_1}{C_3} + \frac{2}{C_3^2}\right) \frac{d}{k} \int_{-1}^1 f(t) dt.
	\end{align}
	
	Then, by taking $k = O(\frac{d}{\alpha})$, we prove the statement.
\end{proof}
\subsection{Direct Proof of Approximate Matrix-Vector Multiplication Bound}
\label{sec:matvec_poly}
\begin{proof}[Approximate matrix-vector multiplication bound for \Cref{thm:poly}]
Let $\mathbf{U}$ be an orthogonal basis of the column span of the polynomial regression quasi-matrix $\mathbf{A}$, as defined at the beginning of \Cref{app:poly_proof}. 
As in the finite dimensional setting, our goal is to prove that, if we take $k = O\left(\frac{d}{\epsilon\delta}\right)$ samples via pivotal sampling, using those samples to construct a sampling matrix $\bv{S}$, then with probability $1-\delta$,
\begin{align}
\label{eq:matvec_result_poly}
    \| \mathbf{U}^T \mathbf{S}^T \mathbf{S} (\mathbf{b} - \mathbf{U} \mathbf{y}^*) \|_2^2 \leq \epsilon \| \mathbf{b} - \mathbf{U} \mathbf{y}^* \|_2^2.
\end{align}
We start with the key step \eqref{eq:markov} in the matrix case as
\begin{align}\label{eq:markovrepeat}
    \Pr \left[ \lVert \mathbf{U}^T \mathbf{S}^T \mathbf{S} (\mathbf{b} - \mathbf{U} \mathbf{y}^*) \rVert_2^2 \geq \epsilon \lVert \mathbf{b} - \mathbf{U} \mathbf{y}^* \rVert_2^2 \right] \leq \frac{\mathbb{E}\big[\lVert \mathbf{U}^T \mathbf{S}^T \mathbf{S} (\mathbf{b} - \mathbf{U} \mathbf{y}^*) \rVert_2^2 \big]}{\epsilon \lVert \mathbf{b} - \mathbf{U} \mathbf{y}^* \rVert_2^2} .
\end{align}
Our goal is to prove $\mathbb{E}\big[\lVert \mathbf{U}^T \mathbf{S}^T \mathbf{S} (\mathbf{b} - \mathbf{U} \mathbf{y}^*) \rVert_2^2 \big] = O(\frac{d}{k})  \lVert \mathbf{b} - \mathbf{U} \mathbf{y}^* \rVert_2^2$.
% without resorting to $\ell_{\infty}$-independence, and then we can follow the same proof as in the matrix case to prove the approximate matrix-vector multiplication bound.
We have that $\mathbf{U}$'s $i$-th column is the normalized Legendre polynomial $L_i$. Let $\mathbf{p}^* = \argmin_{\text{degree $d$ polynomial $\mathbf{p}$}} \|\mathbf{p} - \mathbf{b}\|_2^2$. Then $\mathbf{Uy}^* = \mathbf{p}^*$, where $\mathbf{y}^*$ is a length $d+1$ vector holding the coefficients of $\mathbf{p}^*$ when expanded in the basis of Legendre polynomial $L_i$'s. So the expectation can be written as 
\begin{align}
\label{eq: polymatvecinitial}
    \mathbb{E}\big[\lVert \mathbf{U}^T \mathbf{S}^T \mathbf{S} (\mathbf{b} - \mathbf{U} \mathbf{y}^*) \rVert_2^2 \big] = \mathbb{E}\left[\sum_{j=0}^d \left(\sum_{i=1}^k  \frac{L_j(t_i) (p^*(t_i) - b(t_i))}{k w(t_i)}\right)^2\right].
\end{align}
Note that each $t_i$ is chosen independently from $I_i$ with probability density function in $I_i$ as $\tau(t) / \int_{I_i} \tau(t) dt = k w(t)$. So we can calculate the expectation in \eqref{eq: polymatvecinitial} to get
\begin{align}
\label{eq: polymatveceq}
\begin{split}
    \mathbb{E}\big[\lVert \mathbf{U}^T \mathbf{S}^T \mathbf{S} (\mathbf{b} - \mathbf{U} \mathbf{y}^*) \rVert_2^2 \big] & =\mathbb{E}\left[\sum_{j=0}^d \sum_{i=1}^k \left( \frac{L_j(t_i) (p^*(t_i) - b(t_i))}{k w(t_i)}\right)^2\right] \\
    & = \sum_{j=0}^d\sum_{i=1}^k \int_{I_i}\frac{\left(L_j(t)(p^*(t) - b(t))\right)^2}{kw(t)} dt\\
    & = \sum_{i=1}^d \int_{I_i} \frac{\tau(t)(p^*(t) - b(t))^2}{kw(t)}dt\\
    & = \frac{d+1}{k}\int_{-1}^1 (p^*(t) - b(t))^2 dt,
\end{split}
\end{align}
where we use \eqref{eq: tauinLn} in the third equality. The integral in the last line is exactly $\lVert \mathbf{b} - \mathbf{U} \mathbf{y}^* \rVert_2^2$, so we prove that $\mathbb{E}\big[\lVert \mathbf{U}^T \mathbf{S}^T \mathbf{S} (\mathbf{b} - \mathbf{U} \mathbf{y}^*) \rVert_2^2 \big] = \frac{d+1}{k}  \lVert \mathbf{b} - \mathbf{U} \mathbf{y}^* \rVert_2^2$.
Plugging this back into \eqref{eq:markovrepeat}, we get
\begin{align}
     \Pr \left[ \lVert \mathbf{U}^T \mathbf{S}^T \mathbf{S} (\mathbf{b} - \mathbf{U} \mathbf{y}^*) \rVert_2^2 \geq \epsilon \lVert \mathbf{b} - \mathbf{U} \mathbf{y}^* \rVert_2^2 \right] \leq \frac{d+1}{\epsilon k}.
\end{align}
This shows that if we take $k = O(\frac{d}{\epsilon \delta})$ samples via pivotal sampling, then with probability $1-\delta$, $\| \mathbf{U}^T \mathbf{S}^T \mathbf{S} (\mathbf{b} - \mathbf{U} \mathbf{y}^*) \|_2^2 \leq \epsilon \| \mathbf{b} - \mathbf{U} \mathbf{y}^* \|_2^2$.
\end{proof}
% Comparing this bound with the bound obtained from $\ell_\infty$-independence, we find that one interesting point is that this bound will be matched by $\ell_\infty$-independence approach if $D_{\text{inf}} = 1$. And in some sense, we do have independence here -- points in different subintervals are chosen independently. This might suggest that the concept of $\ell_\infty$-independence can be generalized.

\section{Complementary Experiments}\label{sec:appendix:experiments}

In Section \ref{sec:experiments}, we conduct experiments using three targets; a damped harmonic oscillator, a heat equation, and a chemical surface reaction. Here, we show the results of additional simulations. Thus far, the original domain of the experiments is 2D for visualization purposes. Here, we consider the 3D original domain by freeing one more parameter in the damped harmonic oscillator model. We also summarize the number of samples required to achieve a given target error for all four test problems. To corroborate the discussion in Section \ref{sec:lev_score}, we provide a simulation result showing that leverage score sampling outperforms uniform sampling. Lastly, the performance of the randomized BSS algorithm from \cite{CP19} on the 2D damped harmonic oscillator target is displayed which supports our discussion in Section \ref{subsec:relatedwork}. Before showing the simulation results, we begin this section with a deferred detail of the chemical surface coverage target in Section \ref{sec:experiments}.

\noindent \textbf{Chemical Surface Coverage.} As explained in \citep{HamptonDoostan:2015}, the target function models the surface coverage of certain chemical species and considers the uncertainty $\rho$ which is parameterized by absorption $\alpha$, desorption $\gamma$, the reaction rate constant $\kappa$, and time $t$. Given a position $(x, y)$ in 2D space, our quantity of interest $\rho$ is modeled by the non-linear evolution equation:
\begin{align}\begin{split}
    \frac{d \rho}{d t} = \alpha (1-\rho) - \gamma \rho - \kappa (1-\rho)^2 \rho, \quad \rho(t=0)=0.9, \\ \alpha = 0.1 + \exp(0.05 x), \quad \gamma = 0.001 + 0.01 \exp(0.05 y)
\end{split}\end{align}
In this experiment, we set $\kappa=10$ and focus on $\rho$ after $t=4$ seconds. 

\noindent \textbf{3D Damped Harmonic Oscillator.} The damped harmonic oscillator model is given as (restated),
\begin{align}\begin{split}
    \frac{d^2 x}{d t^2}(t) + c \frac{dx}{dt}(t) + k x(t) = f \cos(\omega t), \quad x(0)=x_0, \quad \frac{dx}{dt}(0)=x_1
\end{split}\end{align}
In \Cref{sec:experiments}, we define the domain as $k \times \omega = [1,3] \times [0,2]$ while $f$ was fixed as $f=0.5$. This time, we extend it to 3D space by setting the domain as $k \times f \times \omega = [1,3] \times [0,2] \times [0,2]$.
$3$D plot of the target is given in Figure \ref{fig:appendix:leverage} (a). Note that if we slice the cube at $f'=0.5$, we obtain the target in Section \ref{sec:experiments}.
This time, we create the base data matrix $\mathbf{A}' \in \R^{n \times d'}$ by constructing a $n = 51^3$ fine grid, and the polynomial degree is set to $12$. \Cref{fig:appendix:leverage} (b) and (c) give examples of the leverage score of this data matrix.

\begin{figure}[h]
    \vspace{-1em}
    \centering
    \begin{subfigure}[b]{0.32\textwidth}
        \centering
        \includegraphics[width=\linewidth]{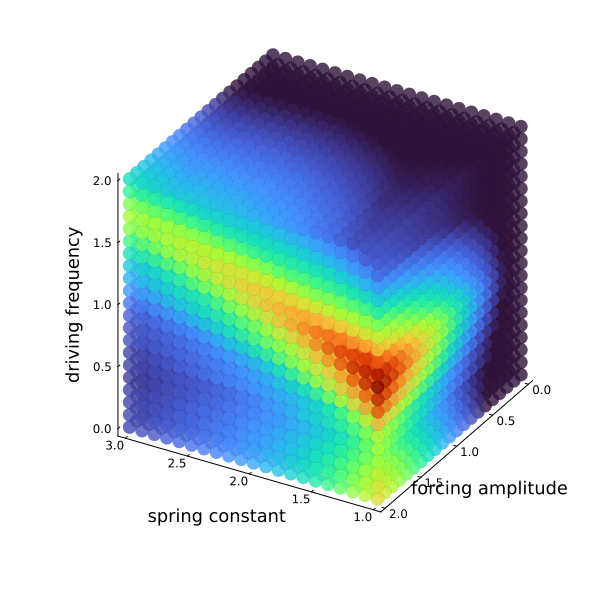}
                \vspace{-.5em}
        \caption{Target Function.}   
    \end{subfigure}
    \hfill
    \centering
    \begin{subfigure}[b]{0.32\textwidth}
        \centering
        \includegraphics[width=\linewidth]{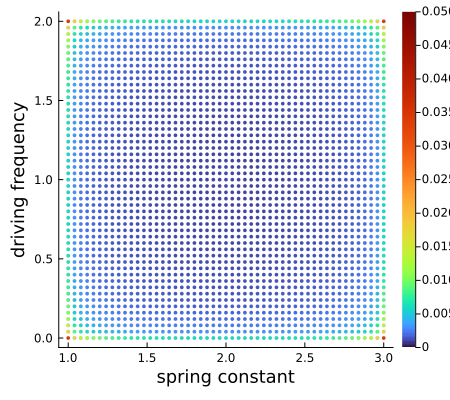}
                \vspace{-.5em}
        \caption{Leverage Score at $f'=0.0$.}     
    \end{subfigure}
    \hfill
    \centering
    \begin{subfigure}[b]{0.32\textwidth}
        \centering
        \includegraphics[width=\linewidth]{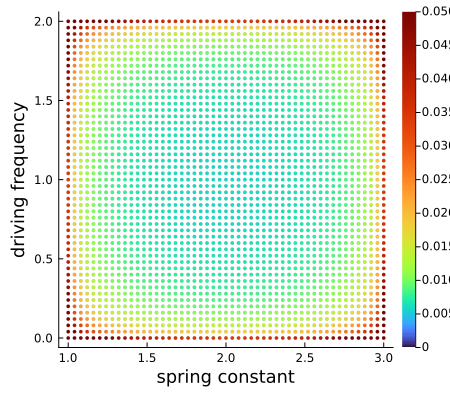}
                \vspace{-.5em}
        \caption{Leverage Score at $f'=1.0$.}    
    \end{subfigure}
    \caption{(a): The target value of the 3D damped harmonic oscillator model. (b) and (c): Its leverage score.We have $f \in [0,2]$, and the grid is sliced at $f=0.5$ in (b) and at $f=1.0$ in (c).}
    \label{fig:appendix:leverage}
    \vspace{-1em}
\end{figure}

Figure \ref{fig:appendix:3plots} (a) shows the relative error of Bernoulli sampling and our pivotal sampling both using the leverage score. As expected, our method shows a better fit than Bernoulli sampling again. 

\noindent \textbf{Samples Needed for a Certain Error.} Table \ref{table:2opt} and \ref{table:1.1opt} summarize the number of samples required to achieve $2 \times \text{OPT}$ error and $1.1 \times \text{OPT}$ error where OPT is the error we could obtain when all the data are labeled. In all four test problems, pivotal sampling achieves the target error with fewer samples than the existing method (independent leverlage scores sampling), which is denoted by Bernoulli in the tables. Our method is especially efficient when we aim at the target error close to OPT. For instance, to achieve the $1.1 \times \text{OPT}$ error in the 2D damped harmonic oscillator model, our method requires less than half samples that Bernoulli sampling requires, showing a significant reduction in terms of the number of samples needed.

\begin{table}[b]
\vspace{-1em}
\centering
\begin{tabular}{c|cccc}
     & Oscillator 2D & Heat Eq. & Surface Reaction & Oscillator 3D \\
     \hline
    n & $10000$ & $10000$ & $10000$ & $51^3$ \\
    poly. deg. & 20 & 20 & 20 & 10 \\
    \hline
    Bernoulli (a) & 574 & 554 & 545 & 671 \\
    Pivotal (b) & 398 & 395 & 390 & 533 \\
    \hline
    Efficiency (b / a) & 0.693 & 0.713 & 0.716 & 0.794
\end{tabular}
\caption{Number of samples needed to achieve $2 \times \text{OPT}$ error.}
\label{table:2opt}
\vspace{-1em}
\end{table}

\begin{table}[h!]
\vspace{-1em}
\centering
\begin{tabular}{c|cccc}
     & Oscillator 2D & Heat Eq. & Surface Reaction & Oscillator 3D \\
     \hline
    n & $10000$ & $10000$ & $10000$ & $51^3$ \\
    poly. deg. & 12 & 12 & 12 & 10 \\
    \hline
    Bernoulli (a) & 924 & 814 & 903 & 3121 \\
    Pivotal (b) & 450 & 442 & 492 & 1943 \\
    \hline
    Efficiency (b / a) & 0.487 & 0.523 & 0.545 & 0.623
\end{tabular}
\caption{Number of samples needed to achieve $1.1 \times \text{OPT}$ error.}
\label{table:1.1opt}
\vspace{-1em}
\end{table}

\begin{figure}[t]
    \centering
    \begin{subfigure}[t]{0.32\textwidth}
        \centering
        \includegraphics[width=.8\linewidth]{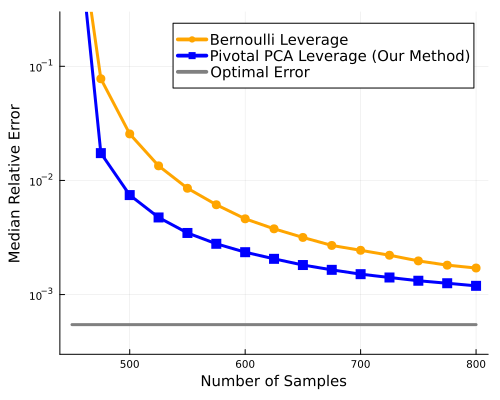}
        \vspace{-.5em}
        \caption{3D Damped Harmonic Oscillator.}    
    \end{subfigure}
    \hfill
    \centering
    \begin{subfigure}[t]{0.32\textwidth}
        \centering
        \includegraphics[width=.8\linewidth]{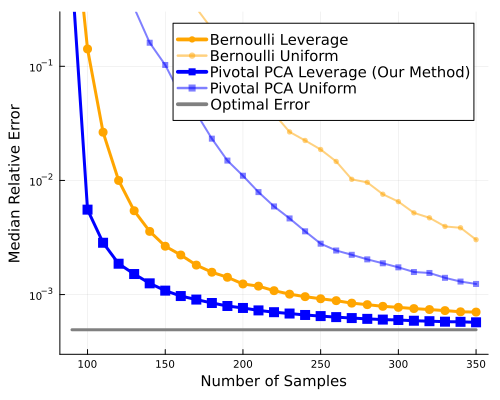} 
		% I named the file incorrectly. It says p15 but the actual polynomial degree is 12. (Atsushi)
        \vspace{-.5em}
        \caption{Leverage Score v.s. Uniform Sampling.}    
    \end{subfigure}
    \hfill
    \centering
    \begin{subfigure}[t]{0.32\textwidth}
        \centering
        \includegraphics[width=.8\linewidth]{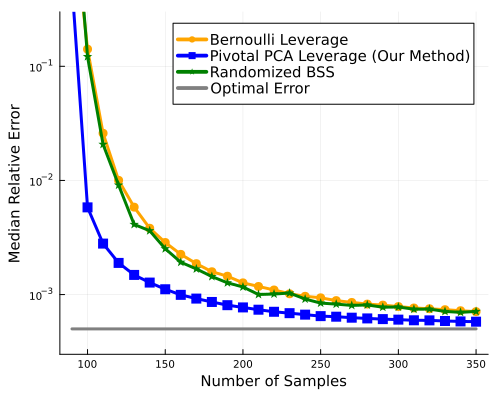}
                \vspace{-.5em}
        \caption{Randomized BSS.}    
    \end{subfigure}
    \caption{(a): Results for degree $12$ active polynomial regression for the damped harmonic oscillator QoI in 3D space. (b), (c): Degree $12$ active polynomial regression for the damped harmonic oscillator QoI in 2D space. (b) includes the performance of the Bernoulli sampling and our pivotal sampling both using uniform inclusion probability which is given with the thin lines. (c) demonstrates the approximation power of the randomized BSS algorithm. We run the sampling $2000$ times with different parameters, round the number of samples to the tens place, and take the median error.}
    \label{fig:appendix:3plots}
\end{figure}

\noindent \textbf{Leverage Score Sampling vs. Uniform Sampling.} We also conduct a complementary experiment to show empirically that leverage score sampling is much more powerful than uniform sampling. We extend the simulation in Figure \ref{fig:sec1:comparison} to a uniform inclusion probability setting. This time, we draw $350$ samples, repeat the simulation $100$ times, and report the approximation with a median error in Figure \ref{fig:appendix:approximationexample}. The result shows poor performance of the uniform sampling. As they draw fewer samples near the boundaries compared to leverage score sampling, they are not able to pin down the polynomial function near the edges, resulting in suffering large errors in these areas. The relative error plot is given in Figure \ref{fig:appendix:3plots} (b). We point to three observations. 1) Comparing the thick lines and the thin lines, one can tell that the use of leverage score significantly improves performance. 2) Comparing the orange lines and the blue lines, one can see that our \textit{spatially-aware} pivotal sampling outperforms the Bernoulli sampling. 3) By combining the leverage score and \textit{spatially aware} pivotal sampling (our method), one can attain the best approximation among the four sampling strategies.

\begin{figure}[h!]
    \vspace{-1em}
    \centering
    \begin{subfigure}[b]{0.24\textwidth}
        \centering
        \includegraphics[width=.9\linewidth]{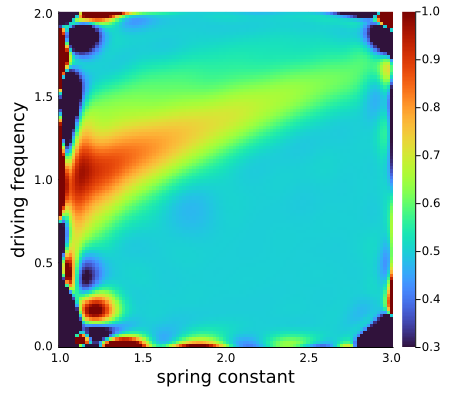}
                \vspace{-.5em}
        \caption{Bernoulli Uniform.}    
    \end{subfigure}
    \hfill
    \centering
    \begin{subfigure}[b]{0.24\textwidth}
        \centering
        \includegraphics[width=.9\linewidth]{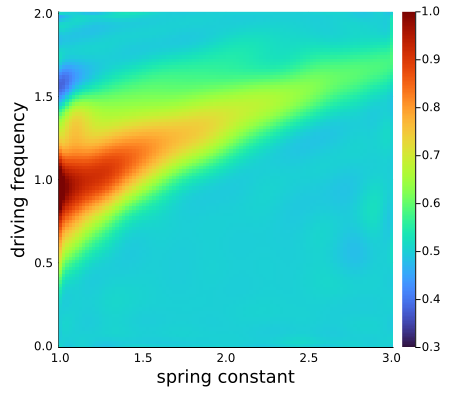}
                \vspace{-.5em}
        \caption{Bernoulli Leverage.}    
    \end{subfigure}
    \hfill
    \centering
    \begin{subfigure}[b]{0.24\textwidth}
        \centering
        \includegraphics[width=.9\linewidth]{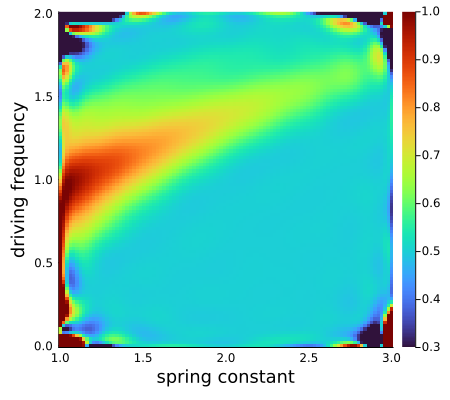}
                \vspace{-.5em}
        \caption{Pivotal Uniform.}    
    \end{subfigure}
    \hfill
    \centering
    \begin{subfigure}[b]{0.24\textwidth}
        \centering
        \includegraphics[width=.9\linewidth]{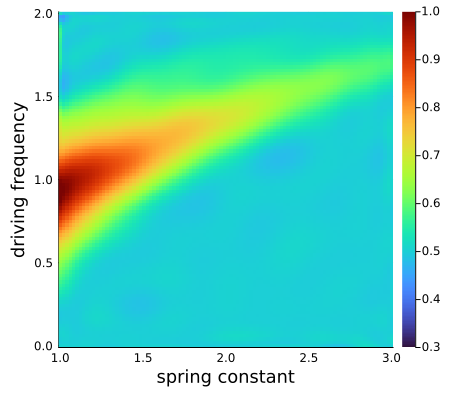}
                \vspace{-.5em}
        \caption{Pivotal Leverage.}    
    \end{subfigure}
    \caption{
    Visualizations of a polynomial approximation to the maximum displacement of a damped harmonic oscillator, as a function of driving frequency and spring constant. $\mathcal{X}$ is a uniform distribution over a box. All four draw $350$ samples but by different sampling methods; (a) and (b) use Bernoulli sampling but (c) and (d) use pivotal sampling. Also, (a) and (c) select samples with uniform probability while (b) and (d) employs the leverage score. Clearly, the leverage score successfully pins down the polynomial function near the boundaries, resulting in better approximation.}
    \label{fig:appendix:approximationexample}
    \vspace{-1em}
\end{figure}

\noindent \textbf{Randomized BSS Algorithm.} Finally, we run the randomized BSS algorithm \citep{CP19} on the 2D damped harmonic oscillator target function, and report the relative error together with Bernoulli leverage score sampling and our pivotal sampling in Figure \ref{fig:appendix:3plots} (c). The setting is the same as the Section \ref{sec:experiments} that the initial data points are drawn uniformly at random from $[1,3] \times [0,2]$, and the polynomial degree is set to $12$. Even though this algorithm also has a theoretical sample complexity of $O(d/\epsilon)$ for the regression guarantee in (\ref{eq:main_thm_gaur}), our sampling method achieves a certain relative error with fewer samples.
\end{document}

% --- supplement: supplemental.tex ---

\maketitle

\begin{abstract}
    We present an active data collection method for sample efficient learning of high dimensional linear functions with spatial structure, such as high degree polynomials over low-dimensional surfaces. Our method uses leverage scores as a measure of data importance, and a non-i.i.d. pivotal sampling method to collect samples that are well-scattered across the domain. For $d$-dimensional linear function families, we prove that our approach solves the challenging agnostic learning problem with $O(d\log d)$ samples, matching leverage score based methods that use i.i.d. sampling. At the same time, our method performs far better empirically. On a set of test problems motivated by learning-based methods for solving parametric PDEs and uncertainty quantification, we reduce the number of samples required to achieve a given target accuracy by up to $50\%$ in comparison to existing methods. 
\end{abstract}

\section{Introduction}\label{sec:introduction}
In the active linear regression problem, we are given a data matrix $\mathbf{A} \in \R^{n \times d}$ with $n\gg d$ rows and query access to a target vector $\bv{b}\in \R^n$. The goal is to learn parameters $\bv{x}\in \R^d$ such that $\bv{A}\bv{x} \approx \bv{b}$ while observing as few entries in $\bv{b}$ as possible. 
% limited access to the target values in $\mathbf{b}$. That is, we choose samples $\mathcal{S} \subseteq [n]$, and $\bv{x}$ is learned using only the target values $\{b_i: i \in \mathcal{S}\}$ associated with those samples. The goal is to minimize $|\mathcal{S}|$ while still 
In this paper, we study this problem in the challenging agnostic learning or ``adversarial noise'' setting, where we do not assume any underlying relationship between $\mathbf{A}$ and $\mathbf{b}$. Instead, our goal is to find parameters competitive with the best possible fit, good or bad. Specifically, considering $\ell_2$ loss, let $\mathbf{x}^* = \argmin_{\mathbf{x}} \lVert \mathbf{A} \mathbf{x} - \mathbf{b} \rVert_2^2$ be optimal model parameters. We want to find $\tilde{\mathbf{x}}^*$ using a small number of queried target values in $\bv{b}$ such that 
\begin{align}
\label{main:gaur}
\begin{split}
     \lVert \mathbf{A} \tilde{\mathbf{x}}^* - \mathbf{b} \rVert_2^2 \leq (1 + \epsilon) \lVert \mathbf{A} \mathbf{x}^* - \mathbf{b} \rVert_2^2,
\end{split}
\end{align}
 for some error parameter $\epsilon > 0$.
Beyond being a fundamental learning problem, this active regression problem has emerged as a fundamental tool in learning based methods for the solution and uncertainty analysis of parametric partial differential equations (PDEs) \cite{esiam_2015,siam_review_2022}. For such applications, the agnostic setting is crucial, as a potentially complex quantity of interest is approximated by a simple surrogate function class (e.g. polynomials, sparse polynomials, single layer neural networks, etc.) \cite{sparse_review_2021,ridge_functions_2018}. Additionally, reducing the number of labels used for learning is crucial, as each label usually requires the computationally expensive numerical solution of a PDE for a new set of parameters \cite{cohen_devore_2015}.

\subsection{Leverage Score Sampling}\label{sec:lev_score}
Active linear regression has been studied for decades in the statistical model where $\bv{b}$ is assumed to equal $\bv{A}\bv{x}^*$ plus i.i.d random noise. In this case, the problem can be addressed using tools from optimal experimental design \cite{Pukelsheim:2006}. In the agnostic case, near-optimal sample complexity results were only obtained relatively recently using tools from non-asymptotic matrix concentration \cite{Tropp11}. In particular, it was shown independently in several papers that collecting entries from $\bv{b}$ \emph{randomly} with probability proportional to the \emph{statistical leverage scores} of rows in $\bv{A}$ can achieve the guarantee of \eqref{main:gaur} with $O(d\log d + d/\epsilon)$ samples \cite{Sarlos:2006,RauhutWard:2012,HamptonDoostan:2015,CohenMigliorati:2017}. The leverage scores are defined as follows:
\begin{definition}[Leverage Score]\label{def:leveragescore}
    Let $\bv{U}\in \R^{n \times r}$ be any orthogonal basis for the column span of a matrix $\mathbf{A} \in \R^{n \times d}$. Let $\mathbf{a}_i$ and $\mathbf{u}_i$ be the $i$-th rows of $\mathbf{A}$ and $\bv{U}$, respectively. The \textit{leverage score} $\tau_i$ of the $i$-th row in $\mathbf{A}$ can be equivalently written as:
    \begin{align}\begin{split}
    \label{eq:lev_defs}
        \tau_i = \lVert \mathbf{u}_i \rVert_2^2 = \mathbf{a}_i^T (\mathbf{A}^T \mathbf{A})^{-1} \mathbf{a}_i = \max_{\mathbf{x} \in \R^d} \frac{(\mathbf{a}_i^T \mathbf{x})^2}{\lVert \mathbf{A} \mathbf{x} \rVert_2^2}
    \end{split}\end{align}
    Notice that since $\tau_i = \lVert \mathbf{u}_i \rVert_2^2$, $\sum_{i=1}^n \tau_i = d$ when $\mathbf{A}$ is full-rank and thus $\bv{U}$ has $r =d$ columns. 
    \end{definition}

We refer the reader to \cite{pmlr-v70-avron17a} for a short proof of the final equality in \eqref{eq:lev_defs}. This last definition, based on a maximization problem, gives an intuitive understanding of the leverage scores. The score of a row $\bv{a}_i$ is higher if it is more ``unique'', meaning that it is possible to find a vector $\bv{x}$ that has large inner product with $\bv{a}_i$, but relatively low average inner product (captured by $\|\bv{A}\bv{x}\|_2^2$) with all other rows in the matrix. More unique rows need to be sampled with higher probability.

Prior work considers independent leverage score sampling, either with or without replacement. The typical approach for sampling without replacement, which we call ``Bernouilli sampling'' is as follows: Each row $\bv{a}_i$ is assigned a probability $p_i = \min(1,c\cdot \tau_i)$ for an oversampling parameter $c \geq 1$. Then each row is sampled independently with probability $p_i$. We construct a subsampled data matrix $\tilde{\bv{A}}$ and subsampled target vector $\tilde{\bv{b}}$ by adding $\bv{a}_i/\sqrt{p_i}$ to $\tilde{\bv{A}}$ and $b_i/\sqrt{p_i}$ to $\tilde{\bv{b}}$ for any index $i$ that is sampled. To solve the active regression problem, we return $\tilde{\bv{x}}^* = \argmin_{\bv{x}} \|\tilde{\bv{A}}\bv{x} - \tilde{\bv{b}}\|_2^2$.

\subsection{Our Contributions}\label{subsec:contributions}
In applications to PDEs, our goal is often to approximate a function over a collection of points drawn from a low dimensional distribution $\mathcal{X}$. E.g. $\mathcal{X}$ might be uniform over an interval $[-1,1] \subset \R$ or over a box $[-1,1]\times\ldots \times [-1,1] \subset \R^q$. In this setting, the length $d$ rows of $\bv{A}$ correspond to feature transformations of samples from $\mathcal{X}$. For example, in the ubiquitous task of polynomial regression, we start with  $\bv{x}\sim \mathcal{X}$ and add to $\bv{A}$ a row containing all combinations of entries in $\bv{x}$ with total degree $p$, i.e., $x_1^{\ell_1}x_2^{\ell_2}, \ldots x_q^{\ell_q}$ for all choices of non-negative integers $\ell_1, \ldots, \ell_q$ such that $\sum_{i=1}^q \ell_i \leq p$.

For such problems, random sampling methods compete with ``grid'' based interpolation, where the target 
$\bv{b}$ is queried on a deterministic structured grid tailored to $\mathcal{X}$. For example, when $\mathcal{X}$ is uniform on a box, the standard approach is to use a grid based on the Chebyshev nodes \cite{Xiu2016}. Pictured in Figure \ref{fig:sampledist}, the Cheybshev grid concentrates samples near the boundaries of the box, avoiding the well known issue of Runge's phenomenon for uniform grids. As shown in the same figure, leverage score sampling also leads to more samples near the boundaries. In fact, the methods are closely related: it can be shown that, in the high degree limit, the leverage score for polynomial regression over the box matches the asymptotic density of the Chebyshev nodes \cite{sparse_review_2021}.

So how do the deterministic and randomized methods compare? The advantage of randomized methods based on leverage score sampling is that they yield strong provable approximation guarantees, and easily generalize to any distribution $\mathcal{X}$. Deterministic methods are less flexible on the other hand, and do not yield provably guarantees\footnote{As discussed in \cite{CP19}, no deterministic method can provably solve the agnostic regression problem with few samples. Since we make no assumptions on $\bv{b}$, all error in $\bv{A}\bv{x}^* - \bv{b}$ could be concentrated only at the deterministic indices to be selected. Randomization is needed to avoid high error outliers in $\bv{b}$}. However, when they can be applied, the advantage of grid based methods is that they more ``evenly'' distribute samples over the original data domain, which can lead to better performance in practice. Randomized methods are prone to ``missing'' larger regions of $\mathcal{X}$'s support, as shown in Figure \ref{fig:sampledist}. The driving question behind our work is:

\begin{myquote}{2em}
    Is it possible to obtain the ``best of both worlds'' for fitting functions over low-dimensional domains? I.e., can we achieve the strong theoretic guarantees of leverage score sampling in the agnostic setting with a method that produces spatially well-distributed samples? 
\end{myquote}

\begin{figure}[t]
    \centering
    \begin{subfigure}[b]{0.32\textwidth}
        \centering
        \includegraphics[width=.7\linewidth]{NeurIPS/plots/sec1_bernoulli_leverage_sampledist.png}
        \caption{Bernoulli Sampling}    
    \end{subfigure}
    \hfill
    \centering
    \begin{subfigure}[b]{0.32\textwidth}
        \centering
        \includegraphics[width=.7\linewidth]{NeurIPS/plots/sec1_PCA_leverage_sampledist.png}
        \caption{Pivotal Sampling (our method)}    
    \end{subfigure}
    \hfill
    \centering
    \begin{subfigure}[b]{0.32\textwidth}
        \centering
        \includegraphics[width=.7\linewidth]{NeurIPS/plots/sec1_chebyshev_sampledist.png}
        \caption{Chebyshev Grid}    
    \end{subfigure}
    \caption{The results of three different active learning methods used to collect samples to fit a polynomial over $[-1,1]\times [-1,1]$. The image on the left was obtained by collecting points independently at random with probability according to their statistical leverage scores. The image on the right was obtained by collecting samples at the 2-dimensional Chebyshev nodes. The image in the middle shows our method, which collects samples according to leverage scores, but using a negatively associated pivotal sampling strategy that ensures samples are evenly spread in spatially.}   
    \vspace{-1em}
    \label{fig:sampledist}
\end{figure}

In this paper, we take a step towards answering this question in the affirmative. In particular, instead of sampling rows from $\bv{A}$ \emph{independently} with probability proportional to the leverage scores, we propose to use a well developed tool from the survey sampling literature known as
pivotal sampling \cite{deville1998unequal}.
% \cite{GLS12}. 
Our specific version of pivotal sampling is designed to be \textit{spatially-aware}, meaning that it covers the entire domain in a well-balanced manner, while the marginal probabilities remain proportional to the leverage score. Details are discussed in Section \ref{sec:preliminaries} and \ref{sec:analysis}. At a high-level, the pivotal method is a ``competition'' based sampling approach, where candidate rows to be sampled compete in a binary tree tournament. By structuring the tournament so that spatially close points compete at lower levels of the tree, we can ensure a better spatial spread than Bernoulli leverage score sampling. Our main contribution is to show that this sampling scheme performs significantly better in practice, while matching the complexity of independent Bernoulli leverage score sampling in theory. 

On the practice side, we offer Figure \ref{fig:sec1:comparison} as an example from a simple PDE test problem. In comparison to independent leverage score sampling, our spatially-aware method obtains a much better approximation to the target function for a fixed number of examples. More details about this simulation and others are provided in Section \ref{sec:experiments}.
We consider two different spatially aware methods, one based on PCA that is adaptable to a wide variety of distributions $\mathcal{X}$, and another based on a coordinate-wise splitting that is easier to implement and also performs very well, but is tailored to uniform domains.  

On the theory side, we prove the following general bound:

\begin{theorem}[Main Result]\label{thm:main}
    Let $\mathbf{A} \in \R^{n \times d}$ be a data matrix and $\mathbf{b} \in \R^n$ be a target vector. Consider any algorithm which samples exactly $k$ rows from $\bv{A}$ (and observes the corresponding entries in $\bv{b}$) from a distribution that 1) satisfies one-sided $\ell_{\infty}$ independence (Defn. \ref{def:l-infty}) with constant parameter and 2) the marginal probability of sampling any row $\bv{a}_i$ is proportional to $\tau_i$.\footnote{Formally, we require that the marginal probabilities equal $\min(1,c\tau_i)$ for a fixed constant $c \geq 1$.} Let $\tilde{\mathbf{A}}$ and $\tilde{\mathbf{b}}$ be the scaled sampled data and target, as defined in Section \ref{sec:lev_score}, and let $\tilde{\mathbf{x}}^* = \argmin_{\mathbf{x} \in \R^d} \lVert \tilde{\mathbf{A}} \mathbf{x} - \tilde{\mathbf{b}} \rVert_2^2$.
  As long as $k \geq c\cdot \left(d\log d + \frac{d}{\epsilon}\right)$ for a fixed positive constant $c$, then with probability $99/100$, 
    \begin{align}\begin{split}
    \label{eq:main_thm_gaur}
        \lVert \mathbf{A} \tilde{\mathbf{x}}^* - \mathbf{b} \rVert_2^2 \leq (1 + \epsilon) \lVert \mathbf{A} \mathbf{x}^* - \mathbf{b} \rVert_2^2
    \end{split}\end{align}
\end{theorem}
The definition of one-sided $\ell_{\infty}$ independence is given in Section \ref{sec:analysis}. This is a particularly weak notation introduced in \cite{KKS22} that is implied by many standard notations of negative dependence between random variables, including conditional negative association (CNA) and the strongly Rayleigh property \cite{permantle_survey}. We can show that binary-tree-based pivotal  sampling satisfies the property, either indirectly by using that pivotal sampling is strongly Rayleigh, or with a more direct proof from first principals. 
% The definition of the Conditional Pairwise Negative Correlation is given in . This is a particular notation of negative dependence between random variables that is closely related to other such definitions \cite{permantle_survey}. For example, as will be discussed, any distribution that satisfies conditional negative association (CNA), which includes all Strongly Rayleigh distributions, satisfies the condition. Later in Section \ref{sec:analysis}, we show that any fixed-order binary-tree-based pivotal sampling algorithm 
%   pairwise negative correlation. 
  So, as a corollary, we obtain:

\begin{corollary}
\label{corr:main}
    The spatially-aware pivotal sampling methods introduced in Section \ref{sec:methods} (which use a fixed binary tree) return with probability $99/100$ a vector $\tilde{\mathbf{x}}^*$ satisfying $\lVert \mathbf{A} \tilde{\mathbf{x}}^* - \mathbf{b} \rVert_2^2 \leq (1 + \epsilon) \lVert \mathbf{A} \mathbf{x}^* - \mathbf{b} \rVert_2^2$ while only observing $O\left(d\log d + \frac{d}{\epsilon}\right)$ entries in $\bv{b}$.  
\end{corollary}

% The details of spatially-aware pivotal sampling are discussed in Sections \ref{sec:preliminaries} and \ref{sec:analysis}. At a high-level, the pivotal method is a ``competition'' based sampling approach, where candidates rows to be sampled compete in a binary tree tournament. By structuring the tournament so that spatial close points compete at lower levels of the tree, we can ensure a better spatial spread than Bernoulli leverage score sampling. The result is a method that tends to work significantly better in practice. 
% Figure \ref{fig:sec1:comparison} shows an example of the effectiveness of our sampling. Both (b) and (c) are approximations of (a) with the same polynomial degree and the same number of samples using leverage score but by different sampling methods. Clearly, our method shown in (c) outperforms the Bernoulli sampling in (b). More details about the simulations are provided in Section \ref{sec:experiments}.

\begin{figure}[t]
    \centering
    \begin{subfigure}[b]{0.32\textwidth}
        \centering
        \includegraphics[width=.9\linewidth]{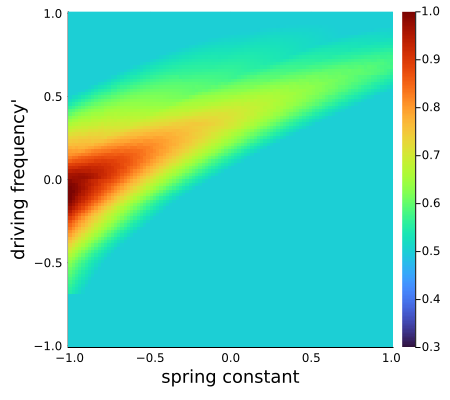}
        \caption{Target Function}    
    \end{subfigure}
    \hfill
    \centering
    \begin{subfigure}[b]{0.32\textwidth}
        \centering
        % \includegraphics[width=\linewidth]{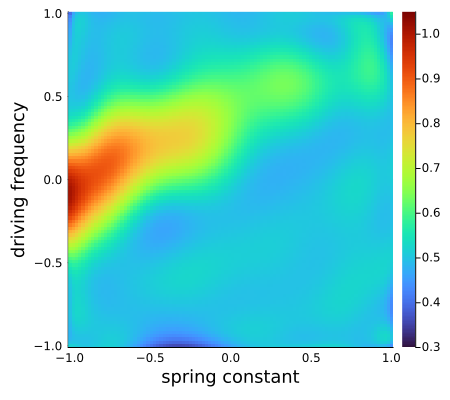}
        \includegraphics[width=.9\linewidth]{NeurIPS/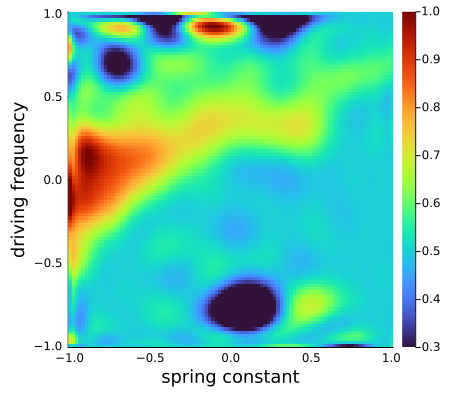}
        
        \caption{Bernoulli Sampling}    
    \end{subfigure}
    \hfill
    \centering
    \begin{subfigure}[b]{0.32\textwidth}
        \centering
        % \includegraphics[width=\linewidth]{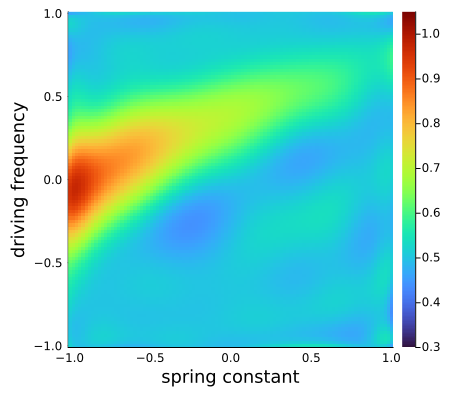}
        \includegraphics[width=.9\linewidth]{NeurIPS/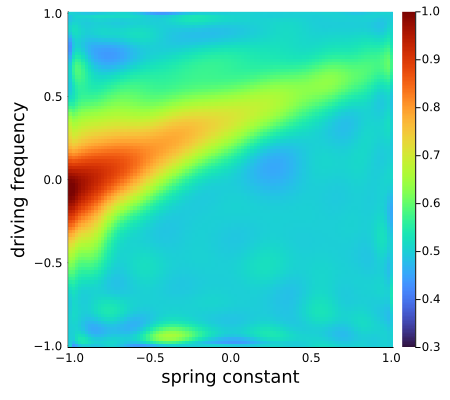}
        % \includegraphics[width=\linewidth]{NeurIPS/plots/sec1_PCA_leverage_20_250_1.png}
        % \includegraphics[width=\linewidth]{NeurIPS/plots/sec1_PCA_leverage_20_250_2.png}
        % \includegraphics[width=\linewidth]{NeurIPS/plots/sec1_PCA_leverage_20_250_3.png}
        % \includegraphics[width=\linewidth]{NeurIPS/plots/sec1_PCA_leverage_20_250_4.png}
        \caption{Pivotal Sampling (our method)}    
    \end{subfigure}
    \caption{Visualizations of a polynomial approximation to the maximum displacement of a damped harmonic oscillator, as a function of driving frequency and spring constant. $\mathcal{X}$ is a uniform distribution over a box. (a) is the target value, and samples can be obtained through the numerical solution of the differential equation governing the oscillator. Both (b) and (c) draw $250$ samples using leverage score sampling and perform polynomial regression of degree $20$. However, (b) uses Bernoulli sampling while (c) employs our binary-tree-based pivotal sampling.  We run the sampling 100 times and show the result with a median error. Clearly, the pivotal sampling results in a better approximation, avoiding the significant artifacts that result from gaps between samples in the Bernoulli based solution.}
            \vspace{-1.5em}
    \label{fig:sec1:comparison}
\end{figure}

% Pivotal sampling is a sequence of comparisons of two data points. In our method, the order of comparisons is not adaptive but decided deterministically given the data matrix $\mathbf{A}$, thus, we call it fixed-order pivotal sampling. We want the inclusion probability to be proportional to the leverage score, but as we want to select $k > d = \sum_{i=1}^n \tau_i$ samples where $\tau_i \leq 1$, the inclusion probability could exceed $1$ which causes an issue in pivotal sampling. To address this issue, we perform a probability ceiling. The detail is provided in Section \ref{sec:preliminaries}.

Our proof of Theorem \ref{thm:main} is discussed in Section \ref{sec:analysis}. It requires two main ingredients: 1) a recent result on matrix Chernoff bounds for distributions satisfying an $\ell_{\infty}$-independence property \cite{KKS22}, and 2) a novel analysis of an approximate matrix-multiplication method from \cite{DrineasKannanMahoney:2006} for distributions with this property. 

% In Section \ref{sec:experiments}, we show a couple of experimental results where we consider leverage score and uniform distribution as two candidates for marginal probability, and Bernoulli sampling and pivotal sampling as the sampling scheme. In all the simulations, the leverage score outperforms uniform distribution, and the pivotal sampling beats the Bernoulli sampling. Moreover, one can obtain the strongest result by combining the leverage score and our pivotal sampling.

\subsection{Related Work}\label{subsec:relatedwork}
\vspace{-.5em}
The application of leverage score sampling to the agnostic active regression problem has received significant recent attention. Beyond the results discussed above, extensions of leverage score sampling have been studied for norms beyond $\ell_2$ \cite{pmlr-v134-chen21d,focs2022,MeyerMuscoMusco:2023,parulekar_et_al}, in the context where the sample space is infinite (i.e. $\bv{A}$ is an operator with infinite rows) \cite{tamas_2020,universal_sampling}, and for functions that involve non-linear transformations \cite{GHM22,MaiRaoMusco:2021,MunteanuSchwiegelshohnSohler:2018}.

Theoretical improvements on leverage score sampling have also been studied. Notable is a recent result that improves on the $O(d\log d + d/\epsilon)$ bound for leverage score sampling by a $\log d$ factor, showing that the active least squares regression problem can be solved with $O(d/\epsilon)$ samples \cite{CP19}. This is provably optimal. However, the algorithm in \cite{CP19} relies on heavy tools from the graph sparsification literature \cite{BSS12,LS18} and appears to involve large constant factors. In our initial experiments, it was not competitive with leverage score sampling in practice. 
There have also been a few efforts, that like ours, seek to develop practical improvements on leverage score sampling.  \cite{lev_volume_neurips2018} studies a variant of volume sampling that matches the theoretical guarantees of leverage score sampling, but appears to perform better experimentally. Unlike our approach, however, this method does not explicitly take into account spatial-structure in the underlying regression problem. 

While pivotal sampling has not been studied in the context of agnostic active regression, it is widely used in other applications, and its negative dependence properties have been studied extensively. \cite{DJR05} proves that a limited sequential variant of pivotal sampling satisfies the negative association (NA) property. \cite{BBL08} introduced the notion of a strongly Rayleigh distribution and proved that it implies a stronger notion of conditional negative association (CNA), and \cite{BJ12} showed that pivotal sampling run with an arbitrary binary tree is strongly Rayleigh. It follows that the method satisfies CNA.
\cite{GWBW22} discusses an efficient algorithm for pivotal sampling by parallelization and careful manipulation of inclusion probabilities. Another variant of pivotal sampling that is \textit{spatially-aware} is proposed in \cite{GLS12}. Though their approach is out of the scope of our analysis as it involves randomness in the competition order used during sampling, our method is inspired by their work.
 
Both scalar and matrix concentration bounds for negatively dependent distributions have been studied extensively \cite{PP14,BC19}. Most relevant to our work are matrix Chernoff bounds, which have been studied e.g. in the context of understanding graph sparsification methods based on sampling random spanning trees \cite{KS18}. A recent result of \cite{KKS22} proves a matrix-Chernoff bound that matches the bounds available from i.i.d. sampling \cite{Tropp11}, and only requires the sampling distribution to obey the relatively weak $\ell_{\infty}$-independence property, which holds e.g. for all strongly Rayleigh distributions.
% , and as we show, for all distributions satisfying our conditional negative correlation property. 
% proven in  and discussed its application to random spanning tree problems. Their matrix bound was actually a bit weaker than the matrix bound for independent distributions by \cite{Tropp11}, but the later work \cite{KKS22} for $\ell_{\infty}$-independent distribution improves it to match the tightness of the independent version. 

% Negative association and the newly introduced notion of $\ell_{\infty}$-independence were bridged by \cite{PP14} where a McDiarmid-type bound was proved for distributions satisfying the stochastic covering property. \cite{BC19} proved a scalar Bernstein inequality for negatively associated distributions, which is as tight as the one for independent distributions, with an application to Poisson survey scheme. Together with an $\epsilon$-net argument, this bound would enable us to establish $O(d^2)$ sampling complexity in our method. Instead,  in the following, we use the matrix Chernoff-type bound in \cite{KKS22} to obtain a $O(d \log d)$ bound. .

\section{Preliminaries}\label{sec:preliminaries}

\textbf{Notation.} We let $[n]$ denote $\{1, \cdots, n\}$. $\E[X]$ denotes the expectation of a random variable $X$. We use bold lower-case letters for vectors and bold upper-case letters for matrices. For a vector $\mathbf{z} \in \R^n$ with entries $z_1, \cdots, z_n$, $\lVert \mathbf{z} \rVert_2 = (\sum_{i=1}^n z_i^2)^{1/2}$ denotes the Euclidean norm of $\mathbf{z}$.
Given a matrix $\mathbf{A} \in \R^{n \times d}$, we let $\mathbf{a}_i$ denote the $i$-th row, and $a_{ij}$ denote the entry in the $i$-th row and $j$-th column. 
% The spectral norm of $\mathbf{A}$ is given as $\lVert \mathbf{A} \rVert_2 = \max_{\mathbf{x} \in \R^d} \frac{\lVert \mathbf{A} \mathbf{x} \rVert_2}{\lVert \mathbf{x} \rVert_2}$. 

\noindent \textbf{Importance sampling.} All of the methods studied in this paper solve the active regression problem by collecting a single random sample of rows in $\bv{A}$ and corresponding entries in $\bv{b}$. We introduce a vector of binary random variables $\bm{\xi} = \{\xi_1, \cdots, \xi_n\}$, where $\xi_i$ is $1$ is $\bv{a}_i$ (and thus $b_i$) is selected, and $0$ otherwise. $\xi_1, \cdots, \xi_n$ will not necessarily be independent depending on our sampling method. 

Given a sampling method, let $p_i = \E[\xi_i]$ denote the marginal probability that row $i$ is selected. We return an approximate regression solution as follows: let $\tilde{\bv{A}}\in \R^{k\times d}$  contain $\bv{a}_i/\sqrt{p_i}$ for all $i$ such that $\xi_i = 1$, and similarly let $\tilde{\bv{b}}\in \R^k$ contain $b_i/\sqrt{p_i}$ for the same values of $i$. This scaling ensures that, for any fixed $\bv{x}$, $\E\|\tilde{\bv{A}}\bv{x} - \tilde{\bv{b}}\|_2^2 = \|{\bv{A}}\bv{x} - {\bv{b}}\|_2^2$. To solve the active regression problem, we return $\tilde{\mathbf{x}}^* = \argmin_{\mathbf{x} \in \R^d} \lVert \tilde{\mathbf{A}} \mathbf{x} - \tilde{\mathbf{b}} \rVert_2^2$. Computing this estimate only requires querying $k$ target values in $\bv{b}$. 

\noindent \textbf{Leverage Score Sampling.} 
We consider methods that choose the marginal probabilities proportional to $\bv{A}$'s leverage scores. Specifically, $p_i = \min(1, c\cdot \tau_i)$. The constant  $c > 1$ is a chosen oversampling parameter that controls the number of samples taken. If we set $c = k/d$, then we can bound the expected number of samples by $\sum_{i=1}^n p_i \leq \sum_{i=1}^n c \cdot\tau_i \leq \frac{k}{d}\cdot d = k$. The last step uses that the leverage scores sum to at most $d$. However, in practice, it is not uncommon that $p_i = \min(1, c\cdot \tau_i)$ will take value $1$, in which case the probabilities will sum to much less than $k$. To avoid this issue, we perform a probability preprocessing step, which finds $c_k$ such that
$\sum_{i=1}^n \min(1, c_k\cdot \tau_i) = k$ exactly.

We let $\tilde{p}_i$ denote $\tilde{p}_i = \min(1, c_k\cdot \tau_i)$ the probabilities after scaling by $c_k$ and thresholding at $1$. These are the final marginal probabilities used in our sampling methods. Using these probabilities ensures that we always collect $k$ samples in expectation, and actually, for our pivotal sampling methods, we always take exactly $k$ samples. Details of the preprocessing step are included in Appendix \ref{app:preprocess}

\begin{figure}[b]
\vspace{-2em}
\begin{algorithm}[H]\caption{Binary Tree Based Pivotal Sampling \cite{deville1998unequal}}\label{algo:pivotal}
    \begin{algorithmic}[1]
        \Require Depth $t$ full binary tree $T$ with $n$ leaves, inclusion probabilities $\{\tilde{p}_1, \cdots, \tilde{p}_n\}$ for each leaf.
        \Ensure Set of $k$ sampled indices $\mathcal{S}$
        \State Initialize $\mathcal{S} = \emptyset$.
        \While{$T$ has at least two remaining children nodes}
            \State Select any pair of sibling nodes $S_1,S_2$ with parent $P$. Let $i,j$ be the indices stored at $S_1,S_2$.
                \If{$\tilde{p}_i + \tilde{p}_j \leq 1$}
                    \State With probability $\frac{\tilde{p}_i}{\tilde{p}_i + \tilde{p}_j}$, set $\tilde{p}_i \gets \tilde{p}_i + \tilde{p}_j$, $\tilde{p}_j \gets 0$. Store $i$ at $P$.
                    \State Otherwise, set $\tilde{p}_j \gets \tilde{p}_i + \tilde{p}_j$, $\tilde{p}_i \gets 0$. Store $j$ at $P$.
                \ElsIf{$\tilde{p}_i + \tilde{p}_j > 1$}
                    \State With probability $\frac{1-\tilde{p}_i}{2-\tilde{p}_i-\tilde{p}_j}$, set $\tilde{p}_i \gets \tilde{p}_i+\tilde{p}_j-1$, $\tilde{p}_j=1$. Store $i$ at $P$ and set $\mathcal{S} \gets \mathcal{S} \cup \{j\}$.
                    \State Otherwise, set $\tilde{p}_j \gets \tilde{p}_i+\tilde{p}_j-1$, $\tilde{p}_i \gets 1$. Store $j$ at $P$ and set $\mathcal{S} \gets \mathcal{S} \cup \{i\}$.
                \EndIf
                \State Remove $S_1,S_2$ from $T$.
        \EndWhile
        \Return $\mathcal{S}$
    \end{algorithmic}
\end{algorithm}
\vspace{-2em}
\end{figure}

\vspace{-.5em}
\section{Our Methods}\label{sec:methods}
\vspace{-.5em}

In this section, we present our sampling scheme which consists of two steps; deterministically constructing a binary tree, and choosing samples by running the pivotal method on this tree. The pivotal method is described in Algorithm \ref{algo:pivotal}. It takes as input a binary tree with $n$ leaf nodes, each corresponding to a single index $i$ to be sampled. For each index, we also have an associated probability $\tilde{p}_i$. The algorithm collects a set of exactly $k$ samples $\mathcal{S}$ where $k = \sum_{i=1}^n \tilde{p}_i$. It does so by percolating up the tree and performing repeated head-to-head comparisons of the indices at sibling nodes in the tree. After each comparison, one node promotes to the parent node with updated inclusion probability, and the other node is determined to be sampled or not to be sampled.

It can be checked that, after running Algorithm \ref{algo:pivotal}, index $i$ is always sampled with probability $\tilde{p}_i$, regardless of the choice of $T$. However, the samples collected by the pivotal method are not independent, but rather negatively correlated: siblings in $T$ are unlikely to both be sampled, and in general, the events that close neighbors in the tree are both sampled are negatively correlated. In particular, if index $i$ could at some point compete with an index $j$ in the pivotal process, the chance of selecting $j$ decreases if we condition on $i$ being selected. We take advantage of this property to generate spatially distributed samples by constructing a binary tree that matches the underlying geometry of our data. In particular, assume we are given a set of points $\bv{X}\in \R^{n\times d'}$. $\bv{X}$  will eventually be used to construct a regression matrix $\bv{A}\in \R^{n\times d}$ via feature transformation (e.g. by adding polynomial features). However, we construct the sampling tree using  $\bv{X}$ alone.

\begin{figure}[t]
\vspace{-2em}
\begin{algorithm}[H]\caption{Binary Tree Construction by Coordinate or PCA Splitting}\label{algo:btree}
    \begin{algorithmic}[1]
        \Require Matrix $\mathbf{X} \in \R^{n \times d'}$, split method $\in \{\text{PCA, coordinate}\}$, inclusion probabilities $\tilde{p}_1, \cdots, \tilde{p}_n$.
        \State $\mathcal{R} = \{i \in [n]; \tilde{p}_i<1\}$.
        \State Create a tree $T$ with a single root node. Assign set $\mathcal{R}$ to the root.
        \While{There exists a node in $T$ that holds set $\mathcal{K}$ such that $|\mathcal{K}|>1$}
        \State Select any such node $N$ and let $t$ be its level in the tree. Construct $\mathbf{X}_{(\mathcal{K})} \in \R^{|\mathcal{K}| \times d'}$.
                \If{split method $=$ PCA}
                    \State Sort $\mathbf{X}_{(\mathcal{K})}$ according to the direction of the maximum variance.
                \ElsIf{split method $=$ coordinate}
                    \State Sort $\mathbf{X}_{(\mathcal{K})}$ according to values in its $((t \,\, \mathrm{mod} \,\, d')+1)$-th column.
                \EndIf
                \State Create a left child of $N$. Assign to it all indices associated with the first $\lfloor\frac{|\mathcal{K}|}{2}\rfloor$ rows of $\mathbf{X}_{(\mathcal{K})}$.
                \State Create a right child of $N$. Assign to it the all remaining indices associated with rows in $\mathbf{X}_{(\mathcal{K})}$.
                \State Delete $\mathcal{K}$ from $N$.
        \EndWhile
        \Return $T$
    \end{algorithmic}
\end{algorithm}
\vspace{-1.5em}
\end{figure}

% The deletion in line $12$ is for Algorithm \ref{algo:pivotal}, which requires all the non-leaf nodes to be an empty. 
\begin{figure}[tb]
    \centering
    \includegraphics[width=.7\linewidth]{plots/sec3_binarytree_pca30.png}
        \vspace{-.7em}
    \caption{Visualization of a binary tree constructed via our Algorithm \ref{algo:btree} using the PCA method for a matrix $\bv{X}\in\R^{n\times 2}$ containing points on a uniform square grid. For each depth, data points are given the same color if they compose a subtree with root at that depth. As we can see, the method produces uniform recursive spatial partitions, which encourage spatially separated samples.}
    \label{fig:sec3:binarytree}
    \vspace{-1em}
\end{figure}

Our tree construction method is given as Algorithm \ref{algo:btree}. $\bv{X}_{\mathcal{K}}$ denotes the subset of rows of $\bv{X}$ with indices in the set $\mathcal{K}$. First, the algorithm eliminates all data points with inclusion probability $\tilde{p}_i = 1$. Next, it recursively partitions the remaining data points into two subgroups of the same size until all the subgroups have only one data point. Our two methods, PCA-based and coordinate-wise, only differ in how to partition. The PCA-based method performs principal component analysis to find the direction of the maximum variance and splits the space by a hyperplane orthogonal to the direction so that the numbers of data points on both sides are equal. The coordinate-wise version takes a coordinate (corresponding to a column in $\bv{X}$) in cyclic order and divides the space by a hyperplane orthogonal to the chosen coordinate.
An illustration of the PCA-based binary tree construction run on a fine uniform grid of data points in $\R^2$ is shown in Figure \ref{fig:sec3:binarytree}. Note that our tree construction method ensures the the number of indices assigned to each subgroup (color) at each level is equal to with $\pm 1$ point. As such, we end of with an even partition of data points into spatially correlated sets. 
Two indices will be more heavily negatively correlated if they lie in the same set at a higher depth number. 

% By constructing the binary tree this way, we can preserve the spatial structure of the distribution of data points to some extent. That is, if two data points are close in the original domain, the length of the shortest path in the tree is small. By contrast, if two data points are far apart, the lowest common ancestor would be close to the root, resulting in having a long shortest path.

% Suppose we want to draw $k$ samples. Then, Algorithm \ref{algo:pivotal} is designed to return exactly $k$ samples with probability $1$. This may be useful in practice considering the Bernoulli sampling returns $k$ samples in expectation. Also, even though the inclusion probability is manipulated during the sampling, the marginal probability remains the same as the initial inclusion probability which we show in Claim \ref{clm:pivotalprob}. In addition, no matter what index is promoted to a certain node, its inclusion probability at that moment is fixed. This indicates that whatever indexes are chosen in the subtree rooted at this node, it doesn't affect the distribution of the indexes outside of the subtree.

% Because of the spatial structure we mentioned above, a pair of data points close to each other is likely to have a pivotal match in the early step in which, if we give more inclusion probability to one, the inclusion probability of the other decreases. This prevents us from picking multiple nearby points. On the contrary, two separate points could have a pivotal match but with low probability because, in order to have the match, both points must survive all the matches to the lowest common ancestor which tends to be far from the leaves. Thus, the sampling distribution of the two is only very weakly negatively correlated.

% Finally, the sampling distribution of $\bm{\xi}$ generated by Algorithm \ref{algo:pivotal} is not independent. In each pivotal match, whether one data point is chosen or rejected clearly affects the probability of the other data points being sampled or not.

\section{Theoretical Analysis}
\label{sec:analysis}
As will be shown experimentally, when using probabilities $\tilde{p}_1, \ldots, \tilde{p}_n$ that are proportional to the statistical leverage scores of $\bv{A}$, our tree-based pivotal method from the previous section significantly outperforms Bernoulli leverage score sampling for active regression. Theoretically justifying this improvement in performance without further assumptions on $\bv{A}$ and its relation to the pre-feature transformation data matrix $\bv{X}$ seems challenging. However, we are able to prove the next best thing: even though it produces non-independent samples, our pivotal method will never perform \emph{worse} than Bernouilli sampling. In particular, it matches the $O(d\log d + d/\epsilon)$ sample complexity of independent leverage score sampling. This result is stated as Theorem \ref{thm:main}. Due to space limitations, its proof is relegated to Appendix \ref{sec:proofs}. However, we outline our main approach here. 

Following existing proofs for independent random sampling (e.g. \cite{Woodruff:2014} Theorem \ref{thm:main} requires two main ingredients: a subspace embedding result, and an approximate matrix-vector multiplication result. In particular, let $\bv{U}\in \R^{n\times d}$ be any orthogonal span for the columns of $\bv{A}$. Let $\bv{S} \in \R^{k\times n}$ be a subsampling matrix that contains a row for every index $i$ selected by our sampling scheme, which has value ${1}/{\sqrt{\tilde{p}_i}}$ at entry $i$, and is $0$ everywhere else. So, in the notation of  Theorem \ref{thm:main}, $\tilde{\bv{A}} = \bv{S}\bv{A}$ and $\tilde{\bv{b}} = \bv{S}\bv{b}$. To prove the theorem, it suffices to show that with high probability, 
\begin{enumerate}
\item \textbf{Subspace Embedding:} For all $\bv{x}\in \R^d$, $\frac{1}{2}\|\bv{x}\|_2 \leq \|\bv{S}\bv{U}\bv{x}\|_2 \leq 1.5\|\bv{x}\|_2$.
\item \textbf{Approximate Matrix-Vector Multiplication}: $\| \mathbf{U}^T \mathbf{S}^T \mathbf{S} (\mathbf{b} - \mathbf{A} \mathbf{x}^*) \|_2^2 \leq \epsilon \| \mathbf{b} - \mathbf{A} \mathbf{x}^* \|_2^2$.
\end{enumerate}
The first property is equivalent to $\|\bv{U}^T\bv{S}^T\bv{S}\bv{U} - \bv{I}\|_2 \leq 1/2$. I.e., after subsampling, $\bv{U}$ should remain nearly orthogonal. The second property requires that after subsampling with $\bv{S}$, the optimal residual, $\bv{b} - \bv{A}\bv{x}^*$, should have small product with $\bv{U}$. Note that without subsampling,  $\|\bv{U}^T(\mathbf{b} - \mathbf{A} \mathbf{x}^*)\|_2^2 = 0$. 

We show that both of the above bounds can be established for any sampling method that 1) samples index $i$ with marginal probability proportional to the $i$-th statistical leverage score 2) is homogeneous, meaning that it always takes a fixed number of samples $k$, and 3) produces a distribution over binary vectors satisfying the following property with $D_{\text{inf}}$ equal to a fixed constant:
\begin{restatable}[One-sided $\ell_{\infty}$-independence]{definition}{linfinityind}\label{def:l-infty}
Let $\xi_1, \cdots, \xi_n \in \{0,1\}^n$ be random variables with joint distribution $\mu$. Let $\mathcal{S}\subseteq [n]$ and let $i,j \in [n] \backslash \mathcal{S}$. The one-sided influence matrix $\mathcal{I}_{\mu}^{\mathcal{S}}$ is defined as:\vspace{-1em}
    \begin{align*}\begin{split}
        \mathcal{I}_{\mu}^{\mathcal{S}}(i, j) = \Pr_{\mu} [\xi_j=1 | \xi_i = 1 \wedge \xi_{\ell} = 1 \forall \ell \in \mathcal{S}] - \Pr_{\mu} [\xi_j=1 | \xi_{\ell} = 1 \forall \ell \in \mathcal{S}]
    \end{split}\end{align*}
    We say $\mu$ is one-sided $\ell_{\infty}$-independent with parameter $D_{\text{inf}}$ if, for all subsets $\mathcal{S} \subset [n]$,
    \begin{align*}\begin{split}
        \lVert \mathcal{I}_{\mu}^{\mathcal{S}} \rVert_{\infty} = \max_{i \in [n]} \sum_{j \in [n]} | \mathcal{I}_{\mu}^{\mathcal{S}}(i, j) | \leq D_{\text{inf}}
    \end{split}\end{align*}
    Note that if $\xi_1, \ldots, \xi_n$ were truly independent, we would have $D_{\text{inf}} = 1$.
\end{restatable}
It was recently shown that it is possible to prove matrix Chernoff-type bounds for sums of random matrices involving random variables that are one-side $\ell_{\infty}$-independent \cite{KKS22}. We rely on that work directly to prove the subspace embedding property. For the matrix-vector multiplication property we  provide a new analysis, which generalizes the approach of \cite{DrineasKannanMahoney:2006} (which holds for independent random samples) to any distribution that satisfies one-side $\ell_{\infty}$-independence. 

Our Corollary \ref{corr:main} for pivotal sampling follows from showing that, no matter what binary tree is used, the method produces samples from a distribution that is one-sided $\ell_{\infty}$-independent with parameter $D_{\text{inf}} = 2$. This can be proven using the fact that pivotal sampling is strongly Rayleigh  \cite{KKS22}. However, we also provide a proof from first principals in the supplemental material.

% \begin{definition}[Conditional Pairwise Negative Correlation]\label{def:CPNC}
% Let $\xi_1, \cdots, \xi_n \in \{0,1\}^n$ be random variables with joint distribution $\mathcal{D}$. For a set $\mathcal{S}$, let $Z_{\mathcal{S}}$ be the event that $\xi_k = 1$ for all $k\in \mathcal{S}$. $\mathcal{D}$ satisfies \emph{Conditional Pairwise Negative Correlation} if, for all $\mathcal{S}\subseteq \{1, \ldots, n\}$,
% \begin{align*}
%     \E[\xi_i \xi_j \mid Z_{\mathcal{S}}] \leq \E[\xi_i \mid Z_{\mathcal{S}}]\cdot \E[\xi_i \mid Z_{\mathcal{S}}].
% \end{align*}
% \end{definition}
% Conditional Pairwise Negative Correlation is implied by the stronger property of Conditional Negative Association, which is implied by the strongly Rayleigh property. Since any pivotal sampling method using a fixed binary tree (which includes our PCA and coordinate-based methods) is strongly Rayleigh \cite{BJ12}, we immediately have that those methods satisfy Definition \ref{def:CPNC}.

% To prove the subspace embedding property, we show that Conditional Pairwise Negative Correlation along with homogeneity implies another condition called $\ell_{\infty}$-independence

% So far, we introduce our sampling scheme which doesn't generate an independent distribution. In this section, we first show that this distribution satisfies pairwise negative correlation and thus, $\ell_{\infty}$-independence with parameter $D_{\text{inf}} = 2$, followed by the proof of Lemma \ref{lemma:subspaceembedding} in Section \ref{sec:analysis:subspace}. Then, we prove the Theorem \ref{thm:main} in Section \ref{sec:analysis:guarantee}. We note that several linear algebraic facts used in this section are deferred to Appendix \ref{sec:appendix:linalg}.

% \subsection{Subspace Embedding}\label{sec:analysis:subspace}

% Our goal in this section is to establish a subspace embedding property for our pivotal sampling schemes.  That is, we will show that, if 
% $\mathbf{A} \in \R^{n \times d}$ is a data matrix and $\mathbf{S} \in \R^{k \times n}$ is the sampling matrix generated by our schemes, then, with high probability, for all $\mathbf{x}\in \mathbb{R}^d$, $\lVert \mathbf{S} \mathbf{A}\mathbf{x}\rVert^2_2 \approx \lVert \mathbf{A}\mathbf{x}\rVert^2_2$ in a sense to be made precise later.

% Pivotal sampling results in a  distribution with CPNC.
% \begin{restatable}{lemma}{cpnclemma}
%     The distribution that the fixed-ordering binary-tree-based pivotal sampling generates satisfies conditional pairwise negative correlation.
% \end{restatable}
% In our case, the random variables are binary. So, one can formulate the definition as, given a set $\mathcal{C} \subseteq [n]$ and a vector $\mathbf{c}_{\mathcal{C}} \in \{0, 1\}^{|\mathcal{C}|}$,
% \begin{align}\begin{split}
%     \mathbb{E}\big[\xi_i | \xi_{s}=\mathbf{c}_{\mathcal{C}}(s) \forall s \in \mathcal{C} \big]\mathbb{E}\big[\xi_j | \xi_{s}=\mathbf{c}_{\mathcal{C}}(s) \forall s \in \mathcal{C} \big] \geq \mathbb{E}\big[\xi_i \xi_j | \xi_{s}=\mathbf{c}_{\mathcal{C}}(s) \forall s \in \mathcal{C} \big]
% \end{split}\end{align}

% This property is guaranteed for any scheme with the CPNC property.
% \begin{restatable}[Pairwise Negative Correlation and $k$-homogeneity Implies One-sided $\ell_{\infty}$-independence with Parameter $D_{\text{inf}}=2$]{lemma}{linftylemma}\label{lemma:l-infty}
%     Let $\mu$ be a distribution over a set of binary random variables, which satisfies $k$-homogeneity and pairwise negative correlation. Then, $\mu$ is one-sided $\ell_{\infty}$-independent with parameter $D_{\text{inf}}=2$.
% \end{restatable}

% The relevant matrix Chernoff bound was established in \cite{KKS22}.
% \begin{lemma}[Matrix Chernoff for $\ell_{\infty}$-independent Distributions, \cite{KKS22}]\label{lemma:chernoff}
%     Let $\xi_1, \cdots, \xi_n$ be binary random variables with a distribution $\mu$ which is $k$-homogeneous and $\ell_{\infty}$-independent with parameter $D$. Let $\mathbf{Y}_1, \cdots, \mathbf{Y}_n \in \R^{d \times d}$ be a collection of PSD matrices such that $\mathbf{Y}_i \preceq R \mathbf{I}$ for some $R>0$. Let $\mu_{\max} = \lambda_{\max}(\mathbb{E}_{\bm{\xi} \sim \mu}[\sum_{i=1}^n \xi_i Y_i])$. Then, for any $0 \leq \epsilon \leq 1$, 
%     \begin{align}\begin{split}\label{eq:nachernoff}
%         \Pr \left[\lambda_{\max}\left( \sum_{i=1}^n \xi_i \mathbf{Y}_i \right) \geq (1+\epsilon) \mu_{\max}\right] \leq d \exp \left( - \frac{\epsilon^2 \mu_{\max}}{O(RD^2)} \right)
%     \end{split}\end{align}
% \end{lemma}

% \begin{proof}
%     \cite{BJ12} proves that the fixed-ordering binary-tree-based pivotal sampling is conditionally negatively associated. So, we have 
%     \begin{align}\begin{split}\label{eq:cna}
%         \mathbb{E}[f(\mathcal{S}) | C]\mathbb{E}[g(\mathcal{T}) | C] \geq \mathbb{E}[f(\mathcal{S})g(\mathcal{T}) | C]
%     \end{split}\end{align}
%     where $\mathcal{S}$ and $\mathcal{T}$ are disjoint subset of $\{\xi_1, \cdots, \xi_n \}$ and $f$ and $g$ are non-decreasing functions. When $\mathcal{S}$ and $\mathcal{T}$ are singletons and $f$ and $g$ are the identity functions, we have
%     \begin{align}\begin{split}
%         \mathbb{E}[\xi_i | C] \mathbb{E}[\xi_j | C] \geq \mathbb{E}[\xi_i \xi_j | C]
%     \end{split}\end{align}
% \end{proof}

% \begin{claim}[Probability Assigned to Random Variables is Always the Marginal Probability]\label{clm:pivotalprob}
%     In any binary-tree-based pivotal sampling by Algorithm \ref{algo:pivotal} choosing $k$ samples where the sum of all $n$ inclusion probabilities $p_1, \cdots, p_n$ is equal to $k$, i.e. $\sum_{i=1}^n p_i = k$, the probability assigned to the index at any node is equal to the inclusion probability.
% \end{claim}

% \begin{proof}
%     By induction. The base case is the match between two children of the root. Let the two random variables be $\xi_i$ and $\xi_j$ with $p_i$ and $p_j$ be the inclusion probability at the moment. As we have $p_i + p_j = 1$, clearly, we have $\xi_i=1$ with probability $p_i$ and $\xi_j=1$ with probability $p_j$. As an induction hypothesis, we assume that if $\xi_i$ is promoted with an updated probability $p_i'$, then in the succeeding steps, we have $\xi_i=1$ with probability $p_i'$. If a pivotal match between $\xi_i$ and $\xi_j$ with probability $p_i$ and $p_j$, respectively, has $p_i + p_j \leq 1$, we have
%     \begin{align}\begin{split}
%         \mathbb{E}[\xi_i=1] = \mathbb{E}[p_i] = \frac{p_i}{p_i+p_j}(p_i+p_j) + \frac{p_j}{p_i+p_j} 0 = p_i
%     \end{split}\end{align}
%     Similarly, if the match has $p_i + p_j > 1$, 
%     \begin{align}\begin{split}
%         \mathbb{E}[\xi_i=1] = \mathbb{E}[p_i] = \frac{1-p_j}{2-p_i-p_j} 1 + \frac{1-p_i}{2-p_i-p_j}(p_i+p_j-1) = p_i
%     \end{split}\end{align}
%     Therefore, a random variable $\xi_i$ with probability $p_i$ at any node is actually chosen with probability $p_i$.
% \end{proof}

% \begin{lemma}[Pivotal Sampling Produces Pairwise Negatively Correlated Distribution]
%     The distribution generated by pivotal sampling with fixed ordering satisfies a pairwise negative correlation.
% \end{lemma}

% \begin{proof}
%     For any distinct pair of $i, j \in [n]$, it suffices to show that $\mathbb{E}[\xi_i \xi_j] \leq \mathbb{E}[\xi_i] \mathbb{E}[\xi_j]$. Let $Z$ be the event that $i$ and $j$ have a pivotal match during the sampling. Then, using Claim \ref{clm:pivotalprob}, if $p_i + p_j \leq 1$, we have
%     \begin{align}\begin{split}
%         \mathbb{E}[\xi_i \xi_j] = \Pr[Z] \left( \frac{p_i}{p_i+p_j} 0 (p_i + p_j) + \frac{p_j}{p_i + p_j} (p_i + p_j) 0 \right) + \Pr[\Bar{Z}] \mathbb{E}[\xi_i] \mathbb{E}[\xi_j]
%     \end{split}\end{align}
%     Thus, 
%     \begin{align}\begin{split}
%         \mathbb{E}[\xi_i] \mathbb{E}[\xi_j] - \mathbb{E}[\xi_i \xi_j] = (1 - \Pr[\bar{Z}]) \mathbb{E}[\xi_i] \mathbb{E}[\xi_j] \geq 0
%     \end{split}\end{align}
%     Similarly, if $p_i + p_j > 1$, we have 
%     \begin{align}\begin{split}
%         \mathbb{E}[\xi_i \xi_j] &= \Pr[Z] \left( \frac{1-p_i}{2-p_i-p_j} (p_i + p_j - 1) 1 + \frac{1 - p_j}{2 - p_i - p_j} 1 (p_i + p_j - 1) \right) + \Pr[\Bar{Z}] \mathbb{E}[\xi_i] \mathbb{E}[\xi_j] \\
%         &= \Pr[Z] (p_i + p_j - 1) + \Pr[\Bar{Z}] \mathbb{E}[\xi_i] \mathbb{E}[\xi_j]
%     \end{split}\end{align}
%     Thus,
%     \begin{align}\begin{split}
%         \mathbb{E}[\xi_i] \mathbb{E}[\xi_j] - \mathbb{E}[\xi_i \xi_j] = \Pr[Z](\mathbb{E}[\xi_i] \mathbb{E}[\xi_j] - (p_i + p_j - 1)) = \Pr[Z](1 - p_i)(1 - p_j) \geq 0
%     \end{split}\end{align}
% \end{proof}

% Note that (\ref{eq:cna}) is the formula defining the conditional negative association.

% During our sampling, we can see $\mathcal{S}$ as a set keeping already chosen indexes. Suppose we focus on $i$, whether to pick it as a sample or not. Then, $\mathcal{I}_{\mu}^{\mathcal{S}}(i, j)$ measures the impact of our decision of sampling $i$ on the distribution of $j$. Clearly, when $\mu$ is independent, we have $\lVert \mathcal{I}_{\mu}^{\mathcal{S}} \rVert_{\infty} \leq 1$. So, one can see $D_{\text{inf}}$ as a measure that represents how close $\mu$ is to the independent distribution.

% As mentioned, our sampling returns exactly $k$ samples and thus, $k$-homogeneous. Together with the pairwise negative correlation, we claim that the distribution our sampling generates satisfies one-sided $\ell_{\infty}$-independence with parameter $D_{\text{inf}}=2$.

% \begin{restatable}[Pairwise Negative Correlation and $k$-homogeneity Implies One-sided $\ell_{\infty}$-independence with Parameter $D_{\text{inf}}=2$]{lemma}{linftylemma}\label{lemma:l-infty}
%     Let $\mu$ be a distribution over a set of binary random variables, which satisfies $k$-homogeneity and pairwise negative correlation. Then, $\mu$ is one-sided $\ell_{\infty}$-independent with parameter $D_{\text{inf}}=2$.
% \end{restatable}
% \begin{proof}
%     It suffices to show that $\sum_{j \in [n]} | \mathcal{I}_{\mu}^{\mathcal{S}}(i, j) | \leq 2$ for all $i \in [n]$. For a fixed $i$, let $p = \Pr_{\bm{\xi} \sim \mu} [\xi_i=1 | \xi_{\ell} = 1 \forall \ell \in \mathcal{S}]$. Then, we have $| \mathcal{I}_{\mu}^{\mathcal{S}}(i, j) | = 0$ for $j \in \mathcal{S}$, $| \mathcal{I}_{\mu}^{\mathcal{S}}(i, j) | = 1 - p$ for $j = i$, and $\sum_{j \in [n] \backslash \mathcal{S} \cup \{ i \}} | \mathcal{I}_{\mu}^{\mathcal{S}}(i, j) | = 1 - p$. \chris{add more careful explanation here.} The last one is derived from the $k$-homogeneity that we always have $\sum_{i=1}^n p_i = k$ and the conditional pairwise negative correlation that we have $\Pr[\xi_j =1 | \xi_i = 1 \wedge \xi_{\ell} = 1 \forall \ell \in \mathcal{S}] \leq \Pr[\xi_j = 1 | \xi_{\ell} = 1 \forall \ell \in \mathcal{S}]$. Thus, we have $\sum_{j \in [n]} | \mathcal{I}_{\mu}^{\mathcal{S}}(i, j) | = 2 - 2p \leq 2$.
% \end{proof}

% Therefore, the distribution our sampling generates satisfies $\ell_{\infty}$-independence with parameter $D_{\text{inf}}=2$. \cite{KKS22} established the following Chernoff-type matrix concentration bound for distributions characterized by the $\ell_{\infty}$-independence.

% \begin{lemma}[Matrix Chernoff for $\ell_{\infty}$-independent Distributions, \cite{KKS22}]\label{lemma:chernoff}
%     Let $\xi_1, \cdots, \xi_n$ be binary random variables with a distribution $\mu$ which is $k$-homogeneous and $\ell_{\infty}$-independent with parameter $D$. Let $\mathbf{Y}_1, \cdots, \mathbf{Y}_n \in \R^{d \times d}$ be a collection of PSD matrices such that $\mathbf{Y}_i \preceq R \mathbf{I}$ for some $R>0$. Let $\mu_{\max} = \lambda_{\max}(\mathbb{E}_{\bm{\xi} \sim \mu}[\sum_{i=1}^n \xi_i Y_i])$. Then, for any $0 \leq \epsilon \leq 1$, 
%     \begin{align}\begin{split}\label{eq:nachernoff}
%         \Pr \left[\lambda_{\max}\left( \sum_{i=1}^n \xi_i \mathbf{Y}_i \right) \geq (1+\epsilon) \mu_{\max}\right] \leq d \exp \left( - \frac{\epsilon^2 \mu_{\max}}{O(RD^2)} \right)
%     \end{split}\end{align}
% \end{lemma}

% Notice that if the distribution is independent, we have $D=1$, and if $D$ gets larger, the bound becomes looser. Thus, so long as the distribution is close enough to the independent distribution so that $D$ is a constant, the bound above is as tight as the matrix Chernoff for the independent distribution in an asymptotic notation. 

% We conclude section \ref{sec:analysis:subspace} with proof of the lemma \ref{lemma:subspaceembedding}. Recall that our goal is to show 
% \begin{align}\begin{split}
%     (1-\epsilon) \lVert \mathbf{A} \mathbf{x} \rVert_2^2 \leq \lVert \mathbf{S} \mathbf{A} \mathbf{x} \rVert_2^2 \leq (1 + \epsilon) \lVert \mathbf{A} \mathbf{x} \rVert_2^2
% \end{split}\end{align}
% if we take $k=O(\frac{d \log (d/\delta)}{\epsilon^2})$ samples, where the sampling matrix $\mathbf{S}$ is constructed through our sampling scheme. Let $\mathbf{U} \in \R^{n \times d}$ be a matrix with orthonormal column vectors such that all the columns of $\mathbf{A}$ are in the space spanned by the columns of $\mathbf{U}$. Then, we need to show instead that, for $\mathbf{y} \in \R^d$, 
% \begin{align}\begin{split}
%     (1-\epsilon) \lVert \mathbf{U} \mathbf{y} \rVert_2^2 \leq \lVert \mathbf{S} \mathbf{U} \mathbf{y} \rVert_2^2 \leq (1 + \epsilon) \lVert \mathbf{U} \mathbf{y} \rVert_2^2
% \end{split}\end{align}
% This is because for all $\mathbf{x}$, we have a unique $\mathbf{y}$ satisfying $\mathbf{A} \mathbf{x} = \mathbf{U} \mathbf{y}$. Note that from fact \ref{fact:leveragescore}, the leverage scores of $\mathbf{U}$ are the same as those of $\mathbf{A}$, and thus, we can use the same sampling matrix $\mathbf{S}$. This can further be converted to
% \begin{align}\begin{split}
%     \left| \lVert \mathbf{S} \mathbf{U} \mathbf{y} \rVert_2^2 - \lVert \mathbf{U} \mathbf{y} \rVert_2^2 \right| \leq \epsilon \lVert \mathbf{U} \mathbf{y} \rVert_2^2 = \epsilon \lVert \mathbf{y} \rVert_2^2 \quad (\text{Fact \ref{fact:orthmat}})
% \end{split}\end{align}
% The left-hand side can be written as
% \begin{align}\begin{split}
%     \left| \lVert \mathbf{S} \mathbf{U} \mathbf{y} \rVert_2^2 - \lVert \mathbf{U} \mathbf{y} \rVert_2^2 \right| &= \left| \mathbf{y}^T \mathbf{U}^T \mathbf{S}^T \mathbf{S} \mathbf{U} \mathbf{y} - \mathbf{y}^T \mathbf{I} \mathbf{y} \right| \\
%     &= \left| \mathbf{y}^T \left(\mathbf{U}^T \mathbf{S}^T \mathbf{S} \mathbf{U} - \mathbf{I}\right) \mathbf{y} \right| \\
%     &\leq \lVert \mathbf{U}^T \mathbf{S}^T \mathbf{S} \mathbf{U} - \mathbf{I} \rVert_2 \lVert \mathbf{y} \rVert_2^2 \quad (\text{Fact \ref{fact:specnorm}})
% \end{split}\end{align}
% Thus, it suffices to show that $\lVert \mathbf{U}^T \mathbf{S}^T \mathbf{S} \mathbf{U} - \mathbf{I} \rVert_2 \leq \epsilon$. $\mathbf{U}^T \mathbf{S}^T \mathbf{S} \mathbf{U}$ can be decomposed as
% \begin{align}\begin{split}
%     \mathbf{U}^T \mathbf{S}^T \mathbf{S} \mathbf{U} = \sum_{i=1}^n \xi_i \frac{\mathbf{u}_i \mathbf{u}_i^T}{\tilde{p}_i}
% \end{split}\end{align}
% We now define the collection of matrices as
% \begin{align}\begin{split}
%     \mathbf{Y}_i = \frac{\mathbf{u}_i \mathbf{u}_i^T}{\tilde{p}_i}
% \end{split}\end{align}
% Since $\mathbf{Y}_i$ is PSD by definition, it satisfies the condition given in lemma \ref{lemma:l-infty}. To apply lemma \ref{lemma:l-infty}, we need to know $R$ and $\mu_{\max}$. Let $\mathcal{D} = \{i \in [n]; \tilde{p}_i = 1\}$ and $\mathcal{R} = \{i \in [n]; \tilde{p}_i < 1\}$. Due to the probability ceiling in Algorithm \ref{algo:ceiling}, we have $\tilde{p}_i \geq p_i$ for $i \in \mathcal{R}$. Then, $\mathbf{Y}_i$ for all $i \in \mathcal{R}$ are upper bounded by $\mathbf{Y}_i \preceq R \mathbf{I}$ with $R = \frac{d}{k}$ because we have
% \begin{align}\begin{split}
%     \lVert \mathbf{Y}_i \rVert_2 = \frac{\lVert \mathbf{u}_i \mathbf{u}_i^T \rVert_2}{\tilde{p}_i} = \frac{\lVert \mathbf{u}_i \rVert_2^2}{\tilde{p}_i} \leq \frac{\tau_i}{p_i} = \frac{d}{k} \quad (\text{Fact \ref{fact:normconv}})
% \end{split}\end{align}
% For $i \in \mathcal{D}$, let $m_i$ be an integer such that $m_i \geq p_i$. Then, we consider replacing $\mathbf{u}_i$ by $m_i$ rows of $ \frac{1}{\sqrt{m_i}} \mathbf{u}_i$. Note that the discussion above still holds including the pairwise negative correlation property if we conduct this manipulation. Let $\bar{\mathbf{Y}}_i$ and $\bar{\mathbf{U}} \in \R^{\bar{n} \times d}$ be the matrices and let $\bar{\mathcal{D}}$ be the set after the modification. Now, for $i \in \bar{\mathcal{D}}$ we have 
% \begin{align}\begin{split}
%     \lVert \bar{\mathbf{Y}}_i \rVert_2 = \frac{\lVert \frac{1}{\sqrt{m_i}} \mathbf{u}_i \frac{1}{\sqrt{m_i}} \mathbf{u}_i^T \rVert_2}{1} = \frac{\lVert \mathbf{u}_i \rVert_2^2}{m_i} \leq \frac{\tau_i}{p_i} = \frac{d}{k} \quad (\text{Fact \ref{fact:normconv}})
% \end{split}\end{align}
% $\mathbf{S}$ and $\bm{\xi}$ can also be expanded accordingly by replacing the $i$-th column of $\mathbf{S}$ with $m_i$ columns of it and by replacing the $i$-th element of $\bm{\xi}$ with $m_i$ elements of it for all $i \in \mathcal{D}$. Notice that we have $\mathbf{u}'^{(i)}^T \mathbf{u}'^{(j)} = \mathbf{u}^{(i)}^T \mathbf{u}^{(j)}$ for all $i, j$ pairs where the superscript specifies a corresponding column in the matrix, so we have $\mathbf{U}^T \mathbf{U} = \bar{\mathbf{U}}^T \bar{\mathbf{U}} = I$ and $\mathbf{U}^T \mathbf{S}^T \mathbf{S} \mathbf{U} = \bar{\mathbf{U}}^T \bar{\mathbf{S}}^T \bar{\mathbf{S}} \bar{\mathbf{U}}$. Let $\mu$ be the distribution over a set of binary random variables $\bm{\xi} = \{\xi_1, \cdots, \xi_n\}$ that our sampling induces. Then, we have
% \begin{align}\begin{split}
%     \mathbb{E}_{\bar{\bm{\xi}} \sim \bar{\mu}} \left[ \sum_{i=1}^{\bar{n}} \bar{\xi}_i \bar{\mathbf{Y}}_i \right] = \sum_{i=1}^{\bar{n}} \mathbb{E}_{\bar{\bm{\xi}} \sim \bar{\mu}}[\bar{\xi}_i] \bar{\mathbf{Y}}_i = \sum_{i=1}^{\bar{n}} \tilde{p}_i \frac{\bar{\mathbf{u}}_i \bar{\mathbf{u}}_i^T}{\tilde{p}_i} = \bar{\mathbf{U}}^T \bar{\mathbf{U}} = \mathbf{U}^T \mathbf{U} = \mathbf{I}
% \end{split}\end{align}
% It indicates that $\mu_{\max} = \lambda_{\max}(\mathbb{E}_{\bar{\bm{\xi}} \sim \bar{\mu}}[\sum_{i=1}^{\bar{n}} \bar{\xi}_i \bar{Y}_i]) = 1$. By putting them together into (\ref{eq:nachernoff}), we obtain
% \begin{align}\begin{split}
%     \Pr \left[\lambda_{\max}\left( \bar{\mathbf{U}}^T \bar{\mathbf{S}}^T \bar{\mathbf{S}} \bar{\mathbf{U}} \right) \geq 1+\epsilon \right] &\leq d \exp \left( - \frac{\epsilon^2}{O(RD^2)} \right) \\
%     % \Pr \left[ \lVert \bar{\mathbf{U}}^T \bar{\mathbf{S}}^T \bar{\mathbf{S}} \bar{\mathbf{U}} - \mathbf{I} \rVert_2 \geq \epsilon \right] &\leq d \exp \left( - \frac{k \epsilon^2}{O(d)} \right) \\
%     \Pr \left[ \lVert \mathbf{U}^T \mathbf{S}^T \mathbf{S} \mathbf{U} - \mathbf{I} \rVert_2 \geq \epsilon \right] &\leq d \exp \left( - \frac{k \epsilon^2}{O(d)} \right)
% \end{split}\end{align}
% Thus, by setting $k = O(\frac{d \log (d / \delta)}{\epsilon^2})$, we have $\lVert \mathbf{U}^T \mathbf{S}^T \mathbf{S} \mathbf{U} - \mathbf{I} \rVert_2 \leq \epsilon$ with probability at least $1 - \delta$, which completes the proof.

% \subsection{$(1 + \epsilon)$ Guarantee}\label{sec:analysis:guarantee}

% \begin{proof}
%     By the same reparameterization in section \ref{sec:analysis:subspace}, we need to show that 
%     \begin{align}\begin{split}
%         \lVert \mathbf{U} \tilde{\mathbf{y}}^* - \mathbf{b} \rVert_2^2 \leq (1+\epsilon) \lVert \mathbf{U} \mathbf{y}^* - \mathbf{b} \rVert_2^2
%     \end{split}\end{align}
%     where $\mathbf{U} \in \R^{n \times d}$ has orthonormal column vectors, $\mathbf{y}^* = \argmin_{\mathbf{y}\in \R^d} \lVert \mathbf{U}\mathbf{y} - \mathbf{b} \rVert_2^2$ and $\tilde{\mathbf{y}}^* = \argmin_{\mathbf{y}\in \R^d} \lVert \mathbf{S}\mathbf{U}\mathbf{y} - \mathbf{S}\mathbf{b} \rVert_2^2$ where $\mathbf{S} \in \R^{k \times d}$ is the sampling matrix. Since $\mathbf{y}^*$ is the minimizer of $\lVert \mathbf{U}\mathbf{y} - \mathbf{b} \rVert_2^2$, we have $\nabla_{\mathbf{y}} \lVert \mathbf{U} \mathbf{y} - \mathbf{b} \rVert_2^2 = 2 \mathbf{U}^T (\mathbf{U}\mathbf{y}- \mathbf{b}) = \mathbf{0}$ at $\mathbf{y}^*$. This indicates that $\mathbf{U} \mathbf{y}^* - \mathbf{b}$ is orthogonal to any vector in the column span of $\mathbf{U}$. Particularly, $\mathbf{U} \mathbf{y}^* - \mathbf{b}$ is orthogonal to $\mathbf{U} \tilde{\mathbf{y}}^* - \mathbf{U} \mathbf{y}^*$. Therefore, by Pythagorean theorem, we have 
%     \begin{align}\begin{split}
%         \lVert \mathbf{U} \tilde{\mathbf{y}}^* - \mathbf{b} \rVert_2^2 = \lVert \mathbf{U} \mathbf{y}^* - \mathbf{b} \rVert_2^2 + \lVert \mathbf{U} \tilde{\mathbf{y}}^* - \mathbf{U} \mathbf{y}^* \rVert_2^2 = \lVert \mathbf{U} \mathbf{y}^* - \mathbf{b} \rVert_2^2 + \lVert \tilde{\mathbf{y}}^* - \mathbf{y}^* \rVert_2^2 \quad (\text{Fact \ref{fact:orthmat}})
%     \end{split}\end{align}
%     Thus, it suffices to prove that 
%     \begin{align}\begin{split}
%         \lVert \tilde{\mathbf{y}}^* - \mathbf{y}^* \rVert_2^2 \leq \epsilon \lVert \mathbf{U} \mathbf{y}^* - \mathbf{b} \rVert_2^2
%     \end{split}\end{align}
%     As a consequence of lemma \ref{lemma:subspaceembedding}, we have 
%     \begin{align}\begin{split}
%         (1-\epsilon) \mathbf{U}^T \mathbf{U} \preceq \mathbf{U}^T \mathbf{S}^T \mathbf{S} \mathbf{U} \preceq (1 + \epsilon) \mathbf{U}^T \mathbf{U}
%     \end{split}\end{align}
%     So, by setting $\epsilon = \frac{1}{2}$, we have $\lVert \mathbf{U}^T \mathbf{S}^T \mathbf{S} \mathbf{U} - \mathbf{I} \rVert_2 \leq \frac{1}{2}$. Then, by triangle inequality,
%     \begin{align}\begin{split}
%         \lVert \tilde{\mathbf{y}}^* - \mathbf{y}^* \rVert_2 &\leq \lVert \mathbf{U}^T \mathbf{S}^T \mathbf{S} \mathbf{U} (\tilde{\mathbf{y}}^* - \mathbf{y}^*) \rVert_2 + \lVert \mathbf{U}^T \mathbf{S}^T \mathbf{S} \mathbf{U} (\tilde{\mathbf{y}}^* - \mathbf{y}^*) - (\tilde{\mathbf{y}}^* - \mathbf{y}^*) \rVert_2 \\
%         &\leq \lVert \mathbf{U}^T \mathbf{S}^T \mathbf{S} \mathbf{U} (\tilde{\mathbf{y}}^* - \mathbf{y}^*) \rVert_2 + \lVert \mathbf{U}^T \mathbf{S}^T \mathbf{S} \mathbf{U} - \mathbf{I} \rVert_2 \lVert \tilde{\mathbf{y}}^* - \mathbf{y}^* \rVert_2 \quad (\text{sub-multiplicativity}) \\
%         &\leq \lVert \mathbf{U}^T \mathbf{S}^T \mathbf{S} \mathbf{U} (\tilde{\mathbf{y}}^* - \mathbf{y}^*) \rVert_2 + \frac{1}{2} \lVert \tilde{\mathbf{y}}^* - \mathbf{y}^* \rVert_2
%     \end{split}\end{align}
%     This results in $\lVert \tilde{\mathbf{y}}^* - \mathbf{y}^* \rVert_2^2 \leq 4 \lVert \mathbf{U}^T \mathbf{S}^T \mathbf{S} \mathbf{U} (\tilde{\mathbf{y}}^* - \mathbf{y}^*) \rVert_2^2$. Since $\tilde{\mathbf{y}}^*$ is the minimizer of $\lVert \mathbf{S} \mathbf{U} \mathbf{y} - \mathbf{S} \mathbf{b} \rVert_2^2$, we have $\nabla_{\mathbf{y}} \lVert \mathbf{S}\mathbf{U}\mathbf{y} - \mathbf{S} \mathbf{b} \rVert_2^2 = 2(\mathbf{S}\mathbf{U})^T(\mathbf{S}\mathbf{U}\tilde{\mathbf{y}}^* - \mathbf{S} \mathbf{b}) = \mathbf{0}$. Thus,
%     \begin{align}\begin{split}
%         \lVert \mathbf{U}^T \mathbf{S}^T \mathbf{S} \mathbf{U} (\tilde{\mathbf{y}}^* - \mathbf{y}^*) \rVert_2^2 = \lVert \mathbf{U}^T \mathbf{S}^T (\mathbf{S} \mathbf{U} \tilde{\mathbf{y}}^* - \mathbf{S} \mathbf{b} + \mathbf{S} \mathbf{b} - \mathbf{S} \mathbf{U} \mathbf{y}^*) \rVert_2^2 = \lVert \mathbf{U}^T \mathbf{S}^T \mathbf{S} (\mathbf{b} - \mathbf{U} \mathbf{y}^*) \rVert_2^2
%     \end{split}\end{align}
%     Next, we show that $\lVert \mathbf{U}^T \mathbf{S}^T \mathbf{S} (\mathbf{b} - \mathbf{U} \mathbf{y}^*) \rVert_2^2 \leq \epsilon \lVert \mathbf{b} - \mathbf{U} \mathbf{y}^* \rVert_2^2$ with probability at least $1-\delta'$ if $k = O(\frac{d}{\epsilon \delta'})$. By Markov's inequality, we have 
%     \begin{align}\begin{split}
%         \Pr \left[ \lVert \mathbf{U}^T \mathbf{S}^T \mathbf{S} (\mathbf{b} - \mathbf{U} \mathbf{y}^*) \rVert_2^2 \geq \epsilon \lVert \mathbf{b} - \mathbf{U} \mathbf{y}^* \rVert_2^2 \right] \leq \frac{\mathbb{E}\big[\lVert \mathbf{U}^T \mathbf{S}^T \mathbf{S} (\mathbf{b} - \mathbf{U} \mathbf{y}^*) \rVert_2^2 \big]}{\epsilon \lVert \mathbf{b} - \mathbf{U} \mathbf{y}^* \rVert_2^2} 
%     \end{split}\end{align}
%     Let $\mathbf{z} = \mathbf{b} - \mathbf{U} \mathbf{y}^*$. Then, we have $\mathbf{U}^T \mathbf{z} = \mathbf{0}$. The numerator on the right-hand side can be transformed as
%     \begin{align}\begin{split}
%         \mathbb{E}\big[ \lVert \mathbf{U}^T \mathbf{S}^T \mathbf{S} \mathbf{z} \rVert_2^2 \big] = \mathbb{E}\big[ \lVert \mathbf{U}^T \mathbf{S}^T \mathbf{S} \mathbf{z} - \mathbf{U}^T \mathbf{z} \rVert_2^2 \big] 
%         = \mathbb{E}\big[ \lVert \mathbf{U}^T (\mathbf{S}^T \mathbf{S} - \mathbf{I}) \mathbf{z} \rVert_2^2 \big] 
%     \end{split}\end{align}
%     In this formulation, the scaling matrix $\mathbf{S}^T \mathbf{S} - \mathbf{I}$ gives $\frac{1}{\tilde{p}_i} - 1$ if $\xi_i=1$ and $-1$ if $\xi_i=0$. As we have $\xi_i = 1$ and $\tilde{p}_i=1$ for $i \in \mathcal{D}$, expanding the $\ell_2$-norm yields
%     \begin{align}\begin{split}
%         \mathbb{E}\big[ \lVert \mathbf{U}^T \mathbf{S}^T \mathbf{S} \mathbf{z} \rVert_2^2 \big] &= \sum_{j=1}^d \mathbb{E} \left[ \left( \sum_{i=1}^n \left(\frac{\xi_i}{\tilde{p}_i}-1\right) u_{ij} z_i \right)^2 \right] \\
%         &= \sum_{j=1}^d \mathbb{E} \left[ \left( \sum_{i \in \mathcal{D}} \left(\frac{\xi_i}{\tilde{p}_i}-1\right) u_{ij} z_i + \sum_{i \in \mathcal{R}} \left(\frac{\xi_i}{\tilde{p}_i}-1\right) u_{ij} z_i \right)^2 \right] \\
%         % &= \sum_{j=1}^d \mathbb{E} \left[ \left( \sum_{i \in \mathcal{D}} \left(\frac{\xi_i}{p_i'}-1\right) u_{ij} z_i \right)^2 + 2 \sum_{i \in \mathcal{D}} \left(\frac{\xi_i}{p_i'}-1\right) u_{ij} z_i \sum_{i \in \mathcal{R}} \left(\frac{\xi_i}{p_i'}-1\right) u_{ij} z_i + \left(\sum_{i \in \mathcal{R}} \left(\frac{\xi_i}{p_i'}-1\right) u_{ij} z_i \right)^2 \right] \\
%         &= \sum_{j=1}^d \mathbb{E} \left[ \left( \sum_{i \in \mathcal{R} \atop u_{ij}z_i \geq 0} \left(\frac{\xi_i}{\tilde{p}_i}-1\right) u_{ij} z_i + \sum_{i \in \mathcal{R} \atop u_{ij}z_i < 0} \left(\frac{\xi_i}{\tilde{p}_i}-1\right) u_{ij} z_i \right)^2 \right] \\
%         &\leq \sum_{j=1}^d \mathbb{E} \left[ 2\left( \sum_{i \in \mathcal{R} \atop u_{ij}z_i \geq 0} \left(\frac{\xi_i}{\tilde{p}_i}-1\right) u_{ij} z_i \right)^2 + 2\left( \sum_{i \in \mathcal{R} \atop u_{ij}z_i < 0} \left(\frac{\xi_i}{\tilde{p}_i}-1\right) u_{ij} z_i \right)^2 \right] \quad (\text{Fact \ref{fact:minkowski}})
%     \end{split}\end{align}
%     The expectation of the first squared term can be upper-bounded as 
%     \begin{align}\begin{split}
%         & \mathbb{E} \left[\left( \sum_{i \in \mathcal{R} \atop u_{ij}z_i \geq 0} \left(\frac{\xi_i}{\tilde{p}_i}-1\right) u_{ij} z_i \right)^2 \right] \\ 
%         =& \sum_{i \in \mathcal{R} \atop u_{ij}z_i \geq 0} \mathbb{E}\left[ \left(\frac{\xi_i}{\tilde{p}_i}-1\right)^2 \right] u_{ij}^2 z_i^2 + \sum_{i \in \mathcal{R} \atop u_{ij}z_i \geq 0} \sum_{i' \in \mathcal{R} \atop u_{i'j}z_{i'} \geq 0 \atop i' \neq i} \mathbb{E}\left[ \left(\frac{\xi_i}{\tilde{p}_i}-1\right) \left(\frac{\xi_{i'}}{\tilde{p}_{i'}}-1\right) \right] u_{ij} z_i u_{i'j} z_{i'} \\
%         \leq& \sum_{i \in \mathcal{R}\atop u_{ij}z_i \geq 0} \frac{1}{\tilde{p}_i} u_{ij}^2 z_i^2
%     \end{split}\end{align}
%     Here, we use two facts. Firstly, we have
%     \begin{align}\begin{split}
%         \mathbb{E}\left[ \left(\frac{\xi_i}{\tilde{p}_i}-1\right)^2 \right] u_{ij}^2 z_i^2 = \left(\frac{1}{\tilde{p}_i}-1 \right) u_{ij}^2 z_i^2 \leq \frac{1}{\tilde{p}_i} u_{ij}^2 z_i^2
%     \end{split}\end{align}
%     because $u_{ij}^2 z_i^2 \geq 0$. Secondly, 
%     \begin{align}\begin{split}
%         \mathbb{E}\left[ \left(\frac{\xi_i}{\tilde{p}_i}-1\right) \left(\frac{\xi_{i'}}{\tilde{p}_{i'}}-1\right) \right] u_{ij} z_i u_{i'j} z_{i'} \leq 0
%     \end{split}\end{align}
%     This is because $u_{ij} z_i u_{i'j} z_{i'} \geq 0$ and $\mathbb{E}\left[ \left(\frac{\xi_i}{\tilde{p}_i}-1\right) \left(\frac{\xi_{i'}}{\tilde{p}_{i'}}-1\right) \right] \leq 0$. The latter is from the conditional pairwise negative correlation. Therefore, we have
%     \begin{align}\begin{split}
%         \mathbb{E}\left[ \left(\frac{\xi_i}{\tilde{p}_i}-1\right) \left(\frac{\xi_{i'}}{\tilde{p}_{i'}}-1\right) \right] \leq \mathbb{E}\left[ \left(\frac{\xi_i}{\tilde{p}_i}-1\right) \right] \mathbb{E} \left[\left(\frac{\xi_{i'}}{\tilde{p}_{i'}}-1\right) \right] \leq 0
%     \end{split}\end{align}
%     Similarly, we also have
%     \begin{align}\begin{split}
%         \mathbb{E} \left[\left( \sum_{i \in \mathcal{R} \atop u_{ij}z_i < 0} \left(\frac{\xi_i}{\tilde{p}_i}-1\right) u_{ij} z_i \right)^2 \right] \leq \sum_{i \in \mathcal{R} \atop u_{ij}z_i < 0} \frac{1}{\tilde{p}_i} u_{ij}^2 z_i^2
%     \end{split}\end{align}
%     Therefore, we can upper-bound $\mathbb{E}\big[ \lVert \mathbf{U}^T \mathbf{S}^T \mathbf{S} \mathbf{z} \rVert_2^2 \big]$ as
%     \begin{align}\begin{split}
%         \mathbb{E}\big[ \lVert \mathbf{U}^T \mathbf{S}^T \mathbf{S} \mathbf{z} \rVert_2^2 \big] &\leq 2 \sum_{j=1}^d \left( \sum_{i \in \mathcal{R} \atop u_{ij}z_i \geq 0} \frac{1}{\tilde{p}_i} u_{ij}^2 z_i^2 + \sum_{i \in \mathcal{R} \atop u_{ij}z_i < 0} \frac{1}{\tilde{p}_i} u_{ij}^2 z_i^2 \right) \\
%         &= 2 \sum_{i \in \mathcal{R}} \frac{1}{\tilde{p}_i} z_i^2 \sum_{j=1}^d u_{ij}^2 \\
%         &= 2 \sum_{i \in \mathcal{R}} \frac{\lVert \mathbf{u}_i \rVert_2^2}{\tilde{p}_i} z_i^2 \\
%         &\leq \frac{2d}{k} \lVert \mathbf{z} \rVert_2^2
%     \end{split}\end{align}
%     Thus, to keep the probability bigger than $1-\delta'$, $k=O(\frac{d}{\epsilon \delta'})$ is enough, which implies that $k=O(\frac{d \log (d / \delta)}{\epsilon^2})$ in lemma \ref{lemma:subspaceembedding} is sufficient for this guarantee. Putting them together yields that, with probability at least $1-\delta'$, we have
%     \begin{align}\begin{split}
%         \lVert \tilde{\mathbf{y}}^* - \mathbf{y}^* \rVert_2^2 \leq 4 \lVert \mathbf{U}^T \mathbf{S}^T \mathbf{S} \mathbf{U} (\tilde{\mathbf{y}}^* - \mathbf{y}^*) \rVert_2^2 = 4 \lVert \mathbf{U}^T \mathbf{S}^T \mathbf{S} (\mathbf{b} - \mathbf{U} \mathbf{y}^*) \rVert_2^2 \leq 4 \epsilon \lVert \mathbf{U} \mathbf{y}^* - \mathbf{b} \rVert_2^2
%     \end{split}\end{align}
%     Then, adjusting the constant factor completes the proof.
% \end{proof}

\section{Experiments}\label{sec:experiments}
With our main theoretical result in place, we experimentally evaluate the performance of our pivotal sampling methods on active regression problems with low-dimensional structure. The benefits of leverage score sampling over uniform sampling for such problems has already been well establish (see e.g. \cite{HamptonDoostan:2015} or \cite{GHM22}) and we provide some additional experiments confirming this fact in Appendix \ref{sec:appendix:experiments}. So, we focus on comparing our pivotal methods to the widely used baseline of \emph{independent} Bernouilli leverge score sampling. 

\noindent \textbf{Test Problems.} We consider several test problems inspired by applications to parametric PDEs. In these problems, we are given a differential equation with parameters, and seek to compute some observable quantity of interest (QoI) for different choices of parameters. This can be done directly by solving the PDE, but doing so is computationally expensive. Instead, the goal is to use a small number of solutions to the PDE to fit a surrogate model (in our case, a low-degree polynomial) that approximates the QoI well, either over a given parameter range, or on average for parameters drawn from a non-uniform distribution (e.g. Gaussian). The problem is naturally an active regression problem because we can select exactly what choices of parameters we solve the PDE for. 

The first differential equation we consider models the displacement $x$ of a damped harmonic oscillator with a sinusoidal force over time $t$. 
\begin{align*}\begin{split}
    \frac{d^2 x}{d t^2}(t) + c \frac{dx}{dt}(t) + k x(t) = f \cos(\omega t), \quad x(0)=x_0, \quad \frac{dy}{dt}(0)=x_1
\end{split}\end{align*}
The equation has four parameters; damping coefficient $c$, spring constant $k$, forcing amplitude $f$, and frequency $\omega$. As a QoI, we consider the maximum oscillator displacement after $20.0$ seconds. We fix $c = 0.5$ and $f = 0.5$, and seek to approximate this displacement over the range domain $k \times \omega = [1,3] \times [0,2]$, which we shift to $k' \times \omega' = [-1, 1] \times [-1, 1]$.

We also consider the heat equation for values of $x \in [0,1]$ with a time-dependent boundary equation and sinusoidal initial condition paramaterized by a frequency $\omega$. The heat equation that we consider describes the temperature $f(t,x)$ by the partial differential equation.
\begin{align*}\begin{split}
    \pi \frac{\partial f}{\partial t} = \frac{\partial^2 f}{\partial x^2}, \quad f(0,t)=0, \quad f(x,0)=\sin(\omega \pi x), \quad \pi e^{-t} + \frac{\partial f(1, t)}{\partial t} = 0
\end{split}\end{align*}
As a QoI, we seek to estimate the maximum temperature over all values of $x$ for $t \in [0,3]$ and  $\omega \in \times [0,5]$. Again we shift and scale to $t' \times \omega' = [-1,1] \times [-1,1]$.

\begin{figure}[t]
\vspace{-1em}
    \centering
    \begin{subfigure}[b]{0.49\textwidth}
        \centering
        % \includegraphics[width=\linewidth]{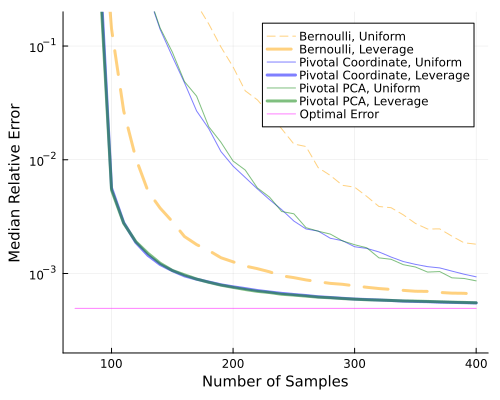}
        \includegraphics[width=.7\linewidth]{NeurIPS/plots/sec5_spring_2.png}
        \vspace{-.5em}
        \caption{Damped Harmonic Oscillator.}    
    \end{subfigure}
    \hfill
    \centering
    \begin{subfigure}[b]{0.49\textwidth}
        \centering
        % \includegraphics[width=\linewidth]{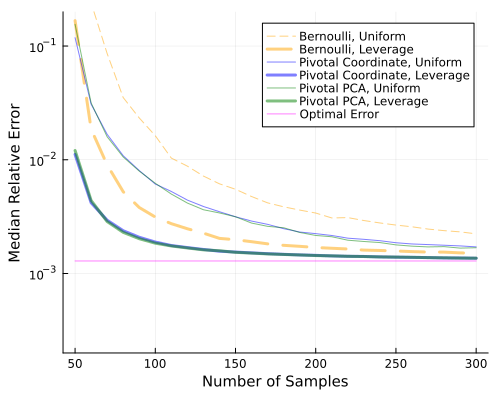}
        \includegraphics[width=.7\linewidth]{plots/sec5_heat_2.png}
                \vspace{-.5em}
        \caption{Heat Equation.}    
    \end{subfigure}
    \caption{Results for degree $12$ active polynomial regression for the damped harmonic oscillator QoI in (a) and the heat equation QoI in (b). Our leverage-score based pivotal method outperforms standard Bernouilli leverage score sampling, suggesting the benefits of spatially-aware sampling.}
    \label{fig:sec5:result}
    \vspace{-1em}
\end{figure}

\begin{figure}[b]
    \vspace{-1em}
    \centering
    \begin{subfigure}[b]{0.32\textwidth}
        \centering
        \includegraphics[width=.9\linewidth]{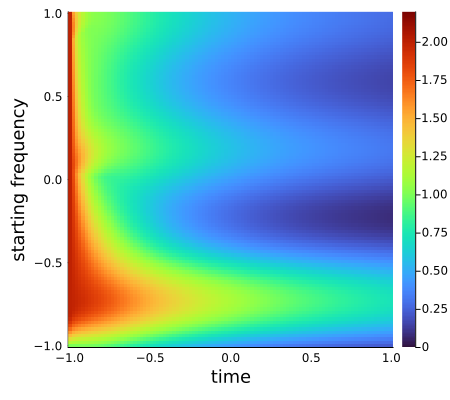}
                \vspace{-.5em}
        \caption{Target Function.}    
    \end{subfigure}
    \hfill
    \centering
    \begin{subfigure}[b]{0.32\textwidth}
        \centering
        % \includegraphics[width=\linewidth]{NeurIPS/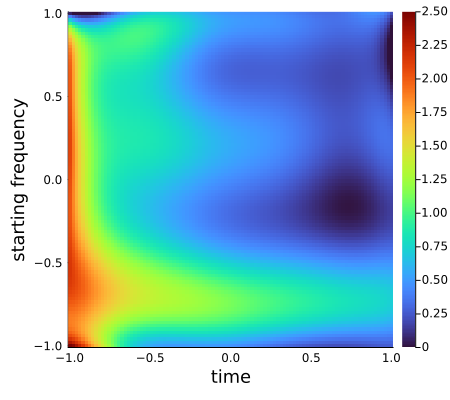}
        \includegraphics[width=.9\linewidth]{NeurIPS/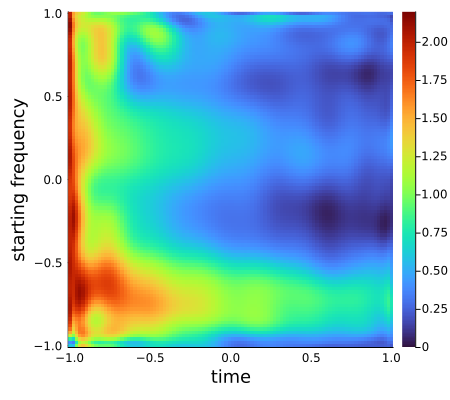}
                \vspace{-.5em}
        \caption{Bernoulli Sampling.}    
    \end{subfigure}
    \hfill
    \centering
    \begin{subfigure}[b]{0.32\textwidth}
        \centering
        % \includegraphics[width=\linewidth]{NeurIPS/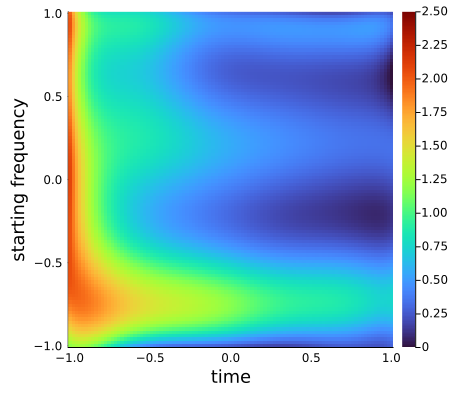}
        \includegraphics[width=.9\linewidth]{NeurIPS/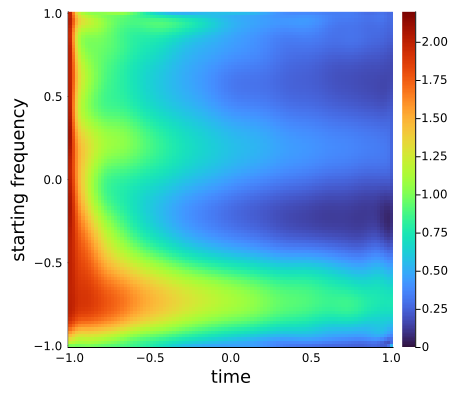}
                \vspace{-.5em}
        \caption{Pivotal Sampling (our method).}    
    \end{subfigure}
    \caption{
    Visualizations of a polynomial approximation to the maximum temperature of a heat diffusion problem, as a function of time and starting condition. (a) is the target value and both (b) and (c) draw $250$ samples using the leverage score and perform polynomial regression of degree $20$.  However, (b) uses Bernoulli sampling while (c) employs our PCA-based pivotal sampling. The pivotal sampling method results in a fit with fewer artifacts that better match the target.}
    \label{fig:sec5:heat}
    \vspace{-1em}
\end{figure}

\noindent \textbf{Data Matrix.} For both problem, we construct $\bv{A}$ by uniformly selecting $n = 10^5$ data points in the 2-dimensional parameter range of interest. We then add all polynomial features of degree $p=12$ as discussed in Section \ref{subsec:contributions}. We compute sampled entries from the target vector $\bv{b}$ using standard MATLAB routines. Results comparing our PCA-based pivotal method and Bernoulli leverage score sampling are show in Figure \ref{fig:sec5:result}. We report median normalized error ${\lVert \mathbf{A} \tilde{\mathbf{x}}^* - \mathbf{b} \rVert_2^2}/{\lVert \mathbf{b} \rVert_2^2}$ after $1000$ trials. 
By drawing more samples from $\bv{b}$, the errors of all methods eventually converge to the optimal error  ${\lVert \mathbf{A} {\mathbf{x}}^* - \mathbf{b} \rVert_2^2}/{\lVert \mathbf{b} \rVert_2^2}$, but clearly the pivotal method requires less samples to achieve a given level of accuracy, confirming the benefits of spatially-aware sampling.

We also visualize results for the damped harmonic oscillator in Figure \ref{fig:sec1:comparison}, showing approximations obtained with $250$ samples. Visualizations for the heat equation are given in Figure \ref{fig:sec5:heat}. For both targets, one can directly see that pivotal sampling improves the performance over Bernoulli sampling. 

We also consider a chemical surface coverage problem from \cite{HamptonDoostan:2015}. Details are relegated to \Cref{sec:appendix:experiments}. Again, we vary two parameters, and seek to fit a surrogate model using a small number of example pairs of parameters. Instead of a uniform distribution, for this problem the rows of $\bv{X}$ are drawn from a truncated Gaussian distribution. We construct $\bv{A}$ using polynomial features of varying degree. Convergence results are shown in Figure \ref{fig:sec7.5:result} and the fit visualized for data near the origin in Figure \ref{fig:sec7.5:surface}. This is a challenging problem since the target function has sharp threshold behavior that is difficult to approximate with a polynomial. However, our pivotal based leverage score sampling performs well, providing a better fit than Bernoulli sampling. Additional experiments, including on 3D problems are reported in Appendix \ref{sec:appendix:experiments}.

\begin{figure}[h]
    \vspace{-.6em}
    % \centering
    % \begin{subfigure}[b]{0.49\textwidth}
    %     \centering
    %     % \includegraphics[width=\linewidth]{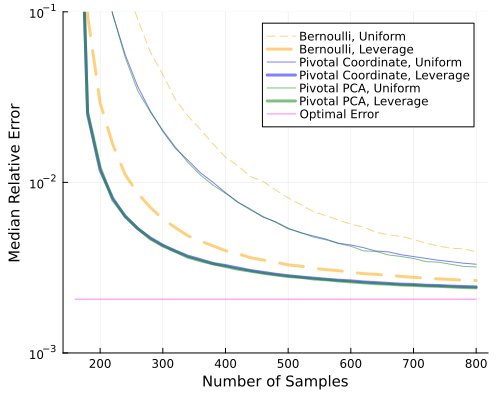}
    %     \includegraphics[width=\linewidth]{NeurIPS/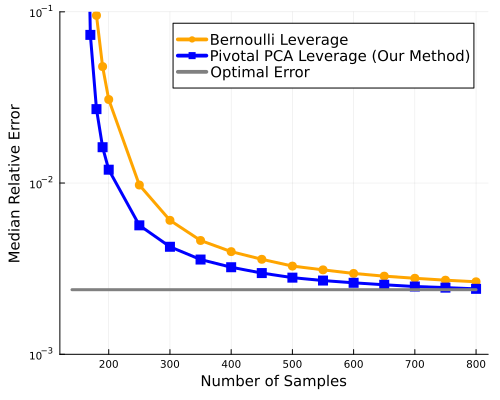}
    %     \caption{Oscillator Model in 3D Space.}    
    % \end{subfigure}
    % \hfill
    \centering
    \begin{subfigure}[b]{0.32\textwidth}
        \centering
        % \includegraphics[width=\linewidth]{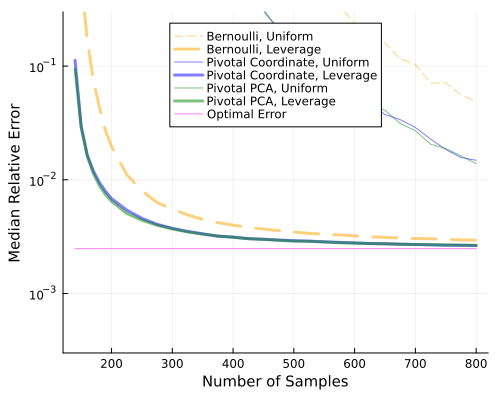}
        \includegraphics[width=\linewidth]{NeurIPS/plots/surface_p15.png}
            \vspace{-.5em}
        \caption{Surface Reaction with $p=15$.}    
    \end{subfigure}
    \hfill
    \centering
    \begin{subfigure}[b]{0.32\textwidth}
        \centering
        % \includegraphics[width=\linewidth]{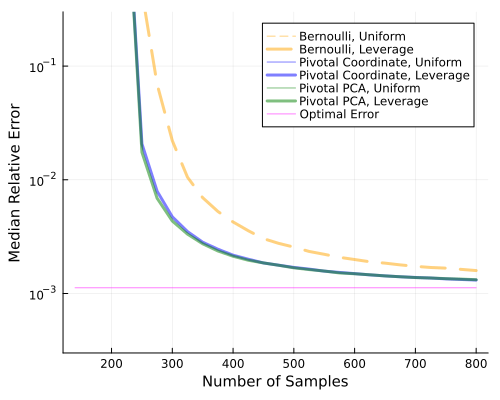}
        \includegraphics[width=\linewidth]{NeurIPS/plots/surface_p20.png}
            \vspace{-.5em}
        \caption{Surface Reaction with $p=20$.}    
    \end{subfigure}
    \hfill
    \centering
    \begin{subfigure}[b]{0.32\textwidth}
        \centering
        % \includegraphics[width=\linewidth]{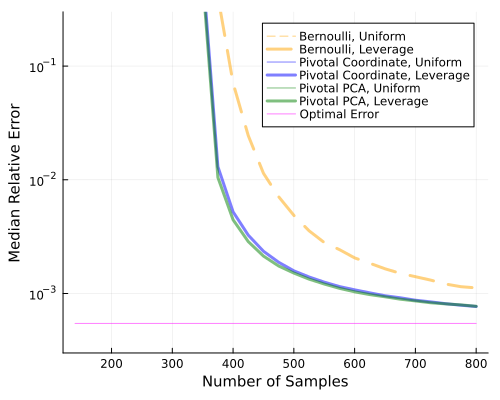}
        \includegraphics[width=\linewidth]{NeurIPS/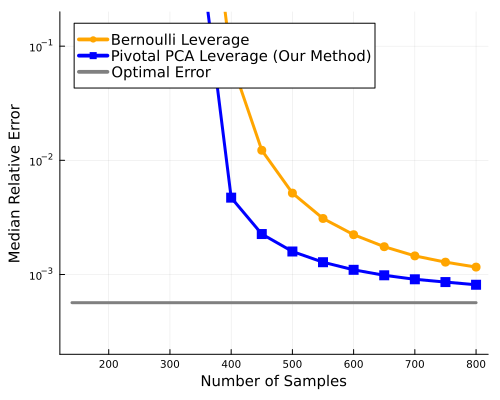}
            \vspace{-.5em}
        \caption{Surface Reaction with $p=25$.}    
    \end{subfigure}
    \caption{Error on fitting a surface reaction model with polynomial degree $p=15, 20, 25$, respectively. As expected, when the degree is higher, the best obtainable error is lower, and more samples are needed for an accurate fit. Again, pivotal sampling outperforms Bernoulli sampling.}
    \label{fig:sec7.5:result}
    \vspace{-.9em}
\end{figure}

\begin{figure}[h!]
    \vspace{-.5em}
    \centering
    \begin{subfigure}[b]{0.32\textwidth}
        \centering
        \includegraphics[width=.9\linewidth]{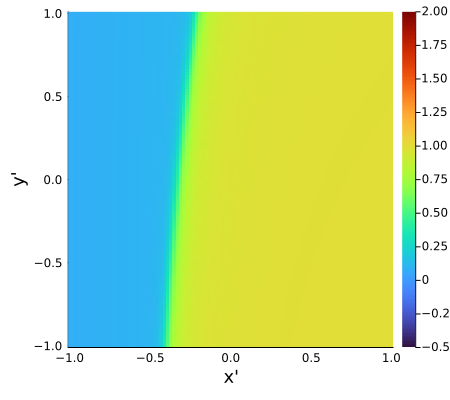}
                    \vspace{-.5em}
        \caption{Target Function.}    
    \end{subfigure}
    \hfill
    \centering
    \begin{subfigure}[b]{0.32\textwidth}
        \centering
        % \includegraphics[width=\linewidth]{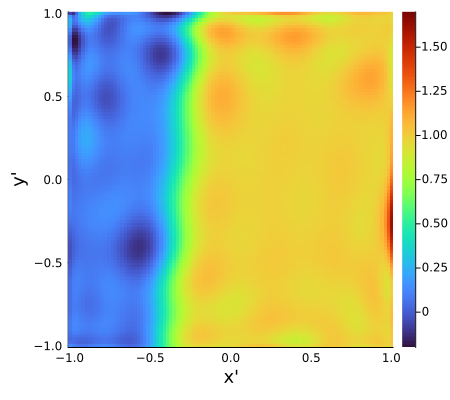}
        \includegraphics[width=.9\linewidth]{NeurIPS/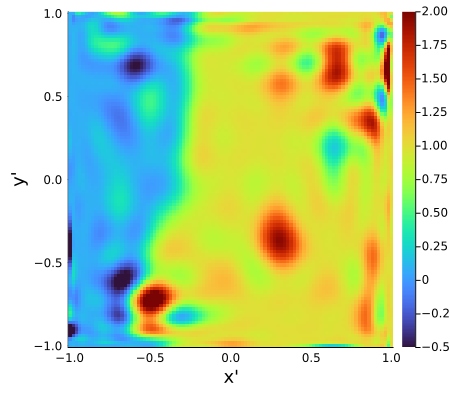}
                    \vspace{-.5em}
        \caption{Bernoulli sampling.}    
    \end{subfigure}
    \hfill
    \centering
    \begin{subfigure}[b]{0.32\textwidth}
        \centering
        % \includegraphics[width=\linewidth]{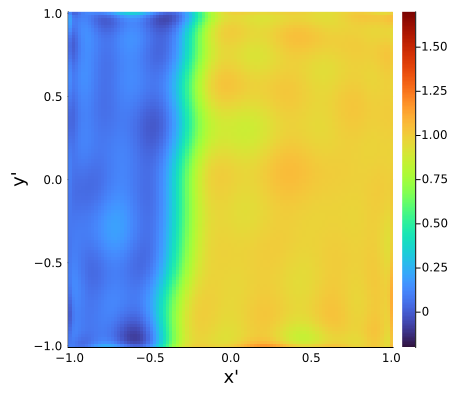}
        \includegraphics[width=.9\linewidth]{NeurIPS/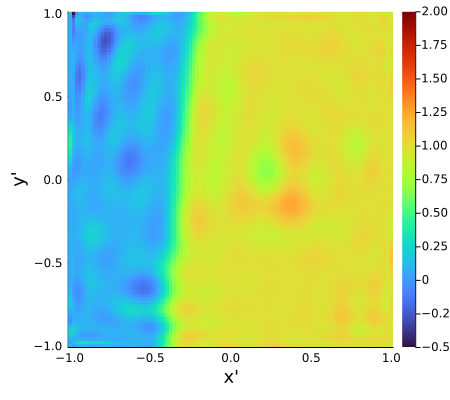}
                    \vspace{-.5em}
        \caption{Pivotal sampling (our method).}    
    \end{subfigure}
    \caption{Visualization of the approximation of the surface reaction model when $400$ samples are used to fit a degree $25$ polynomial.}
    \label{fig:sec7.5:surface}
    \vspace{-1em}
\end{figure}

\section{Conclusion and Future Work}\label{sec:conclusion}
\vspace{-.25em}
In this paper, we introduce a \textit{spatially-aware} pivotal sampling method and show its effectiveness for active linear regression for spatially-structured problems. We prove a general theorem that can be used to analyze the number of samples required in the agnostic setting for any similar method that samples with marginal probabilities proportional to the leverage scores using a distribution that satisfies one-sided $\ell_{\infty}$ independence. Currently, the most obvious limitation of our work is that our theoretical results fall short of explaining why pivotal sampling outperforms (rather than just matches) Bernoulli leverage scores sampling. Providing such an explanation is an interesting avenue for future work. For example, it is possible that for polynomial regression specifically, our method could obtain $O(d/\epsilon)$ complexity, eliminating a $\log(d)$ factor from our current bound.

% that samples according to the inclusion probabilities proportional to the leverage score and that selects samples in a well-balanced manner across the entire domain. Our theoretical result is that our sampling scheme matches the sample complexity $O(d \log d)$ of the Bernoulli sampling with leverage score. We also demonstrates in experiments that our sampling method is superior to the Bernoulli sampling.

% One open question is if one can remove the $\log d$ factor in the sample complexity. Since we discuss the polynomial regression problem, we need to take at least $\Omega(d)$ samples as we have $d$ parameters in our model. However, whether $O(d)$ samples are enough for the $(1+\epsilon)$ approximation or not is unclear.

% In our theoretical analysis, we first argue that the distribution our sampling scheme generates is strongly Rayleigh and negatively associated. However, our argument can apply to any binary-tree-based pivotal sampling, i.e. the spatial structure is not considered in our analysis even though we carefully construct the binary tree so that it somewhat preserves the spatial structure, and we believe this is the key to the improvement. One possible approach to the tighter analysis would be to incorporate the spatial structure.

% We use the Lemma \ref{lemma:l-infty} from \cite{KKS22} in our analysis which states that if the distribution is close enough to independent distribution, we have a Chernoff-type matrix concentration bound that is as tight as the one for independent distribution. Interestingly, the scalar Bernstein inequality for negatively associated random variables proved in \cite{BC19} indicates that the bound for the negatively associated random variables is tighter than the one for independent random variables though both bounds are the same in asymptotic notation. More precise analysis of concentration bounds for negatively associated distributions might explain an advantage of sampling methods inducing negative association over independent sampling such as Bernoulli sampling.

\bibliographystyle{plainnat}
\bibliography{references}

\newpage 
\section{Probability Pre-processing}
\label{app:preprocess}
As discussed in Section \ref{sec:preliminaries}, our sampling methods require computing probabilities $\tilde{p}_1, \ldots, \tilde{p}_n$ where $\tilde{p}_i = \min(1, c_k\cdot \tau_i)$ for some fixed constant $c_k$ chosen so that $\sum_{i=1}^n \tilde{p}_i = k$. We can find such probabilities using a simple iterative method, which takes as input initial ``probabilities''
\begin{align*}
    p_i = \frac{k}{d}\tau_i
\end{align*}
that are proportional to the leverage scores $\tau_1, \ldots, \tau_n$ of $\bv{A}$. Note that $p_i$ could be larger than $1$ if $k > d$. Pseudocode for how to adjust these probabilities is included in Algorithm \ref{algo:ceiling}.
\begin{algorithm}[h!]\caption{Probability Ceiling}\label{algo:ceiling}
    \begin{algorithmic}[1]
        \Require Number of samples to choose $k$, inclusion probabilities $\{p_1, \cdots, p_n\}$.
        \State Set $\tilde{p}_i = p_i$ for all $i \in [n]$.
        \While{$\mathcal{S} = \{i \in [n]; \tilde{p}_i > 1\}$ is not empty}
            \State $\tilde{p}_i \gets 1$ for all $i \in \mathcal{S}$.
            \State $\mathcal{D} = \{i \in [n]; \tilde{p}_i = 1\}$, $\mathcal{R} = \{i \in [n]; \tilde{p}_i < 1\}$.
            \State $\tilde{p}_i \gets \frac{k - |\mathcal{D}|}{\sum_{i \in \mathcal{R}} \tilde{p}_i} \tilde{p}_i$ for all $i \in \mathcal{R}$.
        \EndWhile 
        \Return $\{\tilde{p}_1, \cdots, \tilde{p}_n\}$
    \end{algorithmic}
\end{algorithm}

To see that the method returns $\tilde{p}_i$ with the desired properties, first note that $\frac{k - |\mathcal{D}|}{\sum_{i \in \mathcal{R}} \tilde{p}_i}$ is always greater than $1$. In particular, at the beginning of the while loop, we always have the invariant that $\sum_{i=1}^n \tilde{p}_i = k$. Accordingly, $\sum_{i\in \mathcal{R}}\tilde{p}_i = k - \sum_{i\in \mathcal{D}}\tilde{p}_i < k - |\mathcal{D}|$. As a result, at the end of the algorithm, $\tilde{p}_i$ certainly equals $\min(1, c\cdot \tau_i)$ for some constant $c \geq 1$. And in fact, it must be that $c = c_k$ by the invariant that $\sum_{i=1}^n \tilde{p}_i = k$.

The main loop in Algorithm \ref{algo:ceiling} terminates as soon as $\mathcal{S}$ is empty, which happens after at most $k$ steps. In practice however, the method usually converges in 2 or 3 iterations. Nevertheless, our primary objective is to minimize the number of samples required for an accurate regression solution -- we do not consider runtime costs in detail.

\section{Proof of Main Result}
In this section, we prove our main result, Theorem \ref{thm:main}. The result holds for any row sampling distribution whose marginal probabilities are proportional to the leverage scores of $\bv{A}$, and which satisfies a few additional conditions. In particular, we require that the distribution is both $k$-homogeneous and is $\ell_\infty$-independent with constant parameter $D$. We define the first requirement below, and also recall the definition of $\ell_\infty$-independence from Section \ref{sec:analysis}. For both definitions, we view our subsampling process as a method that randomly selects a binary vector $\bm{\xi} = \{\xi_1, \cdots, \xi_n\}$ from $\{0,1\}^n$. Entries of $1$ in the vector correspond to rows that are sampled from $\bv{A}$ (and thus labels that are observed in $\bv{b}$). Under this notation, our requirement on marginal probabilities is equivalent to requiring that $\E[\xi_i] = \min(1,c \cdot \tau_i) = \tilde{p}_i$ for some fixed constant $c$, where $\tau_i$ is the $i$-th leverage score of $\bv{A}$.

\begin{definition}[Homogeneity]\label{def:homogeneity}
    A distribution over binary vectors $\bm{\xi} = \{\xi_1, \cdots, \xi_n\}$ is \textit{$k$-homogeneous} if all possible realizations of  $\bm{\xi}$ contain exactly $k$ ones.
\end{definition}

\linfinityind*

With these definitions in place, we are ready for our main proof:
\begin{proof}[Proof of Theorem \ref{thm:main}]
As outlined in Section \ref{sec:analysis}, we need to establish two results: a subspace embedding guarantee and an approximate matrix-vector multiplication result. We prove these separately below, then combine the results to complete the proof of the theorem

\noindent{\textbf{Subspace Embedding.}}
Let $\bv{S}$ be a subsampling matrix corresponding to a vector $\bs{\xi}$ selected using a distribution with the properties above. That is, $\bv{S} \in \R^{k\times n}$ contains a row for every index $i$ such that $\xi_i = 1$, and that row has value ${1}/{\sqrt{\tilde{p}_i}}$ at entry $i$, and is $0$ everywhere else. Let $\bv{U}\in \R^{n\times d}$ be an orthonormal span for $\bv{A}$'s columns. We claim that, if $k = O\left(\frac{d \log (d / \delta)}{\alpha^2}\right)$, then with probability $1 - \delta$,
\begin{align}\begin{split}\label{subspacegoal}
    \lVert \mathbf{U}^T \mathbf{S}^T \mathbf{S} \mathbf{U} - \mathbf{I} \rVert_2 \leq \alpha.
\end{split}\end{align}
Note that, since $\left| \lVert \mathbf{S} \mathbf{U} \mathbf{x} \rVert_2^2 - \lVert \mathbf{U} \mathbf{x} \rVert_2^2 \right|
    = \left| \mathbf{x}^T \left(\mathbf{U}^T \mathbf{S}^T \mathbf{S} \mathbf{U} - \mathbf{I}\right) \mathbf{x} \right|
    \leq \lVert \mathbf{U}^T \mathbf{S}^T \mathbf{S} \mathbf{U} - \mathbf{I} \rVert_2 \lVert \mathbf{x} \rVert_2^2$, this is equivalent to showing that, for all $\bv{x}\in \R^d$,
\begin{align}\begin{split}
    (1-\alpha) \lVert \mathbf{U} \mathbf{x} \rVert_2^2 \leq \lVert \mathbf{S} \mathbf{U} \mathbf{x} \rVert_2^2 \leq (1 + \alpha) \lVert \mathbf{U} \mathbf{x} \rVert_2^2.
\end{split}\end{align}

We establish \eqref{subspacegoal} using the following matrix Chernoff bound from \cite{KKS22}:
\begin{lemma}[Matrix Chernoff for $\ell_{\infty}$-independent Distributions]\label{lemma:chernoff}
    Let $\xi_1, \cdots, \xi_n$ be binary random variables with joint distribution $\mu$ that is $z$-homogeneous for any positive integer $z$ and $\ell_{\infty}$-independent with parameter $D$. Let $\mathbf{Y}_1, \ldots, \mathbf{Y}_n \in \R^{d \times d}$ be positive semidefinite (PSD) matrices with spectral norm $\|\mathbf{Y}_i\|_2 \leq R$ for some $R>0$. Let $\mu_{\max} = \lambda_{\max}(\mathbb{E}_{\bm{\xi} \sim \mu}[\sum_{i=1}^n \xi_i \bv{Y}_i])$. Then, for any $\alpha \in (0,1)$ and a fixed constant $c$,
    \begin{align}\begin{split}\label{eq:nachernoff}
        \Pr \left[\lambda_{\max}\left( \sum_{i=1}^n \xi_i \mathbf{Y}_i \right) \geq (1+\alpha) \mu_{\max}\right] \leq d \exp \left( - \frac{\alpha^2 \mu_{\max}}{cRD^2} \right).
    \end{split}\end{align}
    % Note that $\bv{I}$ denotes the $d\times d$ identity matrix.
\end{lemma}

Let $\mathcal{D} = \{i \in [n]: \tilde{p}_i = 1\}$ and $\mathcal{R} = \{i \in [n]: \tilde{p}_i < 1\}$. 
Letting $\bv{u}_i$ denote the $i$-th row of $\bv{U}$,
$\mathbf{U}^T \mathbf{S}^T \mathbf{S} \mathbf{U}$ can be decomposed as:
\begin{align}\begin{split}
    \mathbf{U}^T \mathbf{S}^T \mathbf{S} \mathbf{U} = \sum_{i\in \mathcal{R}} \xi_i \frac{\mathbf{u}_i \mathbf{u}_i^T}{\tilde{p}_i} + \sum_{i\in \mathcal{D}} \mathbf{u}_i \mathbf{u}_i^T.
\end{split}\end{align}
Recall that $\tilde{p}_i = \min(1,c\cdot\tau_i)$ for some constant $c$, and in fact, since $\sum_{i=1}^n \tilde{p}_i = k$ and $\sum_{i=1}^n \tau_i \leq d$, it must be that $c \geq \frac{k}{d}$. Define $p_i = \frac{k}{d}\tau_i$.
For each $i \in \mathcal{D}$, let $m_i$ be an integer such that $m_i \geq p_i$. For $i \in \mathcal{D}$, $p_i = \frac{k}{d}\tau_i \geq 1$, so $m_i \geq 1$. Additionally let $\xi_i^1, \ldots, \xi_i^{m_i} = \xi_i$ be variables that are deterministically $1$. Then we can trivially split up the second sum above so that:
\begin{align}\label{eq:split_sum}
\begin{split}
    \mathbf{U}^T \mathbf{S}^T \mathbf{S} \mathbf{U} = \sum_{i\in \mathcal{R}} \xi_i \frac{\mathbf{u}_i \mathbf{u}_i^T}{\tilde{p}_i} + \sum_{i\in \mathcal{D}} \sum_{j=1}^{m_i} \xi_i^j \frac{\mathbf{u}_i \mathbf{u}_i^T}{m_i}.
\end{split}\end{align}
It is easy to see that $\{\xi_i : i\in \mathcal{R}\} \cup \{\xi_i^j : i\in \mathcal{D}, j\in [m_i]\}$ are $\ell_{\infty}$-independent random variables with parameter $D$ and form a distribution that is $\bar{k}$ homogeneous for $\bar{k} = k + \sum_{i\in d} (m_i - 1)$. So, noting that $\bv{u}_i\bv{u}_i^T$ is PSD, we can apply Lemma \ref{lemma:chernoff} to \eqref{eq:split_sum}. We are left to bound $R$ and $\mu_{\max}$. 

To bound $R$, for $i \in \mathcal{R}$, let $\bv{Y}_i = \frac{\mathbf{u}_i \mathbf{u}_i^T}{\tilde{p}_i}$. For such $i$, we have that $\tilde{p}_i \geq p_i$. So,
\begin{align}\begin{split}
    \lVert \mathbf{Y}_i \rVert_2 = \frac{\lVert \mathbf{u}_i \mathbf{u}_i^T \rVert_2}{\tilde{p}_i} = \frac{\lVert \mathbf{u}_i \rVert_2^2}{\tilde{p}_i} = \frac{\tau_i}{\tilde{p}_i} \leq \frac{\tau_i}{p_i} = \frac{d}{k}.
\end{split}\end{align}
I.e., for all $i \in \mathcal{R}$, $\|\bv{Y}_i\|_2 \leq R$ for $R = \frac{d}{k}$. Additionally, for $i \in \mathcal{D}$, let  $\bv{Y}_i = \frac{\mathbf{u}_i \mathbf{u}_i^T}{m_i}$. Since we chose $m_i \geq p_i$, by the same argument, we have that $\lVert \mathbf{Y}_i \rVert_2 \leq R$ for $R = \frac{d}{k}$.

Next, we bound $\mu_{\max}$ by simply noting that
\begin{align}\begin{split}
    \mathbb{E}\left[ \bv{U}^T\bv{S}^T\bv{S}\bv{U}\right] = \sum_{i\in \mathcal{R}} \mathbb{E}[\xi_i] \frac{\mathbf{u}_i \mathbf{u}_i^T}{\tilde{p}_i} + \sum_{i\in \mathcal{D}} \mathbf{u}_i \mathbf{u}_i^T = \sum_{i\in \mathcal{R}}\bv{u}_i\bv{u}_i^T + \sum_{i\in \mathcal{D}} \mathbf{u}_i \mathbf{u}_i^T = \bv{I}, 
\end{split}\end{align}
where $\bv{I}$ denotes the $d\times d$ identity matrix. It follows that  $\mu_{\max} = 1$.
% The same bound does not immediately hold for $i\in \mathcal{D}$. However, we can handle these indices (which are sampled with probability one) using a simple splitting argument. In particular, for each $i \in \mathcal{D}$, let $m_i$ be an integer such that $m_i \geq p_i$. Note that for $i \in \mathcal{D}$, $p_i = \frac{k}{d}\tau_i \geq 1$, so $m_i \geq 1$.
% Now, consider modifying $\bv{U}$ (and correspondingly, $\bv{A}$ and $\bv{b}$) so that the $i$-th row is replaced with $m_i$ copies of itself, each scaled by $\frac{1}{\sqrt{m_i}}$. I.e., we replace $\mathbf{u}_i$ by $m_i$ rows of $ \frac{1}{\sqrt{m_i}} \mathbf{u}_i$, which we denote by $\mathbf{u}_{i,1}, \ldots, \mathbf{u}_{i,m_i}$. 
% Let $\bar{\mathbf{U}} \in \R^{\bar{n} \times d}$, $\bar{\mathbf{A}} \in \R^{\bar{n} \times d}$, and $\bar{\mathbf{b}} \in \R^{\bar{n}}$ be the matrices after modification, where $\bar{n} \geq n$. Note that we will not actually perform the modification explicitly -- it is only for the sake of argument. We have that for all $\bv{x}$, $\|\bar{\bv{A}}\bv{x} - \bar{\bv{b}}\|_2^2 = \|{\bv{A}}\bv{x} - {\bv{b}}\|_2^2$, so we can work with the unmodified matrices directly.
%
% To match the modification above, consider modifying the distribution $\mu$ over $\xi_1,\ldots, x_n$ to be a distribution $\mu'$ over binary random vectors of length $\bar{n}$ that is equivalent to $\mu$, except that for all $i\in \mathcal{D}$, all rows $\mathbf{u}_{i,1}, \ldots, \mathbf{u}_{i,m_i}$ are selected with probability $1$ (instead of just $\bv{u}_i$).  So, we can 
%
% Let $\bar{\mathbf{Y}}_i$ and $\bar{\mathbf{U}} \in \R^{\bar{n} \times d}$ be the matrices and let $\bar{\mathcal{D}}$ be the set after the modification. Now, for $i \in \bar{\mathcal{D}}$ we have 
% \begin{align}\begin{split}
%     \lVert \bar{\mathbf{Y}}_i \rVert_2 = \frac{\lVert \frac{1}{\sqrt{m_i}} \mathbf{u}_i \frac{1}{\sqrt{m_i}} \mathbf{u}_i^T \rVert_2}{1} = \frac{\lVert \mathbf{u}_i \rVert_2^2}{m_i} \leq \frac{\tau_i}{p_i} = \frac{d}{k} 
% \end{split}\end{align}
% $\mathbf{S}$ and $\bm{\xi}$ can also be expanded accordingly by replacing the $i$-th column of $\mathbf{S}$ with $m_i$ columns of it and by replacing the $i$-th element of $\bm{\xi}$ with $m_i$ elements of it for all $i \in \mathcal{D}$. Notice that we have $\mathbf{u}'^{(i)}^T \mathbf{u}'^{(j)} = \mathbf{u}^{(i)}^T \mathbf{u}^{(j)}$ for all $i, j$ pairs where the superscript specifies a corresponding column in the matrix, so we have $\mathbf{U}^T \mathbf{U} = \bar{\mathbf{U}}^T \bar{\mathbf{U}} = I$ and $\mathbf{U}^T \mathbf{S}^T \mathbf{S} \mathbf{U} = \bar{\mathbf{U}}^T \bar{\mathbf{S}}^T \bar{\mathbf{S}} \bar{\mathbf{U}}$. Let $\mu$ be the distribution over a set of binary random variables $\bm{\xi} = \{\xi_1, \cdots, \xi_n\}$ that our sampling induces. Then, we have
% \begin{align}\begin{split}
%     \mathbb{E}_{\bar{\bm{\xi}} \sim \bar{\mu}} \left[ \sum_{i=1}^{\bar{n}} \bar{\xi}_i \bar{\mathbf{Y}}_i \right] = \sum_{i=1}^{\bar{n}} \mathbb{E}_{\bar{\bm{\xi}} \sim \bar{\mu}}[\bar{\xi}_i] \bar{\mathbf{Y}}_i = \sum_{i=1}^{\bar{n}} \tilde{p}_i \frac{\bar{\mathbf{u}}_i \bar{\mathbf{u}}_i^T}{\tilde{p}_i} = \bar{\mathbf{U}}^T \bar{\mathbf{U}} = \mathbf{U}^T \mathbf{U} = \mathbf{I}
% \end{split}\end{align}
Recalling that we assume that the $\ell_\infty$-independence parameter $D$ is a constant, plugging into Lemma \ref{lemma:chernoff}, we obtain
\begin{align}\begin{split}
    \Pr \left[\lambda_{\max}\left( \mathbf{U}^T \mathbf{S}^T \mathbf{S} \mathbf{U} \right) \geq 1+\alpha \right] &\leq d \exp \left( - \frac{\alpha^2}{cRD^2} \right) \\
    % \Pr \left[ \lVert \bar{\mathbf{U}}^T \bar{\mathbf{S}}^T \bar{\mathbf{S}} \bar{\mathbf{U}} - \mathbf{I} \rVert_2 \geq \alpha \right] &\leq d \exp \left( - \frac{k \alpha^2}{O(d)} \right) \\
    \Pr \left[ \lVert \mathbf{U}^T \mathbf{S}^T \mathbf{S} \mathbf{U} - \mathbf{I} \rVert_2 \geq \alpha \right] &\leq d \exp \left( - \frac{k \alpha^2}{c'd} \right),
\end{split}\end{align}
where $c, c'$ are fixed constants. 
Setting $k = O(\frac{d \log (d / \delta)}{\alpha^2})$, we have $\lVert \mathbf{U}^T \mathbf{S}^T \mathbf{S} \mathbf{U} - \mathbf{I} \rVert_2 \leq \alpha$ with probability at least $1 - \delta$. This establishes \eqref{subspacegoal}.

\noindent{\textbf{Approximate Matrix-Vector Multiplication}.}
With our subspace embedding result in place, we move onto proving the necessary approximate matrix-vector multiplication result that we briefly introduced in Section \ref{sec:analysis}. In particular, we wish to show that, with good probability, $\|\mathbf{U}^T \mathbf{S}^T \mathbf{S} (\mathbf{b} - \mathbf{A} \mathbf{x}^*) \|_2^2 \leq \epsilon \| \mathbf{b} - \mathbf{A} \mathbf{x}^* \|_2^2$ where $\bv{x}^* = \argmin_\bv{x}\|\bv{A}\bv{x} - \bv{b}\|_2^2$. Since the residual of the optimal solution, $\mathbf{b} - \mathbf{A} \mathbf{x}^*$, is the same under any column transformation of $\bv{A}$, we have $\mathbf{b} - \mathbf{A} \mathbf{x}^* = \bv{b} - \bv{U}\bv{y}^*$ where $\bv{y}^* = \argmin_\bv{y}\|\bv{U}\bv{y} - \bv{b}\|_2^2$. We can then equivalently show that, if we take $k = O\left(\frac{d}{\epsilon\delta}\right)$ samples, with probability $1-\delta$,
\begin{align}
\label{eq:matvec_result}
    \| \mathbf{U}^T \mathbf{S}^T \mathbf{S} (\mathbf{b} - \mathbf{U} \mathbf{y}^*) \|_2^2 \leq \epsilon \| \mathbf{b} - \mathbf{U} \mathbf{y}^* \|_2^2.
\end{align}
As in the proof for independent random samples \cite{DrineasKannanMahoney:2006}, we will prove \eqref{eq:matvec_result} by bounding the expected squared error and applying Markov's inequality. In particular, we have
\begin{align}\begin{split}\label{eq:markov}
    \Pr \left[ \lVert \mathbf{U}^T \mathbf{S}^T \mathbf{S} (\mathbf{b} - \mathbf{U} \mathbf{y}^*) \rVert_2^2 \geq \epsilon \lVert \mathbf{b} - \mathbf{U} \mathbf{y}^* \rVert_2^2 \right] \leq \frac{\mathbb{E}\big[\lVert \mathbf{U}^T \mathbf{S}^T \mathbf{S} (\mathbf{b} - \mathbf{U} \mathbf{y}^*) \rVert_2^2 \big]}{\epsilon \lVert \mathbf{b} - \mathbf{U} \mathbf{y}^* \rVert_2^2} .
\end{split}\end{align}
Let $\mathbf{z} = \mathbf{b} - \mathbf{U} \mathbf{y}^*$ and note that $\mathbf{U}^T \mathbf{z} = \mathbf{0}$. The numerator on the right side can be transformed as
\begin{align}\begin{split}
    \mathbb{E}\left[ \lVert \mathbf{U}^T \mathbf{S}^T \mathbf{S} \mathbf{z} \rVert_2^2 \right] = \mathbb{E}\left[ \lVert \mathbf{U}^T \mathbf{S}^T \mathbf{S} \mathbf{z} - \mathbf{U}^T \mathbf{z} \rVert_2^2 \right] 
    = \mathbb{E}\left[ \lVert \mathbf{U}^T (\mathbf{S}^T \mathbf{S} - \mathbf{I}) \mathbf{z} \rVert_2^2 \right]. 
\end{split}\end{align}
Note that above $\mathbf{S}^T \mathbf{S} - \mathbf{I}$ is a diagonal matrix with $i$-th diagonal entry equal to $\frac{1}{\tilde{p}_i} - 1$ if $\xi_i=1$ and $-1$ if $\xi_i=0$. Expanding the $\ell_2$-norm and using that $\xi_i = 1$ and $\tilde{p}_i=1$ for $i \notin \mathcal{R}$, we have
\begin{align}
    \mathbb{E}\big[ \lVert \mathbf{U}^T \mathbf{S}^T \mathbf{S} \mathbf{z} \rVert_2^2 \big] &= \sum_{j=1}^d \mathbb{E} \left[ \left( \sum_{i=1}^n \left(\frac{\xi_i}{\tilde{p}_i}-1\right) u_{ij} z_i \right)^2 \right]
    % &= \sum_{j=1}^d \mathbb{E} \left[ \left( \sum_{i \in \mathcal{D}} \left(\frac{\xi_i}{\tilde{p}_i}-1\right) u_{ij} z_i + \sum_{i \in \mathcal{R}} \left(\frac{\xi_i}{\tilde{p}_i}-1\right) u_{ij} z_i \right)^2 \right] 
    \nonumber \\
    &= \sum_{j=1}^d \mathbb{E} \left[ \left( \sum_{i \in \mathcal{R}} \left(\frac{\xi_i}{\tilde{p}_i}-1\right) u_{ij} z_i \right)^2 \right] \nonumber \\
    &= \sum_{j=1}^d \sum_{i \in \mathcal{R}} \sum_{l \in \mathcal{R}} \frac{c_{il}}{\tilde{p}_i \tilde{p}_l} u_{ij} z_i u_{lj} z_l, 
\end{align}
where $c_{il} = \text{Cov}(\xi_i, \xi_l) = \E[(\xi_i - \tilde{p}_i)(\xi_l - \tilde{p}_l)] = \E[\xi_i\xi_l] - \tilde{p}_i\tilde{p}_l$. We will show that 
\begin{align}
    \sum_{j=1}^d \sum_{i \in \mathcal{R}} \sum_{l \in \mathcal{R}} \frac{c_{il}}{\tilde{p}_i \tilde{p}_l} u_{ij} z_i u_{lj} z_l 
    \leq D \sum_{j=1}^d \sum_{i \in \mathcal{R}} \frac{u_{ij}^2 z_i^2}{\tilde{p}_i},
\end{align}
where $D$ is the $\ell_\infty$-independence parameter. It suffices to show that for every $j$, we have
\begin{align}
\label{eq:before_psd}
    \sum_{i \in \mathcal{R}} \sum_{l \in \mathcal{R}} \frac{c_{il}}{\tilde{p}_i \tilde{p}_l} u_{ij} z_i u_{lj} z_l \leq D \sum_{i \in \mathcal{R}} \frac{u_{ij}^2 z_i^2}{\tilde{p}_i}.
\end{align}
Define a symmetric matrix $\bv{M} \in \R^{|\mathcal{R}| \times |\mathcal{R}|}$ with entries $m_{il} = \frac{c_{il}}{\sqrt{\tilde{p}_i} \sqrt{\tilde{p}_l}}$ and a vector $\bv{v} \in \R^{|\mathcal{R}|}$ with entries $v_i = \frac{u_{ij} z_i}{\sqrt{\tilde{p}_i}}$. The desired result in \eqref{eq:before_psd} can be expressed as 
\begin{align}\begin{split}
    \bv{v}^T \bv{M} \bv{v} \leq D \lVert \bv{v} \rVert_2^2.
\end{split}\end{align}
With $\mathcal{S} = \emptyset$, the one-sided $\ell_\infty$-independence condition implies that, for all $i \in [n]$, 
\begin{align}\begin{split}
    \sum_{l \in \mathcal{R}} \left| \frac{\E[\xi_i \xi_l]}{\tilde{p}_i} -\tilde{p}_l \right| = \sum_{l \in \mathcal{R}} \frac{\sqrt{\tilde{p}_l}}{\sqrt{\tilde{p}_i}} | m_{il} | \leq D.
\end{split}\end{align}
Equivalently, if we define a diagonal matrix $\bm{\Lambda} \in \R^{|\mathcal{R}| \times |\mathcal{R}|}$ such that $\Lambda_{ii} = \sqrt{p_i}$,  then, we have shown:
\begin{align}\begin{split}
    \lVert \bm{\Lambda}^{-1} \bv{M} \bm{\Lambda} \rVert_{\infty} \leq D,
\end{split}\end{align}
where for a matrix $\bv{B}$, $\|\bv{B}\|_{\infty}$ denotes $\max_i \sum_{l} |\bv{B}_{il}| = \max_\bv{x} \frac{\|\bv{B}\bv{x}\|_{\infty}}{\|\bv{x}\|_{\infty}}$.
It follows that the largest eigenvalue of $\bm{\Lambda}^{-1} \bv{M} \bm{\Lambda}$ is at most $D$ and thus, the largest eigenvalue of $\bv{M}$ is also at most $D$. Therefore, we have $\bv{v}^T \bv{M} \bv{v} \leq D \lVert \bv{v} \rVert_2^2$. Considering that $\tilde{p}_i \geq p_i = \frac{k}{d}\tau_i$ for $i \in \mathcal{R}$, we have
\begin{align}
    \sum_{j=1}^d \mathbb{E} \left[ \left( \sum_{i \in \mathcal{R}} \left(\frac{\xi_i}{\tilde{p}_i}-1\right) u_{ij} z_i \right)^2 \right] &\leq D \sum_{j=1}^d \sum_{i \in \mathcal{R}} \frac{u_{ij}^2 z_i^2}{\tilde{p}_i} \nonumber\\
    &= D\sum_{i \in \mathcal{R}} \frac{z_i^2}{\tilde{p}_i}\sum_{j=1}^d u_{ij}^2 \nonumber \\
    &= D\sum_{i \in \mathcal{R}} \frac{z_i^2}{\tilde{p}_i}\tau_i \nonumber\\ &\leq D \frac{d}{k} \lVert \bv{z} \rVert_2^2.
\end{align}

Recalling that $D$ is a constant and $\bv{z} = \bv{U}\bv{y}^* - \bv{b}$, we can plug into (\ref{eq:markov}) with $k = O\left( \frac{d}{\epsilon \delta} \right)$ samples, which proves that \eqref{eq:matvec_result}  holds with probability at least $1 - \delta$.

\noindent{\textbf{Putting it all together.}}
With \eqref{subspacegoal} and \eqref{eq:matvec_result} in place, we can prove our main result, \eqref{eq:main_thm_gaur} of \ref{thm:main}. We follow a similar approach to \cite{Woodruff:2014}. By reparameterization, proving this inequality is equivalent to showing that
\begin{align}\label{eq:reparam}\begin{split}
    \lVert \mathbf{U} \tilde{\mathbf{y}}^* - \mathbf{b} \rVert_2^2 \leq (1+\epsilon) \lVert \mathbf{U} \mathbf{y}^* - \mathbf{b} \rVert_2^2,
\end{split}\end{align}
where $\tilde{\mathbf{y}}^* = \argmin_{\bv{y}}\|\bv{S}\bv{U}\bv{y} - \bv{S}\bv{b}\|_2^2$ and ${\mathbf{y}}^* = \argmin_{\bv{y}}\|\bv{U}\bv{y} - \bv{b}\|_2^2$.
Since $\mathbf{y}^*$ is the minimizer of $\lVert \mathbf{U}\mathbf{y} - \mathbf{b} \rVert_2^2$, we have $\nabla_{\mathbf{y}} \lVert \mathbf{U} \mathbf{y} - \mathbf{b} \rVert_2^2 = 2 \mathbf{U}^T (\mathbf{U}\mathbf{y}- \mathbf{b}) = \mathbf{0}$ at $\mathbf{y}^*$. This indicates that $\mathbf{U} \mathbf{y}^* - \mathbf{b}$ is orthogonal to any vector in the column span of $\mathbf{U}$. Particularly, $\mathbf{U} \mathbf{y}^* - \mathbf{b}$ is orthogonal to $\mathbf{U} \tilde{\mathbf{y}}^* - \mathbf{U} \mathbf{y}^*$. Therefore, by the Pythagorean theorem, we have 
\begin{align}
\label{original_split}
    \lVert \mathbf{U} \tilde{\mathbf{y}}^* - \mathbf{b} \rVert_2^2 = \lVert \mathbf{U} \mathbf{y}^* - \mathbf{b} \rVert_2^2 + \lVert \mathbf{U} \tilde{\mathbf{y}}^* - \mathbf{U} \mathbf{y}^* \rVert_2^2 = \lVert \mathbf{U} \mathbf{y}^* - \mathbf{b} \rVert_2^2 + \lVert \tilde{\mathbf{y}}^* - \mathbf{y}^* \rVert_2^2.
\end{align}
So to prove \eqref{eq:reparam}, it suffices to show that 
\begin{align}\begin{split}
    \lVert \tilde{\mathbf{y}}^* - \mathbf{y}^* \rVert_2^2 \leq \epsilon \lVert \mathbf{U} \mathbf{y}^* - \mathbf{b} \rVert_2^2.
\end{split}\end{align}
Applying \eqref{subspacegoal} with $k = O(d\log d + d/\epsilon) \geq O(d\log d)$ and $\delta = 1/200$, we have that, with probability $99.5/100$, $\lVert \mathbf{U}^T \mathbf{S}^T \mathbf{S} \mathbf{U} - \mathbf{I} \rVert_2 \leq \frac{1}{2}$.
Then, by triangle inequality,
\begin{align}
\label{eq:matvec_step1}
    \lVert \tilde{\mathbf{y}}^* - \mathbf{y}^* \rVert_2 &\leq \lVert \mathbf{U}^T \mathbf{S}^T \mathbf{S} \mathbf{U} (\tilde{\mathbf{y}}^* - \mathbf{y}^*) \rVert_2 + \lVert \mathbf{U}^T \mathbf{S}^T \mathbf{S} \mathbf{U} (\tilde{\mathbf{y}}^* - \mathbf{y}^*) - (\tilde{\mathbf{y}}^* - \mathbf{y}^*) \rVert_2 \nonumber\\
    &\leq \lVert \mathbf{U}^T \mathbf{S}^T \mathbf{S} \mathbf{U} (\tilde{\mathbf{y}}^* - \mathbf{y}^*) \rVert_2 + \lVert \mathbf{U}^T \mathbf{S}^T \mathbf{S} \mathbf{U} - \mathbf{I} \rVert_2 \lVert \tilde{\mathbf{y}}^* - \mathbf{y}^* \rVert_2  \nonumber \\
    &\leq \lVert \mathbf{U}^T \mathbf{S}^T \mathbf{S} \mathbf{U} (\tilde{\mathbf{y}}^* - \mathbf{y}^*) \rVert_2 + \frac{1}{2} \lVert \tilde{\mathbf{y}}^* - \mathbf{y}^* \rVert_2.
\end{align}
Rearranging, we conclude that $\lVert \tilde{\mathbf{y}}^* - \mathbf{y}^* \rVert_2^2 \leq 4 \lVert \mathbf{U}^T \mathbf{S}^T \mathbf{S} \mathbf{U} (\tilde{\mathbf{y}}^* - \mathbf{y}^*) \rVert_2^2$. Since $\tilde{\mathbf{y}}^*$ is the minimizer of $\lVert \mathbf{S} \mathbf{U} \mathbf{y} - \mathbf{S} \mathbf{b} \rVert_2^2$, we have $\nabla_{\mathbf{y}} \lVert \mathbf{S}\mathbf{U}\mathbf{y} - \mathbf{S} \mathbf{b} \rVert_2^2 = 2(\mathbf{S}\mathbf{U})^T(\mathbf{S}\mathbf{U}\tilde{\mathbf{y}}^* - \mathbf{S} \mathbf{b}) = \mathbf{0}$. Thus,
\begin{align}\begin{split}
    \lVert \mathbf{U}^T \mathbf{S}^T \mathbf{S} \mathbf{U} (\tilde{\mathbf{y}}^* - \mathbf{y}^*) \rVert_2^2 = \lVert \mathbf{U}^T \mathbf{S}^T (\mathbf{S} \mathbf{U} \tilde{\mathbf{y}}^* - \mathbf{S} \mathbf{b} + \mathbf{S} \mathbf{b} - \mathbf{S} \mathbf{U} \mathbf{y}^*) \rVert_2^2 = \lVert \mathbf{U}^T \mathbf{S}^T \mathbf{S} (\mathbf{b} - \mathbf{U} \mathbf{y}^*) \rVert_2^2.
\end{split}\end{align}
Applying \eqref{eq:matvec_result} with $\delta = 1/200$ and combining with \eqref{eq:matvec_step1} using union bound, we thus have that with probability $99/100$, 
\begin{align}\begin{split}
    \lVert \tilde{\mathbf{y}}^* - \mathbf{y}^* \rVert_2^2 \leq 4 \lVert \mathbf{U}^T \mathbf{S}^T \mathbf{S} \mathbf{U} (\tilde{\mathbf{y}}^* - \mathbf{y}^*) \rVert_2^2 = 4 \lVert \mathbf{U}^T \mathbf{S}^T \mathbf{S} (\mathbf{b} - \mathbf{U} \mathbf{y}^*) \rVert_2^2 \leq 4 \epsilon \lVert \mathbf{U} \mathbf{y}^* - \mathbf{b} \rVert_2^2.
\end{split}\end{align}
Plugging back into \eqref{original_split} and adjusting $\epsilon$ by a constant factor completes the proof of Theorem \ref{thm:main}.
\end{proof}

\subsection{Proof of Corollary \ref{corr:main}}
\label{sec:appendix:l_infty} 
We briefly comment on how to derive Corollary \ref{corr:main} from our main result. By definition of the method, we immediately have that our binary-tree-based pivotal sampling is $k$-homogeneous, so we just need to show that it produces a distribution over samples that is $\ell_\infty$-independent with constant parameter $D$. 
% As discussed in Section \ref{sec:methods}, our sampling selects exactly $k = \sum_{i=1}^n \tilde{p}_i$ samples, and thus, $k$-homogeneous. So, it suffices to show that $D$ is a constant. 
This fact can be derived directly from a line of prior work. In particular, \cite{BJ12} proves that binary-tree-based pivotal sampling satisfies negative association, \cite{PP14} proves that negative association implies a stochastic covering property, and \cite{KKS22} shows that any distribution satisfying the stochastic covering property has $\ell_\infty$-independence parameter at most $D = 2$. We also give an arguably more direct alternative proof below based on a natural conditional variant of negative correlation. 

\begin{proof}[Proof of Corollary \ref{corr:main}]
\cite{BJ12} proves that binary-tree-based pivotal sampling is \emph{conditionally negatively associated} (CNA). Given a set $\mathcal{C} \subseteq [n]$ and a vector $\bv{c} \in \{0, 1\}^{|\mathcal{C}|}$, we denote the condition $\xi_c = c_i$ for all $i \in \mathcal{C}$ by $C$.  Conditional negative association asserts that, for all $C$, any disjoint subsets $\mathcal{S}$ and $\mathcal{T}$ of $\{\xi_1, \cdots, \xi_n \}$, and any non-decreasing functions  $f$ and $g$,
\begin{align}\begin{split}\label{eq:cna}
    \mathbb{E}[f(\mathcal{S}) | C]\cdot \mathbb{E}[g(\mathcal{T}) | C] \geq \mathbb{E}[f(\mathcal{S})g(\mathcal{T}) | C].
\end{split}\end{align}
When $\mathcal{S}$ and $\mathcal{T}$ are singletons and $f$ and $g$ are the identity functions, we have
\begin{align}\begin{split}
\label{eq:cpnc}
    \mathbb{E}[\xi_i | C] \cdot \mathbb{E}[\xi_j | C] \geq \mathbb{E}[\xi_i \xi_j | C].
\end{split}\end{align}
Since $\E[\xi_i | C] = \Pr[\xi_i = 1 | C]$, we also have 
\begin{align}\begin{split}\label{eq:cnc}
    \Pr[\xi_i = 1 | C] \Pr[\xi_j = 1 | C] &\geq \Pr[\xi_i = 1 \wedge \xi_j = 1 | C] \\
    \Pr[\xi_i = 1 | C] &\geq \frac{\Pr[\xi_i = 1 \wedge \xi_j = 1 | C]}{\Pr[\xi_j = 1 | C]} = \Pr[\xi_i = 1 | \xi_j = 1 \wedge C].
\end{split}\end{align}
In words, the entries of our vector $\bs{\xi}$ are negatively correlated, even conditioned on fixing any subset of entries in the vector.  We will use this fact to show that $\sum_{j \in [n]} | \mathcal{I}_{\mu}^{\mathcal{S}}(i, j) | \leq 2$ for all $i \in [n]$, where $\mathcal{I}_{\mu}^{\mathcal{S}}$ is as defined in Definition \ref{def:l-infty}. For a fixed $i$, let $q_i = \Pr_{\bm{\xi} \sim \mu} [\xi_i=1 | \xi_{\ell} = 1 \forall \ell \in \mathcal{S}]$. Then, we have $| \mathcal{I}_{\mu}^{\mathcal{S}}(i, j) | = 0$ for $j \in \mathcal{S}$, $| \mathcal{I}_{\mu}^{\mathcal{S}}(i, j) | = 1 - q_i$ for $j = i$, and $\sum_{j \in [n] \backslash \mathcal{S} \cup \{ i \}} | \mathcal{I}_{\mu}^{\mathcal{S}}(i, j) | = 1 - q_i$. The last fact follows from $k$-homogeneity, i.e. that $\sum_{i=1}^n q_i = k$, and \eqref{eq:cnc}, which implies that $\mathcal{I}_{\mu}^{\mathcal{S}}(i, j) \leq 0$ for all $j$ in $[n] \backslash \mathcal{S} \cup \{ i \}$, so $\sum_{j \in [n] \backslash \mathcal{S} \cup \{ i \}} | \mathcal{I}_{\mu}^{\mathcal{S}}(i, j) | = \left|\sum_{j \in [n] \backslash \mathcal{S} \cup \{ i \}} \mathcal{I}_{\mu}^{\mathcal{S}}(i, j) \right|$. Thus, we have $\sum_{j \in [n]} | \mathcal{I}_{\mu}^{\mathcal{S}}(i, j) | = 2 - 2q_i \leq 2$.
\end{proof}

Notably, in the above result, we only required property \eqref{eq:cpnc} to hold when $C$ is the event that $\xi_{i_1}= 1, \ldots, \xi_{i_q} = 1$ for some set of indices $i_1, \ldots, i_q$. I.e., it suffices to show that, for any $\mathcal{S}\subseteq [n]$,
\begin{align}\begin{split}
\label{eq:cpnc2}
    \mathbb{E}[\xi_i | \xi_\ell = 1 \forall \ell \in \mathcal{S}] \cdot \mathbb{E}[\xi_j | \xi_\ell = 1 \forall \ell \in \mathcal{S}] \geq \mathbb{E}[\xi_i \xi_j | \xi_\ell = 1 \forall \ell \in \mathcal{S}].
\end{split}\end{align}
We provide a self-contained proof of this fact for tree-based pivotal sampling in Appendix \ref{sec:appendix:cnpc}. Our proof is from first principals, and does not require prior work on Conditional Negative Association or the strongly Rayleigh property.

% \begin{definition}[One-sided $\ell_{\infty}$-independence]\label{def:l-infty}
%     Given a distribution $\mu$ over a set of binary random variables $\bm{\xi} = \{\xi_1, \cdots, \xi_n\}$, a set of elements $\mathcal{S} \subset [n]$ and two other indexes $i,j \in [n] \backslash \mathcal{S}$, the one-sided influence matrix $\mathcal{I}_{\mu}^{\mathcal{S}}$ is defined as 
%     \begin{align}\begin{split}
%         \mathcal{I}_{\mu}^{\mathcal{S}}(i, j) = \Pr_{\bm{\xi} \sim \mu} [\xi_j=1 | \xi_i = 1 \wedge \xi_{\ell} = 1 \forall \ell \in \mathcal{S}] - \Pr_{\bm{\xi} \sim \mu} [\xi_j=1 | \xi_{\ell} = 1 \forall \ell \in \mathcal{S}]
%     \end{split}\end{align}
%     Then, the distribution $\mu$ is one-sided $\ell_{\infty}$-independent with parameter $D_{\text{inf}}$ if, for all subsets $\mathcal{S} \subset [n]$, it satisfies
%     \begin{align}\begin{split}
%         \lVert \mathcal{I}_{\mu}^{\mathcal{S}} \rVert_{\infty} = \max_{i \in [n]} \sum_{j \in [n]} | \mathcal{I}_{\mu}^{\mathcal{S}}(i, j) | \leq D_{\text{inf}}
%     \end{split}\end{align}
% \end{definition}

% \begin{definition}[One-sided $\ell_{\infty}$-independence]\label{def:l-infty}
%     Given a distribution $\mu$ over a set of binary random variables $\bm{\xi} = \{\xi_1, \cdots, \xi_n\}$, a set of elements $\mathcal{S} \subset [n]$ and two other indexes $i,j \in [n] \backslash \mathcal{S}$, the one-sided influence matrix $\mathcal{I}_{\mu}^{\mathcal{S}}$ is defined as 
%     \begin{align}\begin{split}
%         \mathcal{I}_{\mu}^{\mathcal{S}}(i, j) = \Pr_{\bm{\xi} \sim \mu} [\xi_j=1 | \xi_i = 1 \wedge \xi_{\ell} = 1 \forall \ell \in \mathcal{S}] - \Pr_{\bm{\xi} \sim \mu} [\xi_j=1 | \xi_{\ell} = 1 \forall \ell \in \mathcal{S}]
%     \end{split}\end{align}
%     Then, the distribution $\mu$ is one-sided $\ell_{\infty}$-independent with parameter $D_{\text{inf}}$ if, for all subsets $\mathcal{S} \subset [n]$, it satisfies
%     \begin{align}\begin{split}
%         \lVert \mathcal{I}_{\mu}^{\mathcal{S}} \rVert_{\infty} = \max_{i \in [n]} \sum_{j \in [n]} | \mathcal{I}_{\mu}^{\mathcal{S}}(i, j) | \leq D_{\text{inf}}
%     \end{split}\end{align}
% \end{definition}

\section{Complementary Experiments}\label{sec:appendix:experiments}

In Section \ref{sec:experiments}, we conduct experiments using three targets; a damped harmonic oscillator, a heat equation, and a chemical surface reaction. In this appendix, we show the results of additional simulations. Thus far, the original domain of the experiments is 2D for visualization purposes. Here, however, we show the simulation results of the 3D original domain by freeing one more parameter in the damped harmonic oscillator model. We also summarize the number of samples required to achieve a given target error for all four test problems. To corroborate the discussion in Section \ref{sec:lev_score}, we provide a simulation result stating that leverage score sampling actually outperforms uniform sampling. Lastly, the performance of the randomized BSS algorithm on the 2D damped harmonic oscillator target is displayed which supports our discussion in Section \ref{subsec:relatedwork}. Before showing the simulation results, we begin this section with a deferred detail of the chemical surface coverage target in Section \ref{sec:experiments}.

\noindent \textbf{Chemical Surface Coverage.} As explained in \cite{HamptonDoostan:2015}, the target function models the surface coverage of certain chemical species and considers the uncertainty $\rho$ which is parameterized by absorption $\alpha$, desorption $\gamma$, the reaction rate constant $\kappa$, and time $t$. Given a position $(x, y)$ in 2D space, our quantity of interest $\rho$ is modeled by the non-linear evolution equation:
\begin{align}\begin{split}
    \frac{d \rho}{d t} = \alpha (1-\rho) - \gamma \rho - \kappa (1-\rho)^2 \rho, \quad \rho(t=0)=0.9, \\ \alpha = 0.1 + \exp(0.05 x), \quad \gamma = 0.001 + 0.01 \exp(0.05 y)
\end{split}\end{align}
In this experiment, we set $\kappa=10$ and focus on $\rho$ after $t=4$ seconds. The domain is defined as $x \times y = [-10, 10] \times [-10, 10]$ which is scaled to $x' \times y' = [-1, 1] \times [-1, 1]$.

\noindent \textbf{3D Damped Harmonic Oscillator.} The damped harmonic oscillator model is given as (restated),
\begin{align}\begin{split}
    \frac{d^2 x}{d t^2}(t) + c \frac{dx}{dt}(t) + k x(t) = f \cos(\omega t), \quad x(0)=x_0, \quad \frac{dy}{dt}(0)=x_1
\end{split}\end{align}
In section \ref{sec:experiments}, we define the domain as $k \times \omega = [1,3] \times [0,2]$ while $f$ was fixed as $f=0.5$. This time, we extend it to 3D space by setting the domain as $k \times f \times \omega = [1,3] \times [0,2] \times [0,2]$, which is shifted and scaled to $k' \times f' \times \omega' = [-1,1] \times [-1,1] \times [-1,1]$. $3$D plot of the target is given in Figure \ref{fig:appendix:leverage} (a). Note that if we slice the cube at $f'=0.5$, we obtain the target in Section \ref{sec:experiments}.

Again, we set the domain to be the hypercube $[-1,1]^3$. This time, we create the base data matrix $\mathbf{A}' \in \R^{n \times d'}$ by constructing a $n = 51^3$ fine grid, and the polynomial degree is set to $12$. Figure \ref{fig:appendix:leverage} (b) and (c) give examples of the leverage score of this data matrix.

\begin{figure}[h]
    \vspace{-1em}
    \centering
    \begin{subfigure}[b]{0.32\textwidth}
        \centering
        \includegraphics[width=\linewidth]{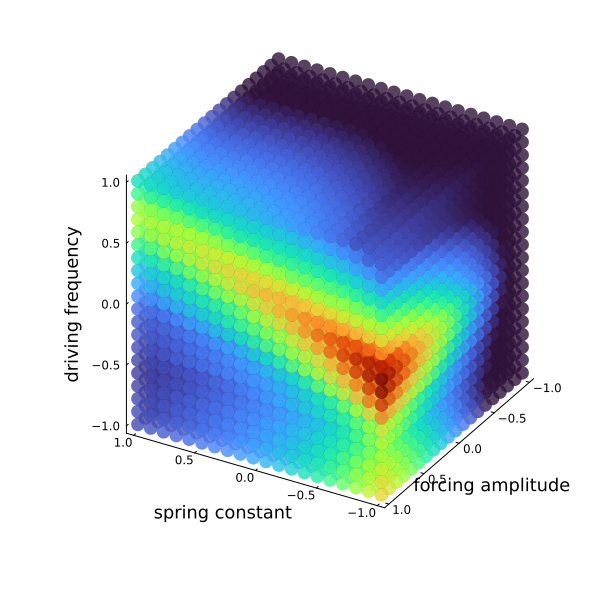}
                \vspace{-.5em}
        \caption{Target Function.}   
    \end{subfigure}
    \hfill
    \centering
    \begin{subfigure}[b]{0.32\textwidth}
        \centering
        \includegraphics[width=\linewidth]{plots/sec7_5_spring3D_leverage_@0.png}
                \vspace{-.5em}
        \caption{Leverage Score at $f'=0.0$.}     
    \end{subfigure}
    \hfill
    \centering
    \begin{subfigure}[b]{0.32\textwidth}
        \centering
        \includegraphics[width=\linewidth]{plots/sec7_5_spring3D_leverage_@1.png}
                \vspace{-.5em}
        \caption{Leverage Score at $f'=1.0$.}    
    \end{subfigure}
    \caption{(a): The target value of the 3D damped harmonic oscillator model. (b) and (c): Its leverage score.We have $f' \in [-1,1]$, and the grid is sliced at $f'=0.0$ in (b) and at $f'=1.0$ in (c).}
    \label{fig:appendix:leverage}
    \vspace{-1em}
\end{figure}

Figure \ref{fig:appendix:3plots} (a) shows the relative error of Bernoulli sampling and our pivotal sampling both using the leverage score. As expected, our method shows a better fit than Bernoulli sampling again. 

% \begin{figure}[t]
% \vspace{-1em}
%     \centering
%         \includegraphics[width=.343\linewidth]{NeurIPS/plots/appendix_spring3D.png}
%         \vspace{-.5em}
%     \caption{Results for degree $12$ active polynomial regression for the damped harmonic oscillator QoI in 3D space.}
%     \label{fig:appendix:3dspring}
%     \vspace{-1em}
% \end{figure}

\noindent \textbf{Samples Needed for a Certain Error.} Table \ref{table:2opt} and \ref{table:1.1opt} summarize the number of samples required to achieve $2 \times \text{OPT}$ error and $1.1 \times \text{OPT}$ error where OPT is the error we could obtain when all the data are labeled. In all four test problems, our pivotal sampling achieves the target error with fewer samples than the existing method denoted by Bernoulli in the tables. Our method is especially sample efficient when we aim at the target error close to OPT. For instance, to achieve the $1.1 \times \text{OPT}$ error in the 2D damped harmonic oscillator model, our method requires less than half samples that Bernoulli sampling requires, showing a significant reduction in terms of the number of samples needed.

\begin{table}[b]
\vspace{-1em}
\centering
\begin{tabular}{c|cccc}
     & Oscillator 2D & Heat Eq. & Surface Reaction & Oscillator 3D \\
     \hline
    n & $10000$ & $10000$ & $10000$ & $51^3$ \\
    poly. deg. & 20 & 20 & 20 & 10 \\
    \hline
    Bernoulli (a) & 574 & 554 & 545 & 671 \\
    Pivotal (b) & 398 & 395 & 390 & 533 \\
    \hline
    Efficiency (b / a) & 0.693 & 0.713 & 0.716 & 0.794
\end{tabular}
\caption{Number of samples needed to achieve $2 \times \text{OPT}$ error.}
\label{table:2opt}
\vspace{-1em}
\end{table}

\begin{table}[h!]
\vspace{-1em}
\centering
\begin{tabular}{c|cccc}
     & Oscillator 2D & Heat Eq. & Surface Reaction & Oscillator 3D \\
     \hline
    n & $10000$ & $10000$ & $10000$ & $51^3$ \\
    poly. deg. & 12 & 12 & 12 & 10 \\
    \hline
    Bernoulli (a) & 924 & 814 & 903 & 3121 \\
    Pivotal (b) & 450 & 442 & 492 & 1943 \\
    \hline
    Efficiency (b / a) & 0.487 & 0.523 & 0.545 & 0.623
\end{tabular}
\caption{Number of samples needed to achieve $1.1 \times \text{OPT}$ error.}
\label{table:1.1opt}
\vspace{-1em}
\end{table}

\begin{figure}[t]
\vspace{-1em}
    \centering
    \begin{subfigure}[b]{0.49\textwidth}
        \centering
        \includegraphics[width=.7\linewidth]{NeurIPS/plots/appendix_spring3D.png}
        \vspace{-.5em}
        \caption{3D Damped Harmonic Oscillator.}    
    \end{subfigure}
    \centering
    \begin{subfigure}[b]{0.49\textwidth}
        \centering
        \includegraphics[width=.7\linewidth]{NeurIPS/plots/appendix_unifrom_vs_leverage_p15.png} % I named the file incorrectly. It says p15 but the actual polynomial degree is 12. (Atsushi)
        \vspace{-.5em}
        \caption{Leverage Score v.s. Uniform Sampling.}    
    \end{subfigure}
    \hfill
    \centering
    \begin{subfigure}[b]{0.49\textwidth}
        \centering
        \includegraphics[width=.7\linewidth]{NeurIPS/plots/appendix_randBSS_p15.png}
                \vspace{-.5em}
        \caption{Randomized BSS.}    
    \end{subfigure}
    \caption{(a): Results for degree $12$ active polynomial regression for the damped harmonic oscillator QoI in 3D space. (b), (c): Results for degree $12$ active polynomial regression for the damped harmonic oscillator QoI in 2D space. (b) includes the performance of the Bernoulli sampling and our pivotal sampling both using uniform inclusion probability which is given with the thin lines. (c) demonstrates the approximation power of the randomized BSS algorithm. We run the sampling $2000$ times with different parameters, round the number of samples to ten place, and take the median error.}
    \label{fig:appendix:3plots}
    \vspace{-1.5em}
\end{figure}

\noindent \textbf{Leverage Score Sampling v.s. Uniform Sampling.} We also conduct a complementary experiment to show empirically that leverage score sampling is much more powerful than uniform sampling. We extend the simulation in Figure \ref{fig:sec1:comparison} to a uniform inclusion probability setting. This time, we draw $350$ samples, repeat the simulation $100$ times, and report the approximation with a median error in Figure \ref{fig:appendix:approximationexample}. The result shows poor performance of the uniform sampling. As they draw fewer samples near the boundaries compared to leverage score sampling, they are not able to pin down the polynomial function near the edges, resulting in suffering large errors in these areas. The relative error plot is given in Figure \ref{fig:appendix:3plots} (b). We point to three observations. 1) Comparing the thick lines and the thin lines, one can tell that the use of leverage score significantly improves performance. 2) Comparing the orange lines and the blue lines, one can see that our \textit{spatially-aware} pivotal sampling outperforms the Bernoulli sampling. 3) By combining the leverage score and \textit{spatially aware} pivotal sampling (our method), one can attain the best approximation among the four sampling strategies.

\begin{figure}[h]
    \vspace{-1em}
    \centering
    \begin{subfigure}[b]{0.24\textwidth}
        \centering
        \includegraphics[width=.9\linewidth]{NeurIPS/plots/appendix_spring_p20_350_bernoulli_uniform.png}
                \vspace{-.5em}
        \caption{Bernoulli Uniform.}    
    \end{subfigure}
    \hfill
    \centering
    \begin{subfigure}[b]{0.24\textwidth}
        \centering
        % \includegraphics[width=\linewidth]{NeurIPS/plots/sec5_heat_bernoulli_leverage_8_60.png}
        \includegraphics[width=.9\linewidth]{NeurIPS/plots/appendix_spring_p20_350_bernoulli_leverage.png}
                \vspace{-.5em}
        \caption{Bernoulli Leverage.}    
    \end{subfigure}
    \hfill
    \centering
    \begin{subfigure}[b]{0.24\textwidth}
        \centering
        % \includegraphics[width=\linewidth]{NeurIPS/plots/sec5_heat_PCA_leverage_8_60.png}
        \includegraphics[width=.9\linewidth]{NeurIPS/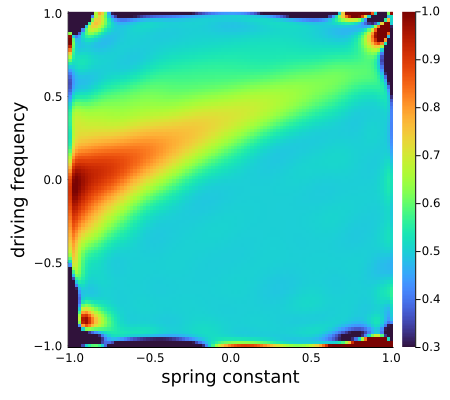}
                \vspace{-.5em}
        \caption{Pivotal Uniform.}    
    \end{subfigure}
    \hfill
    \centering
    \begin{subfigure}[b]{0.24\textwidth}
        \centering
        % \includegraphics[width=\linewidth]{NeurIPS/plots/sec5_heat_PCA_leverage_8_60.png}
        \includegraphics[width=.9\linewidth]{NeurIPS/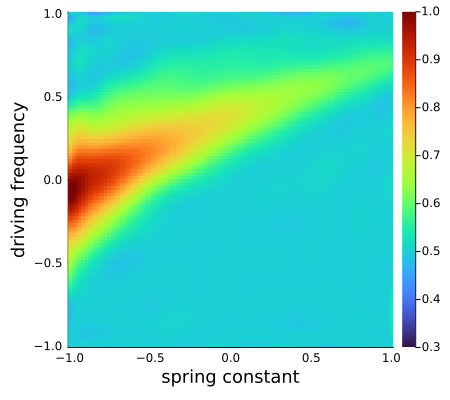}
                \vspace{-.5em}
        \caption{Pivotal Leverage.}    
    \end{subfigure}
    \caption{
    Visualizations of a polynomial approximation to the maximum displacement of a damped harmonic oscillator, as a function of driving frequency and spring constant. $\mathcal{X}$ is a uniform distribution over a box. All four draw $350$ samples but by different sampling methods; (a) and (b) use Bernoulli sampling but (c) and (d) use pivotal sampling. Also, (a) and (c) select samples with uniform probability while (b) and (d) employs the leverage score. Clearly, the leverage score successfully pins down the polynomial function near the boundaries, resulting in better approximation.}
    \label{fig:appendix:approximationexample}
    \vspace{-1em}
\end{figure}

\noindent \textbf{Randomized BSS Algorithm.} Finally, we run the randomized BSS algorithm \cite{CP19} on the 2D damped harmonic oscillator target function, and report the relative error together with Bernoulli leverage score sampling and our pivotal sampling in Figure \ref{fig:appendix:3plots} (c). The setting is the same as the Section \ref{sec:experiments} that the initial data points are drawn uniformly at random from $[-1, 1]^2$ box, and the polynomial degree is set to $12$. Even though this algorithm has a theoretical sample complexity of $O(d/\epsilon)$ for the regression guarantee in (\ref{eq:main_thm_gaur}), our sampling method achieves a certain relative error with fewer samples at least in this problem, holding out a hope that one might be able to remove the $\log d$ factor in our sample complexity.

\section{Conditional Negative Correlation} 
\label{sec:appendix:cnpc}

Here, we offer a direct proof of the conditional negative correlation property used in \ref{sec:appendix:l_infty} for pivotal sampling, without resorting to Conditional Negative Assoication property. Specifically, for a subset of binary random variables $\mathcal{S} \subseteq \bm{\xi}$ that has a positive probability to set $\xi = 1$, $\forall \xi \in \mathcal{S}$, conditional negative correlation property means that, for any two $\xi_i, \xi_j \in \bm{\xi}\backslash \mathcal{S}$, 
\begin{align}
    \prob{\xi_i = 1 | \xi_l = 1 \forall l \in \mathcal{S}} \prob{\xi_j = 1 | \xi_l = 1 \forall l \in \mathcal{S}} \geq  \prob{\xi_i = 1, \xi_j = 1| \xi_l = 1 \forall l \in \mathcal{S}}.
\end{align}

We actually get a more abundant structure with conditioning. We will prove that conditioning on a subset being selected, the conditional distribution also follows from pivotal sampling, with different initial inclusion probabilities. 

\begin{theorem}\label{thm: cpnc_main}
    $\bm \xi$ denotes the binary random vector of all the leaf nodes in a binary-tree-based pivotal sampling. Conditioned on that a subset $\mathcal{S} \subseteq \bm{\xi}$,  are set to $1$, the distribution of remaining binary random variables $\bm{\xi} \backslash\mathcal{S}$ can be obtained from binary-tree-based pivotal sampling, where the leaf nodes are prescribed with some probabilities.
\end{theorem}

\cite{DJR05} proves that the binary-tree-based pivotal sampling satisfies the negative association (NA) property, which includes and generalizes the negative correlation property. 
% not consistent with intro...?
\begin{proposition}[\cite{DJR05}]
    The binary random variables corresponding to the leaf nodes in a binary-tree-based pivotal sampling are negatively correlated.
\end{proposition}

So the conditional negative correlation property is a corollary of Theorem \ref{thm: cpnc_main}.
\begin{corollary}
    Conditioned on that all binary random variables in the subset $\mathcal{S}$ are set to $1$, the binary random variables in $\bm{\xi} \backslash \mathcal{S}$ are negatively correlated. In other words, for any two $\xi_i, \xi_j \in \bm{\xi} \backslash \mathcal{S}$, we have
    % remember to replace all \prob with \Pr.
    \begin{align}
    \prob{\xi_i = 1 | \xi_l = 1 \forall l \in \mathcal{S}} \prob{\xi_j = 1 | \xi_l = 1 \forall l \in \mathcal{S}} \geq  \prob{\xi_i = 1, \xi_j = 1| \xi_l = 1 \forall l \in \mathcal{S}}.
    \end{align}
\end{corollary}

In the following, we will prove the case where there is only one index in the set $\mathcal{S}$. Then by induction, we get the case in Theorem \ref{thm: cpnc_main}. In other words, we will prove the following theorem.
\begin{theorem} \label{thm: cpnc_one}
Conditioned on that one binary random variable corresponding to a leaf node $\xi_i = 1$, the distribution of remaining binary random variables $\bm{\xi} \backslash\{\xi_i\}$ can be obtained from binary tree based pivotal sampling, where the leaf nodes are prescribed with some probabilities.
\end{theorem}

To prove this, we will first elaborate more on structure of binary-tree-based pivotal sampling in Algorithm \ref{algo:pivotal}. We will then formulate binary-tree-based pivotal sampling from a new perspective. 

In Algorithm \ref{algo:pivotal}, the tree is sequentially pruned as sampling proceeds. We can actually preserve the tree structure along the sampling. We need to nail down some notations first.

\textbf{Notations.} We can assign a binary random variable $\xi$ for any non-root node in the tree. We will later talk about what it means for a non-leaf node to have $\xi = 0$ or $\xi = 1$. To denote a set or a vector of these binary random variables, we will use $\bm{\xi_S}$, where $S$ is a set or an ordering of the indices. We will abuse the notation $\xi$, for both the node in the tree, and the binary random variable corresponding to that node. The \emph{status} of a node is either $0$ or $1$. We call the structure containing one parent node, two children nodes and the two edges connecting the parent and the children as a \emph{branch}. At a branch, we will usually use $\xi_0$ to denote the parent node, and $\xi_1$ and $\xi_2$ to denote two children nodes.

The following proposition is easy to see and check from the pivotal sampling procedure, so we omit the proof.
\begin{proposition} \label{prop: pivotal}
    Algorithm \ref{algo:pivotal} is equivalent to the following procedure:

    Given the inclusion probabilities of the children nodes $\xi_1$, $\xi_2$ at each branch as $p_1, p_2$, we can determine the probability ``carried'' by the parent node.
    
    Specifically, if $p_1 + p_2 > 1$, the parent node $\xi_0$ carries probability $p_1+p_2 - 1$. If $p_1 + p_2 < 1$, the parent node $\xi_0$ carries probability $p_1 + p_2$. 

    For a non-leaf node $\xi$ being set to $1$, it means that in the sampling procedure, the leaf node promoted to $\xi$ is included in the end. For $\xi$ set to $0$, it means that the leaf node promoted to $\xi$ is not included. 

    We can check that for a non-leaf node $\xi$, the probability $\prob{\xi = 1}$ is exactly the probability it carries, just as the case for the leaf nodes. So we will simply say that ``carried'' probability as ``the probability of $\xi$''.

    Since we can know the probability of a node from its children, we can go from leaf to root, and know the probability of all nodes in the tree.

    After  setting the probability of all nodes in the tree, we can start pivotal sampling.   
    Specifically, first, two leaves, say $\xi_i$ and $\xi_j$ take the match, with the probability of these two nodes. Say after the match, $\xi_i$ is set to be chosen or not chosen, it means that we will set the binary random variable $\xi_i = 0$ or $1$. Instead of pruning the two nodes $\xi_i$ and $\xi_j$, we keep the tree structure. We store $j$ to the parent node, meaning that later match results of the parent node will go to set the status of $\xi_j$.

    For the match between two non-leaf nodes, the probabilities involved are the probabilities of the two nodes.

    In the end, we do the matches from leaves to the root, and set the statuses of all nodes in the tree as either $0$ or $1$.
\end{proposition}

{\remark In Proposition \ref{prop: pivotal}, we only talk about the case where $p_1 + p_2 \neq 1$. In fact, if $p_1 + p_2 = 1$ with corresponding nodes $\xi_1$ and $\xi_2$, then after this match, $(\xi_1, \xi_2)$ is set to be $(1, 0)$ or $(0, 1)$ already. We don't need further matches to decide their status. In other words, the subtree rooted at at the parent node of $\xi_1$ and $\xi_2$ is independent of the matches outside of this subtree. And this subtree implies that the distribution of leaf nodes in this subtree follows from a binary-tree-based pivotal sampling. In this case, we single out the subtree before sampling, and let trees do pivotal sampling independently. So, we will assume that $p_1 + p_2 \neq 1$, except when $\xi_1$ and $\xi_2$ are the two children nodes of the root.

We will also assume that $0 < p_1, p_2 < 1 $ for all pairs of sibling nodes. For $p_1, p_2$ not satisfying this, some subtrees can be separated from the original tree and perform pivotal sampling independently.  \label{remark: pivotal}
}

In Algorithm \ref{algo:pivotal}, or equivalently, Proposition \ref{prop: pivotal},
we percolate up the tree from the leaf nodes, and perform repeated head-to-head comparisons of indices at sibling nodes. We call this the ``bottom-up'' perspective of pivotal sampling. In fact, there is an equivalent way of performing pivotal sampling, which we call as the ``top-down'' perspective. We will state this new perspective in Proposition \ref{prop: top-down}, after some preparations.  

First, a key property of pivotal sampling is the following. 

\begin{proposition}
    \label{prop: cond_indep} % "thus" part, should say that independent of all the nodes, rather than just leaf nodes
    In binary-tree-based pivotal sampling, for any non-leaf node, conditioned on its status, 
    % need some explanations of algorithm 1. It is too simplified.
    the distribution of all the binary random variables corresponding to the nodes in the subtree rooted at this non-leaf node is independent of the conditional distribution of the nodes outside of this subtree. 
\end{proposition}
%{\remark{Several original leaf nodes can be candidates which is finally stored at this non-leaf node location after some matches. }}
% more words might be needed here. % also, connection with algorithm 1.

Proposition \ref{prop: cond_indep} follows from the following key observation.
\begin{proposition} 
 % do we also need to say that the results are thus independent... ?
\label{prop: same_prob}
    For a non-leaf node $\xi$ in the binary tree, the probability it carries to interact with its sibling node is a constant. Specifically, it is independent of the match results in the subtree rooted at $\xi$.
\end{proposition}
This follows from Proposition \ref{prop: pivotal}, where we can set the probability just by the inclusion probabilities of leaf nodes.

\begin{lemma}   \label{lem: status_matches}
    Fixing a non-leaf node being $0$ or $1$, then the final status ($0$ or $1$) of any leaf nodes  
    % (need to say whether it is just leaf or all. If it is all, then there might be several nodes in the subtree that is finally determined by the final choice)
    % not sure whether it is good to say "any nodes" -- I want to include the case where we talk about multiple nodes at the same time
    in the subtree rooted at this node is completely determined by the matches inside this subtree.
\end{lemma}
% maybe we need to define match results.
\begin{proof} [Proof of Lemma \ref{lem: status_matches}]
    The matches inside the subtree will determine all the binary random variables being either $0$ or $1$, except only one that is promoted to the root of the subtree. Which one is promoted to the root is also determined by the matches inside of the subtree. Whether the root of the subtree 
    $\xi_0$ 
    % maybe say about this abuse of term about the binary random variable vs the node itself
    is set to $0$ or $1$ is independent of the match results inside of the subtree, since it carries a probability that is invariant with respect to the match results inside the subtree, by Proposition \ref{prop: same_prob}. By fixing the root being $0$ or $1$, we know the final status of all the leaf nodes in this subtree.
\end{proof}

With these propositions and lemmas, we can prove Proposition \ref{prop: cond_indep}.

\begin{proof}[Proof of Proposition \ref{prop: cond_indep}]

    Fix a non-leaf node $\xi_0$. 
    Suppose $\bm{\xi_I}$ is a vector of nodes in the subtree rooted at $\xi_0$. $\bm{\xi_O}$ is a vector of nodes outside the subtree. 
    Denote a binary-valued scalar as $\tilde \xi_0$, two binary-valued vector as $\bm{\tilde \xi_I}$ and $\bm{\tilde \xi_O}$ with the same length as $\bm{\xi_I}$ and $\bm{\xi_O}$.

    We want to prove that
    \[
        \prob{\bm{\xi_I} = \bm{\tilde \xi_I}, \bm{\xi_O} = \bm{\tilde \xi_O} | \xi_0 =\tilde \xi_0} = \prob{\bm{\xi_I} = \bm{\tilde \xi_I}| \xi_0 =\tilde \xi_0}\prob{\bm{\xi_O} = \bm{\tilde \xi_O}| \xi_0 =\tilde \xi_0}.
    \]

        % maybe we should decide whether it is about matches or about the statement here. Also, whether to include the nodes in the binary tree rather than just the leaf node.
    
    By Lemma \ref{lem: status_matches}, we denote the set of the match results inside the subtree of getting $\bm{\xi_I} = \bm{\tilde \xi_I}$ as $\mathcal{M}(\bm{\xi_I} = \bm{\tilde \xi_I}, \xi_0 = \tilde \xi_0)$.  Also by Lemma \ref{lem: status_matches}, we know that
    \begin{align}
\prob{\bm{\xi_O} = \bm{\tilde \xi_O}| \xi_0 =\tilde \xi_0, \bm{\xi_I} = \bm{\tilde \xi_I}} = \prob{\bm{\xi_O} = \bm{\tilde \xi_O}| \xi_0 =\tilde \xi_0, \mathcal{M}(\bm{\xi_I} = \bm{\tilde \xi_I}, \xi_0 = \tilde \xi_0)}.
\end{align}

    By Proposition \ref{prop: same_prob},
    \begin{align}
        \prob{\bm{\xi_O} = \bm{\tilde \xi_O}| \xi_0 =\tilde \xi_0, \mathcal{M}(\bm{\xi_I} = \bm{\tilde \xi_I}, \xi_0 = \tilde \xi_0)} = \prob{\bm{\xi_O} = \bm{\tilde \xi_O}| \xi_0 =\tilde \xi_0 }.
    \end{align}
    Therefore,
    \begin{align}
    \prob{\bm{\xi_O} = \bm{\tilde \xi_O}| \xi_0 =\tilde \xi_0, \bm{\xi_I} = \bm{\tilde \xi_I}} = \prob{\bm{\xi_O} = \bm{\tilde \xi_O}| \xi_0 =\tilde \xi_0 }.
    \end{align}
    By rewriting the left-hand side of this equality, we can see that it implies the anticipated equality.
    \end{proof}

    % maybe need to think about the statement that the match is independent of whether \xi_0 = 0 or 1. implications of choosing certain nodes...
    Now, we give a more detailed description of the subtree in Proposition \ref{prop: cond_indep}.
    \begin{proposition}\label{prop: one-step_cond}
        For a given non-leaf node $\xi_0$, the conditional distribution of its two children nodes $\xi_1, \xi_2$ with weight $p_1, p_2$ is as follows:
        \begin{enumerate}[(a)]
            \item If $p_1 + p_2 > 1$, then 
            \begin{align}
                \prob{(\xi_1, \xi_2) = (1, 1) | \xi_0 = 1} = 1,
            \end{align}
            and
            \begin{align}
                \prob{(\xi_1, \xi_2) = (1,0) | \xi_0 = 0} = \frac{1 - p_2}{ 2 - p_1 - p_2}, \quad \prob{(\xi_1, \xi_2) = (0,1) | \xi_0 = 0} = \frac{ 1 - p_1} { 2 - p_1 - p_2}.
            \end{align}
            \item If $p_1 + p_2 < 1$, then
            \begin{align}
                \prob{(\xi_1, \xi_2) = (1, 0) | \xi_0 = 1} = \frac{p_1}{p_1 + p_2}, \quad \prob{(\xi_1, \xi_2) = (0, 1) | \xi_0 = 1} = \frac{p_2}{p_1 + p_2},
            \end{align}
            and
            \begin{align}
                \prob{ (\xi_1, \xi_2) = (0, 0) | \xi_0 = 0} = 1.
            \end{align}
            \item If $p_1 + p_2 = 1$, then as mentioned in Remark \ref{remark: pivotal}, $\xi_0$ must be the root node. There is no match for the root. So we can talk about the probabilities without conditioning. For ease of terms later, we will include the following as one of the conditional distribution of two children nodes given the status of the parent.
            \begin{align}
                \prob{ (\xi_1, \xi_2) = (1, 0)} = p_1, \quad \prob{ (\xi_1, \xi_2) = (0, 1) | \xi_0 = 1} = p_2.
            \end{align}
        \end{enumerate}
    \end{proposition}
    
\begin{proof}[Proof of Proposition \ref{prop: one-step_cond}]

 \begin{enumerate}[(a)] % the case where $p_1 = p_2 = 0$? eg. because both parts have sum = 1 already? Can think of it as an independent pivotal...
    \item $p_1 + p_2 > 1$. The match between them will set one of them to be $1$, and promote the other to the parent node. 
    So if we know that the parent $\xi_0 = 1$, then both children nodes are set to $1$, i.e.\,$ \prob{(\xi_1, \xi_2) = (1, 1) | \xi_0 = 1} = 1$. 
    
    On the other hand, if we condition on $\xi_0 = 0$, it means that the yet undetermined node after the match between the children nodes is set to $0$ in the end. With probability $(1-p_2)/(2- p_1 - p_2)$, $\xi_1$ will be set to $1$ by the match between the children nodes; with probability $(1- p_1)/ (2 - p_1 - p_2)$, $\xi_2$ will be set to $1$ by the match. Therefore, $\prob{(\xi_1, \xi_2) = (1,0) | \xi_0 = 0} = \frac{1 - p_2}{ 2 - p_1 - p_2}$, and $\prob{(\xi_1, \xi_2) = (0,1) | \xi_0 = 0} = \frac{ 1 - p_1} { 2 - p_1 - p_2}$.

    \item $p_1 + p_2 < 1$. The match between them will set one node to be $0$, and promote the other to the parent node. 
    So, if the parent $\xi_0 = 1$, it means the yet undetermined node after the match is finally set to $1$. With probability $p_1/ ( p_1 + p_2)$, $\xi_1$ will be promoted to the parent; instead, with probability $p_2 / (p_1 + p_2)$, $\xi_2$ will be promoted to the parent. Therefore, $\prob{(\xi_1, \xi_2) = (1, 0) | \xi_0 = 1} = \frac{p_1}{p_1 + p_2}$, $\prob{(\xi_1, \xi_2) = (0, 1) | \xi_0 = 1} = \frac{p_2}{p_1 + p_2}$.

    On the other hand, if we condition on $\xi_0 = 0$, it means that the undetermined node after the match is also set to $0$ in the end. So, $\prob{ (\xi_1, \xi_2) = (0, 0) | \xi_0 = 0} = 1$.

    \item $p_1 + p_2 = 1$. It must be at the root node. And the conclusion is trivial.
 \end{enumerate}
\end{proof}

% need to prove that or state that condition on several, still independent. Or in proposition 4, say that the matches on either side are (conditionally) independent, so all things that are functions of the matches are also conditionally independent. % or prove the conditioning on several by indudction / recursively?

We prove a multiple-level version of Proposition \ref{prop: one-step_cond}. 

\begin{proposition} \label{prop: multi-step_cond}
 For a given subtree, conditioned on the status of the parent node $\xi_0$, the probability that the leaf nodes take certain statuses can be expressed by conditional probabilities at each branches in Proposition \ref{prop: one-step_cond}. Specifically, if we denote the vector of all the leaf nodes as $\bm{\xi_f}$, then for a given binary value $\tilde \xi_0$ and a binary vector $\bm{\tilde \xi_f}$ with the same length as $\bm{\xi_f}$,
 % f means leaf, not the level.
 \begin{align}
 \prob{ \bm{\xi_f} = \bm{\tilde \xi_f} | \xi_0 = \tilde \xi_0}  = \sum_{\tilde \xi^{(i)}_1, \tilde \xi^{(i)}_2, \tilde \xi^{(i)}_0 \text{binary scalars}} \prod_{i} \prob{(\xi^{(i)}_1, \xi^{(i)}_2) = (\tilde \xi^{(i)}_1, \tilde \xi^{(i)}_2) | \xi^{(i)}_0 = \tilde \xi^{(i)}_0},
 \end{align}
 where $\xi_0^{(i)}$ is the parent node of a branch in the binary tree, and $\xi_1^{(i)}, \xi_2^{(i)}$ are its two children.
\end{proposition}
\begin{proof}[Proof of Proposition \ref{prop: multi-step_cond}]
    First, we assume that all the leaf nodes are on the same level $l$ counting from the root (the root is level $0$). In other words, the subtree is complete.  We prove by induction on the level $l$. The base case $l=1$ is covered in Proposition \ref{prop: one-step_cond}. Suppose we have proved the statement for level $l-1$. The at level $l$, denote the vector of leaf nodes as $\bm{\xi_l}$. For a binary vector $\bm{\tilde \xi_l}$ matching the length, we have
    \begin{align} 
            \prob{ \bm{\xi_l} = \bm{\tilde \xi_l}  | \xi_0 = \tilde \xi_0}
         =  \sum_{\bm{\tilde \xi_{l-1}}}\prob{\bm{ \xi_{l}} = \bm{\tilde \xi_{l}} | \bm{ \xi_{l-1}} = \bm{\tilde \xi_{l-1}}, \xi_0 = \tilde \xi_0}
           \prob{(\bm{\xi_{l-1}}) = \bm{\tilde \xi_{l-1}}| \xi_0 = \tilde \xi_0}.
    \end{align}
        By induction hypothesis, the second term can be expressed by sum of products of conditional probabilities at and before level $l-1$. Let's look at the first term. 

        Since the binary tree is complete, $\xi_{l-1,1}$ is the parent of $(\xi_{l,1}, \xi_{l,2})$, $\xi_{l-1, 2}$ is the parent of $(\xi_{l,3}, \xi_{l,4})$, \dots, $\xi_{l-1, k_{l-1}}$ is the parent of $(\xi_{l,k_l-1}, \xi_{l, k_l})$. 

        We have the following lemma.

        \begin{lemma} \label{lem: decomp}
        \begin{align}
                \prob{\bm{ \xi_{l}} = \bm{\tilde \xi_{l}} |\bm{\xi_{l-1}} =\bm{\tilde \xi_{l-1}}, \xi_0 = \tilde \xi_0}
             = \prod_{i = 1}^{k_{l-1}}\prob{(\xi_{l,2i-1}, \xi_{l, 2i}) = (\tilde \xi_{l, 2i-1}, \tilde \xi_{l, 2i}) | \xi_{l-1,i} = \tilde \xi_{l-1, i}}
        \end{align}
        \end{lemma}
        \begin{proof}[Proof of Lemma \ref{lem: decomp}]
            \begin{align}
                \begin{split}
                \prob{\bm{ \xi_{l}} = \bm{\tilde \xi_{l}} |\bm{\xi_{l-1}} =\bm{\tilde \xi_{l-1}}, \xi_0 = \tilde \xi_0} 
             =  &\prob{(\xi_{l,1}, \xi_{l,2}) = (\tilde \xi_{l,1}, \tilde \xi_{l,2}) | \bm{ \xi_{l-1}}= \bm{\tilde \xi_{l-1}}, \xi_0 = \tilde \xi_0} \\
              & \times \prob{ \bm{ \xi_{l \backslash\{1, 2\}}} = \bm{ \tilde \xi_{l \backslash\{1, 2\}}}| \bm{ \xi_{l-1}}= \bm{\tilde \xi_{l-1}}, \xi_0 = \tilde \xi_0}.
              \end{split}
             \end{align}

           By the definition of conditional probability, the first term on the right-hand side can reformulated as
            \begin{align} 
                \begin{split}
                    & \prob{(\xi_{l,1}, \xi_{l,2}) = (\tilde \xi_{l,1}, \tilde \xi_{l,2}) | (\xi_{l-1,1}, \xi_{l-1,2}, \dots, \xi_{l-1, k_{l-1}}) = \tilde \xi_{l-1}, \xi_0 = \tilde \xi_0} \\
                =  & \prob{(\xi_{l,1}, \xi_{l,2}) = (\tilde \xi_{l,1}, \tilde \xi_{l,2}), (\xi_{l-2}, \dots, \xi_{l-1, k_{l-1}}) = \tilde \xi_{l-1, [k_{l-1}] \backslash 1}, \xi_0 = \tilde \xi_0 | \xi_{l-1,1} = \tilde \xi_{l-1,1}} \\
                & / \prob{(\xi_{l-2}, \dots, \xi_{l-1, k_{l-1}}) = \tilde \xi_{l-1, [k_{l-1}] \backslash 1}, \xi_0 = \tilde \xi_0 | \xi_{l-1,1} = \tilde \xi_{l-1,1}}.
                \end{split}
            \end{align}
               By the conditional independence in Proposition \ref{prop: cond_indep}, 
             \begin{align}
                \begin{split}
                        & \prob{(\xi_{l,1}, \xi_{l,2}) = (\tilde \xi_{l,1}, \tilde \xi_{l,2}), (\xi_{l-2}, \dots, \xi_{l-1, k_{l-1}}) = \tilde \xi_{l-1, [k_{l-1}] \backslash 1}, \xi_0 = \tilde \xi_0 | \xi_{l-1,1} = \tilde \xi_{l-1,1}} \\
                    = & \prob{(\xi_{l,1}, \xi_{l,2}) = (\tilde \xi_{l,1}, \tilde \xi_{l,2}) | \xi_{l-1,1} = \tilde \xi_{l-1,1}} \prob{ (\xi_{l-2}, \dots, \xi_{l-1, k_{l-1}}) = \tilde \xi_{l-1, [k_{l-1}] \backslash 1}, \xi_0 = \tilde \xi_0 | \xi_{l-1,1} = \tilde \xi_{l-1,1}}.
                \end{split}
             \end{align}
             Therefore,
             \begin{align}
                \begin{split}
                    & \prob{(\xi_{l,1}, \xi_{l,2}) = (\tilde \xi_{l,1}, \tilde \xi_{l,2}) | (\xi_{l-1,1}, \xi_{l-1,2}, \dots, \xi_{l-1, k_{l-1}}) = \tilde \xi_{l-1}, \xi_0 = \tilde \xi_0} \\
                    = & \prob{(\xi_{l,1}, \xi_{l,2}) = (\tilde \xi_{l,1}, \tilde \xi_{l,2}) | \xi_{l-1,1} = \tilde \xi_{l-1,1}}.
                \end{split}
             \end{align}
             So,
              \begin{align}
                 \begin{split}
                & \prob{(\xi_{l,1}, \xi_{l,2}, \dots, \xi_{l, k_l}) = \tilde \xi_{l} | (\xi_{l-1,1}, \xi_{l-1,2}, \dots, \xi_{l-1, k_{l-1}}) = \tilde \xi_{l-1}, \xi_0 = \tilde \xi_0} \\
             = & \prob{(\xi_{l,1}, \xi_{l,2}) = (\tilde \xi_{l,1}, \tilde \xi_{l,2}) | \xi_{l-1,1} = \tilde \xi_{l-1,1}} \\
             & \times \prob{(\xi_{l,3}, \dots, \xi_{l,k_l}) = (\tilde \xi_{l,3}, \dots, \tilde \xi_{l,k_l}) | (\xi_{l,1}, \xi_{l,2}) = (\tilde \xi_{l,1}, \tilde \xi_{l,2}), \xi_{l-1} = \tilde \xi_{l-1}, \xi_0 = \tilde \xi_0}.
                 \end{split}
            \end{align}
        By recursively singling out $(\xi_{l, 2i-1}, \xi_{l, i})$, we can finally prove the lemma.
        \end{proof}

        Now, let's go back to the proof of Proposition \ref{prop: multi-step_cond}. By Lemma \ref{lem: decomp},
        \begin{align} 
        \begin{split}
            & \prob{(\xi_{l,1}, \xi_{l,2}, \dots, \xi_{l, k_l}) = \tilde \xi_{l} | \xi_0 = \tilde \xi_0} \\
         = & \sum_{\tilde \xi_{l-1}}\prob{(\xi_{l,1}, \xi_{l,2}, \dots, \xi_{l, k_l}) = \tilde \xi_{l} | (\xi_{l-1,1}, \xi_{l-1,2}, \dots, \xi_{l-1, k_{l-1}}) = \tilde \xi_{l-1}, \xi_0 = \tilde \xi_0}\\
          & \times \prob{(\xi_{l-1,1}, \xi_{l-1,2}, \dots, \xi_{l-1, k_{l-1}}) = \tilde \xi_{l-1} | \xi_0 = \tilde \xi_0} \\
         = & \sum_{\tilde \xi_{l-1}}\left(\prod_{i = 1}^{k_{l-1}}\prob{(\xi_{l,2i-1}, \xi_{l, 2i}) = (\tilde \xi_{l, 2i-1}, \tilde \xi_{l, 2i}) | \xi_{l-1,i} = \tilde \xi_{l-1, i}}\right)\\
          & \times \prob{(\xi_{l-1,1}, \xi_{l-1,2}, \dots, \xi_{l-1, k_{l-1}}) = \tilde \xi_{l-1} | \xi_0 = \tilde \xi_0}.
        \end{split}
    \end{align}
    By induction hypothesis applied on the last term on the right hand side, we prove the proposition for the case where all the leaf nodes are on the same level.

    For the case where the leaf nodes are on several different levels, as long as we can prove the statement with the leaf nodes being on two levels, the general case follows from induction. 

    We will now prove that if all the leaf nodes of a subtree are in two different levels, then Proposition \ref{prop: multi-step_cond} also holds.

    Say the two levels are $i < j$. $\xi_{I_i}$ is a vector % indices or not... we use it in the subscript...
    of all the leaf nodes at level $i$. Similarly for $I_j$. Denote the set of the nodes at level $i$ that are ancestors of $\xi_{I_j}$ as $\xi_{I_j^i}$. % root node... conditional probabilities
    \begin{align}
        \begin{split}
                & \prob{\xi_{I_i} = \tilde{\xi}_{I_i}, \xi_{I_j} = \tilde \xi_{I_j} | \xi_0 = \tilde \xi_0}\\
            = & \sum_{\tilde \xi_{I_j^i}} \prob{\xi_{I_i} = \tilde{\xi}_{I_i}, \xi_{I_j} = \tilde \xi_{I_j}, \xi_{I_j^i} = \tilde \xi_{I_j^i} | \xi_0 = \tilde \xi_0} \\
            = & \sum_{\tilde \xi_{I_j^i}} \prob{\xi_{I_i} = \tilde{\xi}_{I_i}, \xi_{I_j^i} = \tilde \xi_{I_j^i} | \xi_0 = \tilde \xi_0} \prob{\xi_{I_j} = \tilde \xi_{I_j} | \xi_{I_i} = \tilde{\xi}_{I_i}, \xi_{I_j^i} = \tilde \xi_{I_j^i}, \xi_0 = \tilde \xi_0}.
        \end{split}
    \end{align}
    The first term boils down to the case where all the leaf nodes are on the same level, and we have proved that. By the conditional independence in Proposition \ref{prop: cond_indep}, and a similar proof as in Lemma \ref{lem: decomp}, we can prove that the second term can be decomposed as
    \begin{align} 
     \prob{\xi_{I_j} = \tilde \xi_{I_j} | \xi_{I_i} = \tilde{\xi}_{I_i}, \xi_{I_j^i} = \tilde \xi_{I_j^i}, \xi_0 = \tilde \xi_0} 
         = \prod_{m = 1}^{|I_i^j|} \prob{\xi_{I_j, m}=  \tilde \xi_{I_j, m}| \xi_{I_i^j, m} = \tilde \xi_{I_i^j, m}}. % defs...
    \end{align}
    Each of $\prob{\xi_{I_j, m}=  \tilde \xi_{I_j, m}| \xi_{I_i^j, m} = \tilde \xi_{I_i^j, m}}$ is a quantity about a subtree rooted at $\xi_{I_j^i, m}$. $\xi_{I_j, m}$ are all the leaf nodes in the subtree, and they are on the same level. So we can use the statement already proved. Therefore, each term is a sum of product of conditional probabilities at some branches. $\prob{\xi_{I_j} = \tilde \xi_{I_j} | \xi_{I_i} = \tilde{\xi}_{I_i}, \xi_{I_j^i} = \tilde \xi_{I_j^i}, \xi_0 = \tilde \xi_0}$, as the product of these terms, is also a sum of product of conditional probabilities. Going back to $\prob{\xi_{I_i} = \tilde{\xi}_{I_i}, \xi_{I_j} = \tilde \xi_{I_j} | \xi_0 = \tilde \xi_0}$, we know that it is also a sum of product of conditional probabilities.    
\end{proof}

We can say that the joint distribution of the leaf nodes in a pivotal sampling can be expressed by these conditional probabilities at branches.

\begin{proposition} \label{prop: joint}
       $\prob{\xi_f = \tilde \xi_f}$ can be expressed by the conditional probabilities at branches. % need to say about the root node
\end{proposition}

\begin{proof}[Proof of Proposition \ref{prop: joint}]
    If the tree only has two nodes except the root node, then the statement is easy. 
    In the non-trivial cases, denote the two nodes of the root being $\xi_1, \xi_2$, with probability $p_1$ and $p_2$ and $p_1 + p_2 = 1$. The leaf nodes that belong to the subtree rooted at $\xi_1$ are denoted as $\xi_{f_1}$. Similarly we define $\xi_{f_2}$.
    \begin{align} 
        \begin{split}
    \prob{\xi_f = \tilde \xi_f}  & = \prob{\xi_f = \tilde \xi_f, (\xi_1, \xi_2) = (1,0)} + \prob {\xi_f = \tilde \xi_f, (\xi_1, \xi_2) = (0, 1)}   \\
            & = p_1 \prob{\xi_f = \tilde \xi_f | (\xi_1, \xi_2) = (1, 0)}+ p_2 \prob{\xi_f = \tilde \xi_f | (\xi_1, \xi_2) = (0, 1)}.
        \end{split}
    \end{align}
    By a similar proof as in Lemma \ref{lem: decomp},
    \begin{align}
        \prob{\xi_f = \tilde \xi_f | (\xi_1, \xi_2) = (1, 0)} = \prob{\xi_{f_1} = \tilde \xi_{f_1} | \xi_1 = 1} \prob{\xi_{f_2} = \tilde \xi_{f_2} | \xi_2 = 0}.
    \end{align}
    By Proposition \ref{prop: multi-step_cond}, both terms on the right-hand side can be expressed by the conditional probabilities at branches. A similar decomposition holds for $ \prob{\xi_{f_2} = \tilde \xi_{f_2} | \xi_2 = 0}$. Therefore, we proved the statement about $\prob{\xi_f = \tilde \xi_f}$.
\end{proof}

Along the way, we can see that the joint distribution of the leaf nodes resulting from pivotal sampling can actually be determined by only two things: the conditional independence property in Proposition \ref{prop: cond_indep}, and the conditional probabilities at each branch as in Proposition \ref{prop: one-step_cond}. With these two ingredients, the joint distribution of the leaf nodes can be determined as in Proposition \ref{prop: joint}. 

Based on Proposition \ref{prop: joint}, we have the following ``top-down'' perspective of binary tree based pivotal sampling. As before, we assume that if there is a subtree that can be separated from the tree, we have done that.

\begin{proposition}[top-down perspective of pivotal sampling] \label{prop: top-down}
    If we get a sample of % the status of
    leaf nodes in the following way, the joint distribution of leaf nodes is the same as binary tree based pivotal sampling:
    
    At the root, suppose the two children nodes $\xi_1$ and $\xi_2$ have (conditional) probabilities $p_1$ and $p_2$, and $p_1 + p_2 = 1$. We throw a coin with head probability $p_1$, to decide whether we set $(\xi_1, \xi_2)$ as $(1, 0)$ or $(0, 1)$. 
    
    Now we already set $\xi_1$ and $\xi_2$. We look at the branch where $\xi_1$ is the parent node. We throw a coin with head probability corresponding to the conditional probability as in Proposition \ref{prop: one-step_cond}, independently. % independent of ...
    We then decide the status of the children nodes of $\xi_1$. Similarly, at the branch with $\xi_2$ being the parent node, we throw a coin independently, with the corresponding head probability. 
    
    We can continue this way, from top (root) to bottom, to determine the status % 
    of the nodes at each level sequentially. At each branch, we throw a coin independently. If we reach the leaf node at one branch, then we stop tracing down that branch, but continue on other branches.
    Finally, we set the status of all the leaf nodes in binary values. This is a sample from the joint distribution of leaf nodes. 
\end{proposition}

\begin{proof}
    It is easy to see that conditional independence in Proposition \ref{prop: cond_indep} is preserved in this top-down procedure. % more proof?
    By % (the proof of)
    Proposition \ref{prop: joint}, we only need to check that the conditional probabilities at each branch in the top-down procedure is the same as the original formulation. It is also true, since we set the head probability of the coins correctly.
\end{proof}

We have proved that with knowing the leaf probabilities, we can construct from a bottom-up version of pivotal sampling, to an equivalent top-down version of pivotal sampling. However, in the above formulation, the conditional distribution of two children nodes given the status of the parent node, still depends on whether the probabilities of the children nodes sum up larger than or smaller than $1$. The probabilities of non-leaf nodes are obtained from bottom to top as in Proposition \ref{prop: pivotal}. If we want to completely go from top to bottom, we should set the parameters in the tree without going from the bottom. 

We can simplify the two cases of the conditional distribution to a quantity called ``category''. Instead from going from bottom to top, figuring out the probabilities of all nodes in the tree and then determining the conditional distribution, we can directly set which of the two cases of the conditional distribution this parent node is using.
% whether it corresponds to the bottom-up...

\begin{proposition}[Another top-down pivotal sampling formulation, with parameters $(c,p)$ at each node]\label{prop: cat}
    For each non-leaf % also non-root
    node $\xi_0$ in the binary tree, we set two parameters for it: one is ``category'' $c$, which can take a value either $0$ or $1$; another is the conditional probability (detailed below) $p \in (0,1)$. 

    Given the tuple $(c, p)$, we set the conditional probability of the branch with $\xi_0$ being the parent node and $\xi_1, \xi_2$ being the children nodes as follows:

    $c = 0$ corresponds to the case where in the bottom-up view, $\xi_0$ has two children whose probabilities sum up greater than $1$, and $\xi_0$ is a node corresponding to case (a) in Proposition \ref{prop: one-step_cond}. In case (a), only the conditional distribution with $\xi_0$ is not deterministic. $p$ is set to be the conditional probability $\prob{(\xi_1, \xi_2) = (1,0) | \xi_0= 0}$. 

    $c = 1$ corresponds to the case where in the bottom-up view, $\xi_0$ has two children whose probabilities sum up smaller than $1$, and $\xi_0$ is a node corresponding to case (b) in Proposition \ref{prop: one-step_cond}. In this case, only the conditional distribution with $\xi_0  = 1$ is not deterministic. $p$ is set to be the conditional probability $\prob{(\xi_1, \xi_2) = (1, 0) | \xi_0 = 1}$.

    The tuple $(c, p)$ at each non-leaf % and non-root
    node is the initial condition for the top-down view. 
    
    Claim: A binary tree with these parameters gives us samples equivalently as in the bottom-up pivotal sampling. % wording...
\end{proposition}
\begin{proof}[Proof of Proposition \ref{prop: cat}]
    We want to prove that following this parameter setting, we can get probabilities of each node in the bottom-up view. The probabilities should satisfy that:

    If a node is of category $0$, then the sum of probabilities $p_1$ and $p_2$ of its children nodes should be larger than $1$, and $p_1 + p_2 - 1$ should be the probability of the node.

    If a node is of category $1$, then the sum of probabilities $p_1$ and $p_2$ of its children nodes should be smaller than $1$, and $p_1 + p_2$ should be the probability of the node.

    Note that the probability of a node is indeed the inclusion probability of that node. % proof?

    For the two children of the root node, we will set the probabilities of them being $(p_1, p_2)$.

    Inductively, for a branch where we know the inclusion probability of the parent node as $p_0$, and its $(c,p)$ tuple,  we can calculate the inclusion probability of the children nodes as follows.

    If $c = 0$, then
    \begin{align}\begin{split}
    p_1 &= \prob{\xi_1 = 1} = \prob{\xi_0 = 0} \prob{(\xi_1, \xi_2) = (1, 0)|\xi_0 = 0} + \prob{\xi_ 0 = 1} \prob{(\xi_1, \xi_2) = (1, 1) | \xi_0 = 1} \\ &= (1-p_0) p + p_0,
    \end{split}\end{align}
    \begin{align}
        \begin {split}
        p_2 &= \prob{\xi_2 = 1} = \prob{\xi_0 = 0} \prob{(\xi_1, \xi_2) = (0, 1)|\xi_0 = 0} + \prob{\xi_ 0 = 1} \prob{(\xi_1, \xi_2) = (1, 1) | \xi_0 = 1} \\
            & = (1-p_0) (1-p) + p_0.
        \end{split}
    \end{align}
    The value of $p_1$ and $p_2$ satisfies that $p_1 + p_2 -1 = p_0$ and $\frac{1-p_2}{2-p_1-p_2} = p$.

    If $c = 1$, then
    \begin{align} 
        p_1   = \prob{\xi_1 = 1} = \prob{\xi_0 = 1} \prob{(\xi_1, \xi_2) = (1, 0)|\xi_0 = 1} = p_0 p,
    \end{align}
    \begin{align}
        p_@ = \prob{\xi_2 = 1} = \prob{\xi_0 = 1} \prob{(\xi_1, \xi_2) = (0, 1)|\xi_0 = 1} = p_0 (1-p).
    \end{align}
    $p_1$ and $p_2$ satisfy that $p_1 + p_2 = p_0$ and $\frac{p_1}{p_1 + p_2} = p$.
    So, from top to bottom, we can set the probabilities for all nodes. And the probabilities are the same as we start from the bottom leaf node probabilities.
\end{proof}

    Now, we come to see the effect of conditioning, on the binary tree for the pivotal sampling. %We will combine both the bottom-up and the top-down perspectives of pivotal sampling. % the binary tree with which formulation?

    % conditional independence, and is still a tree
    % category not changed ... from bottom-up view
    % only some of p's will be changed... maybe not need to say how to modify? need to      show that the remaining probabilities are between 0 and 1? 
\begin{proposition} \label{prop: one-step_cond1}
        At a branch where the parent node is $\xi_0$ of category $1$, and two children nodes are $\xi_1, \xi_2$, we have
            \begin{align}
            \prob{\xi_0 = 1|\xi_1 = 1} = 1, \quad \prob{\xi_2 = 0 | \xi_1 = 1} = 1.
            \end{align}

        Instead, at a branch where the parent node is of category $0$, we have
        \begin{align}
            \prob{\xi_0 = \xi_2 | \xi_1 = 1} = 1.
        \end{align}
\end{proposition}
\begin{proof}[Proof of Proposition \ref{prop: one-step_cond1}]
    When $\xi_0$ is of category $1$, $\prob{\xi_1 = 1 | \xi_0 = 0} = 0$.
    \begin{align}
        \prob{\xi_0 = 1| \xi_1 = 1} = \frac{\prob{\xi_0 = 1} \prob{\xi_1 = 1 | \xi_0 = 1}}{\prob{\xi_0 = 1} \prob{\xi_1 = 1 | \xi_0 = 1} + \prob{\xi_0 = 0} \prob{\xi_1 = 1| \xi_0 = 0}} = 1.
    \end{align}
     Thus, we have $\prob{\xi_0 = 0 | \xi_1 = 1} = 0$. Then,
     \begin{align}
        0 \leq \prob{\xi_2 = 0, \xi_0 = 0 | \xi_1 = 1} \leq \prob{\xi_0 = 0 | \xi_1 = 1} = 0 \Longrightarrow \prob{\xi_2 = 0, \xi_0 = 0 | \xi_1 = 1} = 0.
     \end{align}
     Therefore,
    \begin{align} 
        \begin{split}
        \prob{ \xi_2 = 0 | \xi_1 = 1} & = \prob{\xi_2 = 0, \xi_0 = 0 | \xi_1 = 1} + \prob{\xi_2 = 0, \xi_0 = 1 | \xi_1 = 1} \\
            & = \prob{\xi_2 = 0, \xi_0 = 1 | \xi_1 = 1}\\
            & = \prob{\xi_0 = 1 | \xi_1 = 1} \prob{\xi_2 = 0 | \xi_0 = 1, \xi_1 = 1} \\
            & = 1.
        \end{split}
    \end{align}

    When $\xi_0$ is of category $0$, 
    \begin{align}
        \begin{split}
        \prob{\xi_0 = 1, \xi_2 = 1 | \xi_1 = 1} & = \prob{\xi_0 = 1 | \xi_1 = 1} \prob{\xi_2 = 1 | \xi_0 = 1, \xi_1 = 1} \\
            & = \prob{\xi_0 = 1 | \xi_1 = 1}.
        \end{split}
    \end{align}
    \begin{align}
            \begin{split}
        \prob{\xi_0 = 0, \xi_2 = 0 | \xi_1 = 1} & = \prob{\xi_0 = 0 | \xi_1 = 1} \prob{\xi_2 = 0 | \xi_0 = 0, \xi_1 = 1} \\
            & = \prob{\xi_0 = 0 | \xi_1 = 1}.
        \end{split}
    \end{align}
        Therefore,
        \begin{align}
            \begin{split}
        \prob{\xi_0 = \xi_2 | \xi_1 = 1} & = \prob{\xi_0 = 1, \xi_2 = 1 | \xi_1 = 1} +  \prob{\xi_0 = 0, \xi_2 = 0 | \xi_1 = 1} \\
                & = \prob{\xi_0 = 1 | \xi_1 = 1} + \prob{\xi_0 = 0 | \xi_1 = 1} = 1.
            \end{split}
        \end{align}
\end{proof}

    By Proposition \ref{prop: one-step_cond1}, conditioning on a leaf node $\xi_i = 1$, if its parent node is of category $1$, we will also set the parent node as $1$. Then, we trace up to the parent of the parent node, recursively. After we set a node to $1$ and its parent node is of category $0$, we stop. Denote the last node set to $1$ in the above procedure as $\xi_1$, the parent node of category $0$ as $\xi_0$, and the other children node of $\xi_0$ as $\xi_2$. Also by Proposition \ref{prop: one-step_cond1}, $\prob{\xi_0 = \xi_2 | \xi_1 =1} = 1$. It means that conditioning on $\xi_1 = 1$, the status of $\xi_0$ and $\xi_2$ are always the same. So we can think of $\xi_0$ and $\xi_2$ as the same node. We can think of this as the following modification of the tree: denote the parent of $\xi_0$ as $\xi_0'$. We connect $\xi_0'$ with $\xi_2$. And we remove the original two edges connecting $\xi_0'$ and $\xi_0$, and also $\xi_0$ and $\xi_2$.

    In this way, we have dealt with the branch where the last node set to $1$ in the above procedure is one of the children nodes. Let's deal with the branches where one of the other nodes (denoted as $\xi_1$) set to $1$ in the above procedure is one of the children node. For $\xi_1$, we know that its parent $\xi_0$ is of category $0$, and $\xi_0$ is set to be $1$ already. By Proposition \ref{prop: one-step_cond1}, we know that $\xi_2 = 0$. By Proposition \ref{prop: cond_indep}, the conditional distribution of leaf nodes in the subtree rooted at $\xi_2$ follows from pivotal sampling.
    %%% need to elaborate?

    We can remove these subtrees from the original tree. We can also remove those nodes set to $1$ above as tracing up from $\xi_i$. So, up to now, all the modifications to the original tree are: removing the subtrees and those set-to-$1$ nodes, and replacing two edges by one edge. %%%

    Now, we will prove the conditional version of Proposition \ref{prop: cond_indep} in the remaining tree. %

    \begin{proposition} \label{prop: cond_indep1}
     For any non-leaf node $\xi_0$ in the remaining tree, conditioned on the status of it, the conditional distribution (with $\xi_i = 1$) of nodes inside of the subtree rooted at $\xi_0$ is independent of independent of the the conditional distribution (with $\xi_i = 1$) of nodes outside of the subtree. % in the remaining tree.
        % say the two conditioning at the same time
         % need to say that the nodes being inside vs outside is not changed after changing.
     Specifically, denote $\xi_I$ as a subset % vector
        % 0 vs O, easy to see?
     of nodes in the subtree, $\xi_O$ as a subset of nodes outside of the subtree. $\tilde \xi_0$ is a binary valued scalar. $\tilde \xi_I$ and $\tilde \xi_O$ are two binary vectors. Then,
        \begin{align}
            \prob{\xi_I = \tilde \xi_I, \xi_O = \tilde \xi_O | \xi_0 = \tilde \xi_0, \xi_i = 1} = \prob{\xi_I = \tilde \xi_I| \xi_0 = \tilde \xi_0, \xi_i = 1} \prob{\xi_O = \tilde \xi_O| \xi_0 = \tilde \xi_0, \xi_i = 1}.
        \end{align}
    \end{proposition}
    \begin{proof}[Proof of Proposition \ref{prop: cond_indep1}]
         We will use the conditional inpendence Proposition \ref{prop: cond_indep} in the original tree. In the original tree, $\xi_i$ is either inside or outside of the subtree. Without loss of generality, we assume $\xi_i$ sits outside.
            \begin{align}
                \begin{split}
                \prob{\xi_I = \tilde \xi_I, \xi_O = \tilde \xi_O | \xi_0 = \tilde \xi_0, \xi_i = 1} = \frac{\prob{\xi_I = \tilde \xi_I, \xi_O = \tilde \xi_O, \xi_i = 1 | \xi_0 = \tilde \xi_0}}{\prob{\xi_i = 1 | \xi_0 = \tilde \xi_0}}.
                \end{split}
            \end{align}
            Since both $\xi$ and $\xi_O$ are outside of the subtree, $\xi_I$ is inside of the subtree, by Proposition \ref{prop: cond_indep},
            \begin{align}
                \prob{\xi_I = \tilde \xi_I, \xi_O = \tilde \xi_O, \xi_i = 1 | \xi_0 = \tilde \xi_0} = \prob{\xi_I = \tilde \xi_I | \xi_0 = \tilde \xi_0} \prob{ \xi_O = \tilde \xi_O, \xi_i = 1 | \xi_0 = \tilde \xi_0}.
            \end{align}
            For the right hand side, by Proposition \ref{prop: cond_indep}, the first term  % or and other lemmas... ?
            \begin{align}
                \prob{\xi_I = \tilde \xi_I| \xi_0 = \tilde \xi_0, \xi_i = 1} = \prob{\xi_I = \tilde \xi_I | \xi_0 = \tilde \xi_0}.
            \end{align}
            For the second term,
            \begin{align}
                 \prob{\xi_O = \tilde \xi_O| \xi_0 = \tilde \xi_0, \xi_i = 1} = 
                    \frac{\prob{\xi_O = \tilde \xi_O, \xi_i = 1 | \xi_0 = \tilde \xi_0}}{\prob{\xi_i = 1 | \xi_0 = \tilde \xi_0}}.
            \end{align}
            Combining these equalities, we get
            \begin{align}
                \prob{\xi_I = \tilde \xi_I, \xi_O = \tilde \xi_O | \xi_0 = \tilde \xi_0, \xi_i = 1} = \prob{\xi_I = \tilde \xi_I| \xi_0 = \tilde \xi_0, \xi_i = 1} \prob{\xi_O = \tilde \xi_O| \xi_0 = \tilde \xi_0, \xi_i = 1}.
            \end{align}
    \end{proof}

        Now we will say that the conditional joint probability of two leaf nodes, given the status of the parent node and $\xi_i = 1$, still falls in either category $0$ and category $1$. In fact, we will prove that the category is not changed before and after conditioning on $\xi_i = 1$. % for the remaining nodes
            % special treat for the "combined" node?
    \begin{proposition} \label{prop: cat1}
        For any non-leaf node $\xi_0$ of the remaining tree with two children nodes $\xi_1, \xi_2$, % say this parent children relation is not changed by modification?
            if in the original tree $\xi_0$ is of category $0$, then    % say the category is not changed seems not clear...
        \begin{align}
            \prob{(\xi_1, \xi_2) = (1,1) | \xi_0 = 1, \xi_i = 1} = 1,
        \end{align}
        and for some $\tilde p \in (0, 1)$, % really (0,1) ?
        \begin{align}
            \prob{(\xi_1, \xi_2) = (1,0) | \xi_0 = 0, \xi_i = 1} = \tilde p, \quad \prob{(\xi_1, \xi_2) = (0,1) | \xi_0 = 0, \xi_i = 1} = 1 - \tilde p.
        \end{align}

        If in the original tree $\xi_0$ is of category $1$, then for some $\tilde p   \in (0, 1)$,
        \begin{align}
            \prob{(\xi_1, \xi_2) = (1,0) | \xi_0 = 1, \xi_i = 1} = \tilde p, \quad \prob{(\xi_1, \xi_2) = (0,1) | \xi_0 = 1, \xi_i = 1} = 1 - \tilde p,
        \end{align}
        and 
        \begin{align}
                \prob{(\xi_1, \xi_2) = (0,0) | \xi_0 = 0, \xi_i = 1} = 1.
        \end{align}
    \end{proposition}

    \begin{proof}[Proof of Proposition \ref{prop: cat1}]
        % need to say that in the remaining tree, all non-leaf nodes have two children nodes
        We need to discuss, in the original tree, whether $\xi_i$ is outside of the subtree rooted at $\xi_0$, or inside of it. When inside, we always assume that $\xi_0$ is in the subtree rooted at $\xi_1$.

        If $\xi_0$ is of category $0$, and $\xi_i$ is outside of the $\xi_0$ subtree, then by Proposition \ref{prop: cond_indep},
            \begin{align}
                \prob{(\xi_1, \xi_2) = (1,1) | \xi_0 = 1, \xi_i = 1} = \prob{(\xi_1, \xi_2) = (1,1) | \xi_0 = 1} = 1.
            \end{align}
            And,
            \begin{align}
                 \prob{(\xi_1, \xi_2) = (1,0) | \xi_0 = 0, \xi_i = 1} = \prob{(\xi_1, \xi_2) = (1,0) | \xi_0 = 0} = p,
            \end{align}
            \begin{align}
             \prob{(\xi_1, \xi_2) = (0,1) | \xi_0 = 0, \xi_i = 1} = \prob{(\xi_1, \xi_2) = (0,1) | \xi_0 = 0} = 1 - p.
            \end{align}

        If $\xi_0$ is of category $0$, and $\xi_i$ is inside the $\xi_1$ subtree, then by Proposition \ref{prop: one-step_cond1}, $\prob{\xi_0 = \xi_2 | \xi_1 = 1} = 1$. Similarly as in Proposition \ref{prop: one-step_cond1}, we can prove $\prob {\xi_0 = 1 | \xi_1 = 0} = 0$. Therefore,
        \begin{align}
            \begin{split}
             & \prob{(\xi_1, \xi_2) = (1,1) | \xi_0 = 0, \xi_i = 1} = \frac{\prob{\xi_2 = 1, \xi_0 = 1, \xi_i = 1| \xi_1 =1} \prob{\xi_1 = 1}}{\prob{\xi_0 = 1, \xi_i = 1 | \xi_1 = 1} \prob{\xi_1 = 1} + \prob{\xi_0 = 1, \xi_i = 1| \xi_1 = 0} \prob{\xi_1= 0}}\\
                =  & \frac
                {\prob{\xi_2 = 1, \xi_0 = 1| \xi_1 =1} \prob{\xi_i = 1| \xi_1 =1} \prob{\xi_1 = 1}}
                {\prob{\xi_0 = 1 | \xi_1 = 1} \prob{\xi_i = 1 | \xi_1 = 1} \prob{\xi_1 = 1} + \prob{\xi_0 = 1| \xi_1 = 0} \prob{\xi_i = 1| \xi_1 = 0} \prob{\xi_1= 0}} = 1.
            \end{split}
        \end{align}

        \begin{align}
            \begin{split} % also need (0,1) case -- sum should be 1.
             & \prob{(\xi_1, \xi_2) = (1,0) | \xi_0 = 0, \xi_i = 1} = \frac{\prob{\xi_2 = 0, \xi_0 = 0, \xi_i = 1| \xi_1 =1} \prob{\xi_1 = 1}}{\prob{\xi_0 = 0, \xi_i = 1 | \xi_1 = 1} \prob{\xi_1 = 1} + \prob{\xi_0 = 0, \xi_i = 1| \xi_1 = 0} \prob{\xi_1= 0}}\\
                =  & \frac
                {\prob{\xi_2 = 0, \xi_0 = 0| \xi_1 =1} \prob{\xi_i = 1| \xi_1 =1} \prob{\xi_1 = 1}}
                {\prob{\xi_0 = 0 | \xi_1 = 1} \prob{\xi_i = 1 | \xi_1 = 1} \prob{\xi_1 = 1} + \prob{\xi_0 = 0| \xi_1 = 0} \prob{\xi_i = 1| \xi_1 = 0} \prob{\xi_1= 0}} \in (0,1) % need to express it with p.
            \end{split}
        \end{align}

         If $\xi_0$ is of category $1$, and $\xi_i$ is outside of $\xi_0$ subtree, then by Proposition \ref{prop: cond_indep},
         \begin{align}
            \prob{(\xi_1, \xi_2) = (1,0) | \xi_0 = 1, \xi_i = 1} = \prob{(\xi_1, \xi_2) = (1,0) | \xi_0 = 1} = p, \end{align}
        \begin{align}  \prob{(\xi_1, \xi_2) = (0,1) | \xi_0 = 1, \xi_i = 1} = \prob{(\xi_1, \xi_2) = (0,1) | \xi_0 = 1}  = 1 - p,
        \end{align}
         \begin{align}
                \prob{(\xi_1, \xi_2) = (0,0) | \xi_0 = 0, \xi_i = 1} = \prob{(\xi_1, \xi_2) = (0,0) | \xi_0 = 0} = 1.
        \end{align}

        If $\xi_0$ is of category $1$, and $\xi_i$ is inside of $\xi_1$ subtree, then by Proposition \ref{prop: one-step_cond1}, $\prob{\xi_0 = 1|\xi_1 = 1} = 1, \prob{\xi_2 = 0 | \xi_1 = 1} = 1$. So $\prob{\xi_0 = 0|\xi_1 = 1} =  0$.

        \begin{align}
            \begin{split} % also need (0,1) case -- sum should be 1.
             & \prob{(\xi_1, \xi_2) = (0, 0) | \xi_0 = 0, \xi_i = 1} = \frac{\prob{\xi_2 = 0, \xi_0 = 0, \xi_i = 1| \xi_1 =0} \prob{\xi_1 = 0}}{\prob{\xi_0 = 0, \xi_i = 1 | \xi_1 = 1} \prob{\xi_1 = 1} + \prob{\xi_0 = 0, \xi_i = 1| \xi_1 = 0} \prob{\xi_1= 0}}\\
                =  & \frac
                {\prob{\xi_2 = 0, \xi_0 = 0| \xi_1 =0} \prob{\xi_i = 1| \xi_1 =0} \prob{\xi_1 = 0}}
                {\prob{\xi_0 = 0 | \xi_1 = 1} \prob{\xi_i = 1 | \xi_1 = 1} \prob{\xi_1 = 1} + \prob{\xi_0 = 0| \xi_1 = 0} \prob{\xi_i = 1| \xi_1 = 0} \prob{\xi_1= 0}} = 1. % need to express it with p.
            \end{split}
        \end{align}

        \begin{align}
            \begin{split} % also need (0,1) case -- sum should be 1.
             & \prob{(\xi_1, \xi_2) = (1,0) | \xi_0 = 1, \xi_i = 1} = \frac{\prob{\xi_2 = 0, \xi_0 = 1, \xi_i = 1| \xi_1 =1} \prob{\xi_1 = 1}}{\prob{\xi_0 = 1, \xi_i = 1 | \xi_1 = 1} \prob{\xi_1 = 1} + \prob{\xi_0 = 1, \xi_i = 1| \xi_1 = 0} \prob{\xi_1= 0}}\\
                =  & \frac
                {\prob{\xi_2 = 0, \xi_0 = 1| \xi_1 =1} \prob{\xi_i = 1| \xi_1 =1} \prob{\xi_1 = 1}}
                {\prob{\xi_0 = 1 | \xi_1 = 1} \prob{\xi_i = 1 | \xi_1 = 1} \prob{\xi_1 = 1} + \prob{\xi_0 = 1| \xi_1 = 0} \prob{\xi_i = 1| \xi_1 = 0} \prob{\xi_1= 0}} \in (0,1) % need to express it with p.
            \end{split}
        \end{align}        
    \end{proof}

    \begin{proof}[Proof of Theorem \ref{thm: cpnc_one}]
    By Propositions \ref{prop: cond_indep1} and \ref{prop: cat1}, we know that after conditioning on $\xi_i = 1$, the remaining tree structure still has the conditional independence property, and at each branch, the joint distribution of two children nodes given the status of the parent node can only be case (a) or (b) in Proposition \ref{prop: one-step_cond}. By the perspective of pivotal sampling in Proposition \ref{prop: cat}, the conditional distribution given $\xi_i = 1$ still follows from pivotal sampling.
    \end{proof}